\setlist[enumerate]{itemsep=2pt,parsep=2pt,before={\parskip=2pt}}
\newtheorem{theorem}{Theorem}
\newtheorem{lemma}[theorem]{Lemma}
\newtheorem{corollary}[theorem]{Corollary}
\theoremstyle{definition}
\newtheorem{example}{Example}
\newtheorem*{i:cor:fixedstepsize}{Corollary~\ref{cor:fixedstepsize}}
\newtheorem*{i:thm:cogen}{Theorem~\ref{thm:cogen}}
\newcommand{\mz}{\phantom{-}0}
\newcommand{\mo}{\phantom{-}1}
\newcommand{\mf}{\hspace{7pt}\frac{1}{4}}
\newcommand{\mn}{-1}
\newcommand{\bb}[1]{\mathbb{#1}}
\DeclareMathOperator{\ralpha}{\mathnormal{\alpha_m / \alpha^{\perp}_{m}}}
\DeclareMathOperator*{\E}{\mathbb{E}}
\DeclareMathOperator*{\V}{Var}
\DeclareMathOperator*{\gap}{gap}
\DeclareMathOperator*{\stab}{stab}
\newcommand{\thoughtsep}{\phantom{M}}
\newcommand{\newbold}[1]{\phantom{M}\\ \noindent {\bf {#1}}}
\newcommand{\boldcenter}[1]{
    \phantom{M}\\ 
    \begin{center}
    \parbox{0.9\textwidth}{{\bf #1}}
    \end{center}
    \phantom{M}\\
}
\begin{document}

\citeindextrue

\title{On the Generalization Mystery in Deep Learning}
\begin{center}
\end{center}

\author{Satrajit Chatterjee and Piotr Zielinski}
\address{Email: satrajit@gmail.com, zielinski@google.com}

\maketitle

\begin{abstract}
The generalization mystery in deep learning is the following: Why do over-parameterized neural networks trained with gradient descent (GD) generalize well on real datasets even though they are capable of fitting random datasets of comparable size? Furthermore, from among all solutions that fit the training data, how does GD find one that generalizes well (when such a well-generalizing solution exists)?

We argue that the answer to both questions lies in the interaction of the gradients of different examples during training. Intuitively, if the per-example gradient vectors are well-aligned, that is, if they are {\em coherent}, then one may expect GD to be (algorithmically) stable, and hence generalize well. We formalize this argument with an easy to compute and interpretable metric for coherence, and show that the metric takes on very different values on real and random datasets for several common vision networks. The theory also explains a number of other phenomena in deep learning, such as why some examples are reliably learned earlier than others, why early stopping works, and why it is possible to learn from noisy labels. Moreover, since the theory provides a causal explanation of how GD finds a well-generalizing solution when one exists, it motivates a class of simple modifications to GD based on robust averaging of per-example gradients that attenuate memorization and improve generalization.

Generalization in deep learning is an extremely broad phenomenon, and therefore, it requires an equally general explanation. We conclude with a survey of alternative lines of attack on this problem, and argue that the proposed approach is the most viable one on this basis.
\end{abstract}

\setcounter{tocdepth}{1} 
\tableofcontents

\newcommand{\cmark}{\ding{51}}
\newcommand{\xmark}{\ding{55}}
\tikzstyle{place}=[circle,draw=blue!50,fill=blue!20,thick, inner sep=0pt,minimum size=10mm]
\tikzstyle{weight}=[circle,fill=white,inner sep=0pt,minimum size=10mm]
\tikzstyle{post}=[->,shorten <=1pt,>=stealth,very thick]
\pgfmathsetmacro{\xsp}{1.8}
\pgfmathsetmacro{\ysp}{4}

\addtocontents{toc}{\phantom{M}}
\section{Introduction}
\label{sec:intro}

In spite of the tremendous practical success of deep learning, we do not yet have a good understanding of why it works. Deep neural networks used in practice are over-parameterized, that is, they have many more parameters than the number of examples that are used to train them, and conventional wisdom holds that such over-parameterized models should not generalize well,
but yet they do.%
\footnote{No less an authority than von Neumann is reported to have said, ``With four parameters I can fit an elephant, and with five I can make him wiggle his trunk''~\citep{Mayer10}.}

Although this gap in our understanding has been long known (see, for example, \citet{Bartlett96} and \citet{Neyshabur14}), the problem was sharpened in 
an influential paper by \cite{Zhang17} who showed that typical neural networks trained with stochastic gradient descent, which is the usual training method, can easily memorize a random dataset of the same size as the (real) dataset that they were designed for. 
They argued that this simple experimental observation poses a challenge to all known explanations of generalization in deep learning, and called for a ``rethinking" of our approach.
This led to a large effort in the community to better understand why neural networks generalize. However, although our understanding of deep learning has greatly improved as a result of this effort, to date, there does not appear to be an satisfactory explanation (see \citet{Zhang21} for a detailed review).

Our goal in this work is to answer the 
broad question raised by the observations of \citeauthor{Zhang17}, namely,
\boldcenter{
Why do neural networks generalize well in practice when they have sufficient capacity to memorize their training set?
}
Specifically, we want to answer the following questions:

\begin{enumerate}

\item [{\bf Q1.}] Since by simply changing the dataset in an over-parameterized setting (for example, from real labels to random labels), we can obtain very different generalization performance, what property of the dataset controls the generalization gap (assuming, of course, that architecture, learning rate, size of training set, etc. are held fixed)? We stress that our interest is in the {\em gap}, that is the difference between training and test loss (and not in the test loss {\em per se}).

\item [{\bf Q2.}] Why does gradient descent not simply memorize real training data as it does random training data? That is, from among all the models that fit the training set perfectly in an over-parameterized setting, how does gradient descent find one that generalizes well to unseen data when such a model exists? This property is often called the {\em implicit bias} of gradient descent.%
\footnote{
It is an implicit bias because even in the absence of explicit regularizers (such as weight decay or drop out), 
gradient descent still generalizes well (where possible).}

\end{enumerate}

Most previous attempts to make progress on these questions have been based on the 
notion of uniform convergence~\citep{Vapnik71}, the primary theoretical tool in classical learning theory.
However, a recent paper by \citet{Nagarajan19b} provides a strong argument for why any method based on uniform convergence is unlikely to provide an explanation to the mystery.

In this work, we study the generalization mystery from the less-explored, but arguably more general, perspective of algorithmic stability~\citep{Devroye79, Bousquet02, Hardt16}. We propose that the answer to both the questions lies in the interaction of the gradients of the examples during training. 
(The interaction of per-example gradients has not received much attention in the literature, with notable exceptions being \citet{Yin18, Fort19, Sankararaman20, Liu20, He20, Mehta21} which we discuss later.) 
Specifically, we argue,

\begin{enumerate}

\item [{\bf A1.}] {\bf Gradient descent in an over-parameterized setting generalizes well when the gradients of different examples (during training) are similar, that is, when there is {\em coherence}.}

\item [{\bf A2.}] {\bf When there is coherence, the dynamics of gradient descent leads to models that are {\em stable}, that is, to models that do not depend too much on any one training example, and, as is well known, stable models generalize well.}

\end{enumerate}

\noindent In this paper, we advance a theory along these lines, which we call the theory of {\bf Coherent Gradients}.%
\footnote{Some preliminary results appeared in \citet{Chatterjee20, Zielinski20, Chatterjee20c}.}

\section{The Theory, Informally}
\label{sec:overview:informal}

The motivation for our theory comes from the observation that the ability to memorize random datasets, and yet generalize well on real datasets is not unique to deep neural networks, but also seen, for example, with decision trees (and random forests). But there is no generalization mystery there: a typical tree construction algorithm splits the training set recursively into similar subsets based on input features. If no similarity is found, eventually, each example is put into its own leaf to achieve good training accuracy, but, of course, at the cost of poor generalization. Thus, trees that achieve good training accuracy on a randomized dataset are larger than those on a real dataset (for example, see Expt. 5 in ~\citep{Chatterjee20b}).

We propose that something similar happens with neural networks trained with gradient descent (GD):
\boldcenter{
Gradient descent exploits patterns common across training examples during the fitting process, and if there are no common patterns to exploit, then the examples are fitted on a ``case-by-case" basis.
}
Intuitively, this provides a {\em uniform} explanation of memorization and generalization: If a dataset is such that examples are fitted on a case-by-case basis, then we expect poor generalization (it corresponds to memorization), whereas, if there are common patterns that can be exploited to fit the data, then we should expect good generalization.

So how does gradient descent exploit common patterns to fit the training data (when such common patterns exist)? Since the only interaction between examples in gradient descent is in the parameter update step, the mechanism for commonality exploitation---if it exists---must be there.
Let $\eta$ be the learning rate and let $g_i(w_t)$ be the gradient for the $i$th training example at the point $w_t$ in parameter space. Furthermore, for now, assume that the $g_i(w_t)$ all have the same scale (we discuss the consequences of relaxing this shortly).
Now, consider the parameter update at step $t$ of gradient descent:%
\footnote{For simplicity of exposition, here we only consider the full-batch case and no batch normalization. For the stochastic case, we have to consider the expected gradient, but the argument carries over, as we shall see shortly in the formal development.}
\[
w_{t + 1} \equiv w_t - \eta\,\frac{1}{m} \sum_{i=1}^{m} g_i(w_t)
\]
If $g(w_t)$ denotes the average gradient, that is, $g(w_t) \equiv \frac{1}{m} \sum_{i = 1}^{m} g_i(w_t)$, we observe the following:

\begin{enumerate}

\item  The average gradient $g(w_t)$ is stronger in directions (components) where the per-example gradients are ``similar," and reinforce each other; and weaker in other directions where they are different, and do not add up (or perhaps cancel each other).

\item Since network parameters $w_{t+1}$ are updated proportionally to gradients, those parameters that correspond to stronger gradient directions change more.

\end{enumerate}

\noindent Therefore, the parameter changes of the network are biased towards those that benefit multiple examples.

When per-example gradients reinforce each other, we say they are ``coherent," and use the term {\em coherence} to informally refer to the similarity of per-example gradients (either in aggregate across entire gradients or across specific components of the gradients). 

Note that it is possible that there are {\em no} directions or components where different per-example gradients add up, that is, there is no coherence. That case would correspond to fitting each example independently, that is, on a case-by-case basis. In other words, reducing loss on one example fails to reduce the loss on any other example. Intuitively, one might expect this to be the case for random datasets, and only possible when the learning problem is over-parameterized: the dimensionality of the tangent space (that is, the dimension of the gradient vectors) is greater than the number of training examples.

\thoughtsep

We can reason about the generalization of the above process through the notion of algorithmic stability~\citep{Bousquet02, Devroye79}. 
A learning process is {\em stable} if a replacing one example in the training sample by another example (from the example distribution) does not change the learned model too much. It can be shown that models obtained through stable processes generalize well. 

Strong directions in the average gradient are stable since in those directions multiple examples support or reinforce each other.
In particular, the absence or presence of a single example in the training set does not impact descent down a strong direction since other examples contribute to it anyway. Therefore, by stability theory, the corresponding parameter updates should generalize well, that is, they would lead to lower loss on unseen examples as well.

On the other hand, weak directions in the average gradient are unstable, since they are due to a few or even single examples.
In the latter case, for example, the absence of the corresponding example in the training set would prevent descent down that direction, and therefore, the corresponding parameter updates would not generalize well. 

With this observation, we can reason inductively about the stability of GD: since the initial values
of the parameters do not depend on the training data, the initial function mapping examples to their
gradients is stable. Now, if all parameter updates are due to strong gradient directions, then stability
is preserved. However, if some parameter updates are due to weak gradient directions, then stability
is diminished. 
Now, since stability (suitably formalized) is equivalent to generalization~\citep{ShalevShwartz10}, this allows us to see how generalization may degrade as training progresses.%
\footnote{Based on this insight, we shall see later how a simple modification to GD to suppress 
the weak gradient directions can dramatically reduce overfitting.} %
In summary:
\boldcenter{
When there is coherence, the dynamics of gradient descent, particularly the use of the average gradient, leads to models that are stable, that is, to models that do not depend too much on any one training example.
}

Of course, in general, there can be {\em no} dataset-independent guarantee of stability.
We can understand the connection between coherence of the dataset and stability intuitively, by considering two extreme cases. If, on the one hand, the gradients of all the examples are pointing in the same direction (that is, we have perfect coherence), then the replacing one training example by another does not matter, and the loss on the replaced example should still decrease, even though it was not used for training. 
On the other hand, if the gradients of all the examples are all orthogonal to each other (we have low coherence), then replacing an example with another eliminates the descent down the gradient direction of the replaced example, and we should not expect loss to reduce on that example. In other words, in the presence of high coherence, training on the original training set, and a perturbed version should not differ too much.

\thoughtsep

This insight suggests a new modification to gradient descent to improve generalization, and also explains some existing regularization techniques:

\begin{itemize}

    \item We can make gradient descent more stable by eliminating or attenuating weak directions by combining per-example gradients using robust mean estimation techniques (such as median) instead of simple averages.
    
    \item We can view $\ell^2$ regularization (weight decay) as an attempt to eliminate movement in weak gradient directions by having a ``background force" that pushes each parameter to a data-independent default value (zero). It is only in the case of strong directions that this background force is overcome, and parameters updated in a data-dependent manner.
    
    \item Earlier we assumed that all the $g_i(w_t)$ all have the same scale. But that is not always true. For example, during training, some examples get fitted earlier than others, and so their gradients become negligible. Now, fewer examples dominate the average gradient $g(w_t)$, and this leads to overfitting since stability is degraded. This may be viewed as a justification for early stopping as a regularizer.

\end{itemize}

\thoughtsep

The properties of gradient descent as an optimizer do not play an important role in our generalization theory.
We model gradient descent as a simple, ``combinatorial" optimizer, a kind of greedy search with some hill-climbing thrown in (due to sampling and finite step size).
Therefore, we are concerned less about the quality of solutions reached at the end of gradient descent, but more about staying ``feasible" at all times during the search. In our context, feasibility means being able to generalize; and this naturally leads us to look at the transition dynamics of gradient descent to see if that preserves generalizability.

A notable feature of this approach is that it does not rely on any special property of the final solution obtained by gradient descent. 
Therefore, (1) it allows us to reason in an {\em uniform} manner about generalization with respect to early stopping and generalization at convergence,%
\footnote{Since stopping early typically leads to smaller generalization gap, the nature of the solutions of GD (for example, stationary points, the limit cycles of SGD at equilibrium) cannot be the ultimate explanation for generalization. In fact, think of the extreme case where no GD step is taken, and we have zero generalization gap!} and (2) it applies {\em uniformly} to convex and non-convex problems (and uniformly across architectures without requiring simplifying assumptions such as homogeneous activations, infinite widths, etc.).

\section{An Illustrative Example}
\label{sec:overview:example}

The main ideas of the theory described above can be illustrated with a simple thought experiment where we fit an over-parameterized linear model to two toy datasets, one of which is ``real" and the other is ``random." 

\begin{example}[``Real" and ``Random"]
\label{ex:two-datasets}

Consider the task of fitting the linear model
\[
y = w \cdot x = \sum_{i = 1}^{6} w^{(i)} x^{(i)}
\]
under the usual square loss  $\ell(w) \equiv \frac{1}{2} (y - w \cdot x)^2$ to each of the following datasets:
\begin{center}

\phantom{M}

\begin{tabular}{r|c|c}
\multicolumn{3}{c}{{\bf L} (``real")} \\
$i$ & $x_i$ & $y_i$ \\
\hline
\hline
$1$ & $\langle\,\textcolor{pink}{1}, \textcolor{pink}{\mz}, \textcolor{pink}{\mz}, \textcolor{pink}{\mz}, \textcolor{pink}{\mz}, \textcolor{blue}{\mo}\,\rangle$ & $\mo$ \\
$2$ & $\langle\,\textcolor{pink}{0}, \textcolor{pink}{\mn}, \textcolor{pink}{\mz}, \textcolor{pink}{\mz}, \textcolor{pink}{\mz}, \textcolor{blue}{\mn}\,\rangle$ & $\mn$ \\
$3$ & $\langle\,\textcolor{pink}{0}, \textcolor{pink}{\mz}, \textcolor{pink}{\mn}, \textcolor{pink}{\mz}, \textcolor{pink}{\mz}, \textcolor{blue}{\mn}\,\rangle$ & $\mn$ \\
$4$ & $\langle\,\textcolor{pink}{0}, \textcolor{pink}{\mz}, \textcolor{pink}{\mz}, \textcolor{pink}{\mo}, \textcolor{pink}{\mz}, \textcolor{blue}{\mo}\,\rangle$ & $\mo$ \\
\hline
$5$ & $\langle\,\textcolor{pink}{0}, \textcolor{pink}{\mz}, \textcolor{pink}{\mz}, \textcolor{pink}{\mz}, \textcolor{pink}{\mn}, \textcolor{blue}{\mn}\,\rangle$ & $\mn$ \\
\end{tabular}
\quad \quad \quad
\begin{tabular}{r|c|c}
\multicolumn{3}{c}{{\bf M} (``random")} \\
$i$ & $x_i$ & $y_i$ \\
\hline
\hline
$1$ & $\langle\,\textcolor{pink}{1}, \textcolor{pink}{\mz}, \textcolor{pink}{\mz}, \textcolor{pink}{\mz}, \textcolor{pink}{\mz}, \mz\,\rangle$ & $\mo$ \\
$2$ & $\langle\,\textcolor{pink}{0}, \textcolor{pink}{\mn}, \textcolor{pink}{\mz}, \textcolor{pink}{\mz}, \textcolor{pink}{\mz}, \mz\,\rangle$ & $\mn$ \\
$3$ & $\langle\,\textcolor{pink}{0}, \textcolor{pink}{\mz}, \textcolor{pink}{\mn}, \textcolor{pink}{\mz}, \textcolor{pink}{\mz}, \mz\,\rangle$ & $\mn$ \\
$4$ & $\langle\,\textcolor{pink}{0}, \textcolor{pink}{\mz}, \textcolor{pink}{\mz}, \textcolor{pink}{\mo}, \textcolor{pink}{\mz}, \mz\,\rangle$ & $\mo$ \\
\hline
$5$ & $\langle\,\textcolor{pink}{0}, \textcolor{pink}{\mz}, \textcolor{pink}{\mz}, \textcolor{pink}{\mz}, \textcolor{pink}{\mn}, \mz\,\rangle$ & $\mn$ \\
\end{tabular}

\phantom{M}

\end{center}

\noindent In each dataset, the first four examples, that is, $(x_i, y_i)$ for $i \in [4]$ are used for training whereas the fifth example $(x_5, y_5)$ is held out for evaluating generalization. 
Observe that the first 5 input features (shown in \textcolor{pink}{pink}) are the same in both {\bf L} and {\bf M}, and furthermore, that they are each only predictive for one example (they are ``idiosyncratic"). 
The last feature however is different in the two datasets. In {\bf L} (shown in \textcolor{blue}{blue}), it is predictive across all examples (``common" feature), whereas in {\bf M} (shown in black), it is uninformative.
We can think of {\bf L} as a ``real'' dataset where there is something that can be learned by a linear model, whereas {\bf M} is a ``random'' dataset that may at best be memorized.

Now, if we apply gradient descent (GD) to these problems (starting from $w = 0$), we see the following training and test curves:

\begin{center}
\begin{tabular}{cc}
{\bf L} & {\bf M} \\
\includegraphics[width=0.45\textwidth]{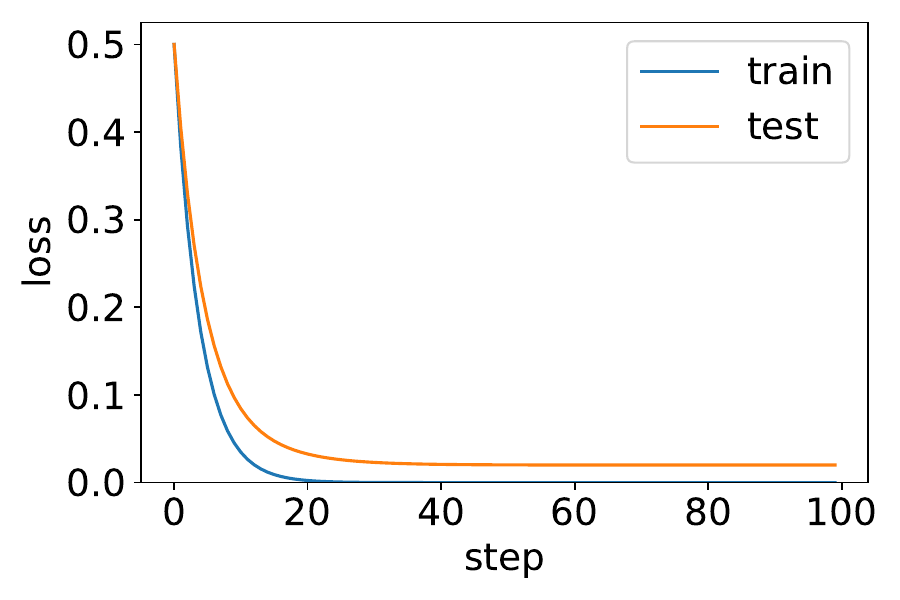} &
\includegraphics[width=0.45\textwidth]{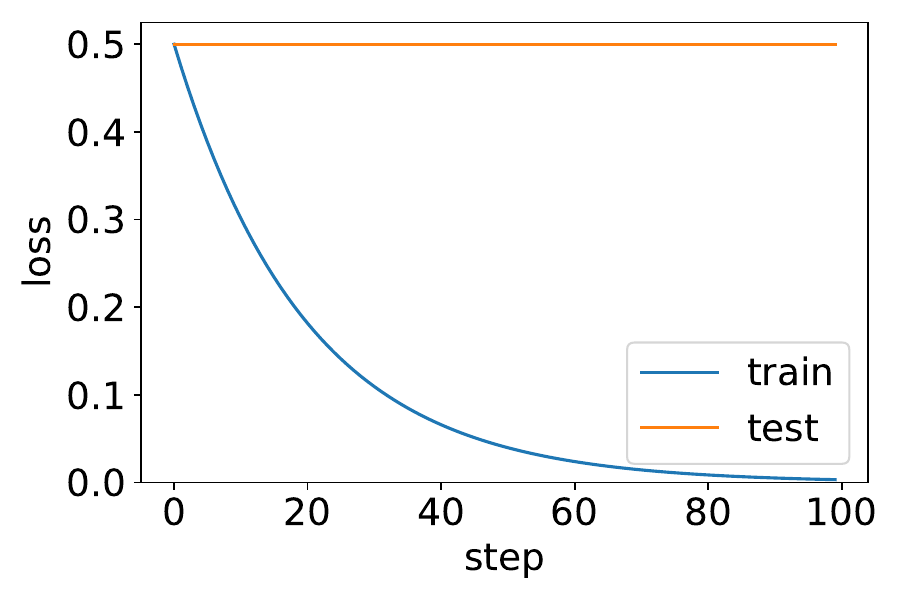} \\
\end{tabular}
\end{center}

\noindent In this simple setup, we see the same phenomena as in deep learning: (1) the model has sufficient capacity to memorize random data ({\bf M}), yet it generalizes on real data ({\bf L}), that is, the generalization is dataset dependent; and (2) from among all the models that fit {\bf L} (which includes the solution for {\bf M}), GD finds one that generalizes well (instead of just memorizing). 
However, in contrast to deep models, it is much easier to understand intuitively what is going on in this setup, particularly, from the point of view of the dynamics of GD.

First, consider the per-example gradients $g_i(w)$ and the average training gradient $g(w)$ at a point $w$ on the trajectory of gradient descent starting from 0:

\begin{center}

\phantom{M}

\begin{tabular}{c|c}
\multicolumn{2}{c}{{\bf L}} \\
$i$ & $g_i(w)$ \\
\hline
\hline
$1$ & $r(w)\, \langle\,\textcolor{pink}{1}, \textcolor{pink}{\mz}, \textcolor{pink}{\mz}, \textcolor{pink}{\mz}, \textcolor{pink}{\mz}, \textcolor{blue}{\mo}\,\rangle$ \\
$2$ & $r(w)\, \langle\,\textcolor{pink}{0}, \textcolor{pink}{\mo}, \textcolor{pink}{\mz}, \textcolor{pink}{\mz}, \textcolor{pink}{\mz}, \textcolor{blue}{\mo}\,\rangle$ \\
$3$ & $r(w)\, \langle\,\textcolor{pink}{0}, \textcolor{pink}{\mz}, \textcolor{pink}{\mo}, \textcolor{pink}{\mz}, \textcolor{pink}{\mz}, \textcolor{blue}{\mo}\,\rangle$ \\
$4$ & $r(w)\, \langle\,\textcolor{pink}{0}, \textcolor{pink}{\mz}, \textcolor{pink}{\mz}, \textcolor{pink}{\mo} , \textcolor{pink}{\mz}, \textcolor{blue}{\mo}\,\rangle$ \\
\hline
\hline
\rule{0pt}{2.5ex} $g(w)$ & $r(w)\, \langle\,\textcolor{pink}{\frac{1}{4}}, \textcolor{pink}{\mf}, \textcolor{pink}{\mf}, \textcolor{pink}{\mf}, \textcolor{pink}{\mz}, \textcolor{blue}{\mo}\,\rangle$ \\
\end{tabular}
\quad \quad \quad
\begin{tabular}{c|c}
\multicolumn{2}{c}{{\bf M}} \\
$i$ & $g_i(w)$ \\
\hline
\hline
$1$ & $r(w)\, \langle\,\textcolor{pink}{1}, \textcolor{pink}{\mz}, \textcolor{pink}{\mz}, \textcolor{pink}{\mz}, \textcolor{pink}{\mz}, \textcolor{black}{\mz}\,\rangle$ \\
$2$ & $r(w)\, \langle\,\textcolor{pink}{0}, \textcolor{pink}{\mo}, \textcolor{pink}{\mz}, \textcolor{pink}{\mz}, \textcolor{pink}{\mz}, \textcolor{black}{\mz}\,\rangle$ \\
$3$ & $r(w)\, \langle\,\textcolor{pink}{0}, \textcolor{pink}{\mz}, \textcolor{pink}{\mo}, \textcolor{pink}{\mz}, \textcolor{pink}{\mz}, \textcolor{black}{\mz}\,\rangle$ \\
$4$ & $r(w)\, \langle\,\textcolor{pink}{0}, \textcolor{pink}{\mz}, \textcolor{pink}{\mz}, \textcolor{pink}{\mo}, \textcolor{pink}{\mz}, \textcolor{black}{\mz}\,\rangle$ \\
\hline
\hline
\rule{0pt}{2.5ex} $g(w)$ & $r(w)\, \langle\,\textcolor{pink}{\frac{1}{4}}, \textcolor{pink}{\mf}, \textcolor{pink}{\mf}, \textcolor{pink}{\mf}, \textcolor{pink}{\mz}, \textcolor{black}{\mz}\,\rangle$ \\
\end{tabular}

\phantom{M}

\end{center}

\noindent where $r(w)$ is a scalar.%
\footnote{$r(w)$ is equal to $|y_i - w \cdot x_i|$ for $i \in [4]$, a quantity that is independent of $i$ for $w$ in the trajectory of GD due to the symmetries in this problem.}
Observe that {\bf M} lacks coherence: the per-example gradients do not reinforce each other, that is, they do not add up in any component. Therefore, there are no strong directions in the average gradient.
In contrast, in {\bf L}, the per-example gradients are coherent: they share a {\em common} component (shown in \textcolor{blue}{blue}) which
``adds up" in the average gradient $g(w)$, which consequently, is {\bf 4 times} stronger in that component than in the other (idiosyncratic) components (shown in \textcolor{pink}{pink}). 

Finally, since parameter changes in GD are proportional to the average gradient $g(w)$, 
the parameter $w^{(6)}$ corresponding to the common input feature in {\bf L} changes much more than any of the ones for the idiosyncratic features (for example, $w^{(1)}$)%
\footnote{From symmetry, $w^{(2)}, w^{(3)},$ and $w^{(4)}$ are the same as $w^{(1)}$.} during training. In contrast, in {\bf M}, only the weights for the idiosyncratic features are used to fit the data. This is confirmed by looking at the trajectories of $w^{(1)}$ and $w^{(6)}$ during training:

\begin{center}
\begin{tabular}{cc}
{\bf L} & {\bf M} \\
\includegraphics[width=0.45\textwidth]{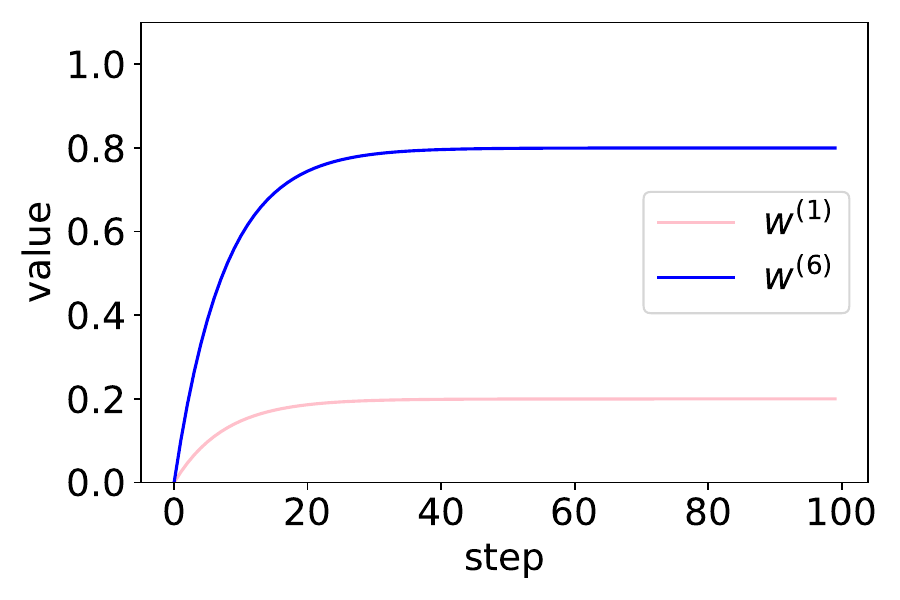} &
\includegraphics[width=0.45\textwidth]{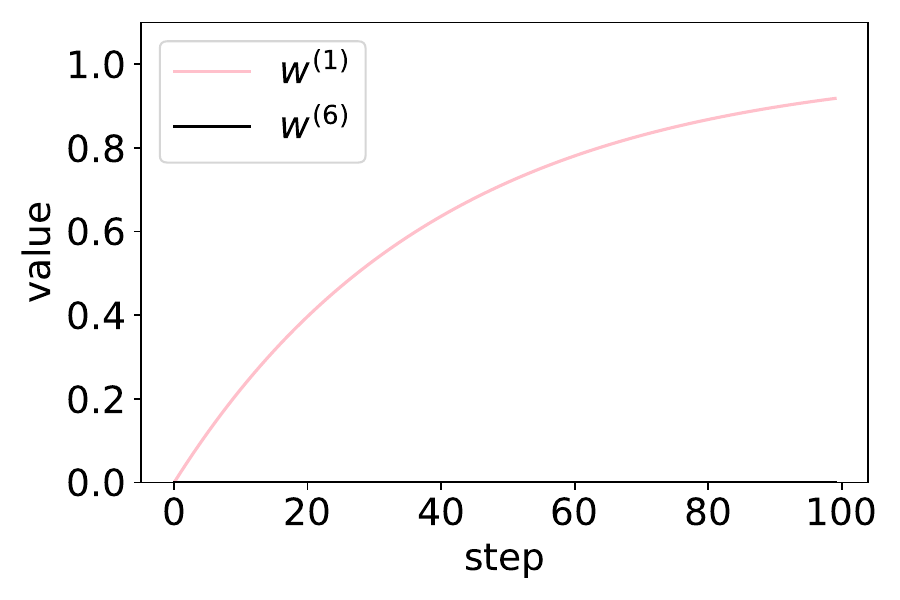} \\
\end{tabular}
\end{center}

\noindent To summarize, in this simple example of ``real" and ``random" datasets that replicates the generalization mystery of deep learning, we see that:

\begin{enumerate}

    \item The difference in the generalization gap between the two datasets can be understood in terms of the difference in the similarity of the per-example gradients, that is, in terms of the difference in coherence.
    
    \item In the case of the ``random" dataset, the training data was fit on a case-by-case basis (that is, memorized). Although the same would also have been an optimal solution for the problem of fitting the ``real" training data, gradient descent produced a different solution by exploiting the commonality between training examples, as expressed in their gradients.

\end{enumerate}

\thoughtsep

Finally, let us consider a modification to gradient descent where instead of simply averaging the per-example gradients, we take the component-wise {\bf median} gradient $\widetilde{g}(w)$:
\begin{center}
\phantom{M}
\begin{tabular}{c|c}
\multicolumn{2}{c}{{\bf L}} \\
$i$ & $g_i(w)$ \\
\hline
\hline
$1$ & $r(w)\, \langle\,\textcolor{pink}{1}, \textcolor{pink}{\mz}, \textcolor{pink}{\mz}, \textcolor{pink}{\mz}, \textcolor{pink}{\mz}, \textcolor{blue}{\mo}\,\rangle$ \\
$2$ & $r(w)\, \langle\,\textcolor{pink}{0}, \textcolor{pink}{\mo}, \textcolor{pink}{\mz}, \textcolor{pink}{\mz}, \textcolor{pink}{\mz}, \textcolor{blue}{\mo}\,\rangle$ \\
$3$ & $r(w)\, \langle\,\textcolor{pink}{0}, \textcolor{pink}{\mz}, \textcolor{pink}{\mo}, \textcolor{pink}{\mz}, \textcolor{pink}{\mz}, \textcolor{blue}{\mo}\,\rangle$ \\
$4$ & $r(w)\, \langle\,\textcolor{pink}{0}, \textcolor{pink}{\mz}, \textcolor{pink}{\mz}, \textcolor{pink}{\mo} , \textcolor{pink}{\mz}, \textcolor{blue}{\mo}\,\rangle$ \\
\hline
\hline
$\widetilde{g}(w)$ & $r(w)\, \langle\,\textcolor{pink}{0}, \textcolor{pink}{\mz}, \textcolor{pink}{\mz}, \textcolor{pink}{\mz}, \textcolor{pink}{\mz}, \textcolor{blue}{\mo}\,\rangle$ \\
\end{tabular}
\quad \quad \quad
\begin{tabular}{c|c}
\multicolumn{2}{c}{{\bf M}} \\
$i$ & $g_i(w)$ \\
\hline
\hline
$1$ & $r(w)\, \langle\,\textcolor{pink}{1}, \textcolor{pink}{\mz}, \textcolor{pink}{\mz}, \textcolor{pink}{\mz}, \textcolor{pink}{\mz}, \textcolor{black}{\mz}\,\rangle$ \\
$2$ & $r(w)\, \langle\,\textcolor{pink}{0}, \textcolor{pink}{\mo}, \textcolor{pink}{\mz}, \textcolor{pink}{\mz}, \textcolor{pink}{\mz}, \textcolor{black}{\mz}\,\rangle$ \\
$3$ & $r(w)\, \langle\,\textcolor{pink}{0}, \textcolor{pink}{\mz}, \textcolor{pink}{\mo}, \textcolor{pink}{\mz}, \textcolor{pink}{\mz}, \textcolor{black}{\mz}\,\rangle$ \\
$4$ & $r(w)\, \langle\,\textcolor{pink}{0}, \textcolor{pink}{\mz}, \textcolor{pink}{\mz}, \textcolor{pink}{\mo}, \textcolor{pink}{\mz}, \textcolor{black}{\mz}\,\rangle$ \\
\hline
\hline
$\widetilde{g}(w)$ & $r(w)\, \langle\,\textcolor{pink}{0}, \textcolor{pink}{\mz}, \textcolor{pink}{\mz}, \textcolor{pink}{\mz}, \textcolor{pink}{\mz}, \textcolor{black}{\mz}\,\rangle$ \\
\end{tabular}
\phantom{M}
\phantom{M}
\end{center}
It is easy to see that performing gradient descent with the median gradient $\widetilde{g}(w)$ increases the stability of gradient descent on both datasets by eliminating the weak gradient directions, and leads to zero generalization gap for both ``real" and ``random".\footnote{Note that in the case of ``random," this reduction in the gap is achieved by preventing the training set from being memorized.}

\qed
\end{example}

\thoughtsep

The conventional explanation of generalization in over-parameterized linear models depends on the observation that GD from {\bf 0} finds the solution with minimum $\ell^2$ norm (that is, the solution found by the pseudo-inverse method). 
In comparison, the explanation above, based on the alignment of per-example gradients and stability, is simpler and more direct. Crucially, it generalizes in a straightforward manner to deep networks where GD does not always find the minimum $\ell^2$ norm solution.
Thus, at a fundamental level, the Coherent Gradients approach allows us to decouple optimization and generalization; so even if we cannot say anything about the result of the optimization process after GD, we can say something about whether the solution generalizes or not by analyzing the per-example gradients along the way.

\section{Metrics to Quantify Coherence}
\label{sec:overview:metrics} 

So far we have argued, informally, that if the gradients of different training examples are similar and reinforce each other (that is, if they are coherent), then the model produced by gradient descent is expected to generalize well; and we have illustrated this argument with a simple thought experiment. 
But in order to test this explanation in practical settings, such as the experiments of \citet{Zhang17}, we need to quantify the notion of coherence.

An obvious metric to quantify the coherence of the per-example gradients is their average pairwise dot product. Since this has a nice connection to the loss function, we start by reviewing the connection, and also set up some notation in the process.
Formally, let $\mathcal{D}(z)$ denote the distribution
of examples $z$ from a finite set $Z$.%
\footnote{We assume finiteness for mathematical simplicity  since it does not affect generality for practical applications.} 
For a network with $d$ trainable parameters, 
let $\ell_z(w)$ be the loss for an example $z \sim \mathcal{D}$ for a parameter vector $w \in \mathbb{R}^d$. For the learning problem, we are interested in minimizing the expected loss $\ell(w) \equiv \E_{z \sim \mathcal{D}}[\ell_z(w)]$.
Let $g_z \equiv [\nabla \ell_z](w)$ denote the gradient of the loss on example $z$, and $g \equiv [\nabla \ell](w)$ denote the average gradient. From linearity, we have,
\[
g = \E_{z \sim \mathcal{D}}\ [\ g_z\ ]
\]
Now, suppose we take a small descent step $h = - \eta\,g$ (where $\eta > 0$ is the learning rate). From the Taylor expansion of $\ell$ around $w$, we have, 
\begin{equation}
\ell(w + h) - \ell(w) \approx g \cdot h 
= - \eta\  g \cdot g
= - \eta \E_{z \sim \mathcal{D}}\ [\ g_z\ ] \cdot \E_{z \sim \mathcal{D}}\ [\ g_z\ ]
= - \eta \E_{z \sim \mathcal{D}, z' \sim \mathcal{D}}\ [g_z \cdot g_{z'}]
\label{eq:overview:dloss}
\end{equation}
where the last equality can be checked with a direct computation.
Thus, the expected pairwise dot product is equal to 
the change in loss divided by the learning rate, that is, 
the ``instantaneous" change in loss.

\begin{example}[Perfect similarity v/s pairwise orthogonal]

Consider a sample with $m$ examples $z_i$ where $1 \leq i \leq m$. Let $g_i$ be the gradient of $z_i$ and further that $\|g_i\| = \|u\|$ for some $u$. Let $1 \leq j \leq m$.
If all the $g_i$ are the same, that is, if coherence or similarity is maximum, then $\E\,[g_i \cdot g_{j}] = \|u\|^2$. However, if they are pairwise orthogonal, i.e., $g_i \cdot g_j = 0$ for $i \neq j$, then $\E\,[g_i \cdot g_j] = (1/m) \|u\|^2$.

Observe that the loss reduces $m$ times faster in the case of maximum coherence than when the gradients are pairwise orthogonal.

\qed
\end{example}

As the above example illustrates, the average expected dot product can vary significantly depending on the similarity of the gradients, and so could be used as a measure of coherence. However, since it has no natural scale---just rescaling the loss changes its value---it is difficult to interpret in an experimental setting.
For example, it is not immediately clear if say, a value of 17.5 for the expected dot product indicates good or bad coherence.

Now, there is a natural scaling factor that can be used to normalize the expected dot product of per-example gradients.
Consider the Taylor expansion of each individual loss $\ell_z$ around $w$ when we take a small step $h_z$ down {\em its} gradient $g_z$:
\[
\ell_z(w + h_z) - \ell_z(w) \approx g_z \cdot h_z 
= - \eta\  g_z \cdot g_z
\]
Taking expectations over $z$ we get,
\begin{equation}
\E_{z \sim \mathcal{D}} [ \ell_z(w + h_z) - \ell_z(w) ] 
= - \eta\ \E_{z \sim \mathcal{D}} [ g_z \cdot g_z ]
\label{eq:overview:dloss_ez}
\end{equation}
The quantity in (\ref{eq:overview:dloss_ez}) has a simple interpretation: It is the expected reduction in the per-example loss $\ell_z$ if we could take different steps for different examples. In that sense, it is an {\em idealized reduction in loss} that real gradient descent cannot usually attain.
As might be expected intuitively, it is a bound on the quantity in  (\ref{eq:overview:dloss}) and is tight when all the per-example gradients are identical.
Thus, it serves as a natural scaling factor for the expected dot product, and we obtain a normalized metric for coherence, called $\alpha$, from (\ref{eq:overview:dloss}) and (\ref{eq:overview:dloss_ez}): 
\begin{equation}
\boxed{
\quad
\ 
\alpha \equiv 
\frac{\ell(w + h) - \ell(w)}{\displaystyle \E_{z \sim \mathcal{D}}\ [ \ell_z(w + h_z) - \ell_z(w) ]} =
\frac{\displaystyle \E_{z \sim \mathcal{D}, z' \sim \mathcal{D}}\ [g_z \cdot g_{z'}]}
{\displaystyle \E_{z \sim \mathcal{D}}\ [ g_z \cdot g_z ]}
=
\frac{\displaystyle \E_{z \sim \mathcal{D}}\ [\ g_z\ ] \cdot \E_{z \sim \mathcal{D}}\ [\ g_z\ ]}
{\displaystyle \E_{z \sim \mathcal{D}}\ [ g_z \cdot g_z ]}
\quad
\ 
}
\label{eq:overview:metric}
\end{equation}

\noindent From the discussion above, it is easy to see that 
\[
0 \le \alpha \le 1.
\]
Coherence is 1 when all per-example gradients are identical, and 0 when the expected gradient is {\bf 0}, that is, when training converges.%
\footnote{If all examples are fit at the end of training, the denominator vanishes. However, in that case, the numerator is also zero, and we define the coherence to be 0. This is also consistent with adding a small positive epsilon to the denominator when computing coherence to avoid division by zero. This choice can also be justified with a continuity argument (see \Cref{app:alphafacts}).}
This is formalized as \Cref{thm:basicprop} in \Cref{app:alphafacts}.

\begin{example}[Orthogonal Sample and Orthogonal Limit] 

If we have a sample of $m$ examples whose gradients $g_i$ ($1 \leq i \leq m$) are pairwise orthogonal (an ``orthogonal"
sample), then,
\[
\alpha 
= \frac
    {(1/m)\,{\displaystyle \E_{i}\,[g_i \cdot g_i]}}
    {\E_{i}\,{\displaystyle [g_i \cdot g_i]}}
= \frac{1}{m}
\]
\noindent Thus, for an orthogonal sample, $\alpha$ is independent of the actual gradients, and we call $1/m$ the {\em orthogonal limit} for a sample of size $m$, and denote it by $\alpha^{\perp}_{m}$. 

\qed
\end{example}

In our experiments, we measure coherence on a sample, as we typically do not have access to the underlying distribution $\mathcal{D}$. 
However, as one might expect, $\alpha$ as measured from a sample is a biased estimator of the true (that is, the distributional) $\alpha$. To distinguish between the two, we use $\alpha_S$ to denote the estimate of $\alpha$ obtained from a sample $S$. Since the sample $S$ is often clear from the context, we commonly also use $\alpha_m$ instead of $\alpha_S$ where $m$ is the size of $S$ (to preserve the distinction with $\alpha$ and to remind ourselves of the sample size dependence).%
\footnote{Typically, $\alpha_S$ overestimates $\alpha$. A sample $S$ of size 1 provides an extreme example of this, since in that case, $\alpha_S$ is $1$ regardless of $\alpha$. Another example is provided by a distribution $\mathcal{D}$ where all the per-example gradients have the same norm, and it is not too hard to see that $\E_{S \sim \mathcal{D}^m} [ \alpha_S ] \ge \alpha$. This is also confirmed by experiments (for example, see \Cref{fig:m_variation} in \Cref{app:measuring-coherence}).}

In the case of $\alpha$ estimated from a sample, the coherence of a (possibly fictitious%
\footnote{If the dimension $d$ of the tangent space is less than $m$, that is, we have more examples than parameters, then an orthogonal sample of size $m$ cannot actually be constructed.}%
) orthogonal sample provides a convenient yardstick to judge the coherence of any other sample of the same size.
So given a sample of size $m$, rather than show $\alpha_m$, we show the ratio $\alpha_m / \alpha^{\perp}_{m} = m\,\alpha$. This quantity has a simple intuitive interpretation:
\boldcenter{
The metric $\alpha_m / \alpha^{\perp}_{m}$ for coherence measures how many examples on average (including itself) a given example helps at a given step of gradient descent.
}
\noindent This intuition may be justified by considering different kinds of possible training samples:

\begin{itemize}
    \item  For an orthogonal sample, $\alpha_m = 1/m$ and, therefore,  $\alpha_m / \alpha^{\perp}_{m} = 1$.
    In this case, each example in the sample is fitted independently of the others: a step in a direction of any given example's gradient does not affect the loss on any other example. Each example ``helps" only itself and does not positively or negatively affect the other examples in the sample. 

    \item For a sample with perfect coherence (all the examples are identical), $\alpha_m = 1$, and we have, $\alpha / \alpha^{\perp}_{m} = m$. 
    Here, each example in the sample ``helps" all the other examples in the sample during gradient descent.
    
    \item At the end of training, when the expected gradient of the sample is {\bf 0}, either the loss on a given example cannot be improved, or improving it comes at the expense of worsening the loss on other examples in the sample. Thus, an example ``helps" no examples (including itself) on average, and $\alpha_m = \alpha_m / \alpha^{\perp}_{m} = 0$.
    
    \item For the $d \ll m$ (under-parameterized) case, consider $\alpha_{m} / \alpha_{m}^{\perp}$ for a sample that comprises $k \gg 1$ copies of $d$ orthogonal gradients living in a $d$-dimensional tangent space (that is, $m = kd \gg d$). Now, $\alpha_{m}$ of this replicated sample is also $1 / d$ (since replicating a sample does not change the empirical distribution) and thus, its $\alpha_{m} / \alpha_{m}^{\perp}$ is $(1/d) / (1/kd) = k$, which agrees with the intuition that each example in the replicated sample helps $k$ other examples.
    
\end{itemize}

\begin{example}[``Real" and ``Random"]

In  Example~\ref{ex:two-datasets}, consider the training sample for {\bf L} ($m = 4$). It is easy to check with a direct computation that $\alpha_m = 5/8$ and $\alpha_m / \alpha^{\perp}_{m} = 2.5$, that is, each example helps 2.5 examples.
Intuitively, 2.5 may be understood as an example helping reduce the loss on itself 100\%, and reducing the loss on 3 other examples by 50\% due to the common component (the other half  comes from their own idiosyncratic components). 

The training sample for {\bf M} is pairwise orthogonal, and therefore, $\alpha_m / \alpha^{\perp}_{m} = 1$.
\qed
\end{example}

\thoughtsep

One can imagine many different metrics for coherence. We have seen two so far: the expected pairwise dot product and $\alpha$.
As we will see in the next section, another metric that arises naturally is 
\begin{equation}
\label{eq:chap1:diffmetric}
\E_{z \sim \mathcal{D},\,z' \sim \mathcal{D}}\,[\|g_z - g_{z'}\|],
\end{equation}
and there are other metrics in the literature such as stiffness~\citep{Fort19} and GSNR~\citep{Liu20}. Furthermore, coherence metrics may be computed at the level of the network (that is, with entire per-example gradient vectors), or at the level of layers or even individual parameters (that is, with different projections of the per-example gradients).

Compared to the other metrics, $\alpha$ (and by extension, $\alpha_m / \alpha^{\perp}_{m}$) has certain advantages: It is simultaneously (1) easy to compute at scale, (2) more interpretable, and (3) has convenient theoretical properties that, among other things, allows us to provide a qualitative bound on the generalization gap.
However, $\alpha$ is not a perfect metric and has some significant limitations as we shall see.

\section{Bounding the Generalization Gap with \texorpdfstring{$\alpha$}{Alpha}}
\label{sec:overview:bound}

We can now formalize the argument in Section~\ref{sec:overview:informal} by using $\alpha$ as a metric of coherence.
We build on the work of \citet{Hardt16} and \citet{Kuzborskij18} who showed that (small-batch) {\em stochastic} gradient descent is stable since each training example is looked at so rarely, that it cannot have much influence on the final model if the training is not run for too long (or the learning rate is decayed quickly enough). 
Obviously, such a dataset-independent argument for stability is inapplicable to the generalization mystery since it rules out memorization.\footnote{In the light of this, the experiments of \citet{Zhang17} may be seen as demonstrating that in practice we run SGD long enough that is it is no longer (unconditionally) stable.} 
In contrast, we provide a dataset-dependent argument for stability may be summarized as follows:

\begin{center}
coherence $\implies$ stability $\implies$ generalization
\end{center}

\noindent In the reverse direction, although it is known that generalization implies stability, generalization or stability do not imply coherence. There may be good generalization in spite of low coherence simply by virtue of having many training examples (for example, if the learning problem is under-parameterized) or by training for a short time.

In this context, our main result 
is a bound on the expected generalization gap,
that is, the expected difference between training and test loss over all samples of size $m$ from $\mathcal{D}$ (denoted by $\gap(\mathcal{D}, m)$) 
in terms of $\alpha$. In its most general form, where we make no assumption on the learning rate schedule, it is as follows:

\begin{theorem}[Generalization Theorem]
\label{thm:cogen}

If (stochastic) gradient descent is run for $T$ steps on a training set consisting of $m$ examples drawn from a distribution $\mathcal{D}$, we have, 
\begin{align}
|\gap(\mathcal{D}, m)| 
\le 
    \frac{L^2}{m} \ \sum_{t = 1}^{T}
    [\eta_{k}\,\beta]_{k = t + 1}^{T}
    \cdot
    \eta_t
    \cdot
    \sqrt{2 \  (1 - \alpha(w_{t - 1}))}
\end{align}
\end{theorem}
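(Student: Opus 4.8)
The plan is to run the algorithmic-stability argument of \citet{Hardt16} and \citet{Kuzborskij18}, but to replace their \emph{dataset-independent} control of how much a single swapped example perturbs the iterates by a bound that is driven by coherence. First I would invoke the standard reduction: if the (randomized) map $S \mapsto w_T$ produced by $T$ steps of (S)GD is $\epsilon$-stable on average — i.e., replacing one of the $m$ examples of $S$ by a fresh draw from $\mathcal{D}$ changes $\E[\ell_z(w_T)]$ by at most $\epsilon$ — then $|\gap(\mathcal{D}, m)| \le \epsilon$ (Bousquet--Elisseeff / Hardt et al.). So it suffices to bound $\E\,|\ell_z(w_T) - \ell_z(w'_T)|$, where $w_T$ and $w'_T$ are the final iterates obtained from $S$ and a neighbouring set $S'$, the expectation being over which coordinate is replaced, over $z$, and over the internal randomness of SGD, with the two runs coupled to draw the same minibatch indices.

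Next I would track the divergence $\delta_t \equiv \|w_t - w'_t\|$ of the two coupled trajectories, starting from $\delta_0 = 0$. At a step whose minibatch does not contain the mismatched example, $\beta$-smoothness makes the gradient step $(1 + \eta_t\beta)$-expansive, so $\delta_t \le (1 + \eta_t\beta)\,\delta_{t-1}$; at a step that does contain it, the two updates additionally differ by $\tfrac{\eta_t}{m}\,\|g_z(w_{t-1}) - g_{z'}(w_{t-1})\|$, since exactly one of $m$ minibatch entries is mismatched and the step size is $\eta_t$. Unrolling this recursion from $\delta_0 = 0$ gives
\[
\E[\delta_T] \;\le\; \sum_{t=1}^{T} \,[\eta_k\beta]_{k=t+1}^{T}\cdot \frac{\eta_t}{m}\; \E\big\| g_z(w_{t-1}) - g_{z'}(w_{t-1})\big\|,
\]
where $[\eta_k\beta]_{k=t+1}^{T}$ denotes the accumulated per-step expansion factors over the remaining steps.

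The crux is then to bring in coherence. By Jensen, $\E\|g_z(w_{t-1}) - g_{z'}(w_{t-1})\| \le \sqrt{\E\|g_z(w_{t-1}) - g_{z'}(w_{t-1})\|^2}$, and since $z, z'$ are independent draws from $\mathcal{D}$ at the fixed point $w_{t-1}$, a direct computation gives $\E_{z, z'}\|g_z - g_{z'}\|^2 = 2\big(\E\|g_z\|^2 - \|\E g_z\|^2\big) = 2\,\E\|g_z\|^2\,\big(1 - \alpha(w_{t-1})\big)$, using exactly the definition of $\alpha$ in \eqref{eq:overview:metric}. Combining this with $\|g_z\| \le L$ yields $\E\|g_z(w_{t-1}) - g_{z'}(w_{t-1})\| \le L\sqrt{2\,(1 - \alpha(w_{t-1}))}$. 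Finally, $L$-Lipschitzness of the loss converts the parameter-space bound on $\E[\delta_T]$ into a bound on $\E\,|\ell_z(w_T) - \ell_z(w'_T)|$, contributing the second factor of $L$; plugging into the stability-implies-generalization step gives the claimed inequality.

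I expect the main obstacle to be reconciling the trajectory-dependent quantity $\alpha(w_{t-1})$ with the uniformity (or on-average) requirement of the stability reduction over the neighbouring dataset $S'$ and the replaced coordinate: one has to average over which example is swapped so that the empirical distribution over the retained examples is close enough to $\mathcal{D}$ for the $\alpha$-identity to apply, and to interpret $\alpha(w_{t-1})$ as an expectation along the (random) trajectory — or else to posit that the per-step coherence is controlled along every realized path. A secondary, more routine, bookkeeping issue is the minibatch accounting: for batch size $b$ the probability that a step touches the swapped example is $b/m$ rather than $1$, which is precisely what produces the overall $1/m$ scaling after the coupling, and one must confirm the $(1+\eta_t\beta)$ expansivity bound is valid in the present (non-convex, possibly unconstrained) setting.
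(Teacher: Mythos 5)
Your proposal is correct and follows essentially the same route as the paper's proof: the replace-one average-stability reduction, the coupled-trajectory expansion recursion with the $(1+\eta_t\beta)$ factor and the $b/m$ minibatch accounting, unrolling, and then the Jensen-plus-variance identity $\E_{z,z'}\|g_z-g_{z'}\|^2 = 2(1-\alpha)\,\E\|g_z\|^2$ (the paper's Lemma~\ref{lem:diffbound}) to convert the gradient-difference term into the coherence factor. The obstacle you flag about $\alpha(w_{t-1})$ being a trajectory-dependent random quantity is real, and is treated with the same informality in the paper's own argument.
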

\noindent where $\alpha(w)$ denotes the coherence at a point $w$ in the parameter space, 
$w_t$ the parameter values seen during gradient descent, $\eta_t$ is the step size at step $t$,
\[
[\eta_{k}\,\beta]_{k = t_0}^{t_1} \equiv \prod_{k = t_0}^{t_1} (1 + \eta_{k} \, \beta),
\]
and $L$ and $\beta$ are certain Lipschitz constants. 

\begin{proof}
Please see \Cref{app:generalization_theorem} for the full formal statement and the proof.
\end{proof}

Like \citet{Hardt16}, our bound depends on the length of training (shorter the training, better the generalization), and the size of the training set (more the examples, better the generalization).%
\footnote{In fact, when $m$ is large, even with random data, we can have good generalization since we are interested in the gap, and not just test loss---fitting the training set gets harder with larger $m$.
Although natural, this inverse dependence on $m$ does not usually hold for bounds based on uniform convergence as was pointed out by \citet{Nagarajan19b}.} 
However, unlike \citet{Hardt16}, the bound also depends on the coherence as measured by $\alpha$ during training (greater the coherence, better the generalization). 
Due to the presence of the ``expansion" term $[\eta_{k}\,\beta]_{k = t + 1}^{T}$,
an interesting qualitative aspect of this dependence is that high coherence early on in training is better than high coherence later on.
Also in contrast to \citet{Hardt16}, our bound applies in an uniform manner to the stochastic and the full-batch case in the general non-convex setting.\footnote{In fact, since our analysis is in expectation, the size of the minibatch drops out of the bound.}

We emphasize that as with most theoretical generalization bounds in deep learning, our bound is extremely loose, and is therefore only useful in a qualitative sense. This is not only due to the Lipschitz constants. The coherence term based on $\alpha$ only plays a strong role when it is close to 1, but as we shall see in the next section, $\alpha$ on real datasets is quite far from 1.
This is because $\alpha$, being an average over the entire network, is a rather blunt instrument, and conjecture that a tighter bound may be obtained in terms of layer-wise coherences, or perhaps better yet, in terms of ``minimum coherence cuts" of the network.

The proof of the theorem follows the iterative framework introduced in \citet{Hardt16}.
We analyze the evolution of two models under stochastic gradient descent, one trained on the original training set, and another trained on a perturbed training set where the $i$th training example ($z_i$) is replaced with a different one from the data distribution (call it $z_i'$). 
When a minibatch with the $i$th example is encountered, the two models diverge (further) due to the different gradients, and this divergence (in expectation) is bounded by the quantity in (\ref{eq:chap1:diffmetric}).
This quantity, in turn, is bounded by $\alpha$ as follows (see Lemma~\ref{lem:diffbound} in \Cref{app:alphafacts}):
\[
\E_{z \sim \mathcal{D},\,z' \sim \mathcal{D}}\,[\|g_z - g_{z'}\|]
\leq    
\sqrt{2 \  (1 - \alpha) \E_{z \sim \mathcal{D}} \  [g_z \cdot g_z]}.
\]

\section{Measuring \texorpdfstring{$\alpha$}{Alpha} on Real and Random Datasets}
\label{sec:overview:measurement}

\begin{figure*}[t]
\centering
\includegraphics[width=\textwidth]{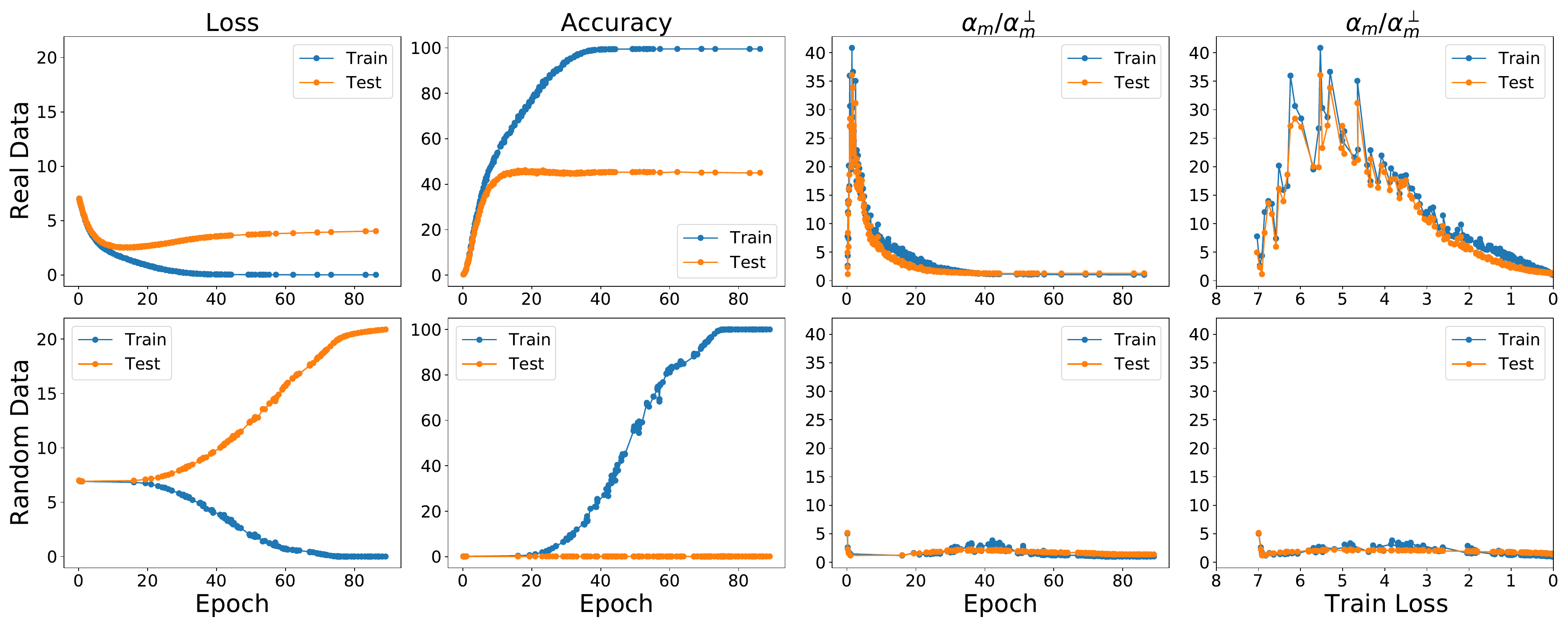}
\vspace{0.1cm}
\caption{%
An experiment in the spirit of~\citet{Zhang17} illustrating the generalization mystery in deep learning.
We train two ResNet-50 models, one on ImageNet with original labels (``real", top row), and another on ImageNet with images replaced by Gaussian noise (``random", bottom row) using vanilla SGD and no explicit regularization. 
As the loss and accuracy curves (first two  columns) show, the network has sufficient capacity to memorize the training data, yet, it generalizes in one case and not the other.  
We believe that the reason for this difference in behavior can be found by analyzing the similarity between per-example gradients during training, that is, {\em coherence}.
Using the $\alpha / \alpha_{m}^{\perp}$ metric for coherence (last two columns), we see that in the case of real data, the per-example gradients are much more similar, and each example helps reduce the loss on many other examples, as compared to the random case. }
\vspace{1cm}
\label{fig:chap1:ResNet-50imagenet}
\end{figure*}

\begin{figure*}[t]
\centering
\includegraphics[width=\textwidth]{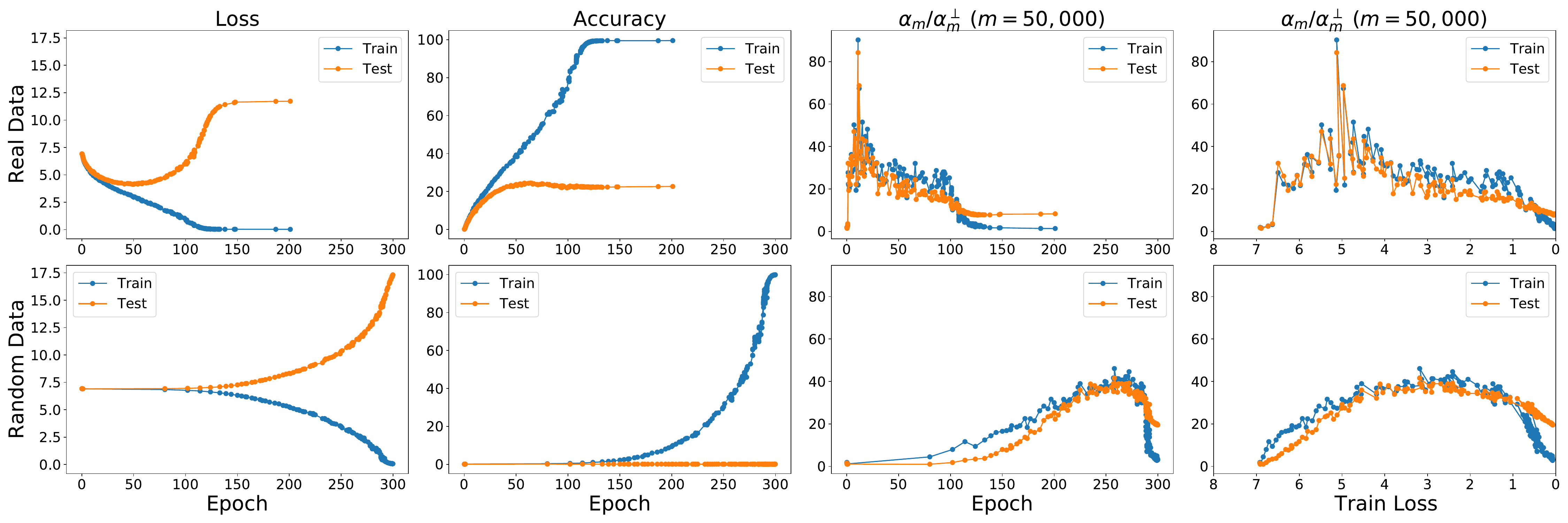}
\vspace{0.1cm}
\caption{Unlike the situation with ResNet-50 (\Cref{fig:chap1:ResNet-50imagenet}), with AlexNet we find that the peak coherence for random data (second row) as measured by $\ralpha$ can be surprisingly high, even though it happens much later in training, and is lower than that of real (first row). Although this appears to be a contradiction to the theory, it is not; it is a limitation of the metric. $\ralpha$ in this plot is a measure of coherence over the entire network (that is, over entire per-example gradients), and is therefore an average quantity. A closer look at the layer-by-layer values of $\ralpha$ as shown in \Cref{fig:chap1:alexnetimagenet_layer} reveals, once again, a significant difference between real and random data.}
\vspace{1cm}
\label{fig:chap1:alexnetimagenet_overall}
\end{figure*}

\begin{figure*}[t]
\centering
\includegraphics[width=\textwidth]{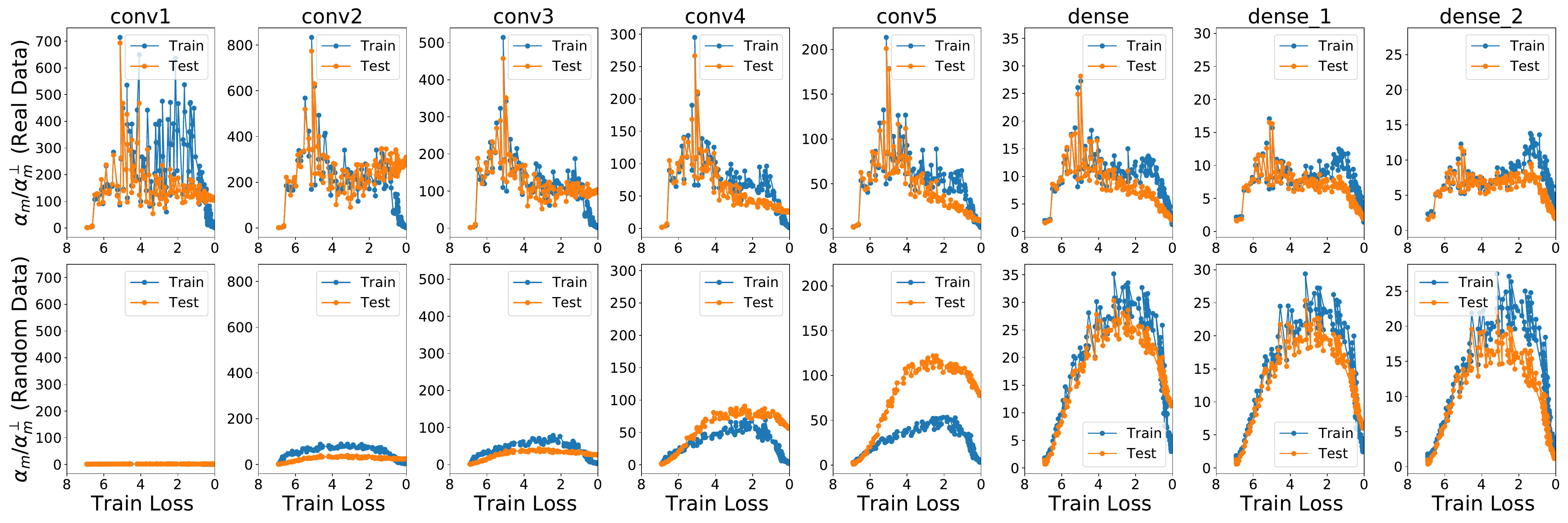}
\vspace{0.1cm}
\caption{A layer-by-layer breakdown of $\ralpha$ for AlexNet from \Cref{fig:chap1:alexnetimagenet_overall} shows that on random data (second row), $\ralpha$ is indeed close to 1 and much lower than that of real data (first row) for the first few layers. For the higher (dense) layers, coherence is comparable between real and random, though note the difference in scale of $\ralpha$ between the convolutional and dense layer plots.}
\label{fig:chap1:alexnetimagenet_layer}
\end{figure*}

In this section, we use the $\ralpha$ metric developed in Section~\ref{sec:overview:metrics} to investigate the generalization mystery.
We can measure $\alpha_m$ at any point in training by computing it directly using (\ref{eq:overview:metric}) on a given sample.%
\footnote{Observe that $\alpha$ can be computed efficiently in an online fashion by keeping a running sum for $\E[g_z]$, and another for $\E[g_z \cdot g_z]$.}
The sample could be either from the training set (giving us ``training" coherence), or the test set (``test" coherence).

One complication is the use of batch normalization~\citep{Ioffe15} in most practical networks, since in that case, per-example gradients are no longer well-defined. 
This is a problem not just with $\alpha$, but with any metric that is based on per-example gradients.
However, $\alpha$ has an interesting mathematical property that allows us to impute it using the coherence of {\em mini-batch} gradients.
We discuss this in greater detail in \Cref{app:measuring-coherence}. 
In what follows, we use this imputation method for networks with batch normalization.

In our main experiment, along the lines of \citet{Zhang17}, 
we train a ResNet-50 network on two datasets, 
one real (ImageNet) and the other random (``ImageNet'' with random input pixels).
Both networks are trained with vanilla SGD (that is, no momentum)
with a constant learning rate of 0.5 and batch size of 4096.
We do not use any explicit regularizers such as weight decay or dropout. 
We estimate test and train $\alpha$ during training using the test set (which has 50K samples) and a training sample (a random sample of 50K training examples). We take snapshots of the network during training each time the (minibatch) training loss falls by 1\% from the previous low watermark, and compute loss, accuracy, and $\ralpha$ on these snapshots using the test and training samples.

The resulting training curves are shown in the first two columns of Figure~\ref{fig:chap1:ResNet-50imagenet}.
As expected, both networks achieve zero training loss and near perfect training accuracy, but only the network trained on (real) ImageNet shows non-trivial generalization to the test set.

Coherence as measured by $\ralpha$ ($m$ = 50,000) is shown in the third and fourth columns of Figure~\ref{fig:chap1:ResNet-50imagenet}.
The third column shows coherence as a function of the number of epochs trained, whereas the fourth columns shows it in terms of the training loss.
Since the realized rate of learning is different for the two datasets (the real data is learned in fewer epochs), plotting coherence against loss allows us to compare across the two datasets more easily.
 
In the case of real data, we observe that $\ralpha$ starts off low (around 1) in early training and then increases to a maximum (about 40) within in the first few epochs and then returns to a low value again (around 1) at the end of training.\footnote{For now, we ignore the small differences in training and test coherence.} 
In the plot of $\ralpha$ against training loss, we see that when actual learning happens, that is, when the loss comes down, $\ralpha$ stays around 20.
In other words, when training with real labels, each training example in our set of 50K examples used to measure coherence helps many other examples.

In contrast, for random data, although the evolution of $\ralpha$ is similar to that of real data, the actual values, particularly, the peak is very different. $\ralpha$ starts off low (around 1), increases slightly (staying usually below 5), and then returns back to a low value (around 1).
Therefore, each training example in the case of random data, helps only one or two other examples during training, that is, the 50K random examples used to estimate coherence are more or less orthogonal to each other.%
\footnote{We note here that very early in training, that is, the first few steps (not shown in \Cref{fig:chap1:ResNet-50imagenet}, but presented in \Cref{fig:coherence_first_steps} instead), $\ralpha$ can be very high even for random data due to imperfect initialization. All the training examples are coordinated in moving the network to a more reasonable point in parameter space. As may be expected from our theory, this movement generalizes well: the test loss decreases in concert with training loss in this period. Rapid changes to the network early in training is well documented (see, for example, the need for learning rate warmup in \citet{He16} and \citet{Goyal17}).}

In summary, 
\boldcenter{
With a ResNet-50 model on real ImageNet data, in a sample of 50K examples, each example helps tens of other examples during training, whereas on random data, each example only helps one or two others.
}
This provides evidence that the difference in generalization between real and random stems from a difference in similarity between the per-example gradients in the two cases, that is, from a difference in coherence.

While experiments with other architectures and datasets also show similar differences between real and random datasets (see \Cref{app:coherence-additional}), there are cases when the coherence of random data {\em as measured by $\ralpha$ over the entire network} can be surprisingly high for an extended period during training.%
\footnote{As we discussed earlier, coherence even for random can be high for a short period early on in training due to imperfections in initialization. But the difference here is {\em sustained} high coherence.}
In our experiments, we found an extreme case of this when we replaced the ResNet-50 network in the previous experiment with an AlexNet network (learning rate of 0.01).
The training curves and measurements of $\ralpha$ in this case are shown in Figure~\ref{fig:chap1:alexnetimagenet_overall}.
As we can see, unlike the ResNet-50 case, $\ralpha$ reaches a value of 40 for $m = 50,000$. In other words, in a sample of 50K examples, at peak coherence, each random example helps 40 other examples!%
\footnote{
That said, note that (1) even in this case, at its peak $\ralpha$ for real is more than 2$\times$ the peak for random; and (2) the high coherence of random occurs much later in training than that of real which possibly indicates the importance of the ``expansion term" ($[\eta_{k}\,\beta]_{k = t + 1}^{T}$) in the bound of Theorem~\ref{thm:cogen} (see discussion in Section~\ref{sec:overview:bound}).}

What is going on? An examination of the per-layer values of $\ralpha$ provides some insight. These are shown in Figure~\ref{fig:chap1:alexnetimagenet_layer}.
We see that for the first convolution layer ({\sf conv1}) in the case of random---and {\em only} in that case---$\ralpha$ is approximately 1 indicating that the per-example gradients in that layer are pairwise orthogonal (at least over the sample used to measure coherence).%
\footnote{The difference in $\ralpha$ in the first layer between real and random is also seen when the entire training set is used to measure $\ralpha$ (\Cref{fig:alexnet_full_data_layers}).}
This indicates that the first layer plays an important role in ``memorizing" the random data since each example is pushing the parameters of the layer in a different direction (orthogonal to the rest). This is not surprising since the images are comprised of random pixels.%

Now, the overall (network) $\alpha$ is a convex combination of the per-layer $\alpha$s (see \Cref{thm:weightedmean} in \Cref{app:alphafacts}). Since the fully connected layers have high coherence, 
overall $\alpha$ (as shown in Figure~\ref{fig:chap1:alexnetimagenet_overall}) can be high even though there is a layer with very low $\alpha$ (at the orthogonal limit). 
In other words,
\boldcenter{
As a measure of coherence, $\ralpha$ over the whole network, being an average, is a blunt instrument, and therefore, a finer-grained analysis, for example, on a per-layer basis, is sometimes necessary.
}
An important open problem, therefore, is to devise a better metric for coherence that accounts for the structure of the network, and to use that metric to obtain a bound stronger than that in Theorem~\ref{thm:cogen}.
Please also see the discussion in \Cref{sec:overview:feedback}, and particularly, \Cref{ex:two-neurons-m} for a closer look at this in the context of a simple, illustrative example.

\newbold{Evolution of Coherence.}
Experiments across several architectures and datasets show a common pattern in how coherence as measured by $\ralpha$ (or equivalently, $\alpha$) changes during training.
Ignoring the initial transient in the first few steps of training, coherence follows a broad parabolic pattern:
It starts off at a low value, rises to a peak, and then comes back down to the orthogonal limit.%
\footnote{In rare cases, such as a fully connected network on {\sc mnist} where the signal is strong and easy to find, coherence starts off high.}
This happens regardless of whether the dataset is random or real, indicating that this is an optimization (as opposed to a generalization) effect.
We discuss the reasons behind this in \Cref{app:evolution}. 


\section{From Measurement to Control: Suppressing Weak Descent Directions}
\label{sec:overview:suppress}

Weak directions in the average gradient are supported by few examples or perhaps even one, and therefore, are less stable than strong directions which are supported by many examples.
Therefore, as discussed in \Cref{sec:overview:informal}, 
suppressing weak directions in the average gradient should lead to less overfitting
and better generalization.
Now, although existing regularization techniques such as weight decay, dropout, and early stopping may be viewed through this lens, 
the theory also suggests a new, more direct, regularization technique we call {\em winsorized gradient descent} (WGD).

In WGD, instead of updating each parameter with the average gradient as in gradient descent, we update it with a ``winsorized" average where the most extreme values (outliers) are clipped.
Formally, let $w_t^{(j)}$ represent the $j$th trainable parameter (that is, the $j$th component of the parameter vector $w_t$) at step $t$, and $g^{(j)}_i(w_t)$ the $j$th component of the gradient of the $i$th example at $w_t$.
In normal gradient descent, we update $w_t^{(j)}$ as follows:
\[
w_{t + 1}^{(j)} = w_t^{(j)} - \eta
\frac{1}{m} \sum_{i = 1}^{m} g_i^{(j)}(w_t).
\]

Now, let $c \in [0, 50]$ be a hyperparameter that controls the level of winsorization. 
Define $l^{(j)}$ to be the $c\,$th percentile of $g_i^{(j)}(w_t)$ (over the examples $i$). 
Likewise, let $u^{(j)}$ be the $(100 - c)\,$th percentile of $g_i^{(j)}(w_t)$.
The update rule with winsorized gradient descent is as follows:
\[
\boxed{
\quad
w_{t + 1}^{(j)} = w_t^{(j)} - \eta
\frac{1}{m} \sum_{i = 1}^{m} {\rm clip}(g_i^{(j)}(w_t),\,l^{(j)},\,u^{(j)})
\quad
}
\]
where ${\rm clip}(x, l, u) \equiv {\rm min}({\rm max}(x, l), u)$.

The modified update rule minimizes the effect of outliers in the per-example gradients on a per-coordinate basis.
The value of $c$ dictates what is an outlier. When $c = 0$, nothing is an outlier, and this corresponds to normal gradient descent, whereas when $c = 50$, all values other than the median are considered outliers. 
Thus, increasing $c$ reduces variance and increases bias.

\begin{example}[WGD applied to \Cref{ex:two-datasets}]

Recall that using the component-wise median gradient in \Cref{ex:two-datasets} instead of the average gradient reduced the generalization gap to zero for both ``real" and ``random."
We note that this corresponds to running WGD with $c = 50$.

\qed
\end{example}

Although the modification for WGD is a conceptually simple change, it greatly increases 
the computational expense due to
the need to compute and store per-example gradients for all the examples. 
The computational expense can be reduced by 
performing winsorized {\em stochastic} gradient descent (WSGD). This is a straight forward modification of SGD where the winsorization is only performed over the examples in the mini-batch rather than all examples in the training set.

\begin{figure*}[t]
\centering
\includegraphics[width=\textwidth]{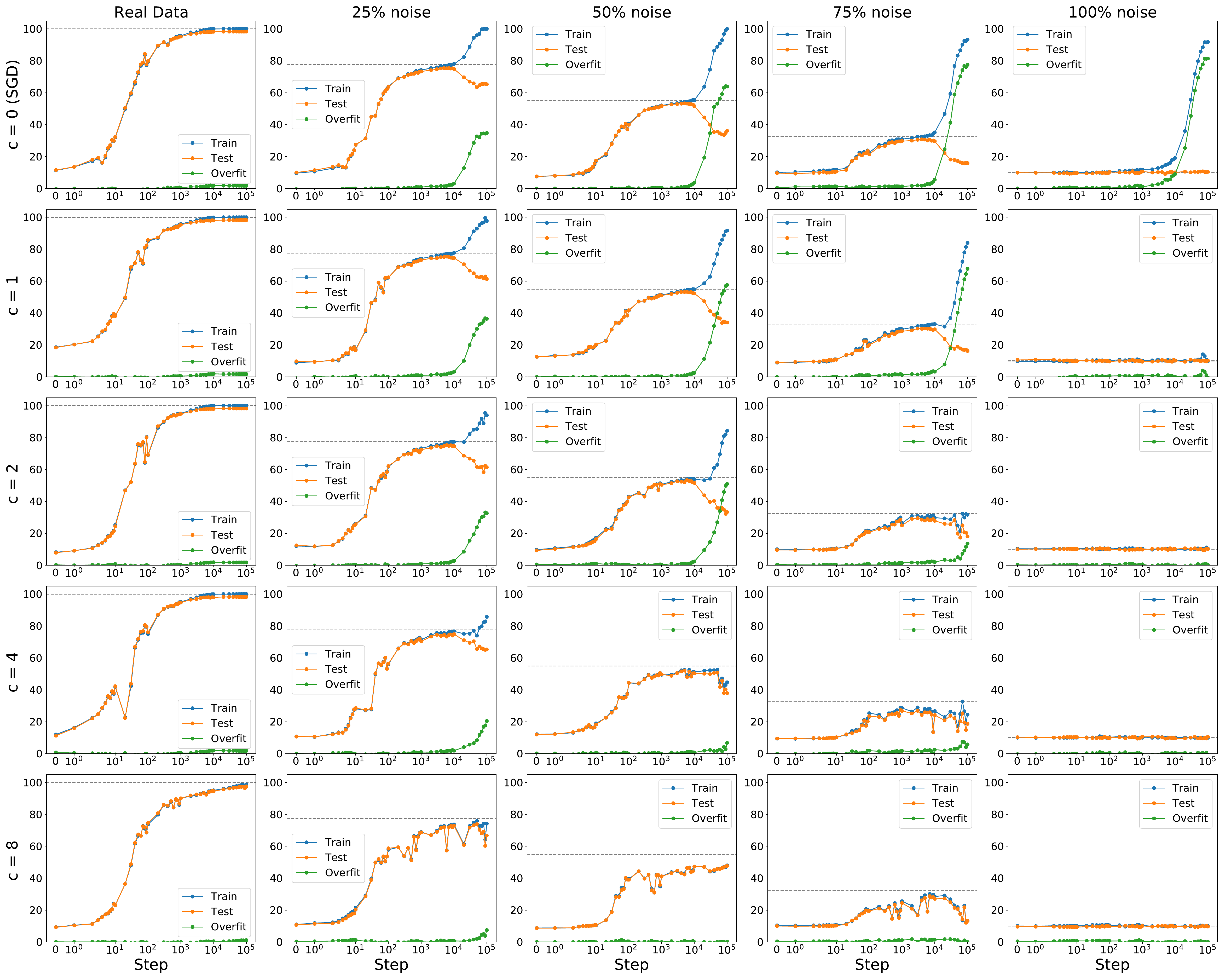}
\caption{
Generalization improves when weak directions in the average gradient are suppressed during gradient descent. Weak directions are suppressed by winsorization, that is, by clipping extreme per-examples gradients (independently for each coordinate of the gradient). The parameter $c$ controls the level of winsorization and $c = 0$ corresponds to using the (usual) average gradient. We train a fully connected network on {\sc mnist} with varying amounts of label noise (see \Cref{fig:overview:winsorization_gaussian} for a similar experiment with random pixels).
}
\vspace{1cm}
\label{fig:chap1:iclrwinsorizationmnist}
\end{figure*}

\newbold{WSGD on MNIST.} 
We train a small fully-connected ReLU network with 3 hidden layers of 256 neurons each for 60,000 steps (100 epochs) with a fixed learning rate of 0.1 on {\sc mnist} and 4 variants with different amounts of label noise $\epsilon$ ranging from 25\% to 100\%.%
\footnote{A label noise of 25\% means that the dataset is constructed by randomly assigning labels to 25\% of the training examples in {\sc mnist} that are chosen uniformly at random. The remaining 75\% have the correct labels as do the test examples. Since {\sc mnist} has 10 possible labels, this means that 75\% + (1/10) 25\% = 77.5\% of training examples have pristine (that is, correct) labels and 22.5\% have corrupted labels.}
We use WSGD with a batch size of 100 and vary $c$ in $\{0, 1, 2, 4, 8\}$.
Since we have 100 examples in each minibatch, the value of $c$ immediately
tells us how many outliers are clipped in each minibatch. For example, $c=2$
means the 2 largest and 2 lowest values of the per-example gradient in a batch are
clipped (independently for each trainable parameter in the network), and as always, $c = 0$ corresponds to unmodified SGD.

The resulting training and test curves shown in Figure~\ref{fig:chap1:iclrwinsorizationmnist}. The
columns correspond to different amounts of label noise and the rows to
different amounts of winsorization.
In addition to the training and test accuracies (${\sf ta}$ and ${\sf va}$,
respectively), we show the level of overfit which is defined as ${\sf ta} - {\sf va}$.

As expected, as $c$ increases, and the weaker directions are suppressed more, the extent of overfitting decreases. 
Furthermore, for larger values of $c$ (for example, $c = 8$) the ability to fit the corrupted labels is severely impacted. The training accuracy stays below the accuracy that would be obtained if only the uncorrupted labels were learned (shown by dashed gray lines in the plot).
The ability to memorize, that is, fit random labels (100\% noise) is impacted more than the ability to fit real labels (0\% noise): the effective learning rate (the rate at which training accuracy increases) is much lower for random than for real.

In summary,
\boldcenter{
If we suppress weak gradient directions by modifying SGD to use a robust average of per-example gradients that excludes outliers, generalization improves. This provides further direct evidence that weak directions are responsible for overfitting and memorization.
}

Finally, we notice that with a large amount of winsorization, there can be 
optimization instability (not to be confused with algorithmic stability which as we have seen 
is improved): training accuracy can fall after a certain point.
We are not sure what causes the instability but conjecture that this is because of a strengthening of positive feedback loops in strong directions. Positive feedback loops are discussed in  \Cref{sec:overview:feedback}.

\begin{figure*}[t]
\centering
\includegraphics[width=\textwidth]{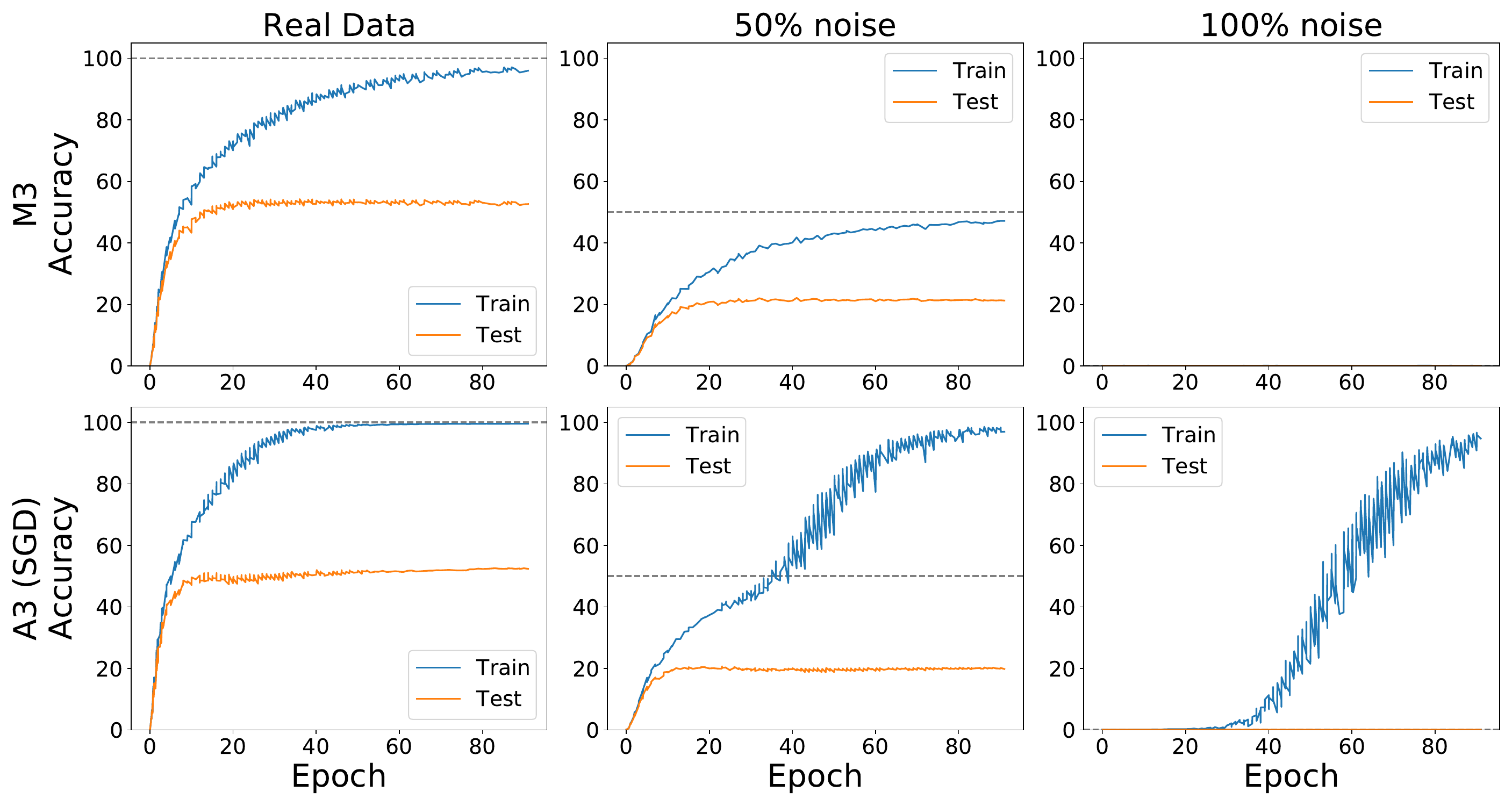}
\vspace{0.1cm}
\caption{Taking the coordinate-wise median of 3 micro-batches ({\sf M3}) suppresses weak gradient directions more than taking their coordinate-wise average ({\sf A3}) (the same as ordinary SGD), leading to better generalization. 
Here, we train a ResNet-50 on 3 datasets derived from ImageNet by replacing some fraction of the training images with Gaussian noise. 
In the case of random data (100\% noise), {\sf M3} prevents memorization, so the training and test curves both lie on the x-axis. In contrast, as expected, {\sf A3} (SGD) memorizes the training set.
When only half the training images are replaced by noise (50\% noise), {\sf M3} reaches a training accuracy near 50\% suggesting that only the real images are learned. This is confirmed in \Cref{fig:overview:resnet_imagenet_a3_vs_m3_50pct_breakdown}.}
\vspace{1cm}
\label{fig:overview:resnet_imagenet_a3_vs_m3}
\end{figure*}

\begin{figure*}[t]
\centering
\includegraphics[width=0.75\textwidth]{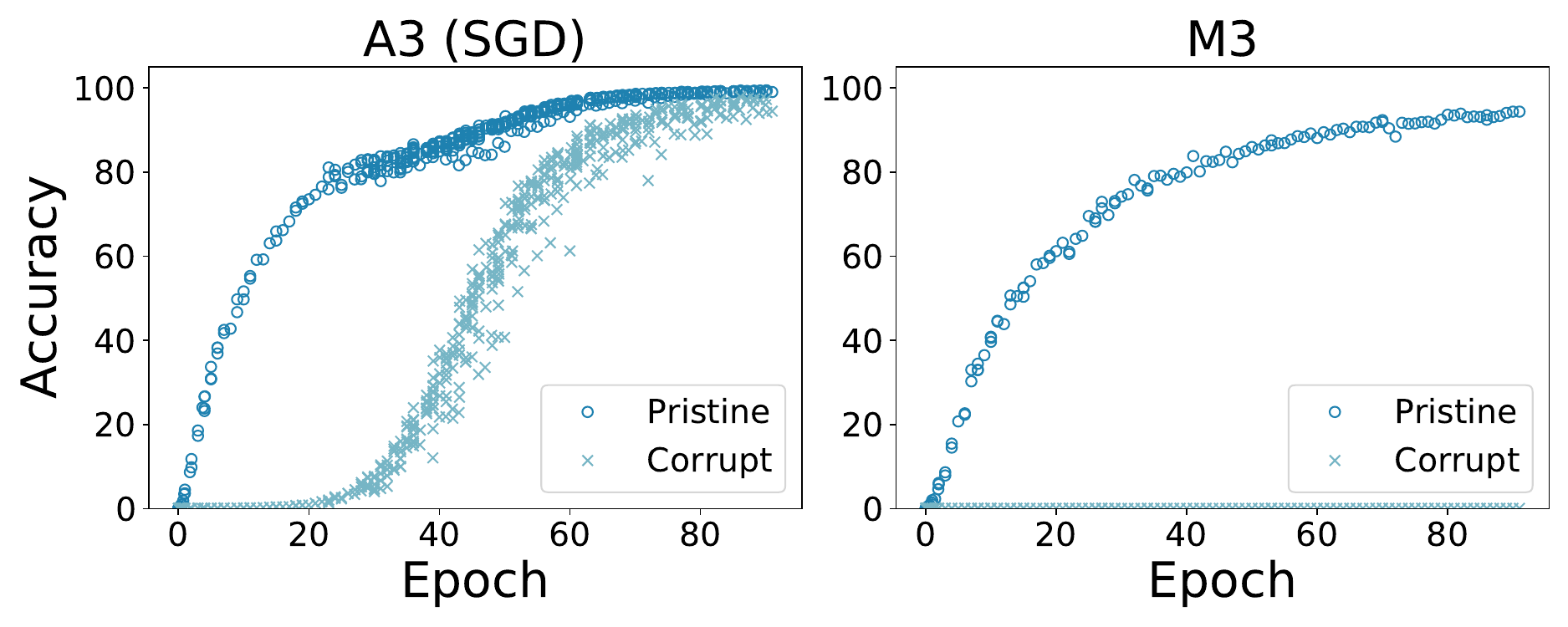}
\vspace{0.1cm}
\caption{In the experiment of \Cref{fig:overview:resnet_imagenet_a3_vs_m3}, on the dataset where half the training images are replaced by noise (50\% noise), {\sf M3} only learns the real (pristine) training images but does not learn the random (corrupt) images. In contrast, {\sf A3} (that is, SGD) learns both, though the corrupt examples are learned later in training. 
This is further evidence that the gradients of real data are well-aligned, and add up to strong directions in the average gradient; whereas those of random data are not well-aligned, and may be suppressed if weak directions are attenuated through robust averaging. 
This difference in the strength of the gradients in the two cases explains the observed difference in generalization between them.}
\vspace{1cm}
\label{fig:overview:resnet_imagenet_a3_vs_m3_50pct_breakdown}
\end{figure*}

\newbold{Scaling up.}
Winsorization requires computing and storing many per-example gradients.%
\footnote{For exact computation, the storage would depend on $c$ and may be tractable for small $c$. This could be reduced further by employing approximations to compute streaming quantiles.}
This makes it impractical for large datasets such as ImageNet, as well as 
inapplicable to popular networks such as ResNet-50 which employ batch normalization as per-example gradients are not defined in that case.

An alternative to winsorization that addresses both these problems is the well known technique of taking median of means (see, for example, \citet{Minsker13}).%
\footnote{
Median of means has the theoretical property that its deviation from the true mean is bounded above by $O(1/\sqrt{m})$ with high probability (where $m$ is the number of samples). The sample mean satisfies this property only if the observations are Gaussian. 
}
The main idea of the \emph{median of means} algorithm is to divide the samples into $k$ groups, computing the sample mean of each group, and then returning the median of these $k$ means. 
The most obvious way to apply this idea to SGD is to divide a mini-batch into $k$ groups of equal size. We compute the mean gradients of each group as usual, and then take their coordinate-wise median.
The median is then used to update the weights of the network.\footnote{Since we are simply replacing the mini-batch gradient with a more robust alternative, this technique may be used in conjunction with other optimizers.}

Even though the algorithm is straightforward, its most efficient implementation (that is, where the $k$ groups are large and processed in parallel) on modern hardware accelerators requires low-level changes to the stack to allow for a median-based aggregation instead of the mean. 
Therefore, in this work, we simply compute the mean gradient of each group as a separate {\em micro-batch} and only update the network weights with the median every $k$ micro-batches, i.e., we process the groups serially. 

In the serial implementation, $k = 3$ is a sweet spot. We have to remember only 2 previous micro-batches, and since ${\rm median}(x_1, x_2, x_3) = \sum_i x_i - \min_i x_i - \max_i x_i$ (where $i \in \{1, 2, 3\}$), we can compute the median with simple operations. 
We call this {\sf M3} (median of 3 micro-batches).

\begin{example}[{\sf M3} applied to \Cref{ex:two-datasets}]
In this case, applying {\sf M3} with a micro-batch size of 1 (batch size of 3) completely suppresses the weak gradient directions (corresponding to the idiosyncratic signals) in both the ``real" and ``random" datasets, and leads to perfect generalization. 

\qed
\end{example}

\Cref{fig:overview:resnet_imagenet_a3_vs_m3} shows the effect of {\sf M3} on 3 datasets: real ImageNet, ImageNet with half the images replaced by Gaussian noise (50\% noise), and ``ImageNet" with all the images replaced by Gaussian noise (100\% noise). The 100\% noise dataset is the same as the ``random" dataset in \Cref{sec:overview:measurement}.
We run {\sf M3} with a micro-batch of 256 (that is, batch size of $3 \times 256$) for 90 epochs and  a learning rate of 0.20.
For comparison we also run SGD with an identical setup. Instead of the median of 3 micro-batches, in SGD we compute their average, and hence, in this context, we refer to SGD as {\sf A3} to highlight that the {\em only} difference in the two cases is replacing the {\sf median} operation with {\sf average}.%
\footnote{In practice, SGD with a batch size of $3 \times 256$ is subtly different from A3 with a micro-batch of $256$ in how the low-level computation is distributed among the accelerators and in the resulting statistics for batch normalization. Although that difference is immaterial, we use {\sf A3} here to eliminate {\em all} differences other than the method of aggregation.}

We observe the following:

\begin{enumerate}
    \item On real data, {\sf M3} only slightly slows down training after about 50\% of the dataset has been learned, and almost reaches the 100\% training accuracy achieved by A3. Also, {\sf M3} has a lower generalization gap (43.34\%) compared to {\sf A3} (47.15\%).
    
    \item With 50\% noise, {\sf M3} reaches a training accuracy less than 50\% which is much lower than the 100\% training accuracy of {\sf A3}. Thus {\sf M3} has a much lower generalization gap (25.90\%) compared to {\sf A3} (77.13\%).
    Further investigation shows that {\sf M3} learns the real 50\% of the training dataset (the ``pristine" examples) well, but does not learn the remaining 50\% of the training set that is Gaussian noise (the ``corrupt" examples). This is, of course, in contrast to {\sf A3} which learns both almost equally well. See \Cref{fig:overview:resnet_imagenet_a3_vs_m3_50pct_breakdown}.
    
    \item Finally, in the case of 100\% noise, {\sf M3} fails to learn any of the training data at all in sharp contrast to {\sf A3}.
    
\end{enumerate}

This provides further evidence that (a) weak directions are responsible for memorization, and (b) suppressing weak directions improves generalization (via improving stability).

The choice of hyper-parameters of {\sf M3} requires care. Larger the size of the micro-batch, less effective is the median operation in suppressing weak directions (consider the extreme case when the micro-batch is the entire training set and taking the median serves no purpose).
However, smaller the micro-batch, less reliable are the batch statistics (for batch normalization), and higher the effective level of winsorization (consider a micro-batch size of 1 where the effective winsorization level now is $c = 50$).
The learning rate has to be set in conjunction with the micro-batch size, and for some learning rates we found the training to be unstable (see \Cref{fig:overview:resnet_imagenet_a3_vs_m3_unstable}).
This is similar to the optimization stability seen with winsorization previously.

\section{Why are Some Examples (Reliably) Learned Earlier?}
\label{sec:overview:easyhard}

\begin{figure*}[t]
\centering
\includegraphics[width=\textwidth]{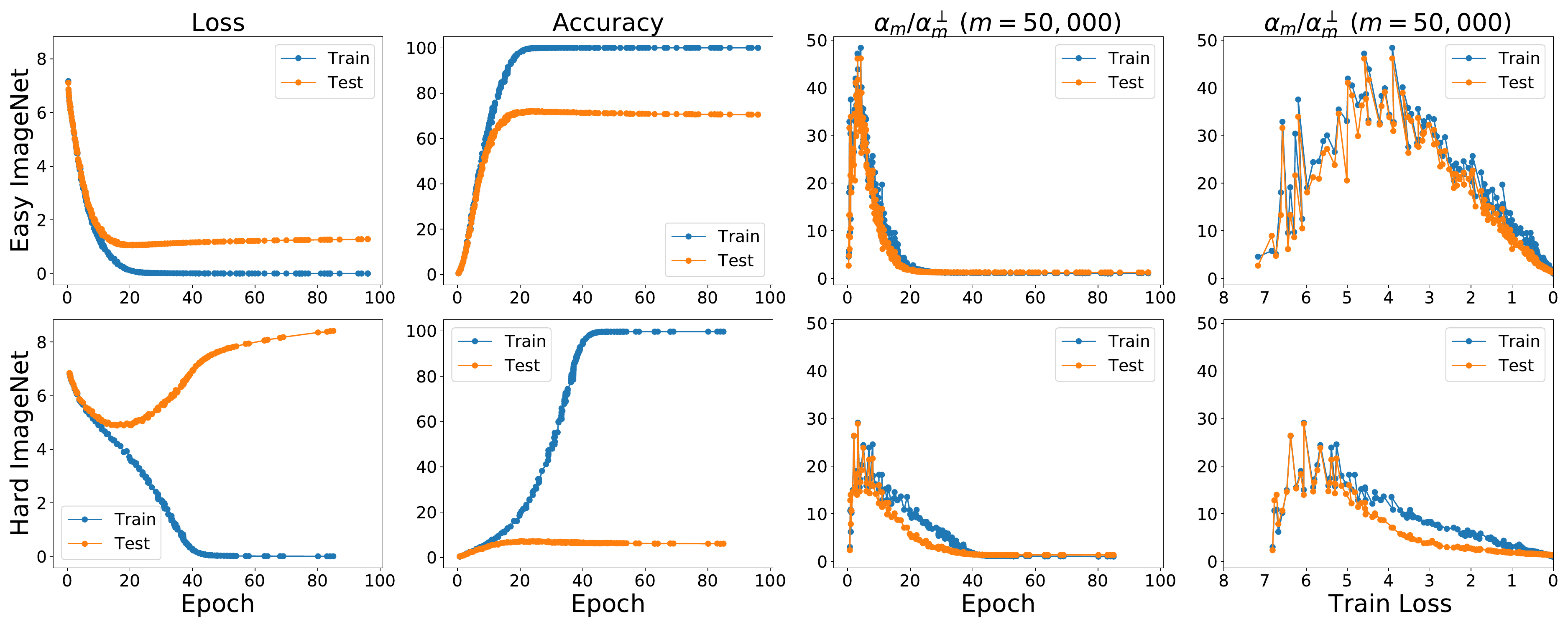}
\vspace{0.1cm}
\caption{Coherence as measured by $\ralpha$ is higher for examples that are learned early in ImageNet training (Easy ImageNet) than for those that are learned later (Hard Imagenet).
As expected, the higher coherence translates into better generalization. 
The network used is a ResNet-50.}
\vspace{1cm}
\label{fig:chap1:easyhardfigure1}
\end{figure*}

In a study on memorization in shallow fully-connected networks and small 
convolutional networks on {\sc mnist} and {\sc cifar}-10, \citet{Arpit17} discovered that for real datasets, starting from different random initializations, many examples are consistently classified correctly or incorrectly after one epoch of training. 
They call these {\em easy} and {\em hard} examples respectively. They hypothesize that this variability of difficulty in real data ``is because the easier examples are explained by some simple patterns, which are reliably learned within the first epoch of training.'' 

But that begs the question what makes a pattern ``simple" and why are such patterns reliably learned early? 
The theory of Coherent Gradients provides a refinement of their hypothesis:
\boldcenter{
Easy examples are those examples that have a lot in common with other examples where commonality is measured by the alignment of the gradients of the examples. 
}
Under this assumption, it is easy to see why an easy example is learned sooner reliably: most gradient steps benefit it.

Note that this hypothesis is more nuanced than the conjecture in \citet{Arpit17}. 
First, the difficulty of an example is not simply a property of that example (whether it has simple patterns or not), but depends on the relationship of that example to others in the training set (what it shares with other examples).

Second, the dynamics of training, including initialization, can determine the difficulty of examples. 
Consequently, it can accommodate the observed phenomenon of adversarial initialization~\citep{Liao18,Liu19} where examples that are easy to learn with random initialization become significantly harder to learn with a different, specially constructed initialization, and the generalization performance of the network suffers. 
Any notion of simplicity of patterns intrinsic to an example alone cannot explain adversarial initialization since the dataset remains the same, and therefore, so do the patterns in the data.

In order to test the above hypothesis, we create two datasets with 500K training and 50K test examples from the standard ImageNet training set. These datasets consists of examples that a ResNet-50 reliably learns early (``Easy ImageNet") or late (``Hard ImageNet"). See \Cref{app:easyhard} for more details on this construction.
Next, as in \Cref{sec:overview:measurement}, we measure loss, accuracy, and coherence during training on both datasets. 

\begin{figure*}[t]
\centering
\includegraphics[width=0.5\textwidth]{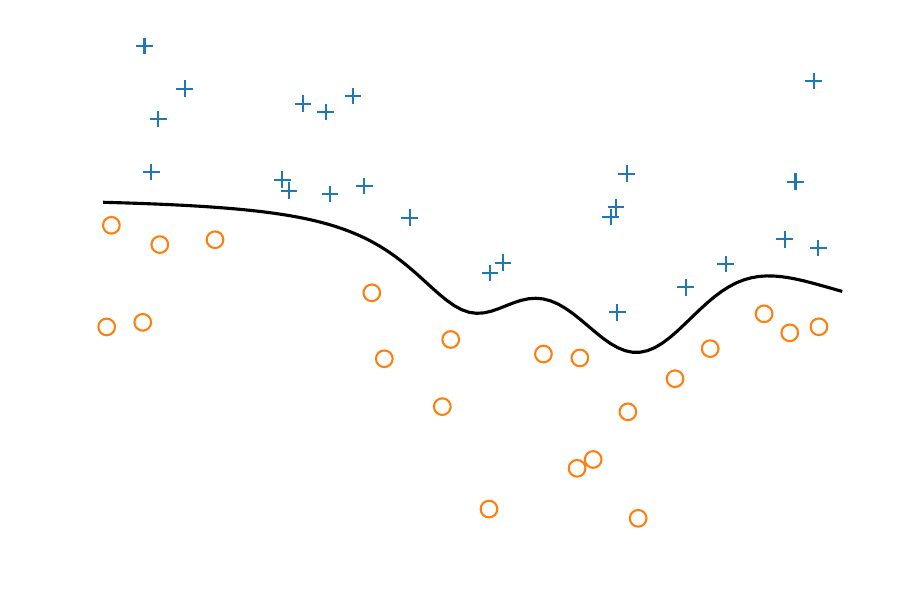}
\caption{If gradient descent enumerates hypotheses of increasing complexity
then the examples learned early, that is, the easy examples, should be the ones 
far away from the decision boundary since they can be separated by simpler hypotheses. 
However, a model learned from the hard examples then, should generalize well to easy examples, but that is not what we find on ImageNet. A model trained on hard ImageNet examples has only a 17\% accuracy on easy ImageNet examples (in contrast to a 71\% accuracy obtained by a model trained on other easy examples).}
\vspace{1cm}
\label{fig:decision}
\end{figure*}

The results are shown in  \Cref{fig:chap1:easyhardfigure1}.
We observe that the coherence for Easy ImageNet is significantly higher than that of Hard ImageNet, and as might be expected from the theory, the generalization gap of Easy ImageNet is smaller since the average gradient in that case is more stable.%
\footnote{This is also seen in an {\em in situ} measurement of coherence of easy and hard examples during regular ImageNet training, but due to a technicality involving batch normalization statistics, that measurement is less reliable. See \Cref{app:easyhard} for details.}

In other words, since easy examples, as a group have more in common with each other (than with the hard examples, or the hard examples have amongst themselves), (1) a model is less impacted by the presence or absence of a single easy example in the training set, leading to good generalization on easy examples; and (2) easy examples have a stronger (focused) effect on the average gradient direction than the hard examples (which are diffuse) leading to easy examples being learned faster.
Thus there is a correlation between how quickly examples are learned and their generalization.%
\footnote{This may be seen as an additional justification for early stopping to the one in \Cref{sec:overview:informal}.}

Although the above experiment shows that easy examples have more in common with each other, it does not rule out that nonetheless there is something intrinsic to each easy example that accelerates learning. For example, it may be that easy examples have larger gradients 
than hard examples.
To investigate this, we train a network on a {\em single} example from Easy ImageNet or Hard ImageNet at a time and measure the number of steps it takes to fit the example. We find that there is no statistically significant difference between the two datasets in this regard.
It is only when sufficiently many easy
examples come together (as in the earlier experiment), that they train faster than hard examples. This may be seen as further evidence that it is the relationship between examples rather than something intrinsic to an example that controls difficulty.
Please see \Cref{app:easyhard} for more details.

\newbold{Does gradient descent explore functions of increasing complexity?}
One intuitive explanation of generalization is that GD somehow explores candidate hypotheses of increasing ``complexity'' during training. Although this is backed by some observational studies (for example, \citet{Arpit17, Nakkiran19, Rahaman19}), we are not aware of any causal mechanism that has been proposed to explain this sequential exploration.%
\footnote{For example, there is no causal intervention similar to suppressing weak gradient directions.}

Our theory suggests one. As described above, as training progresses, more and more examples get fit, starting with examples whose gradients are well-aligned with those of others, and ending with those examples whose gradients are more idiosyncratic.
Thus, one may observe the function implemented by the network to get more and more complicated in the course of training simply because it fits (interpolates through) more and more training examples (points).
An interesting question for future work is if this argument can be formalized to explain some of the specific empirical observations in the literature (such as ReLU networks learning low frequency functions first~\citep{Rahaman19} or  learning low descriptive-complexity functions first~\citep{VallePerez19}).

However, one may wonder if there is some {\em other} causal mechanism that causes SGD to explore functions in increasing order of complexity? 
A simple extension of the previous experiment with Easy and Hard ImageNet suggests not.
To motivate the extension, consider a thought experiment motivated by the 2D classification problem shown in \Cref{fig:decision}. If SGD were to somehow ``try" simpler hypothesis early on, then the easy examples would be the ones far from the decision boundary (which could be easily separated by say, linear classifiers), and the hard ones would be ones close to the boundary (which would need more complex curves to separate).
Therefore, based on the intuition provided by \Cref{fig:decision}, one might expect that the decision boundary learned from the hard examples, would generalize \underline{well} to the easy examples. 

But this is not what we see with real data in high dimensions:
The model trained on Hard ImageNet has an accuracy of only 17\% on the Easy ImageNet test set. This is much lower than the 71\% test accuracy of the model trained on Easy ImageNet (as seen in \Cref{fig:chap1:easyhardfigure1}). 
In other words,
\boldcenter
{
Examples learned late by gradient descent (that is, the hard examples), by themselves, are insufficient to define the decision boundary to the same extent that early (or easy) examples are. 
}
This observation leads us to a view of deep models that is closer in spirit to kernel regressors~\citep{Nadaraya64, Watson64}, but, of course, with a ``learned kernel,'' and is consistent with the intuition 
that in high dimensions, learning (almost) always amounts to extrapolation~\citep{balestriero21}.

\begin{figure*}
\centering
\includegraphics[width=\textwidth]{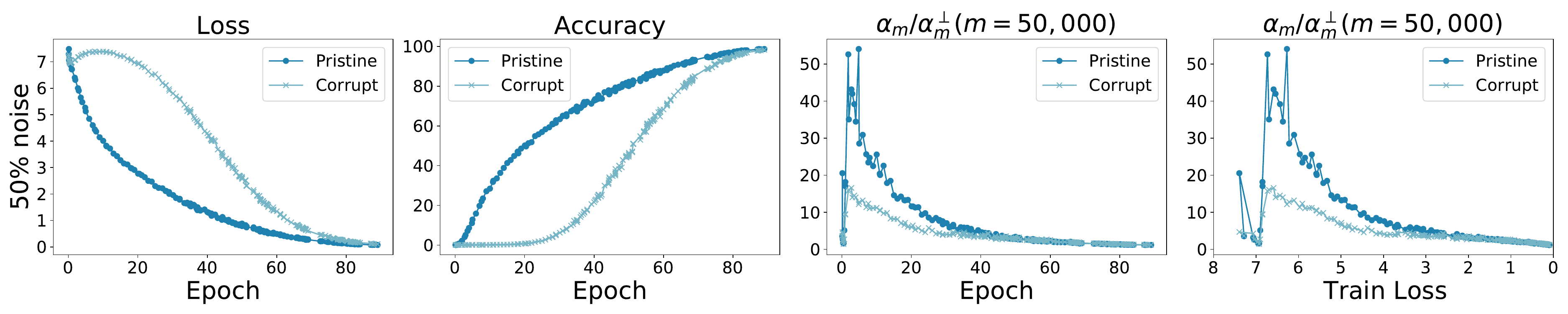}
\caption{Pristine examples, that is examples with correct labels, show higher coherence than the corrupt examples, and consequently are learned much faster. Here, we train a Resnet-50 on ImageNet where half the images in the training set have randomly assigned labels (that is, ImageNet with 50\% label noise). 
}
\label{fig:fig1_pristine_corrupt}
\end{figure*}
 
\section{Learning With Noisy Labels}
The theory of Coherent Gradients provides insight on why it is possible to learn in the presence of label noise~\citep{Rolnick17}. 
As \Cref{fig:fig1_pristine_corrupt} shows, similar to real and random data, coherence as measured by $\ralpha$ is greater for examples with correct labels (the pristine examples) than it is for examples with incorrect labels (corrupt examples).
Consequently, the strong directions in the average gradient correspond to the pristine examples, and they get learned before the corrupt examples, mirroring the 
situation with easy and hard examples discussed in the previous section.

We can thus explain the empirical success of early stopping when learning with noisy labels and the empirical observation that the loss of pristine examples falls faster than that of corrupt examples (see, for example,  ``small-loss" trick in \citet[Section III E]{Song22}).
Furthermore, even if early stopping is not used, and training is continued until convergence, the gradients of corrupt examples are not strong enough to ``interfere'' with those of the pristine examples which reinforce each other.%
\footnote{However, there may still be some coherence amongst incorrectly labeled examples which may cause unintended generalization to the test set. This is manifested as a degradation in test performance as the corrupt examples are learned, and can be seen in experiments (see, for example, \citet[Figure 1 (b)]{Chatterjee20}, \citet[Figure 1 (a)]{Zielinski20}, and in \citet[Figures 7 and 8 (b)]{Arpit17}).}

\section{Depth, Feedback Loops, and Signal Amplification}
\label{sec:overview:feedback}

Gradient descent through the lens of Coherent Gradients is a mechanism for soft feature selection, or equivalently, for signal amplification. 
The over-parameterized linear regression model in Example~\ref{ex:two-datasets} provides a good illustration of this.
When training the model on dataset {\bf L} (``real"), we see that 
the weight of the common signal grows more rapidly since increasing it simultaneously reduces the loss on all training examples.

%

In this section, we shall see how the signal amplification effect becomes stronger 
when the model has multiple layers, 
due to the interaction between the parameters belonging to the different layers.
We illustrate this with two simple examples that build on Example~\ref{ex:two-datasets} to add depth.

\begin{example}[Adding Depth to Linear Regression]
\label{ex:two-deep}
Instead of fitting the a model of the form $y = w \cdot x$ as is the case in linear regression, consider fitting the following ``deep" model:
\[
y = ( w \odot u ) \cdot x = \sum_{j = 1}^{6} w^{(j)} h^{(j)} = \sum_{j = 1}^{6} w^{(j)} u^{(j)} x^{(j)}
\]
\begin{center}
\resizebox{0.75\textwidth}{!}
{
\vspace{1cm}
\pgfmathsetmacro{\xsphp}{3}
\pgfmathsetmacro{\ysphp}{3}
\begin{tikzpicture}

\foreach \i in {1,...,6}
{
    \node[place] at ({\xsphp * (\i - 1)}, 0) (x\i) {$x^{(\i)}$};
}

\foreach \i in {1,...,6}
{
    \node[place] at ({\xsphp * (\i - 1)}, \ysphp) (h\i) {$h^{(\i)}$};
}

\node[place] at (\xsphp * 5 / 2, 2 * \ysphp) (y) {$y$}; 


\node at (\xsphp * 0.3, 1.5 * \ysphp) () {$\displaystyle h^{(j)} =  u^{(j)}\,x^{(j)}$};

\foreach \i in {1,...,4}
{
    \draw (h\i) edge [post, pink] (y);
    \path (h\i) to node [weight, text=pink] {$w^{(\i)}$} (y); 
    
    \draw (x\i) edge [post, pink] (h\i);
    \path (x\i) to node [weight, text=pink] {$u^{(\i)}$} (h\i); 
}

\draw (h5) edge [post, pink] (y);
\path (h5) to node [weight, text=pink] {$w^{(5)}$} (y); 

\draw (x5) edge [post, pink] (h5);
\path (x5) to node [weight, text=pink] {$u^{(5)}$} (h5); 

\draw (h6) edge [post, blue] (y);
\path (h6) to node [weight, text=blue] {$w^{(6)}$} (y); 

\draw (x6) edge [post, blue] (h6);
\path (x6) to node [weight, text=blue] {$u^{(6)}$} (h6); 

\end{tikzpicture}
\vspace{1cm}
}
\end{center}
under the square loss  $\ell(w) \equiv \frac{1}{2} (y - ( w \odot u ) \cdot x)^2$ to 
the dataset {\bf L} (``real") from Example~\ref{ex:two-datasets}:

\begin{center}
\begin{tabular}{r|c|c}
$i$ & $x_i$ & $y_i$ \\
\hline
\hline
$1$ & $\langle\,\textcolor{pink}{1}, \textcolor{pink}{\mz}, \textcolor{pink}{\mz}, \textcolor{pink}{\mz}, \textcolor{pink}{\mz}, \textcolor{blue}{\mo}\,\rangle$ & $\mo$ \\
$2$ & $\langle\,\textcolor{pink}{0}, \textcolor{pink}{\mn}, \textcolor{pink}{\mz}, \textcolor{pink}{\mz}, \textcolor{pink}{\mz}, \textcolor{blue}{\mn}\,\rangle$ & $\mn$ \\
$3$ & $\langle\,\textcolor{pink}{0}, \textcolor{pink}{\mz}, \textcolor{pink}{\mn}, \textcolor{pink}{\mz}, \textcolor{pink}{\mz}, \textcolor{blue}{\mn}\,\rangle$ & $\mn$ \\
$4$ & $\langle\,\textcolor{pink}{0}, \textcolor{pink}{\mz}, \textcolor{pink}{\mz}, \textcolor{pink}{\mo}, \textcolor{pink}{\mz}, \textcolor{blue}{\mo}\,\rangle$ & $\mo$ \\
\hline
$5$ & $\langle\,\textcolor{pink}{0}, \textcolor{pink}{\mz}, \textcolor{pink}{\mz}, \textcolor{pink}{\mz}, \textcolor{pink}{\mn}, \textcolor{blue}{\mn}\,\rangle$ & $\mn$ \\
\end{tabular}
\end{center}

Now start gradient descent (GD) with a constant learning rate of 0.1 at $w^{(j)} = u^{(j)} = 0.01$ for $j \in [6]$.%
\footnote{Unlike linear regression, we use a near-zero initialization for the deep model rather than a zero initialization since the latter is a stationary point.}
As in the case of linear regression, the parameters corresponding to the common feature $x^{(6)}$, that is, $u^{(6)}$ and $w^{(6)}$ grow more quickly than those corresponding to the idiosyncratic features $x^{(1)}, \dots, x^{(4)}$. But what is striking is the extent of the relative difference.
This can be seen by comparing the plots of their values in the two cases:%
\footnote{We only show $w^{(1)}$ and $w^{(6)}$ since, along the trajectory of GD, from symmetry, $u^{(j)} = w^{(j)}$ for all $j \in [6]$ in the deep model and $w^{(2)}, w^{(3)},$ and $w^{(4)}$ are equal to $w^{(1)}$ in both models.}

\begin{center}
\begin{tabular}{cc}
deep model & linear regression \\
\includegraphics[width=0.45\textwidth]{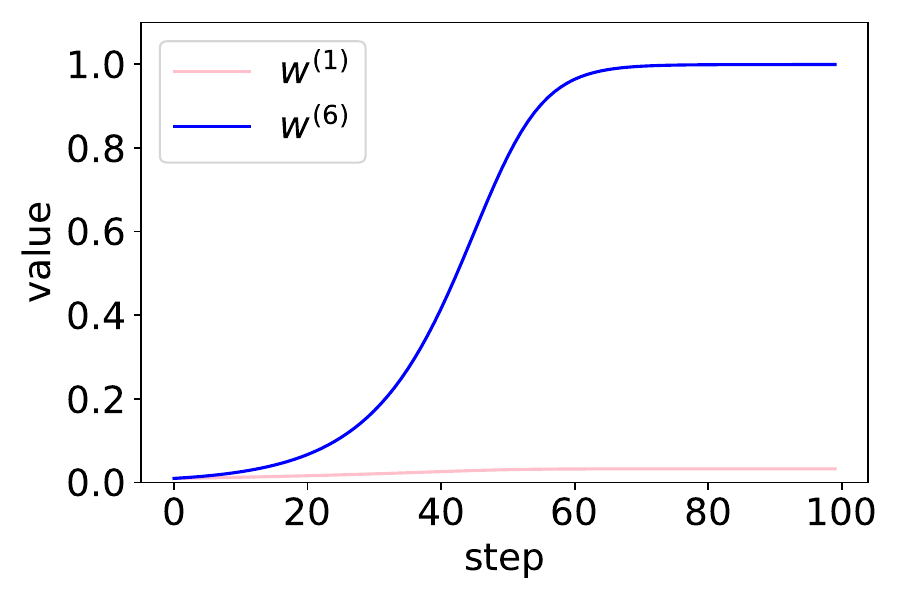} 
&
\includegraphics[width=0.45\textwidth]{plots/fresh_plotting/product_common_signal_w1_w6_value_v12.pdf} 
\\
\end{tabular}
\end{center}
In fact, for the deep model, the difference between $w^{(1)}$ and $w^{(6)}$ understates the importance of the corresponding features since the weight of feature $i$ in the model is given by $u^{(j)} w^{(j)}$ (which, in this case, is $(w^{(j)})^2$ due to symmetry).

The improvement in amplification efficiency is also reflected in the generalization gap (using the 5th example in {\bf L} as the test example) which is negligible for  the deep model in comparison to that of linear regression: 

\begin{center}
\begin{tabular}{cc}
deep model & linear regression \\
\includegraphics[width=0.45\textwidth]{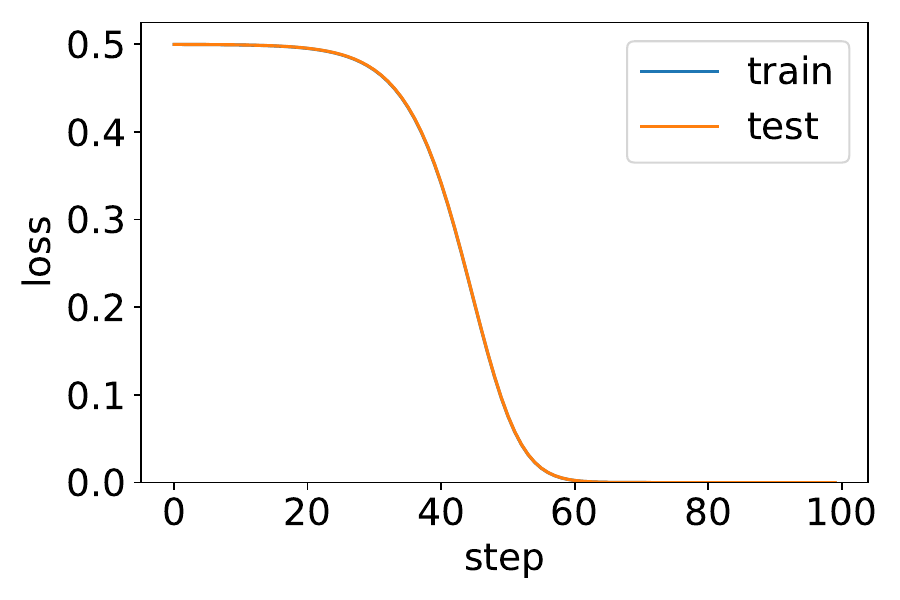}
&
\includegraphics[width=0.45\textwidth]{plots/fresh_plotting/product_common_signal_loss_v12.pdf} 
\\
\end{tabular}
\end{center}

\noindent This naturally leads to the following question:
\boldcenter{
Why is the signal amplification so much greater in the case of the deep model compared to linear regression?
}
The answer lies in the interaction of the weights across the two layers. Consider the
components of the gradient corresponding to $u^{(j)}$ and $w^{(j)}$ for the $i$th example. These are, respectively, 
\[
\frac{\partial \ell_i}{\partial \textcolor{red}{u^{(j)}}} = r_i(u, w)\,x^{(j)}\,\textcolor{red}{w^{(j)}}
\quad
\quad
\quad
\textrm{and}
\quad
\quad
\quad
\frac{\partial \ell_i}{\partial \textcolor{red}{w^{(j)}}} = r_i(u, w)\,x^{(j)}\,\textcolor{red}{u^{(j)}}
\]
where $r_i(u, w) \equiv y_i - ( w \odot u ) \cdot x_i$ is the residual.
Observe that the component of the gradient corresponding to $u^{(j)}$ is directly proportional to $w^{(j)}$
and vice-versa. 
By slight abuse of terminology we call this a {\em positive feedback loop} between $u^{(j)}$ and $w^{(j)}$.\footnote{It is an abuse of terminology since the positive feedback is not absolute. There is a second ``dampening" dependency through the residual which becomes more important closer to convergence.}

Now consider the gradient $g_i(u, w)$ for the $i$th example, and the average gradient $g(u, w)$:%
\footnote{$r(u, w)$ is equal to $|r_i(u, w)|$%
, a quantity that is independent of $i$ for $(u, w)$ in the trajectory of GD due to the symmetries in the problem.}

\newdimen\toprowlength
\toprowlength=26.4pt
\newcommand{\PTL}[1]{\parbox{\toprowlength}{\raggedleft #1}}

\newdimen\examplerowlength
\examplerowlength=30pt
\newcommand{\PT}[1]{\parbox{\examplerowlength}{\raggedleft #1}}

\begin{center}

\resizebox{0.85\textwidth}{!}
{

\begin{tabular}{c|c}
\multicolumn{2}{c}{} \\
$i$ & $g_i(u, w)=\,\langle $ \PTL{$ \frac{\partial \ell_i}{\partial {u^{(1)}}} $,} \PTL{$ \frac{\partial \ell_i}{\partial {u^{(2)}}} $,} \PTL{$ \frac{\partial \ell_i}{\partial {u^{(3)}}} $,} \PTL{$ \frac{\partial \ell_i}{\partial {u^{(4)}}} $,} \PTL{$ \frac{\partial \ell_i}{\partial {u^{(5)}}} $,} \PTL{$ \frac{\partial \ell_i}{\partial {u^{(6)}}} $,} \PTL{$ \frac{\partial \ell_i}{\partial {w^{(1)}}} $,} \PTL{$ \frac{\partial \ell_i}{\partial {w^{(2)}}} $,} \PTL{$ \frac{\partial \ell_i}{\partial {w^{(3)}}} $,} \PTL{$ \frac{\partial \ell_i}{\partial {w^{(4)}}} $,} \PTL{$ \frac{\partial \ell_i}{\partial {w^{(5)}}} $,} \PTL{$ \frac{\partial \ell_i}{\partial {w^{(6)}}}$}$\rangle$ \\
\\[-1em]
\hline
\hline
\rule{0pt}{2.5ex}$1$ & \phantom{$aa,$} $r(u, w)\, \langle $\PT{\textcolor{pink}{$w^{(1)}$},}$ $\PT{\textcolor{pink}{$0$},}$ $\PT{\textcolor{pink}{$0$},}$ $\PT{\textcolor{pink}{$0$},}$ $\PT{\textcolor{pink}{$0$},}$ $\PT{\textcolor{blue}{$w^{(6)}$},}$ $\PT{\textcolor{pink}{$u^{(1)}$},}$ $\PT{\textcolor{pink}{$0$},}$ $\PT{\textcolor{pink}{$0$},}$ $\PT{\textcolor{pink}{$0$},}$ $\PT{\textcolor{pink}{$0$},}$ $\PT{\textcolor{blue}{$u^{(6)}$}}$\rangle$ \\
$2$ & \phantom{$aa,$} $r(u, w)\, \langle $\PT{\textcolor{pink}{$0$},}$ $\PT{\textcolor{pink}{$w^{(2)}$},}$ $\PT{\textcolor{pink}{$0$},}$ $\PT{\textcolor{pink}{$0$},}$ $\PT{\textcolor{pink}{$0$},}$ $\PT{\textcolor{blue}{$w^{(6)}$},}$ $\PT{\textcolor{pink}{$0$},}$ $\PT{\textcolor{pink}{$u^{(2)}$},}$ $\PT{\textcolor{pink}{$0$},}$ $\PT{\textcolor{pink}{$0$},}$ $\PT{\textcolor{pink}{$0$},}$ $\PT{\textcolor{blue}{$u^{(6)}$}}$\rangle$ \\
$3$ & \phantom{$aa,$} $r(u, w)\, \langle $\PT{\textcolor{pink}{$0$},}$ $\PT{\textcolor{pink}{$0$},}$ $\PT{\textcolor{pink}{$w^{(3)}$},}$ $\PT{\textcolor{pink}{$0$},}$ $\PT{\textcolor{pink}{$0$},}$ $\PT{\textcolor{blue}{$w^{(6)}$},}$ $\PT{\textcolor{pink}{$0$},}$ $\PT{\textcolor{pink}{$0$},}$ $\PT{\textcolor{pink}{$u^{(3)}$},}$ $\PT{\textcolor{pink}{$0$},}$ $\PT{\textcolor{pink}{$0$},}$ $\PT{\textcolor{blue}{$u^{(6)}$}}$\rangle$ \\
$4$ & \phantom{$aa,$} $r(u, w)\, \langle $\PT{\textcolor{pink}{$0$},}$ $\PT{\textcolor{pink}{$0$},}$ $\PT{\textcolor{pink}{$0$},}$ $\PT{\textcolor{pink}{$w^{(4)}$},}$ $\PT{\textcolor{pink}{$0$},}$ $\PT{\textcolor{blue}{$w^{(6)}$},}$ $\PT{\textcolor{pink}{$0$},}$ $\PT{\textcolor{pink}{$0$},}$ $\PT{\textcolor{pink}{$0$},}$ $\PT{\textcolor{pink}{$u^{(4)}$},}$ $\PT{\textcolor{pink}{$0$},}$ $\PT{\textcolor{blue}{$u^{(6)}$}}$\rangle$ \\
\hline
\hline
\rule{0pt}{2.5ex} $g(u, w)$ & \phantom{$aa,$} $r(u, w)\, \langle $\PT{\textcolor{pink}{$\frac{1}{4}w^{(1)}$},}$ $\PT{\textcolor{pink}{$\frac{1}{4}w^{(2)}$},}$ $\PT{\textcolor{pink}{$\frac{1}{4}w^{(3)}$},}$ $\PT{\textcolor{pink}{$\frac{1}{4}w^{(4)}$},}$ $\PT{\textcolor{pink}{$0$},}$ $\PT{\textcolor{blue}{$w^{(6)}$},}$ $\PT{\textcolor{pink}{$\frac{1}{4}u^{(1)}$},}$ $\PT{\textcolor{pink}{$\frac{1}{4}u^{(2)}$},}$ $\PT{\textcolor{pink}{$\frac{1}{4}u^{(3)}$},}$ $\PT{\textcolor{pink}{$\frac{1}{4}u^{(4)}$},}$ $\PT{\textcolor{pink}{$0$},}$ $\PT{\textcolor{blue}{$u^{(6)}$}}$\rangle$ \\
\end{tabular}
}
\phantom{M}\\
\phantom{M}\\
\end{center}

\noindent We see that, just as in linear regression with dataset {\bf L} (Example~\ref{ex:two-datasets}), 
the per-example gradients in the common component (shown in \textcolor{blue}{blue}) reinforce each other, and add up in the average gradient $g(u, w)$, but the idiosyncratic components (shown in \textcolor{pink}{pink}) do not.%
\footnote{This is also reflected in the higher coherence of the $u^{(6)}$ and $w^{(6)}$ components compared to the rest as measured by (component-wise) $\alpha$.}

However, in contrast to linear regression where the {\em relative} difference in the gradient directions was constant during training (1 v/s $\frac{1}{4}$), in the deep model, the relative difference depends on $u$ and $w$ (for example, $w^{(6)}$ v/s $\frac{1}{4}w^{(1)}$ and $u^{(6)}$ v/s $\frac{1}{4}u^{(1)}$). 
This combined with the positive feedback loop between the $u^{(i)}$ and the corresponding $w^{(i)}$ leads to the relative difference between the gradient directions {\em exponentially  increasing} as training progress. 

As a result, the relative importance of the common signal increases much faster in the deep model than in the linear regression case leading to a winner-take-all situation.
In summary, 
\boldcenter{
In the deep model, in contrast to linear regression, the relative difference 
between the common and idiosyncratic gradient directions gets exponentially amplified during training  due to positive feedback between layers.
}

This rapid growth in the strong component of the gradient is reflected in the plot of training coherence as measured by $\ralpha$ ($m = 4$) over time 
since the $u^{(6)}$ and $w^{(6)}$ components (the strong, or the high coherence components) come to dominate the gradient for the deep model, and they have perfect coherence:

\begin{center}
\begin{tabular}{cc}
deep model & linear regression \\
\includegraphics[width=0.45\textwidth]{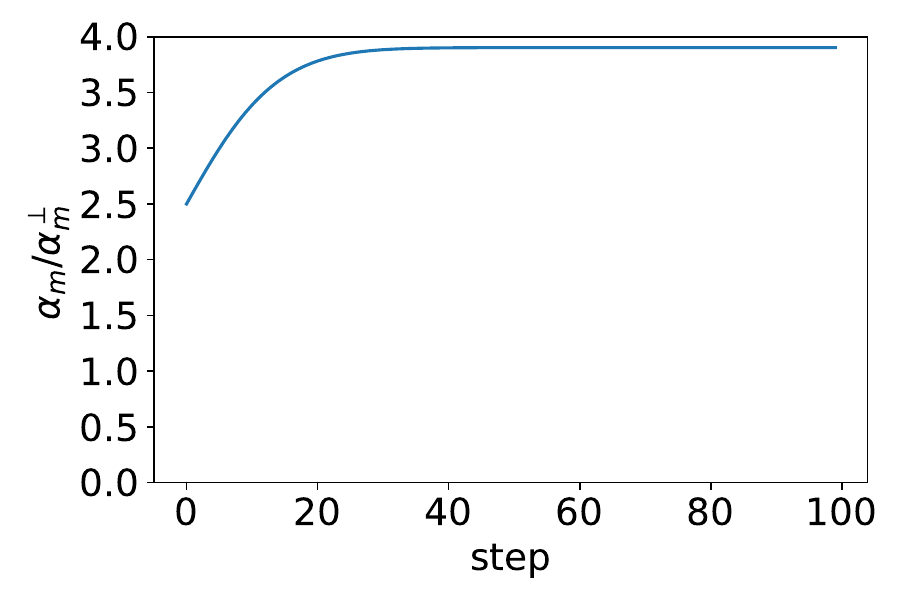}
&
\includegraphics[width=0.45\textwidth]{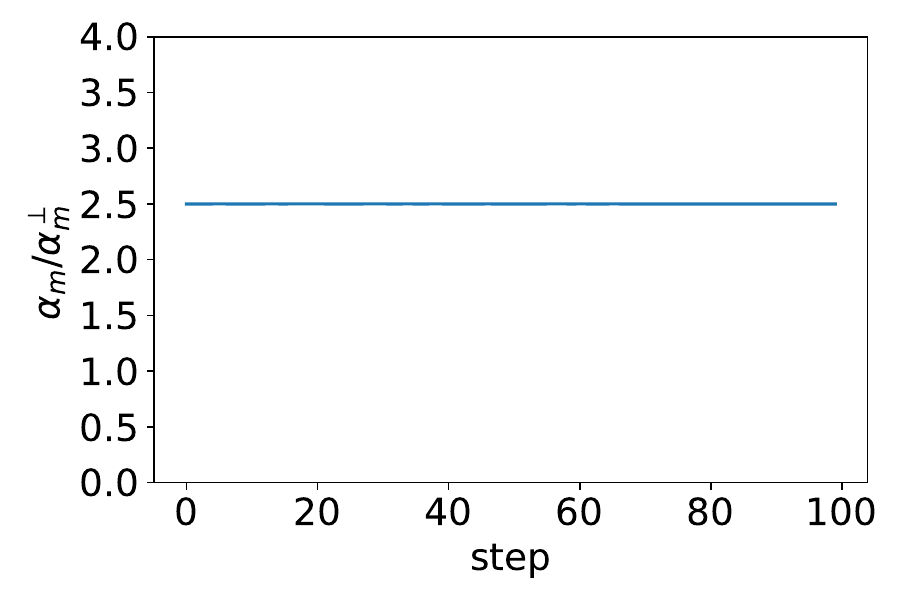} 
\\
\end{tabular}
\end{center}

\noindent Thus although $\ralpha$ for the deep model starts off at 2.5 (exactly the same as for linear regression), it rises to near maximum of 4 (while in the linear regression case it stays constant).

\qed
\end{example}

The previous example showed how feedback loops due to depth can amplify input signals or features that generalize better relative to noise signals. 
Feedback loops can also play a critical role in differentially amplifying {\em internal} signals in the network that generalize better than their peers.
To see this, consider the following thought experiment that builds on Example~\ref{ex:two-datasets} by adding a second layer to select between a ``memorization" neuron and a ``learning" neuron.
    
\begin{example}[A Tale of Two Neurons]
\label{ex:two-neurons}

Consider the following linear neural network with one hidden layer with two neurons: 

\begin{center}
\resizebox{0.75\textwidth}{!}
{
\vspace{1cm}
\pgfmathsetmacro{\xsphp}{3}
\pgfmathsetmacro{\ysphp}{3}
\begin{tikzpicture}

\foreach \i in {1,...,6}
{
    \node[place] at ({\xsphp * (\i - 1)}, 0) (x\i) {$x^{(\i)}$};
}

\node[place] at (\xsphp * 1, \ysphp) (h1) {$h^{(1)}$};
\node[place] at (\xsphp * 4, \ysphp) (h2) {$h^{(2)}$};

\node[place] at (\xsphp * 5 / 2, 2 * \ysphp) (y) {$y$}; 

\node at (\xsphp * 0.5, 1.5 * \ysphp) () {$\displaystyle h^{(1)} = \sum_{j = 1}^{5} u^{(j)}\,x^{(j)}$};

\node at (\xsphp * 4.5, 1.5 * \ysphp) () {$\displaystyle h^{(2)} = \sum_{j = 1}^{6} v^{(j)}\,x^{(j)}$};

\node at (\xsphp * 1.5, 2 * \ysphp) () {$\displaystyle y =  \sum_{j = 1}^{2} w^{(j)}\,h^{(j)}$};


\draw (h1) edge [post,pink] (y);
\path (h1) to node [weight,text=pink] {$w^{(1)}$} (y); 

\draw (h2) edge [post,blue] (y);
\path (h2) to node [weight,text=blue] {$w^{(2)}$} (y); 

\foreach \i in {1,...,4}
{
    \draw (x\i) edge [post, pink] (h1);
    \draw (x\i) edge [post, pink] (h2);
}
\draw (x5) edge [post, pink] (h1);
\draw (x5) edge [post, pink] (h2);
\draw (x6) edge [dotted, blue] (h1);
\draw (x6) edge [post, blue] (h2);

\path (x1) to node [weight, pos=0.60, text=pink] {$u^{(1)}$} (h1); 
\path (x1) to node [weight, pos=0.10, text=pink] {$v^{(1)}$} (h2); 

\path (x2) to node [weight, pos=0.60, text=pink] {$u^{(2)}$} (h1); 
\path (x2) to node [weight, pos=0.10, text=pink] {$v^{(2)}$} (h2); 

\path (x3) to node [weight, pos=0.60, text=pink] {$u^{(3)}$} (h1); 
\path (x3) to node [weight, pos=0.60, text=pink] {$v^{(3)}$} (h2); 

\path (x4) to node [weight, pos=0.65, text=pink] {$u^{(4)}$} (h1); 
\path (x4) to node [weight, pos=0.60, text=pink] {$v^{(4)}$} (h2); 

\path (x5) to node [weight, pos=0.10, text=pink] {$u^{(5)}$} (h1); 
\path (x5) to node [weight, pos=0.50, text=pink] {$v^{(5)}$} (h2); 

\path (x6) to node [weight, pos=0.10, text=red] {\xmark} (h1); 
\path (x6) to node [weight, pos=0.50, text=blue] {$v^{(6)}$} (h2); 

\end{tikzpicture}

\vspace{1cm}
}
\end{center}

\noindent Now consider the task of fitting it to the following dataset (dataset {\bf L} (``real") from Example~\ref{ex:two-datasets} that we have been using as a running example):

\begin{center}
\begin{tabular}{r|c|c}
$i$ & $x_i$ & $y_i$ \\
\hline
\hline
$1$ & $\langle\,\textcolor{pink}{1}, \textcolor{pink}{\mz}, \textcolor{pink}{\mz}, \textcolor{pink}{\mz}, \textcolor{pink}{\mz}, \textcolor{blue}{\mo}\,\rangle$ & $\mo$ \\
$2$ & $\langle\,\textcolor{pink}{0}, \textcolor{pink}{\mn}, \textcolor{pink}{\mz}, \textcolor{pink}{\mz}, \textcolor{pink}{\mz}, \textcolor{blue}{\mn}\,\rangle$ & $\mn$ \\
$3$ & $\langle\,\textcolor{pink}{0}, \textcolor{pink}{\mz}, \textcolor{pink}{\mn}, \textcolor{pink}{\mz}, \textcolor{pink}{\mz}, \textcolor{blue}{\mn}\,\rangle$ & $\mn$ \\
$4$ & $\langle\,\textcolor{pink}{0}, \textcolor{pink}{\mz}, \textcolor{pink}{\mz}, \textcolor{pink}{\mo}, \textcolor{pink}{\mz}, \textcolor{blue}{\mo}\,\rangle$ & $\mo$ \\
\hline
$5$ & $\langle\,\textcolor{pink}{0}, \textcolor{pink}{\mz}, \textcolor{pink}{\mz}, \textcolor{pink}{\mz}, \textcolor{pink}{\mn}, \textcolor{blue}{\mn}\,\rangle$ & $\mn$ \\
\end{tabular}
\end{center}

\noindent Of the two neurons in the hidden layer, one, 
the ``good" neuron, $h^{(2)}$, 
is connected to the common input feature $x^{(6)}$, whereas, the other, the ``bad" neuron, $h^{(1)}$, is not.
Therefore, the bad neuron can only help in memorizing the training data.%
\footnote{In terms of Example~\ref{ex:two-datasets}, the bad neuron ``sees" the random dataset {\bf M} instead of {\bf L}.}

When we perform gradient descent starting, as before, from 0.01 for all parameters, we find that the weight for the good neuron (\textcolor{blue}{$w^{(2)}$}) grows much more rapidly than that for the bad neuron (\textcolor{pink}{$w^{(1)}$}) and we have good generalization (as measured on $(x_5, y_5)$):

\begin{center}
\begin{tabular}{ccc}
\includegraphics[width=0.30\textwidth]{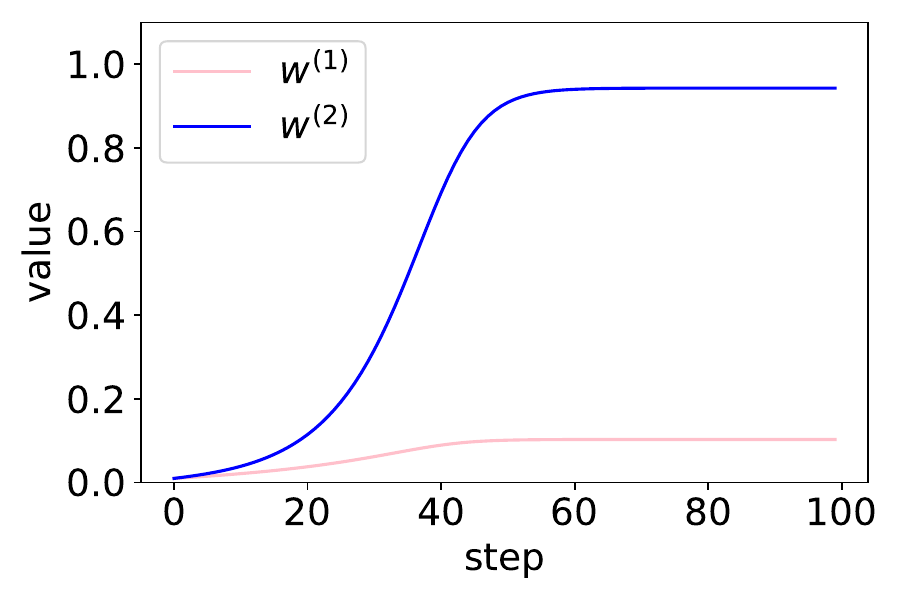}
&
\includegraphics[width=0.30\textwidth]{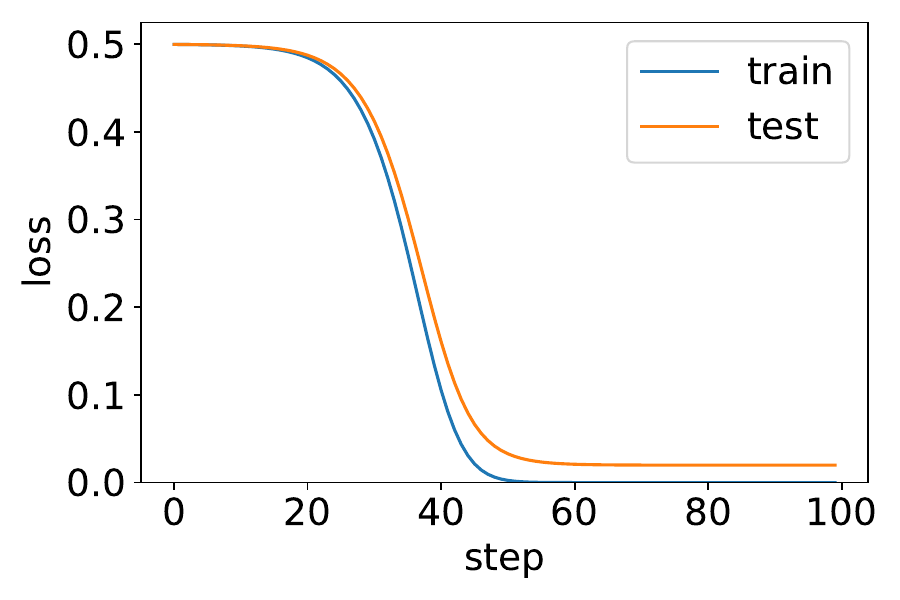}
&
\includegraphics[width=0.30\textwidth]{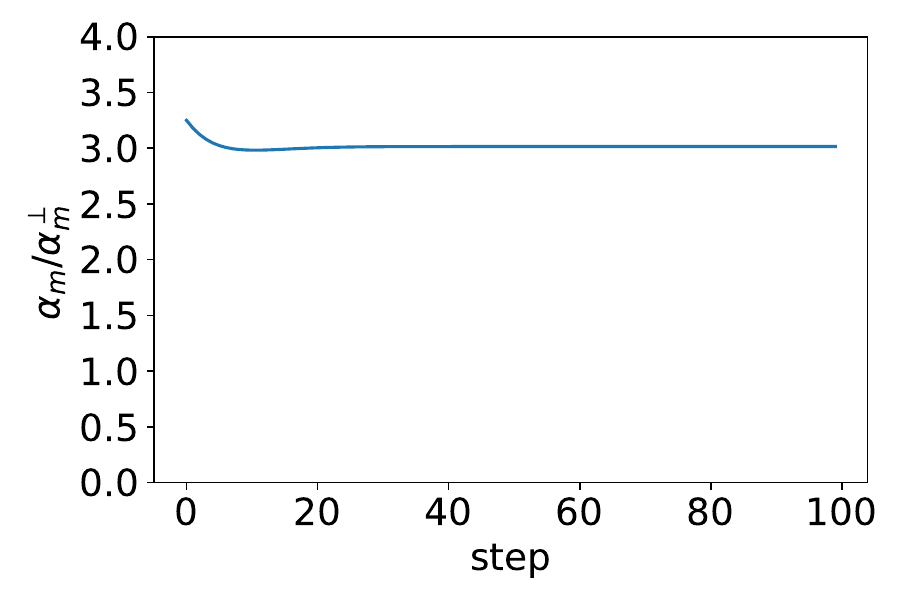}
\\
\end{tabular}
\end{center}

As in the previous example, the reason behind the rapid increase in \textcolor{blue}{$w^{(2)}$}  relative to \textcolor{pink}{$w^{(1)}$} is due to the interaction between the layers. 
In brief, there is a positive feedback loop%
\footnote{As in the previous example, there is a second dampening dependence through the residual that becomes important near convergence.} 
between \textcolor{pink}{$w^{(1)}$} and each of \textcolor{pink}{$u^{(1)}, \dots, u^{(5)}$} and between \textcolor{blue}{$w^{(2)}$} and each of $\textcolor{pink}{v^{(1)}, \dots, v^{(5)}}$. However, in addition, \textcolor{blue}{$w^{(2)}$} also has a feedback loop with $\textcolor{blue}{v^{(6)}}$, a component with high coherence since it is the weight for the common input feature $x^{(6)}$.
Consequently, \textcolor{blue}{$w^{(2)}$} grows exponentially faster than \textcolor{pink}{$w^{(1)}$}.%
\footnote{Indeed, due to the feedback loop, and the faster growth of $\textcolor{blue}{w^{(2)}}$, even \textcolor{pink}{$v^{(1)}, \dots, v^{(4)}$} grow faster relative to \textcolor{pink}{$u^{(1)}, \dots, u^{(4)}$}.}

\newbold{Adversarial initialization.} The relative amplification effect is so strong that even if  \textcolor{pink}{$w^{(1)}$} is very favorably initialized at 0.1 or 0.2 while keeping \textcolor{blue}{$w^{(2)}$} initialized at 0.01, in the course of training, \textcolor{blue}{$w^{(2)}$} eventually surpasses \textcolor{pink}{$w^{(1)}$} (we also show the test and train curves and $\ralpha$ on the training set ($m$ = 4) measured over the whole network for subsequent discussion):

\begin{center}
\begin{tabular}{ccc}
\includegraphics[width=0.30\textwidth]{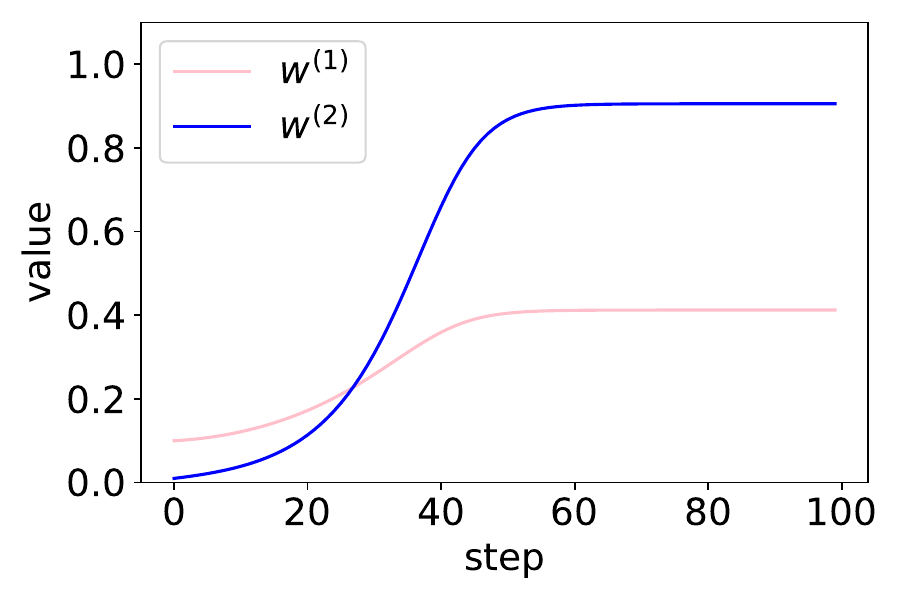}
&
\includegraphics[width=0.30\textwidth]{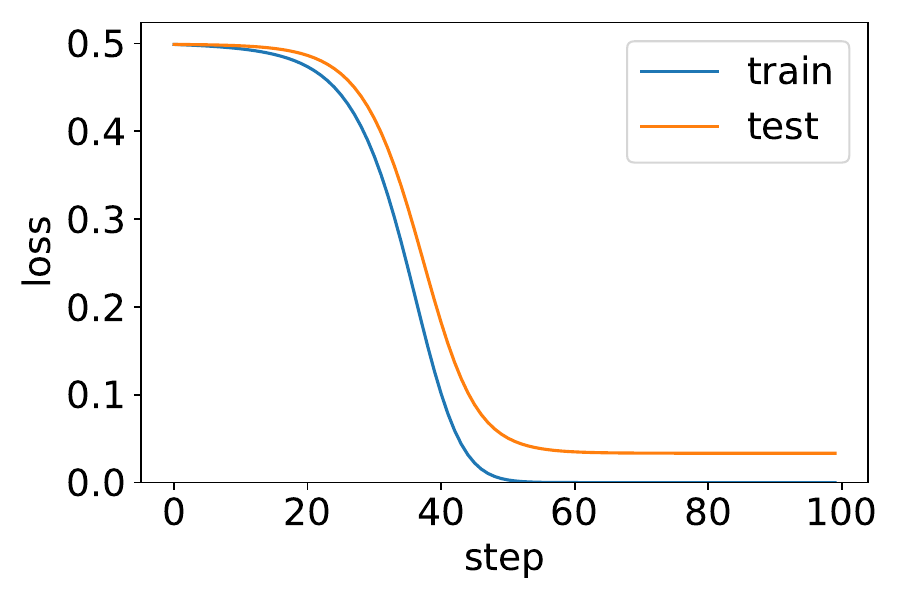}
&
\includegraphics[width=0.30\textwidth]{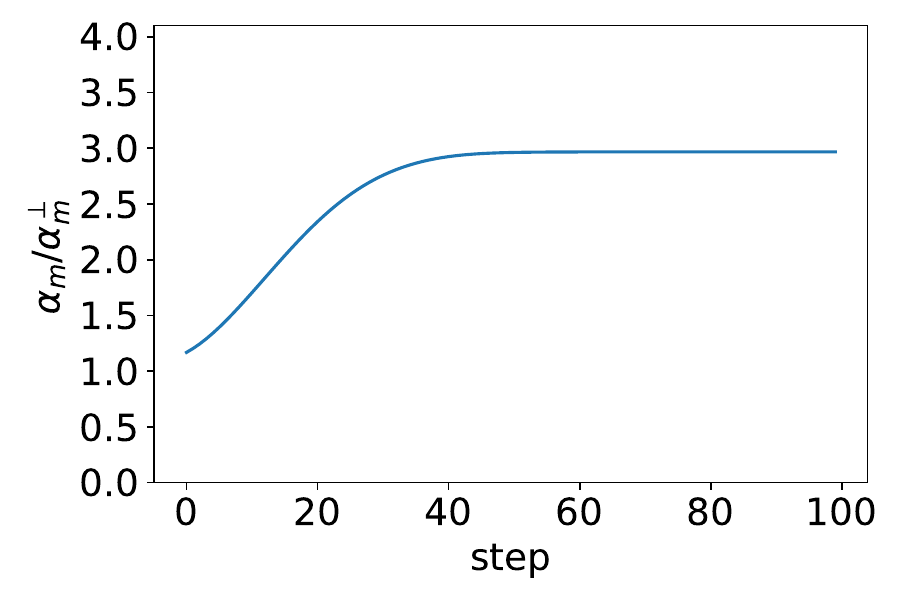}
\\
\includegraphics[width=0.30\textwidth]{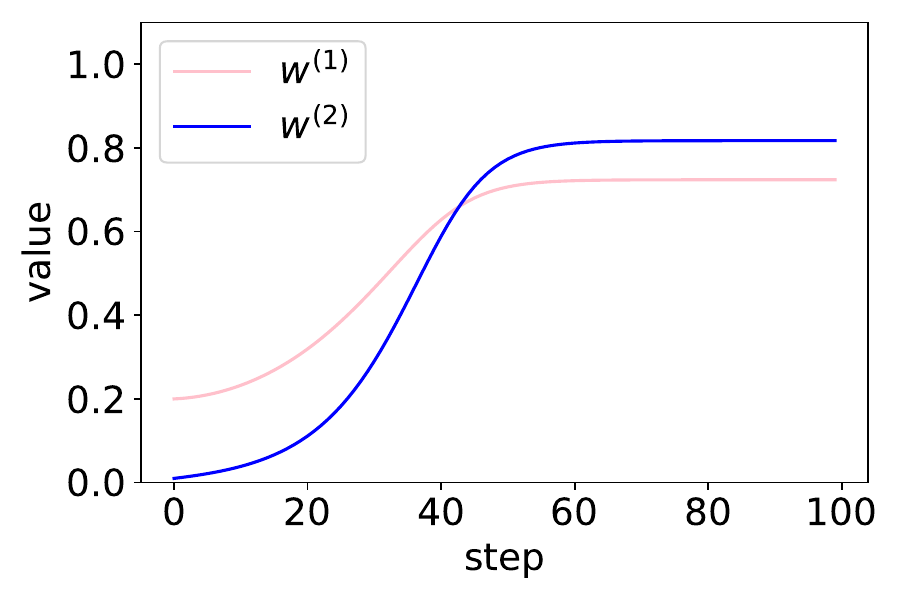}
&
\includegraphics[width=0.30\textwidth]{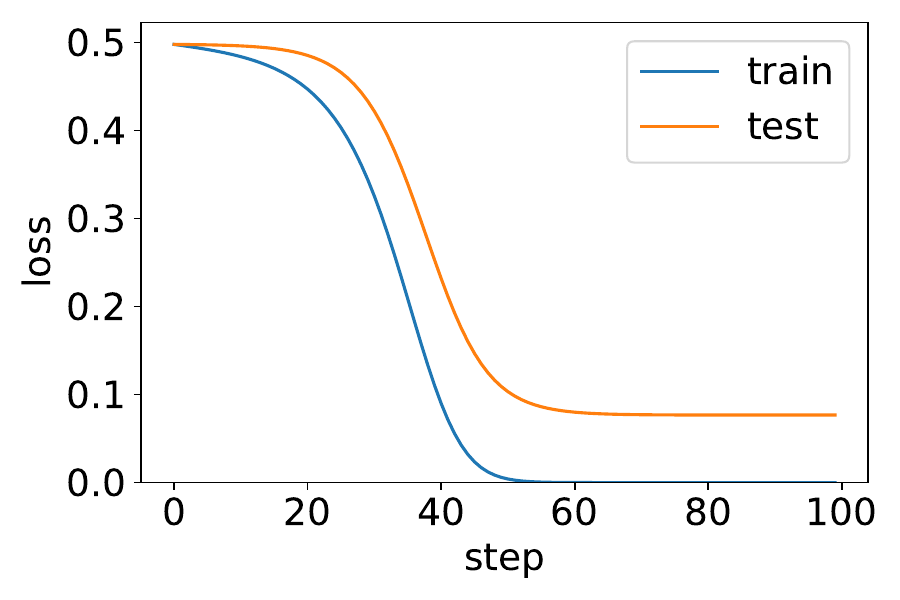}
&
\includegraphics[width=0.30\textwidth]{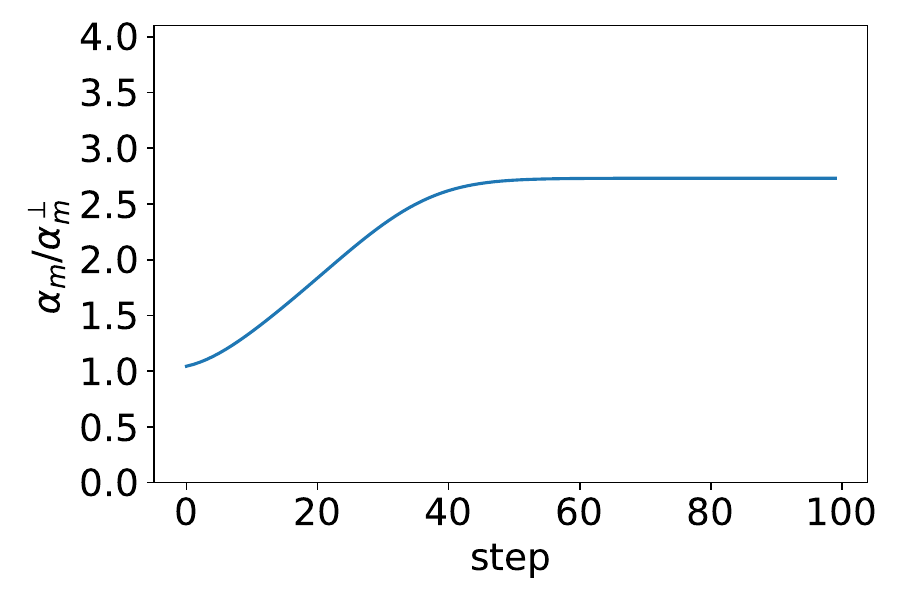}
\\
\end{tabular}
\end{center}

\noindent Of course, for a large enough adversarial initialization,  \textcolor{blue}{$w^{(2)}$} can no longer overcome \textcolor{pink}{$w^{(1)}$} and we have poor generalization:

\begin{center}
\begin{tabular}{ccc}
\includegraphics[width=0.30\textwidth]{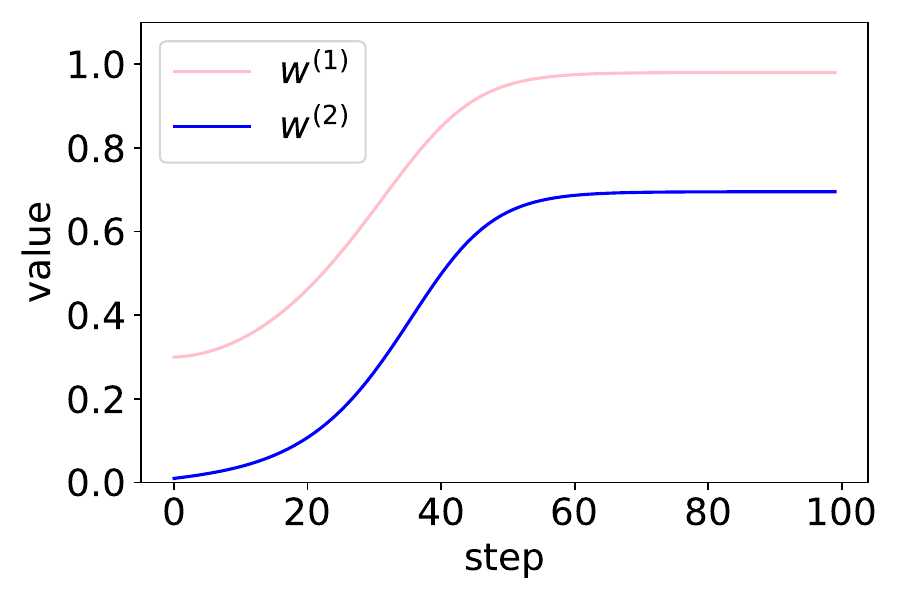}
&
\includegraphics[width=0.30\textwidth]{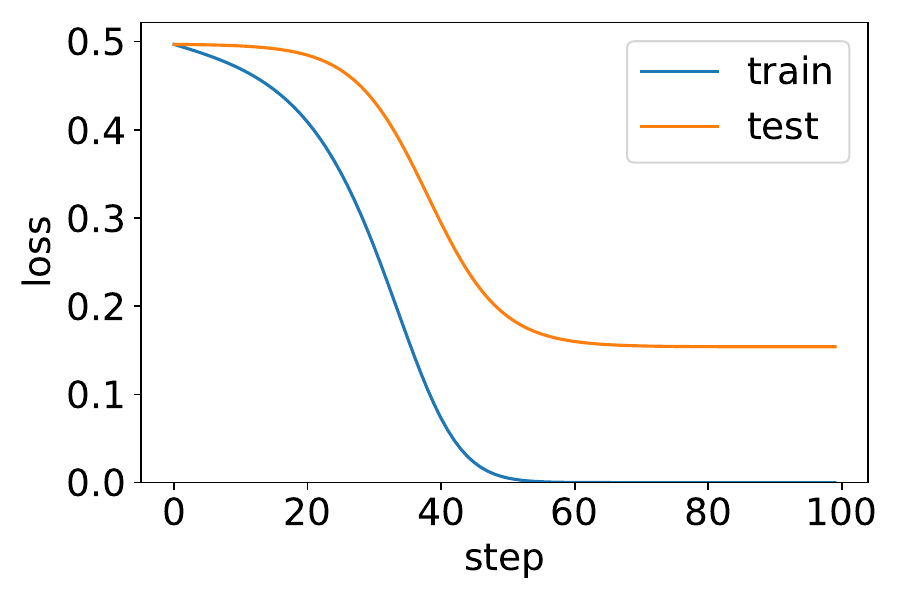}
&
\includegraphics[width=0.30\textwidth]{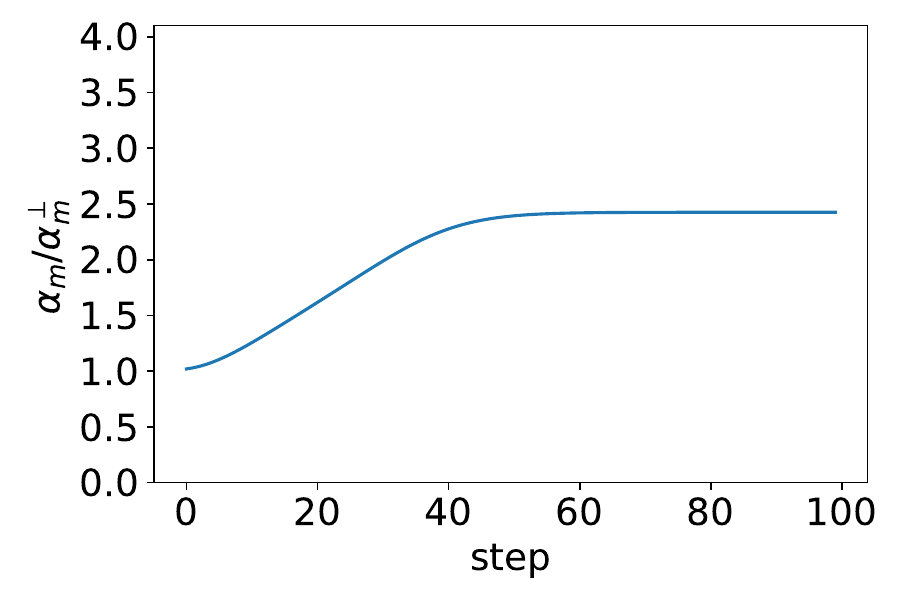}
\\
\end{tabular}
\end{center}

\noindent We can study this systematically by plotting a heatmap of the test loss at the end of training as a function of the initial values of \textcolor{pink}{$w^{(1)}$} and \textcolor{blue}{$w^{(2)}$} (all other parameters are initialized at 0.01):
\begin{center}
\includegraphics[width=0.50\textwidth]{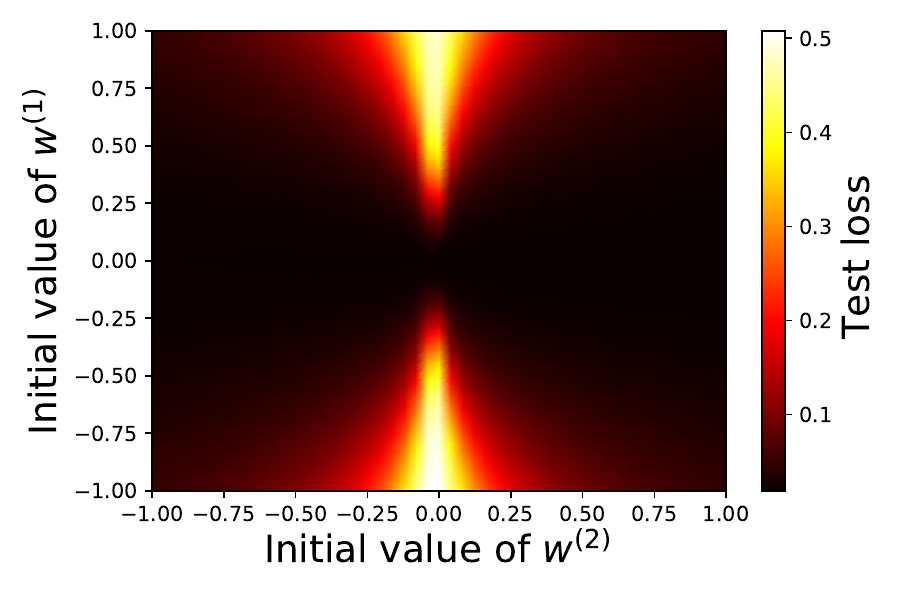}
\end{center}
The additional positive feedback loop between \textcolor{blue}{$w^{(2)}$} and $\textcolor{blue}{v^{(6)}}$ ensures that from almost all initializations, the common feature has greater importance in the final output than any idiosyncratic feature. 
In summary,
\boldcenter{
Due to interaction between layers, the hidden neuron with access to the common feature 
gets a much higher weight compared to its peer that does not.
}
In this manner, gradient descent amplifies intermediate features that generalize better.%
\footnote{Informally, a feature that generalizes better is one that is stable, that is, one that shows a smaller perturbation if the training set we slightly changed. A feature such as $h^{(1)}$ may be thought of as a  ``memorization" feature since it takes on very different values on a particular example depending on whether that example was in the training set or not.
In this context, the method of counterfactual simulation presented in \citet{Chatterjee20b}, a method to analyze the stability, and hence generalization of intermediate signals of a trained model, may be of interest. 
}

\qed

\end{example}

\newbold{Memorization and Layer-wise Coherence.} In the previous example, it may be tempting to think that the difference in generalization capability between the good neuron and the bad neuron is reflected in the training coherence 
of their respective weights.
That is not so.
Both \textcolor{pink}{$w^{(1)}$} and \textcolor{blue}{$w^{(2)}$} have the same coherence over the training set during gradient descent for any of the above trajectories, that is, both have a component-wise $\alpha$ of 1. Thus,
\boldcenter{
Coherence alone at a hidden layer cannot distinguish between a feature from the previous layer that does not generalize well and one that does.
}
\noindent This has two consequences for coherence measured on random data:
\begin{enumerate}
    \item The coherence of a given layer may be high if memorization happens in other layers. We believe that this is a likely reason for why high coherence is observed in AlexNet for the higher layers when it is trained on random data (Figure~\ref{fig:chap1:alexnetimagenet_layer}). 
    
    \item Coherence as measured over the whole network, that is, $\alpha$ computed using the entire gradient vector, can skew high since it is a weighted sum of the per-layer coherences, since many of the layers can have high coherence if memorization happens in only a few layers. This was seen in the case of AlexNet (Figure~\ref{fig:chap1:alexnetimagenet_overall}). 
    
\end{enumerate}

\begin{example}[Tale of Two Neurons with Memorization]
\label{ex:two-neurons-m}
Both these consequences for random data can be illustrated by training the model used in \Cref{ex:two-neurons} on the ``random" dataset {\bf M} of Example~\ref{ex:two-datasets} (instead of dataset {\bf L} as we have been doing so far):
\begin{center}
\begin{tabular}{ccc}
\includegraphics[width=0.30\textwidth]{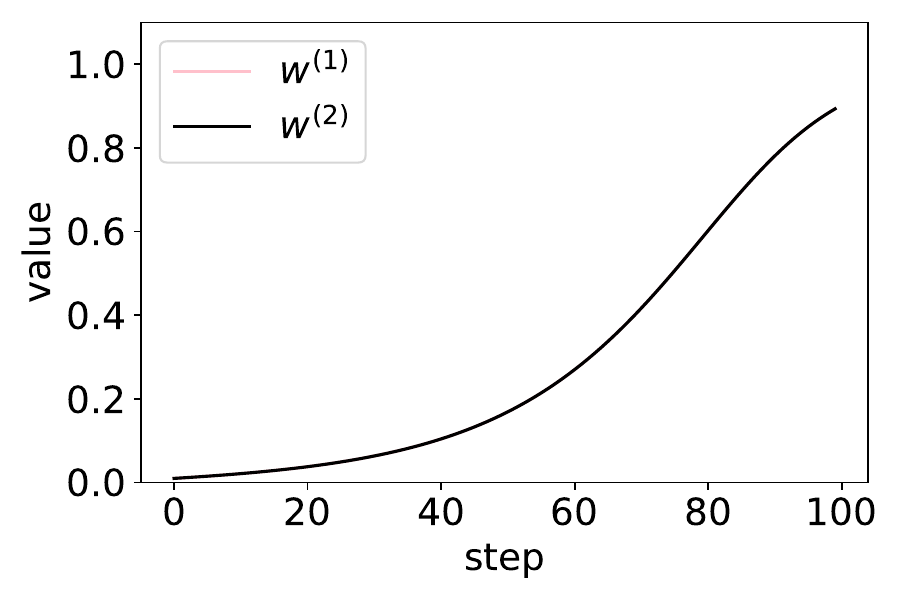}
&
\includegraphics[width=0.30\textwidth]{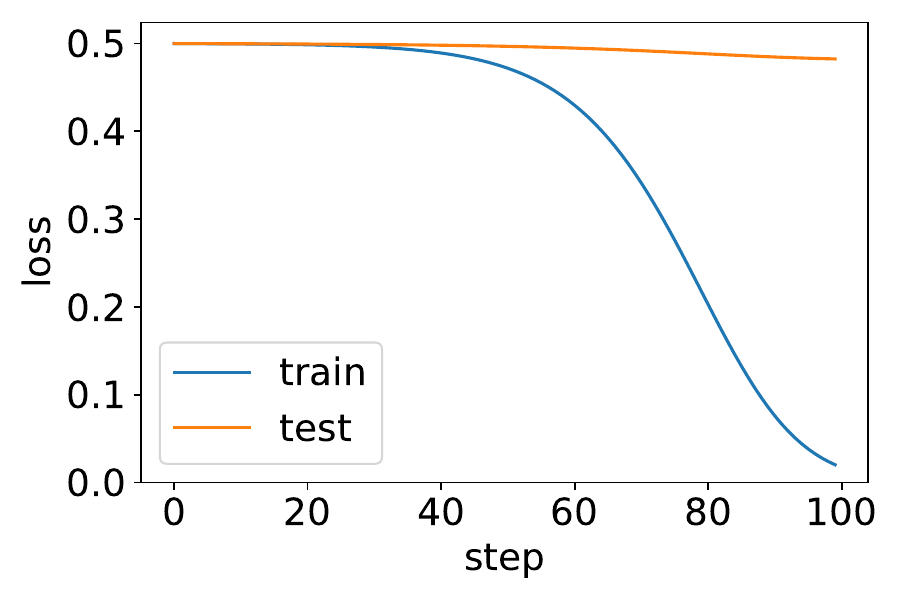}
&
\includegraphics[width=0.30\textwidth]{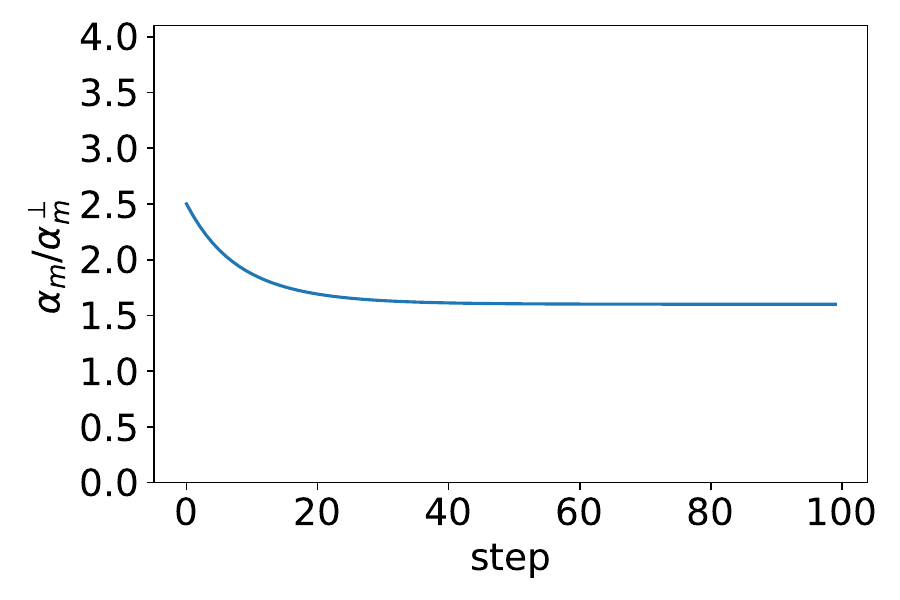}
\\
\end{tabular}
\end{center}
It is easy to check, on the training set of {\bf M} ($m = 4$),
that $\ralpha = 1$ for the first layer (where we may say memorization, or the case-by-case fitting, occurs) and $\ralpha = 4$ for the second layer (where all the examples are treated uniformly). 
Thus, the overall $\ralpha$ is greater than 1 (the orthogonal limit), as seen above, even with memorization.

\qed

\end{example}

The observation above, that is, coherence of the entire network can be high on random data, is not
specific to the $\alpha$ metric. 
It is indeed the case that the per-example gradients of different examples are similar in the components of the gradient corresponding to the higher (non-memorization) layers.
Therefore, even other metrics for coherence such as dot product and stiffness that depend on the entire gradient vector would be skewed upwards.

\thoughtsep

To summarize, the examples in this section show how depth can help with feature selection through positive feedback loops between layers. Due to these feedback loops, features whose weights have high coherence get amplified beyond what coherence alone can accomplish.


\section{What Should a Theory of Generalization Look Like?} 
The ultimate goal of a theory of generalization for deep learning would be to provide a tight guarantee on the generalization gap in problem setups of practical interest, but that appears out of reach at present. Even so, there are a number of different qualitative properties that such a theory should satisfy. For example, in the introduction, we already noted that the work of \citet{Zhang17} puts a very strong constraint on the form of any theory of generalization:

\begin{enumerate}[label={\bf [C\arabic*]}]

\item {\bf Uniform explanation of memorization and generalization.} \label{C1}
It must simultaneously be able to explain why on some datasets, there is good generalization, when the exact same setup (architecture, learning rate, number of steps) is capable of memorizing a random dataset. Thus any non-trivial generalization bound must depend on the dataset in some non-trivial way (that is, beyond just the number of training examples). And ideally, in the case of label noise experiments, the bound should increase with increasing amount of label noise.

\end{enumerate}

\noindent In addition to this constraint, practical experience with deep learning suggests several other constraints on a theory of generalization. \citet{Nagarajan19b} propose the following criteria:

\begin{enumerate}[resume,label={\bf [C\arabic*]}]

\item {\bf Incorporate architecture dependence.} \label{C2} It should provide an explanation of why the generalization error reduces with increasing width (or depth), as has been surprisingly observed in practice.

\item {\bf Apply to real networks (not post-processed networks, noisy variants, or infinite limits).} \label{C3} The theory should apply to the network learned by gradient descent without any modification or explicit regularization, or to idealizations.

\item {\bf Generalization should improve with more training data.} \label{C4} The generalization gap predicted by the theory should decrease with increase in the size of the training set (and not increase as is the case for many bounds, particularly those based on weight-norms!).

\end{enumerate}

\noindent To these four criteria from the literature, we would like to add the following:

\begin{enumerate}[resume,label={\bf [C\arabic*]}]

\item {\bf Uniform explanation of early stopping and training to convergence.} \label{C5} The theory should provide an explanation of why early stopping works in practice, and furthermore, such an explanation should ideally be uniform with respect to the two extremes of early stopping: perfect generalization at initialization (before any data is seen) and good generalization at end of training (that is, at a local minimum or a stationary point of the loss function), particularly since early stopping is not essential to good generalization (for example, see \Cref{fig:chap1:ResNet-50imagenet}).

\item {\bf Uniform handling of stochasticity of batch construction.} \label{C6} Although for many datasets and architectures of practical interest it is computationally prohibitive to do full-batch gradient descent, where such experiments have been performed (for example, \citet{Wu18} and \citet{Chatterjee20}\footnote{See data here: \url{https://openreview.net/forum?id=ryeFY0EFwS&noteId=rJglFKTBoB}}), or equivalent optimization outcome achieved in practice (for example, over-parameterized linear regression using full-batch gradient descent starting from the origin converges to the solution obtained by computing the pseudo-inverse using the singular value decomposition), we find that full-batch also leads to good generalization. Furthermore, careful experiments show that there is no clear connection between batch-size and generalization gap though it was suggested previously that there may be (see \citet{Shallue19} for a comprehensive study). (Now, this is not to say that stochasticity does not help: for example, sampling noise from mini-batch construction could help get out of local minima, or perhaps even serve as a regularizer; but it appears that stochasticity is not essential for a non-trivial generalization gap.)   

\item {\bf Right level of generality for the phenomenon.} \label{C7} Generalization in deep learning is an extremely broad phenomenon and has been observed in many situations that differ in specifics along dimensions such as type of activation function (for example, ReLU, sigmoid, sinusoidal, etc.); type of loss function; whether the task is a regression, classification (few classes or very many classes) or a generative task; discreteness of the problem (both in inputs, such as language models, and in weights and activations, such as extremely quantized models such as binarized neural networks~\citep{Hubara16}); training schedules such as SGD with warm restarts~\citep{Loshchilov17}; etc.

Therefore, a satisfactory explanation of generalization in deep learning cannot be too closely tied to the specific properties of a system such as scale invariance of ReLU activations (the explanation would not work for other activation functions), or the learning problem being a classification problem with a few output classes such as a margin-based explanation (the explanation would not be natural for regression or generative models, or even classification with 100K output classes), or even the training steps monotonically reducing in size (would not work for warm restarts).

\end{enumerate}

\thoughtsep

\noindent To this list, one might be tempted to add: {\em uniformity of the explanation with respect to the degree of over-parameterization,} that is, an uniform explanation for both under-parameterized models and over-parameterized models. However, given the observations of \citet{Belkin19} around double descent, such a criterion may be difficult to satisfy. We believe that this dimension is an interesting one to explore in the future, but for now we focus only on the over-parameterized case.  

\thoughtsep

We believe our approach satisfies all the criteria outlined in this  section (even though some such as \ref{C2} require further work). In the next section, we contrast our approach with the leading alternatives using these criteria as the framework for comparison. 

\section{Comparison with Other Theories and Explanations}
\label{sec:overview:comp}

There has been a lot of work to understand the nature of deep learning. At the highest level, we can categorize the efforts into two buckets: understanding optimization, and understanding generalization. This follows from decomposing the test loss as follows: 
\[
{\rm test\ loss} = {\rm training\ loss} + {\rm generalization\ gap}
\]
where the generalization gap is defined as the difference between the test and training losses.

The work on optimization focuses on trying to understand why given the non-convexity of the fitting problem in general, gradient descent can reach good global minima (and often zero training loss). On the other hand, the work on generalization tries to understand why the generalization gap is small. Although the work on optimization is fascinating in its own right, and there may be reasons to believe that there is an interesting interplay between optimization and generalization in deep learning (for example, our work suggests that when one can optimize quickly, that is also when one is most likely to generalize well), our focus here is on generalization. We refer the reader to
\citet{LiuC20a} as a recent entry point into the optimization literature.

\newbold{Approaches based on Uniform Convergence.} 
Uniform convergence~\citep{Vapnik71} is the earliest and most popular mathematical tool for proving generalization. This theory was initially developed to understand under what circumstances is it possible to identify a model or a hypothesis (from a possibly infinite family) based on only a finite training sample, and it provided a framework to bound the generalization gap based on the complexity of the family of hypotheses (quantified, for example, by notions such as Vapnik-Chervonenkis (VC) dimension) from which the model was selected relative to the size of the training set.

Early generalization bounds for neural networks were obtained by bounding their VC dimension by the parameter count~\citep{Baum89}, but even then it was realized that far fewer training samples are needed for good generalization in practice than is implied by these bounds.
\citet{Bartlett96,Bartlett98} showed in the case of sigmoid networks, that as long as the learning algorithm finds a network with small weights, what matters for a small generalization gap is not the number of weights (i.e., the parameter count), but the size of weights (i.e., their norm).

The growing popularity of deep learning along with the surprising experimental results of \citet{Zhang17} (showing that deep networks used in practice can easily fit random data) brought renewed attention to the generalization problem in deep learning. The results of \citet{Zhang17} imply that the complexity of hypothesis class in the case of practical neural networks is already too large in comparison with the size of the training set to make these bounds non-vacuous (that is, they fail \ref{C1}). 
This motivated a large number of new investigations that try to capture the implicit bias or regularization of gradient descent (for example, \citet{Bartlett17, Golowich18, Arora18, Neyshabur18a, Dziugaite17, Zhou19, Li18, Zhu19, Nagarajan19a, Neyshabur19}).
While these bounds aim to satisfy \ref{C1}, and often provide valuable insights into the nature of deep learning, \citet{Nagarajan19b} observe that they generally fail to either have the correct architecture dependence (\ref{C2}) or are not applicable to
the network obtained from gradient descent, but to some modification thereof (\ref{C3}). 
Furthermore, they point out that many of these approaches (particularly, the non-PAC-Bayesian ones), fail in a more fundamental way by failing to satisfy \ref{C4}: the bounds on the generalization gap obtained from these approaches increases with more training examples. This happens because the bound depends on some norm of the weights (that is, distance from initialization) that typically {\em increases} with larger training sets.\footnote{The PAC-Bayesian ones typically fail \ref{C3} since they apply only to stochastic ensembles. We discuss them in greater detail in a little later.}

Finally, \citet{Nagarajan19b} construct examples of overparameterized problems involving linear models and neural networks where uniform convergence {\em provably} fails to explain generalization. They show that the set of models with low test error and actually explored by GD (that is, every model in the set corresponds to a GD solution for some training set) is too large to get anything other than an almost vacuous uniform convergence bound for its members.

For the expert in uniform convergence, we note that the fact that uniform convergence is necessary for learnability (in the setting of classification or regression) does not preclude the fact that uniform convergence may be unable to explain generalization in a particular algorithm for a particular distribution (which is the situation we are faced with in explaining the generalization mystery in deep learning). For a detailed discussion of this, please see Appendix I of \citet{Nagarajan19b}. In fact, to their argument, we may add that what we are interested in is not just a particular algorithm for a particular distribution, but even a specific sample size. In other words, a result that is asymptotic in sample size may not be illuminating at all. For example, consider that there are only finitely many ResNet-50 networks when the parameters are quantized to 32 bits. Thus, for a large enough sample size, even the humble combination of Hoeffding's inequality and the union bound would provide a non-trivial bound on the generalization gap, but that argument does not provide much insight on why ResNet-50 trained with gradient descent generalizes well on ImageNet given the tiny size of the actual ImageNet training set in comparison.

\thoughtsep

{\em In comparison, our approach is not based on uniform convergence, but on a different tool from learning theory, namely algorithmic stability, and so does not suffer from the problems pointed out by \cite{Nagarajan19b}. In particular, note that the bound in Theorem~\ref{thm:cogen} has an explicit inverse dependence on the size of the training set, as expected.}

\newbold{Previous approaches based on Algorithmic Stability.} 
In short order after the development of uniform convergence, an alternative framework was developed to explain generalization in models based on ``local rules'' (for example, nearest neighbors) where the complexity of the hypothesis class is not fixed but scales with the size of the training dataset~\citep{Rogers78, Devroye79}.
Unlike uniform convergence where the focus is on the {\em size} of the space to be searched, the focus here is on {\em how} the algorithm searches the space.
This approach is now known as Algorithmic Stability~\citep{Bousquet02, ShalevShwartz10, Kearns99}, and the key idea, as we saw, is as follows: If the model learnt from the training set does not depend too much on any one training example (that is the presence or absence of any one example does not change the model very much) then we expect the model to generalize well.

\citet{Hardt16} analyzed the stability of {\em stochastic} gradient descent in the setting of deep learning under the assumption that each batch consists of one example and the learning rate decays over time. Under these assumptions, they are able to bound the generalization gap by the number of training steps, a result they summarize by the slogan ``train faster, generalize better.'' \citet{Kuzborskij18} tighten the \citet{Hardt16} bound by taking into account the curvature of the loss function at initialization but at the cost of a more onerous assumption: no training example is looked at more than once.

This line of argument critically relies on the assumption that each training sample is seen rarely (or not at all) and so the presence or absence of a training sample cannot matter too much. But one should feel uneasy about such an explanation since it may throw the baby out with the bath water. For example, if we don’t see any example at all (that is, take 0 steps of training and stay at the randomly chosen initial point), then the generalization gap should be zero in expectation, that is, we should expect the training loss to be an unbiased estimate of the test loss. The previous slogan taken to its logical conclusion is: Train in no time, generalize perfectly!

So what has gone wrong? The problem is that the generalization mystery requires us to keep in mind simultaneously the real data case and the random data case, and what we need is a ``differential" analysis of the two cases. In other words, in the random case, if you see each example rarely (or not at all), you will not be able to memorize it, but the fact that in practice neural networks memorize random labels means that they are run for long enough to see each example several times, and it is in {\em that} setting that we need to explain generalization on real data. 
In other words, the existing stability approaches fail to satisfy \ref{C1} above. Finally, as noted in Section 7 of  the extended version\footnote{\url{https://arxiv.org/abs/1509.01240}} of \cite{Hardt16}, this approach does not extend to full-batch case, that is, it fails \ref{C6}.

\thoughtsep

{\em Our result may be seen as a natural extension of this approach that takes into account the dataset (more precisely, the interaction of the dataset and the architecture) which is necessary to explain the generalization mystery.} 

\thoughtsep

We note here a line of work that combines stability and PAC-Bayesian analysis to study noisy gradient descent (stochastic gradient Langevin dynamics) where isotropic Gaussian noise is added at every SGD update step (thus failing to meet \ref{C3} and \ref{C6}). Although a more detailed discussion is out of scope, please see \citet{Li20} for a starting point into this literature.

\newbold{Sharpness and Flatness of Minima.}
%
A popular explanation of why SGD generalizes well in large networks in practice is based on the empirical observation that the minima found by SGD are ``flat,'' that is, they are in regions where the overall loss (on the training set) does not change very much around the solution found by SGD~\citep{Hochreiter97, Keskar17}.
While this is intuitively appealing (for example, see Figure 1 in \citep{He19}), and may be combined with a PAC-Bayesian analysis~\citep{McAllester99} to provide non-vacuous bounds for stochastic (noisy) variants of real networks~\citep{Langford02, Dziugaite17, Zhou19}, as a {\em fundamental} explanation of generalization it is unsatisfactory for several reasons:

\begin{enumerate}[label=(\alph*)]

\item {\bf Does not work for linear regression.} Flatness is not useful in understanding why gradient descent generalizes well in the simple case of over-parameterized linear regression (such as \Cref{ex:two-datasets}) since the Hessian of the loss function is constant over the parameter space, and thus the same at both well and poorly generalizing minima~\citep[Section 5]{Zhang21}. 
  
\item {\bf Not invariant to reparameterization in deep networks.} In deep neural networks, it is difficult to formalize the notion of flatness in a manner that is invariant to reparameterization. For example, \citet{Dinh17} show that one can construct arbitrarily sharp minima through weight reparameterization without affecting the generalization performance. 

\item {\bf Not uniform with respect to early stopping.} That is, it fails \ref{C5}. It provides different explanations for good generalization at start of training (``initialization being random is independent of the training data"), and for end of training (``gradient descent is biased towards producing solutions that lie in flat regions'').

\item {\bf It is incomplete.} It does not provide an explanation for why gradient descent prefers flat minima in deep nets, only that it does so (but see \citet{Jastrzebski20, Jastrzebski21} for exciting recent progress in this direction based on choice of early learning rate). It is important to note that the PAC-Bayesian bounds on the generalization gap mentioned above only exploit this empirical observation but do not explain it (for example, see the last paragraph of Section 1.1 in \citet{Dziugaite17}).

\end{enumerate}

\noindent From an algorithmic stability perspective, the relationship between the minima of different examples is more fundamental than the nature of those minima. For example, if the loss of each example has a sharp minimum (for whatever definition of sharpness one likes), but the minima of all the examples coincide, then one could expect better generalization than when each example has a shallow minimum, but the minima of different examples do not coincide.
Of course, it is possible that in practice, the shallowness of the overall loss function is a consequence of better alignment of the minima of different examples, but the point is that the flatness of the overall loss is a secondary to the relationship between the minima of different examples.

Finally, we note that just as the flatness perspective provides inspiration to modify gradient descent to find flatter minima to improve generalization, as seen in \Cref{sec:overview:suppress}, the Coherent Gradients perspective provides similar inspiration to design new algorithms to avoid weak directions.

\thoughtsep

\newbold {On Minimum Norm Solutions.} Another popular explanation of why GD generalizes well is that it finds solutions with small $\ell^2$ norms.
There has been renewed interest in understanding the over-parameterized linear regression case---where GD provably finds the minimum $\ell^2$ norm solution\footnote{though only when initialized at {\bf 0}}---with the hope that these results may be extended to deep networks using observations from the infinite width scaling limit~\citep{Hastie20, Bartlett20}.
However, an explanation based on the size of the norm obtained by gradient descent is also unsatisfactory as a {\em fundamental} explanation for much of the same reasons as before:

\begin{enumerate}[label=(\alph*)]

\item {\bf It is incomplete or has wrong asymptotics.}
Without an explanation for {\em why} gradient descent prefers small norm solutions in deep networks, any explanation for the generalization mystery based on smallness of $\ell^2$ norm is incomplete. For example,  \citet{Bartlett96} showed that certain deep networks generalize well {\em if} gradient descent finds small norm solutions. But this explanation is incomplete because the question for the generalization mystery then is: Why does gradient descent find a small norm solution in one case (real data) and not in the other (random)? That is, why does SGD not always find the large norm (memorization) solution? 
More recent work bounds the norm in terms of the size of the training set $m$, but as \citet[Section 1.1]{Nagarajan19b} observe, these bounds typically have the wrong asymptotics in $m$ (they predict generalization gap increases with more training examples).

\item {\bf Not invariant to reparameterization in deep networks.} For example, in a ReLU network, one can change the norm without changing the function, and hence, the generalization of the model.

\item {\bf Not uniform with respect to initialization.} That is, it fails \ref{C5}. In particular, there is good generalization at initialization regardless of the norm of the parameters (since the parameters are chosen independently of the data).

\end{enumerate}
Furthermore, solutions with non-optimal $\ell^2$ norm solutions can generalize better than those with minimum $\ell^2$ norm (see \citet{Kawaguchi17, Shah20}, or even the median solution in \Cref{ex:two-datasets}).

\thoughtsep

{\em We believe that an explanation based on stability and alignment of gradients during training is a deeper and more general explanation than flatness or small $\ell^2$ norm, especially since it provides an uniform explanation for generalization in over-parameterized linear regression and for non-minimal $\ell^2$ norm solutions and early stopping.}

\thoughtsep

Coherent Gradients may provide a deeper explanation of why SGD gravitates towards small norm or flat-minima solutions, and that would be an interesting direction for future work.

\newbold{Kernel Methods and the Neural Tangent Kernel.}
\citet{Belkin18} point out that traditional kernel methods show the same generalization mystery that deep neural networks do, and furthermore can be viewed as a special case of neural networks (a kernel machine is a two-layer neural network with the first layer frozen). Therefore, they argue, in order to understand deep learning, we need to understand ``shallow'' kernel methods first, and foreshadowing~\citet{Nagarajan19b}, they  point out that the traditional norm-based bounds for kernel methods have the wrong asymptotics. 
We believe that the notion of algorithmic stability (rather than uniform convergence) is the correct lens through which to understand generalization even in kernel machines, and that our theory for understanding generalization in deep learning can be extended to understand kernel learning as well.

Recently, there has been an increased appreciation that the behavior of certain classes of deep models become simpler in the limit that their width goes to infinity, and certain convergence results to zero training loss can be established~\citep{Du18, Li18, Du19, Allen-Zhu19, Zou20}.
\cite{Jacot18} showed that training behaves like kernel regression in this limit with a (constant) kernel that is derived from the linearization of the network around its initialization, called the {\em Neural Tangent Kernel} (NTK). 
%

While this is a promising line of research, particularly with respect to understanding optimization, it does not directly shed light on the generalization mystery. In essence, it makes the observation that under some structural assumptions (for example, infinite width limit with linear output layer but no bottleneck layers~\citep{LiuC20}, or a suitably large initialization~\citep{Chizat19}), a complex non-linear model can behave like a simple linear model (kernel regression).
Thus the generalization mystery in those cases becomes: why does that simple kernel regression model generalize well in one case (real data) and not in another (random data)?

{\em Our theory provides an answer for the latter question---as it does even for ordinary over-para\-meter\-ized linear regression---but our explanation is general enough that it can be applied directly to the complex models, thus obviating the need for the structural assumptions that guarantee a constant kernel.} 
From our list of criteria for a theory of generalization, this line of work falls short of \ref{C3} and \ref{C7} (and arguably \ref{C1} to the extent it even aims to address that).
We stress that our concern here is generalization alone: we do not address why an over-parameterized network can fit the training data. The NTK line of research sheds valuable light on that fitting question, and points the way to more general theories of optimization for deep learning such as \citep{LiuC20a}.

It is unclear if real networks actually operate in the constant kernel regime. \citet{Chizat19} argue that they do not, and show experimentally that the lazy training degrades performance.%
\footnote{\citet{LiuC20} however argue that constant kernel is not the same as lazy training, and the kernel may be constant even in cases where the parameters change significantly (depending on the Hessian).}
The experiments of \citet{Arora19} on convolutional nets also show a significant gap between the constant kernel, and finite convolutional networks (even with regularization turned off), and there is work to close this gap by identifying more sophisticated tangent constructions that, for example, depend not just on the input images but the output labels as well~\citep{Chen20}.
At any rate, \citeauthor{Chizat19} conclude that ``the intriguing phenomenon that still defies theoretical understanding is [the non-lazy regime]: neural networks trained with gradient-based methods (and neurons
that move) have the ability to perform high-dimensional feature selection through highly non-linear
dynamics."
Our work tackles this head on (see \Cref{sec:overview:feedback}), and we see that the non-linear dynamics help even in very simple (and very finite!) cases (such as Examples~\ref{ex:two-deep} and \ref{ex:two-neurons}).

Finally, a ``kernel" that varies during training (the ``path kernel") has been considered by \citet{domingos20}, but he does not investigate how the path taken by SGD is chosen, and thus does not answer the generalization mystery. His theory (implicitly) assumes that the paths for real and random data are different, and therefore does not explain why models along one path generalize better than those along the other. Our coherence and stability-based analysis answers that question.

\newbold{Alignment of Per-Example Gradients.}
In addition to our own work outlining some preliminary results~\citep{Chatterjee20, Zielinski20, Chatterjee20c}, there has been a small, but growing, number of other efforts to understand optimization and generalization in deep learning from the perspective of gradient alignment. 
Here we provide a brief overview of these efforts, and defer detailed discussions of theoretical connections and experimental results to \Cref{app:coherence-comparison} and \Cref{app:measuring-coherence} respectively.
 
\citet{Yin18} identify a quantity called {\em gradient diversity} on the training set that they use to study the question of convergence speed saturation of SGD, that is, what is the point at which increasing batch size no longer helps reduce the number of passes needed through the data (leading to more aggregate computation). They also study the corresponding question for generalization, that is, when is the generalization with a larger batch size still similar to the batch-size of 1 case considered in \citet{Hardt16}. They restrict their attention to the convex case and provide a bound on the batch size based on a quantity called {\em differential gradient diversity} (which like diversity is also computed on the training set).

\citet{Fort19} defined two measures of per-example gradient alignment called {\em sign} and {\em cosine stiffness} and conducted experiments to study the connection between stiffness and generalization. 
\citet{Sankararaman20} propose a measure for gradient alignment called confusion and study its effect on the convergence of SGD (as the depth and widths of the networks change). They do not study generalization.
Recently, \citet{Mehta21} present an adversarial initialization scheme for the sine activation~\citep{Liu19}, and study it using the lens of gradient alignment using a metric very similar to coherence. They show experimentally on small datasets such as {\sc cifar}-10, SHVN, etc. that within a class, their metric correlates well with generalization, as the scale of initialization is varied.

We note here that these efforts, particularly those looking at generalization, study gradient alignment within (and sometimes across) classes which is unsatisfactory since 
class-based analyses do not extend naturally to large output spaces as in regression or generative models (see \ref{C7}). 
Furthermore, in the context of generalization, their focus is primarily on the question of when GD generalizes well (that is, Question 1 in Section~\ref{sec:intro}). In our work, we are also interested in understanding the causal mechanism that enables GD to find a well-generalizing solution when such a solution exists (that is, Question 2 in Section~\ref{sec:intro}).

Although not directly based on per-example gradients, the notion of {\em local elasticity} discovered experimentally by \citet{He20} is worth highlighting here. They find that
an SGD update in a deep network
for an input $x$ does not change the predictions for a ``dissimilar" input $x'$. Although they do not provide a general definition of dissimilarity, intuitively, images of a Persian cat and an Egyptian cat (to use an example from their paper) would be less dissimilar than images of a German shepherd and a trailer truck.
They note that local elasticity is observed (a) only in deep models (and not, say, in linear models) and (b) is ``pronounced only after several epochs of training"~\citep[footnote 1]{Deng21}. Thus, as an explanation of generalization of gradient descent it fails \ref{C5} and \ref{C7}.
In later work, they have used the notion of local elasticity to suggest an improvement to the NTK~\citep{Chen20} (this was discussed previously), as well as a new notion of algorithmic stability called {\em locally elastic stability}~\citep{Deng21} that is an elegant refinement of the notion of uniform stability to incorporate data dependence. 
They show that SGD is locally elastic stable, but their proof is based on the same insight as \citet{Hardt16} that each training example is seen rarely (due to stochasticity), and therefore, it does not naturally extend to the non-stochastic case (fails \ref{C6}).


\thoughtsep

Finally, \citet{Liu20} is the closest work in the literature to ours, but their primary focus is also Question 1.%
\footnote{It is interesting to note that their paper appeared in parallel with our preliminary work~\citep{Chatterjee20} which addressed the other half of the generalization mystery, that is, Question 2 in Section~\ref{sec:intro}.}
They define two quantities {\em Gradient Signal to Noise Ratio} (GSNR) (which is on a per-coordinate basis on the training set) and {\em One Step Generalization Ratio} (OSGR) (which links the one-step change in the training and test losses) and show how in {\em early} training, under the assumption that the average gradients in test and training are the same (their Assumption 2.3.1), in a full-batch setting, the two quantities are related, and high GSNR implies an OSGR close to 1.
We stress that their result is a local result that holds only for one step of training, unlike \Cref{thm:cogen} which holds globally.
Furthermore, a reformulation in terms of $\{$test, training$\} \times \{$overall, per-coordinate$\}$
quantities provides a simpler view of their result. 
See discussion of GSNR and OSGR in \Cref{app:coherence-comparison} for more details.

Furthermore, like us, they also observe that in deep networks, the alignment of per-example gradients can increase in the course of training (that is, there is coherence amplification). 
However, in their experiments, they only see it on real data and not with random labels in contrast to what we see in \Cref{sec:overview:measurement} with our metric on ResNet (where it is a small effect) and on AlexNet (where it is a large effect).
They provide a heuristic argument for why the numerator of GSNR can increase due to feature learning, but stop short of their ultimate goal which is showing why GSNR itself can increase. Their heuristic explanation based on the dynamics of the training is similar to our discussion in \Cref{sec:overview:feedback} but see our discussion of the difference between coherence amplification and signal amplification.
Of course, here it must be noted GSNR does not always increase (even in their data), and in our work, we show why that must be so as examples get fitted albeit in the context of our coherence metric $\alpha$ rather than GSNR (see \Cref{app:evolution}). 

It should be noted that since they do not address Question 2 in \Cref{sec:intro}, their theory is descriptive rather than causal, and does not directly suggest modifications such as those considered in \Cref{sec:overview:suppress} to reduce memorization and improve generalization.


\section{Discussion and Directions for Future Work}

In this paper, we have presented a simple explanation for the generalization mystery in deep learning: 
When the gradients of different examples are well-aligned during training, that is, when they are {\em coherent}, gradient descent is stable, and the resulting model is expected to generalize well. Otherwise, gradient descent may fail to generalize if there are too few examples or if it is run for too long.

This explanation has several attractive features. First, due to its simplicity, it is very general. It provides an uniform explanation for memorization and generalization, and scales from toy over-parameterized linear regression problems to deep models at ImageNet scale.
Second, it separates optimization-related issues (which can be hard to reason about in the non-convex settings common to deep learning) from generalization-related ones, allowing us to make progress independently on these aspects.
Third, it is a {\em causal} explanation for the implicit bias of gradient descent that provides perspective on existing regularization techniques, the relative hardness of examples, the role of depth, and motivates novel but simple modifications such as winsorization and {\sf M3} for improving the generalization of gradient descent.
Last, but not least, it avoids critical (and possibly fatal) problems facing competing explanations that have been put forward to explain the generalization mystery. 

There are many directions to explore from here, particularly, in the formal development of the theory, in establishing how widely it holds, in using it to improve our understanding of deep learning, and in developing new, more efficient, representation learning algorithms. We outline some specific ideas in greater detail below:

\begin{enumerate}

    \item {\bf Better metrics for coherence and a tighter bounds.} Our generalization bound depends on $\alpha$ as measured over entire per-example gradients. 
    However, as we have seen, both on toy examples and some real architectures, some parts of the network may play a greater role in memorization than others, and a measurement over the entire per-example gradient averages over these differences leading to a necessarily weak bound.
    Can we obtain tighter bounds by accounting for the graphical structure of the network, say by developing some notion of ``minimum coherence" cuts that control generalization?
    
    \item {\bf Incorporating the under-parameterized case.} Our goal here was to understand what happens in highly over-parameterized settings. While the inverse dependence on the number of examples $m$ in our bound provides a limiting guarantee in the case of extreme under-parameterization (as $m \to \infty$), a more direct dependence on the dimension, or better yet, on the graphical structure of the model (as noted above), is desirable.
    The ideal theory would handle the over- and under-parameterized cases uniformly, thus providing a continuous explanation for the phenomenon of deep double descent~\citep{Belkin19} and also apply to pathological learning problems that are simultaneously over- and under-parameterized (in different parts of the data distribution). See also \Cref{app:linear-regression}.
    
    \item{\bf Generalization bound based on training set only.} Is there an efficient way to measure stability degradation during training using {\em only} the training set in order to provide a non-vacuous bound on the generalization gap? Perhaps this could be done with a ``stability accountant" similar to a privacy accountant in differential privacy.
        
    \item {\bf Confirmation on other datasets and architectures.} In this work we have only studied vision datasets. Does the explanation hold for other types of data (such as language and speech) and architectures (such as transformers)? 
    
    \item {\bf Generalization and width.} \citet{Neyshabur18} found that wider networks generalize better. Can we now explain this? Intuitively, wider networks have more sub-networks at any given level, and so the sub-network with maximum coherence in a wider network may be more coherent than its counterpart in a thinner network, and hence generalize better.
    In other words, since---as discussed in \Cref{sec:overview:feedback}---gradient descent is a feature selector that prioritizes well-generalizing (coherent) features, wider networks are likely to have better features simply because they have more features.
    In this connection, see also the Lottery Ticket Hypothesis~\citep{Frankle18}.
    
    \item {\bf Understanding texture and shape bias.} Can use coherence at random initialization and signal amplification to understand why vision networks have a bias towards features based on ``bulk" properties such as texture as opposed to those based on more subtle properties such as shape (see, for example, \citet{Geirhos18})?
   
   \item {\bf Practical use for new algorithms.} Can algorithms based on robust averaging (such as winsorized gradient descent, or {\sf M3}) help improve the state-of-the-art in large-scale learning with noisy labels, learning in low data settings, and in private learning? Can they be extended to provide privacy guarantees?
   
   \item {\bf Hardware acceleration for private learning.} Modern hardware accelerators for deep learning typically have multiple nodes (for example, GPU or TPU cores) over which the examples in a mini-batch are distributed, and the gradient for a mini-batch is computed via a global {\sf all-reduce} operation over the average gradient computed at each node~\citep{Xu20, Kumar20}. Platform support for robust averaging (by clipping outliers), perhaps by supporting multiple simultaneous operations as part of an {\sf all-reduce}, would greatly improve the efficiency of algorithms such as {\sf M3} for
   the use cases outlined above.
   
   \item {\bf Using coherence to evaluate architectures.} Can we use coherence to get insights into the relative strengths and weaknesses of different network architectures, and to improve architectures to generalize better?
   For example, as we saw in \Cref{sec:overview:measurement}, for random data, $\alpha$ of (entire) per-example gradients is much lower for ResNet-50 than for AlexNet. Does that mean that overfitting in ResNet-50 is harder than in AlexNet since it cannot be as ``local" (confined to a few layers)?
   
   \item {\bf Using coherence to study optimizers.} In this work we have restricted our focus to vanilla gradient descent. However, the notion of coherence can be used to analyze the generalization behavior of other optimization algorithms such as {\sc adam}, a topic of much interest (see, for example, \citet{zhou20} and references therein). 
   For other optimizers, in addition to the coherence of the per-example gradients, one would have to consider the transformation applied by the optimizer to obtain the actual updates to the network parameters. 

   \item {\bf Explore Implications of the Theory.} In order to gain further confidence in the theory, it would be good to check some of the following predictions which we believe are consequences of the theory:
   
   \begin{enumerate}
       \item  {\em Full-batch gradient descent and gradient flow should have non-trivial generalization on real data.} Based on our theory, finite learning rate, and mini-batch stochasticity are not necessary for generalization (though they may help).
       Note that we are only talking about generalization gap, not test loss: Finite learning rate and mini-batch stochasticity can help escape saddle points during training, and so in their absence we may get stuck with poor training loss, but any source of noise could be used to escape these saddle points.
       Thus, we would expect, for instance, that full-batch gradient descent on ImageNet with a tiny learning rate should show non-trivial generalization.
       
       \item {\em Assuming a small enough learning rate, as training progresses, the generalization gap cannot decrease.} 
       This follows from the iterative stability analysis of training: with more steps, stability can only degrade.
       If this is violated in a practical setting, it would point to an interesting limitation of the theory.
       
       \item {\em Continuity is not necessary for good generalization.} Although gradient descent involves real numbers (or close approximations), if the theory is right, it should be possible to engineer fully discrete representation learning systems using voting among training examples to make optimization decisions during fitting that generalize well.
       By avoiding floating point operations (or even high precision fixed point), such systems may be much more computationally efficient than gradient descent.
       
   \end{enumerate}
   
\end{enumerate}


\thoughtsep

\noindent {\sc Acknowledgments.}
We thank Alan Mishchenko and Shankar Krishnan for close collaborations on related projects, and the following for interesting discussions and feedback: 
Anand Babu,
Sungmin Bae,
Michele Covelle,
Josh Dillon, 
Sergey Ioffe,
Firdaus Janoos,
Stanislaw Jastrzebski, 
Chandramouli Kashyap,
Matt Streeter,
Rahul Sukthankar, 
and Jay Yagnik.

\thoughtsep

\addtocontents{toc}{\phantom{M}}
\begin{appendices}
\crefalias{section}{appendix}
\section{Mathematical Properties of \texorpdfstring{$\alpha$}{Alpha}}
\label{app:alphafacts}

Our metric for coherence $\alpha$ defined in \Cref{sec:overview:metrics} is not specific to gradients, but extends naturally to vectors in Euclidean spaces. It is convenient, therefore, to study its properties in that general setting. 
Let $\mathcal{V}$ be a probability distribution on a collection of $m$ vectors in an Euclidean space.
In accordance with (\ref{eq:overview:metric}), we define $\alpha(\mathcal{V})$ to be
\begin{equation}
\alpha(\mathcal{V}) \equiv \frac{\displaystyle \E_{v \sim \mathcal{V}, v' \sim \mathcal{V}} \  [ v \cdot v' ]}{\displaystyle \E_{v \sim \mathcal{V}} \  [ v \cdot v]}
\label{eq:definealpha}
\end{equation}
Note that $\E[v \cdot v] = 0$ implies $\E[v \cdot v'] = 0$.
In what follows, we ignore the technicality of the denominator being 0 by always assuming that there is at least one non-zero vector in the support of $\mathcal{V}$.\footnote{This is also generally
true in our experiments, but if necessary, a small constant can be added to the denominator to ensure this quantity is always well-defined.}

The following example generalizes the coherence calculation of \Cref{ex:two-datasets}, and shows that greater the commonality between vectors, greater is $\alpha$.

\begin{example} [Commonality]
\label{ex:app:commonality}
For $1 \le i \le m$, suppose each $v_i$ has a common component $c$ and an idiosyncratic component $u_i$, that is, $v_i = c + u_i$ with $u_i \cdot u_j = 0$ for $1 \le j \le m$ and $j \ne i$; $u_i \cdot c = 0$; and say, $u_i \cdot u_i = \|u\|^2$ for some $u$. It is easy to see that $\alpha$ over the uniform distribution on $v_i$ in this case is 
$ \frac{1}{m} \left[ 1 + (m - 1) \cdot f  \right] $
where $f = \|c\|^2 / (\|c\|^2 + \|u\|^2)$.

\qed
\end{example}

\begin{theorem}[Boundedness and Scale Invariance]
We have,
\[
0 \leq \alpha(\mathcal{V}) \leq 1
\]
where $\alpha(\mathcal{V}) = 0$ iff $\E_{v \sim \mathcal{V}} [v] = 0$ and $\alpha(\mathcal{V}) = 1$ iff all the vectors are equal.
 
Furthermore, for non-zero $k \in \mathbb{R}$, we have, 
\[
\alpha(k \mathcal{V}) = \alpha(\mathcal{V})
\]
where $k \mathcal{V}$ denotes the distribution of the random variable $k v$ where $v$ is drawn from $\mathcal{V}$.
\label{thm:basicprop}
\end{theorem}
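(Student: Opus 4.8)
The plan is to reduce everything to the single identity that, by independence of $v$ and $v'$, the numerator of $\alpha(\mathcal{V})$ is a perfect square: $\E_{v \sim \mathcal{V},\, v' \sim \mathcal{V}}[v \cdot v'] = \E_{v}[v] \cdot \E_{v'}[v'] = \|\E_{v \sim \mathcal{V}}[v]\|^2$. This is the ``last equality'' already invoked in (\ref{eq:overview:metric}) and (\ref{eq:overview:dloss}), and a one-line coordinatewise expansion of the product justifies it. Once we have this, non-negativity of $\alpha$ is immediate: the numerator is a squared norm, hence $\ge 0$, and the denominator $\E_v[v \cdot v] = \E_v[\|v\|^2]$ is $> 0$ by the standing assumption that $\mathcal{V}$ has a non-zero vector in its support. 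Moreover $\alpha(\mathcal{V}) = 0$ holds iff $\|\E_v[v]\|^2 = 0$, i.e.\ iff $\E_{v \sim \mathcal{V}}[v] = 0$, which gives one of the two equality characterizations.

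For the upper bound $\alpha(\mathcal{V}) \le 1$, I would use the bias--variance decomposition $\E_v[\|v\|^2] = \|\E_v[v]\|^2 + \E_v[\|v - \E_v[v]\|^2]$, obtained by expanding $\|v - \E_v[v]\|^2$ and taking expectations (equivalently, Jensen's inequality for the convex map $x \mapsto \|x\|^2$). Since the second term on the right is a non-negative expectation, $\|\E_v[v]\|^2 \le \E_v[\|v\|^2]$, i.e.\ numerator $\le$ denominator, so $\alpha(\mathcal{V}) \le 1$. Equality forces $\E_v[\|v - \E_v[v]\|^2] = 0$; because this is an average of non-negative terms over a \emph{finite} support, every vector in the support must equal $\E_v[v]$, so all the vectors are equal. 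Conversely, if all vectors equal some $c$, both numerator and denominator equal $\|c\|^2$ (nonzero by assumption) and $\alpha(\mathcal{V}) = 1$.

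Scale invariance is the easiest part: replacing $v$ by $kv$ multiplies the numerator $\|\E[v]\|^2$ by $k^2$ and the denominator $\E[\|v\|^2]$ by $k^2$ as well, and since $k \neq 0$ these factors cancel, giving $\alpha(k\mathcal{V}) = \alpha(\mathcal{V})$.

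I do not anticipate a genuine obstacle. The only point requiring a little care is the ``iff'' in the equality case of the upper bound: finiteness of the support (assumed throughout \Cref{app:alphafacts}) is what lets us pass from a vanishing expectation of non-negative quantities to each term vanishing, and hence to all vectors being literally equal rather than merely equal almost surely. Everything else is a direct computation, and the whole argument is in essence the observation that $\alpha$ is a normalized ratio of the squared norm of the mean to the second moment.
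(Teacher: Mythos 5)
Your proposal is correct and follows essentially the same route as the paper: the key identity $\E_{v,v'}[v\cdot v'] = \E[v]\cdot\E[v]$ gives non-negativity and the $\alpha=0$ characterization, and your bias--variance decomposition is exactly the paper's expansion of $\E_{v'}\left[(\E_v[v]-v')\cdot(\E_v[v]-v')\right] \ge 0$, with the same finite-support argument for the equality case and the same $k^2$ cancellation for scale invariance.
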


\begin{proof}
{\em Boundedness.} Since $v \cdot v \geq 0$ for any $v$, we have $\E_{v \sim \mathcal{V}}[v \cdot v] \geq 0$. Furthermore, it is easy to verify by expanding the expectations (in terms of the vectors and their corresponding probabilities) that
\begin{equation}
\displaystyle \E_{v \sim \mathcal{V}, v' \sim \mathcal{V}} \  [ v \cdot v' ] = \E_{v \sim \mathcal{V}} [v]\ \cdot \E_{v \sim \mathcal{V}} [v] 
\geq 0.
\label{eq:norm}
\end{equation}
Therefore, $\alpha(\mathcal{V}) \geq 0$.
Likewise, another direct computation shows that
\begin{equation}
%
0 \leq 
\E_{v' \sim \mathcal{V}}\ \left[
  (\E_{v \sim \mathcal{V}}[v] - v')
        \cdot 
  (\E_{v \sim \mathcal{V}}[v] - v') 
\right] 
= 
\E_{v \sim \mathcal{V}}[v \cdot v] -
\E_{v \sim \mathcal{V}}[v]\ \cdot \E_{v \sim \mathcal{V}}[v] 
\end{equation}
Since from Equation~\ref{eq:norm} we have $\E[v] \cdot \E[v] = \E[v \cdot v']$, it follows that $\alpha(\mathcal{V}) \leq 1$.
Furthermore, since each term of the expectation on the left is
non-negative, equality is attained only when all the vectors are equal.

{\em Scale invariance.} We have,

\begin{equation}
\alpha(k\mathcal{V}) 
= \frac{\displaystyle \E_{v \sim k \mathcal{V}, v' \sim k \mathcal{V}} \  [ v \cdot v' ]}{\displaystyle \E_{v \sim k \mathcal{V}} \  [ v \cdot v]}
= \frac{\displaystyle \E_{v \sim \mathcal{V}, v' \sim \mathcal{V}} \  [ k v \cdot k v' ]}{\displaystyle \E_{v \sim \mathcal{V}} \  [ k v \cdot k v]}
= \frac{\displaystyle \E_{v \sim \mathcal{V}, v' \sim \mathcal{V}} \  [ v \cdot v' ]}{\displaystyle \E_{v \sim \mathcal{V}} \  [ v \cdot v]}
= \alpha(\mathcal{V})
\end{equation}
\end{proof}

\newbold{Coherence of mini-batch gradients.}
We now prove a result that connects the $\alpha$ of (the uniform distribution on) a set of individual examples to that of the distribution of mini-batches constructed from those examples, by selecting $k$ elements at random for each mini-batch with replacement. This is critical for measuring $\alpha$ in practical networks, and furthermore, allows us to estimate per-example $\alpha$ without computing per-example gradients which leads to a huge speedup in practice (see \Cref{app:measuring-coherence}).

\begin{theorem}[Stylized mini-batching]
\label{thm:stylized_mini_batching}
Let $v_1, v_2, .., v_k$ be $k$ i.i.d. variables drawn from $\mathcal{V}$. Let $\mathcal{W}$ denote the distribution of the random variable $w = \frac{1}{k} \sum_{i=1}^{k} v_i$. We have,

\begin{equation}
\label{eq:batchformula}
    \alpha(\mathcal{W}) = 
    \alpha(k \mathcal{W}) = 
    \frac{k \cdot \alpha(\mathcal{V})}{1 + (k - 1) \cdot \alpha(\mathcal{V})}
\end{equation}
\noindent Furthermore, 
$\alpha(\mathcal{W}) \geq \alpha(\mathcal{V})$
with equality iff $\alpha(\mathcal{V}) = 0$ or $\alpha(\mathcal{V}) = 1$.
\end{theorem}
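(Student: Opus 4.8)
The plan is to reduce everything to the identity $\alpha(\mathcal{V}) = \E_{v\sim\mathcal{V}}[v]\cdot\E_{v\sim\mathcal{V}}[v] \,/\, \E_{v\sim\mathcal{V}}[v\cdot v]$ established in the proof of \Cref{thm:basicprop} (Equation~\ref{eq:norm}), and then carry out two short second-moment computations. Write $S \equiv \E_{v\sim\mathcal{V}}[v]$ and $N \equiv \E_{v\sim\mathcal{V}}[v\cdot v]$, so that $\alpha(\mathcal{V}) = (S\cdot S)/N$, with $N > 0$ by the standing assumption that $\mathcal{V}$ has a nonzero vector in its support. The first equality $\alpha(\mathcal{W}) = \alpha(k\mathcal{W})$ is immediate from the scale-invariance part of \Cref{thm:basicprop}.

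Next I would compute the two moments of $\mathcal{W}$. By linearity of expectation, $\E[w] = \frac{1}{k}\sum_{i=1}^{k}\E[v_i] = S$, so the numerator of $\alpha(\mathcal{W})$ equals $S\cdot S$, exactly as for $\mathcal{V}$. For the denominator, $\E[w\cdot w] = \frac{1}{k^2}\sum_{i,j}\E[v_i\cdot v_j]$; splitting into the $k$ diagonal terms (each equal to $N$) and the $k(k-1)$ off-diagonal terms, and using independence of $v_i$ and $v_j$ for $i\neq j$ to get $\E[v_i\cdot v_j] = \E[v_i]\cdot\E[v_j] = S\cdot S$, gives $\E[w\cdot w] = \frac{1}{k^2}\bigl(kN + k(k-1)(S\cdot S)\bigr) = \frac{N + (k-1)(S\cdot S)}{k}$. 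Dividing, $\alpha(\mathcal{W}) = \frac{k(S\cdot S)}{N + (k-1)(S\cdot S)}$, and dividing numerator and denominator by $N$ yields the claimed closed form $\frac{k\,\alpha(\mathcal{V})}{1 + (k-1)\,\alpha(\mathcal{V})}$.

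For the monotonicity statement, I would set $a \equiv \alpha(\mathcal{V}) \in [0,1]$ (bounds from \Cref{thm:basicprop}) and analyze $\phi(a) \equiv \frac{ka}{1 + (k-1)a}$. A direct subtraction gives $\phi(a) - a = \frac{(k-1)\,a\,(1-a)}{1 + (k-1)a}$, which is nonnegative since $k \geq 1$, $a\in[0,1]$, and the denominator is positive; hence $\alpha(\mathcal{W}) \geq \alpha(\mathcal{V})$. Assuming $k \geq 2$ (for $k=1$ the distributions coincide and equality is trivial), equality holds precisely when $a(1-a) = 0$, i.e. when $\alpha(\mathcal{V}) = 0$ or $\alpha(\mathcal{V}) = 1$.

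There is no serious obstacle here: the only point requiring care is invoking the i.i.d. (sampling-with-replacement) hypothesis to evaluate the cross terms $\E[v_i\cdot v_j]$ — without independence the off-diagonal contribution would not factor as $S\cdot S$ and the clean formula would break. Everything else is bookkeeping with the bilinearity of the dot product together with the already-proven boundedness and scale-invariance of $\alpha$.
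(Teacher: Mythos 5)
Your proof is correct and follows essentially the same route as the paper's: scale invariance for the first equality, a diagonal/off-diagonal split of the second moment of the minibatch gradient using independence for the closed form, and elementary algebra on $a \mapsto ka/(1+(k-1)a)$ for the monotonicity. The only cosmetic differences are that the paper works with $\alpha(k\mathcal{W})$ and writes the cross terms as $\E[v\cdot v']$ rather than $\E[v]\cdot\E[v]$ (equal by Equation~\ref{eq:norm}), and your explicit identity $\phi(a)-a = (k-1)a(1-a)/(1+(k-1)a)$ handles the equality case (and the degenerate $k=1$ case) a bit more transparently than the paper's argument.
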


\begin{proof}
The first equality in (\ref{eq:batchformula}) follows from Theorem~\ref{thm:basicprop}. For the second equality, we have,
\begin{equation*}
\alpha(k \mathcal{W}) = 
\frac{\displaystyle \E_{\substack{w \sim k \mathcal{W}, \\ w' \sim k \mathcal{W}}} \  [ w \cdot w' ]}{\displaystyle \E_{w \sim k \mathcal{W}} \  [ w \cdot w]} =
\frac{\displaystyle \E_{\substack{v_1, .., v_k, \\ v_1', .., v_k'}} \  [ (\sum_i v_i) \cdot (\sum_i v_i') ]}{\displaystyle \E_{v_1, .., v_k} \  [(\sum_i v_i) \cdot (\sum_i v_i)]} =
\frac{\displaystyle k^2 \E_{\substack{v \sim \mathcal{V},\\v' \sim \mathcal{V}}} \  [ v \cdot v' ]}
{\displaystyle k \E_{v \sim \mathcal{V}} \  [ v \cdot v] + k\cdot(k - 1)\E_{\substack{v \sim \mathcal{V},\\v' \sim \mathcal{V}}} \  [ v \cdot v' ]} 
\end{equation*}
By dividing the numerator and denominator of the last expression by $\displaystyle k \E_{v \sim \mathcal{V}} \ [v \cdot v]$ the required result follows.

Now, since $\alpha \leq 1$, we have $k \geq 1 + (k - 1) \cdot \alpha$. Since $\alpha \geq 0$, multiplying both sides by $\frac{\alpha}{1 + (k - 1) \cdot \alpha}$ we have $\frac{k \cdot \alpha}{1 + (k - 1) \cdot \alpha} \geq \alpha$. 
Finally, it is easy to check that the two solutions of $\frac{k \cdot \alpha}{1 + (k - 1) \cdot \alpha} = \alpha$ are $\alpha = 0$ and $\alpha = 1$.
\end{proof}

\noindent {\bf Remark.} This formulation provides a nice perspective on the type of results proved in \citet{Yin18} and \citet{Jain18}. When
$\alpha \ll 1/k$ but non-zero (i.e., we have high gradient diversity), creating mini-batches of size $k$ increases coherence almost $k$ times. But, when  $\alpha \approx 1$ (i.e., low diversity) there is not much point in creating mini-batches since there is little room for improvement. See also discussion of gradient diversity in \Cref{app:coherence-comparison}.

\thoughtsep

\newbold{Coherence with some examples fitted.}
During training, as examples get fitted, their gradients become zero. The following lemma characterizes $\alpha$ when some fraction of the vectors are zero. 
It shows that $\alpha$ decreases as larger fractions of the distribution become zero (all else remaining being equal), and therefore, as examples get fitted during training there is a natural tendency for $\alpha$ to decrease. 

\begin{lemma}[Effect of zero vectors]
\label{lem:zero}
If $\mathcal{W}$ denotes the distribution where with probability $p > 0$ we pick a vector from $\mathcal{V}$ and with probability $1 - p$ we pick the zero vector then $\alpha(\mathcal{W}) =  p \cdot \alpha(\mathcal{V})$.
\end{lemma}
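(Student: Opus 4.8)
The plan is a direct computation of the numerator and denominator of $\alpha(\mathcal{W})$ from the definition (\ref{eq:definealpha}), using nothing more than linearity of expectation and the identity established in \Cref{thm:basicprop}. First I would handle the denominator: by conditioning on whether the draw from $\mathcal{W}$ lands in $\mathcal{V}$ (probability $p$) or is the zero vector (probability $1 - p$), we get
\[
\E_{w \sim \mathcal{W}}[w \cdot w] = p\,\E_{v \sim \mathcal{V}}[v \cdot v] + (1 - p)\cdot 0 = p\,\E_{v \sim \mathcal{V}}[v \cdot v].
\]
The same conditioning gives $\E_{w \sim \mathcal{W}}[w] = p\,\E_{v \sim \mathcal{V}}[v]$.

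Next I would treat the numerator. Rather than expand $\E_{w,w'}[w\cdot w']$ by conditioning on two independent draws (which also works, but produces cross terms that vanish because the zero vector contributes nothing), the cleaner route is to invoke (\ref{eq:norm}) from the proof of \Cref{thm:basicprop}, which says that for any such distribution the expected pairwise dot product equals the squared norm of the mean vector. Applying this to $\mathcal{W}$ and then to $\mathcal{V}$:
\[
\E_{w \sim \mathcal{W},\,w' \sim \mathcal{W}}[w \cdot w'] = \E_{\mathcal{W}}[w] \cdot \E_{\mathcal{W}}[w] = p^2\,\bigl(\E_{\mathcal{V}}[v] \cdot \E_{\mathcal{V}}[v]\bigr) = p^2\,\E_{v \sim \mathcal{V},\,v' \sim \mathcal{V}}[v \cdot v'].
\]
Dividing, the factor $p$ in the denominator cancels one factor of $p^2$ in the numerator, leaving $\alpha(\mathcal{W}) = p\cdot\alpha(\mathcal{V})$.

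The only point requiring a word of care is well-definedness of $\alpha(\mathcal{W})$: its denominator is $p\,\E_{\mathcal{V}}[v\cdot v]$, which is nonzero precisely because $p > 0$ and, per the standing assumption in this appendix, the support of $\mathcal{V}$ contains a nonzero vector; so no division-by-zero issue arises. There is essentially no hard step here — the statement is a one-line consequence of the bilinearity computations above — so I would keep the write-up to the two displays plus the well-definedness remark.
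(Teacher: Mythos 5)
Your proposal is correct and follows essentially the same route as the paper: the paper's proof is the one-line computation showing the numerator scales by $p^2$ and the denominator by $p$, exactly as in your two displays. Your additional remarks on well-definedness and the use of the identity $\E[w\cdot w'] = \E[w]\cdot\E[w]$ are fine elaborations of the same argument.
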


\begin{proof}
\begin{equation}
\alpha(\mathcal{W}) 
= \frac{\displaystyle \E_{w \sim \mathcal{W}, w' \sim \mathcal{W}} \  [ w \cdot w' ]}{\displaystyle \E_{w \sim \mathcal{W}} \  [ w \cdot w]}
= \frac{\displaystyle p^2 \cdot \E_{v \sim \mathcal{V}, v' \sim \mathcal{V}} \  [ v \cdot v' ]}{\displaystyle p \cdot \E_{v \sim \mathcal{V}} \  [ v \cdot v]}
= p \cdot \alpha(\mathcal{V})
\end{equation}
\end{proof}

\begin{example}[Coherence reduction with zero vectors]
If we add $k$ zero gradients to the collection of gradients constructed in 
Example~\ref{ex:app:commonality}, using Lemma~\ref{lem:zero}, we get,
\[
\alpha = \frac{m}{m + k} \cdot \frac{1}{m} \left[ 1 + (m - 1) \cdot f  \right] 
= \frac{1}{n} \left[ 1 + (n - k - 1) \cdot f  \right]
\]
where $n = m + k$ is the size of this new sample. For a fixed $n$, as $k$ increases, $\alpha$ decreases going down to $1/n$ (the orthogonal limit) when all but one vector in the sample is zero,
i.e., $k = n - 1$.
\qed
\end{example}

\begin{lemma}[Combining orthogonal distributions]
\label{lem:two_orthogonal_sets}
If $\mathcal{W}$ denotes the distribution where with probability $p > 0$ we pick a vector from $\mathcal{U}$ and with probability $1 - p$ we pick a vector from $\mathcal{V}$ and all elements in the support of $\mathcal{U}$ are orthogonal to those in the support of $\mathcal{V}$ then we have
\[
\alpha(\mathcal{W}) \leq  p \cdot \alpha(\mathcal{U}) + (1-p) \cdot \alpha(\mathcal{V}).
\]
\end{lemma}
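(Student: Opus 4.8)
The plan is to reduce the claim to an elementary inequality about the first two moments of $\mathcal{U}$ and $\mathcal{V}$. Set $a \equiv \E_{u \sim \mathcal{U}}[u]$, $b \equiv \E_{v \sim \mathcal{V}}[v]$, $A \equiv \E_{u \sim \mathcal{U}}[u \cdot u]$, and $B \equiv \E_{v \sim \mathcal{V}}[v \cdot v]$. The orthogonality hypothesis gives $a \cdot b = 0$. First I would compute the numerator of $\alpha(\mathcal{W})$: by (\ref{eq:norm}) it equals $\E_{w \sim \mathcal{W}}[w] \cdot \E_{w \sim \mathcal{W}}[w]$, and since $\E_{w \sim \mathcal{W}}[w] = p\,a + (1 - p)\,b$, orthogonality kills the cross term, leaving $p^2 \|a\|^2 + (1 - p)^2 \|b\|^2$. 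Conditioning on which of $\mathcal{U}, \mathcal{V}$ was sampled, the denominator is $\E_{w \sim \mathcal{W}}[w \cdot w] = p\,A + (1 - p)\,B$.

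Next, using the definition (\ref{eq:definealpha}) I would substitute $\|a\|^2 = \alpha(\mathcal{U})\,A$ and $\|b\|^2 = \alpha(\mathcal{V})\,B$, so that
\[
\alpha(\mathcal{W}) = \frac{p^2 \alpha(\mathcal{U})\,A + (1 - p)^2 \alpha(\mathcal{V})\,B}{p\,A + (1 - p)\,B}.
\]
Clearing the (positive) denominator, the desired bound $\alpha(\mathcal{W}) \le p\,\alpha(\mathcal{U}) + (1 - p)\,\alpha(\mathcal{V})$ becomes, after expanding the product on the right,
\[
p^2 \alpha(\mathcal{U})\,A + (1 - p)^2 \alpha(\mathcal{V})\,B \le p^2 \alpha(\mathcal{U})\,A + p(1-p)\bigl(\alpha(\mathcal{U})\,B + \alpha(\mathcal{V})\,A\bigr) + (1 - p)^2 \alpha(\mathcal{V})\,B,
\]
i.e.\ $0 \le p(1-p)\bigl(\alpha(\mathcal{U})\,B + \alpha(\mathcal{V})\,A\bigr)$, which holds because $p \in (0,1)$ and $\alpha(\mathcal{U}), \alpha(\mathcal{V}), A, B \ge 0$ by Theorem~\ref{thm:basicprop} and $v \cdot v \ge 0$. (Equality holds iff $\alpha(\mathcal{U})\,B = \alpha(\mathcal{V})\,A = 0$.)

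There is no genuinely hard step; the only point needing a word of care is the degenerate case $A = 0$ or $B = 0$, where one component distribution is almost surely the zero vector. Then the corresponding $\alpha$ is $0$ by the standing convention and the corresponding mean also vanishes, so both the numerator and denominator identities above remain valid and the inequality is unaffected (one should only assume, as elsewhere, that $p\,A + (1-p)\,B > 0$ so that $\alpha(\mathcal{W})$ is defined). I would close by noting that this is precisely the mechanism behind the AlexNet phenomenon in Section~\ref{sec:overview:measurement}: blending a near-orthogonal layer ($\alpha \approx 1/m$) with high-$\alpha$ layers yields an overall $\alpha$ controlled by a convex combination, and hence far from the orthogonal limit.
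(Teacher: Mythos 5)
Your proof is correct and follows essentially the same route as the paper's: both expand the numerator and denominator of $\alpha(\mathcal{W})$ by conditioning on which component was sampled, use orthogonality to kill the cross term, and then conclude from non-negativity of the remaining terms (the paper drops the positive term $ (1-p)\E_{v}[v\cdot v]$, respectively $p\,\E_u[u\cdot u]$, from each denominator, while you cross-multiply — these are equivalent). Your handling of the degenerate case $A=0$ or $B=0$ is a welcome extra precaution that the paper leaves implicit.
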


\begin{proof}
\begin{align*}
\alpha(\mathcal{W}) 
& = \frac{\displaystyle \E_{w \sim \mathcal{W}, w' \sim \mathcal{W}} \  [ w \cdot w' ]}{\displaystyle \E_{w \sim \mathcal{W}} \  [ w \cdot w]} \\
& = \frac{\displaystyle p^2 \E_{u \sim \mathcal{U}, u' \sim \mathcal{U}} \  [ u \cdot u' ] + (1-p)^2 \E_{v \sim \mathcal{V}, v' \sim \mathcal{V}} \  [ v \cdot v' ] + 2 \cdot p \cdot (1-p) \E_{u \sim \mathcal{U}, v \sim \mathcal{V}} \  [ u \cdot v ]}{\displaystyle p \cdot \E_{u \sim \mathcal{U}} \  [ u \cdot u] + (1-p) \cdot \E_{v \sim \mathcal{V}} \  [ v \cdot v]} \\
& = \frac{\displaystyle p^2 \E_{u \sim \mathcal{U}, u' \sim \mathcal{U}} \  [ u \cdot u' ] + (1-p)^2 \E_{v \sim \mathcal{V}, v' \sim \mathcal{V}} \  [ v \cdot v' ]}{\displaystyle p \cdot \E_{u \sim \mathcal{U}} \  [ u \cdot u] + (1-p) \cdot \E_{v \sim \mathcal{V}} \  [ v \cdot v]} \\
& = \frac{\displaystyle p^2 \E_{u \sim \mathcal{U}, u' \sim \mathcal{U}} \  [ u \cdot u' ]}{\displaystyle p \cdot \E_{u \sim \mathcal{U}} \  [ u \cdot u] + (1-p) \cdot \E_{v \sim \mathcal{V}} \  [ v \cdot v]} + \frac{\displaystyle (1-p)^2 \E_{v \sim \mathcal{V}, v' \sim \mathcal{V}} \  [ v \cdot v' ]}{\displaystyle p \cdot \E_{v \sim \mathcal{U}} \  [ v \cdot u] + (1-p) \cdot \E_{v \sim \mathcal{V}} \  [ v \cdot v]} \\
& \leq \frac{\displaystyle p^2 \E_{u \sim \mathcal{U}, u' \sim \mathcal{U}} \  [ u \cdot u' ]}{\displaystyle p \cdot \E_{u \sim \mathcal{U}} \  [ u \cdot u]} + \frac{\displaystyle (1-p)^2 \E_{v \sim \mathcal{V}, v' \sim \mathcal{V}} \  [ v \cdot v' ]}{\displaystyle (1-p) \cdot \E_{v \sim \mathcal{V}} \  [ v \cdot v]} \\
& = p \cdot \alpha(\mathcal{U}) + (1-p) \cdot \alpha(\mathcal{V})
\end{align*}
\end{proof}

\begin{example}[Coherence reduction with ``fitted" examples]
\label{example:two_orthogonal_sets_2}

Let $U$ be a set with $k$ vectors and $V$ a set with $m - k$ vectors disjoint from $U$. The elements of $V$ are pairwise orthogonal and also orthogonal to the elements of $U$. We may think of the elements of $V$ as gradients of training examples that have already been ``fitted" but have residual (``noisy'') gradients, and the elements of $U$ as the gradients of training examples yet to be fitted. 

Let $\mathcal{U}$, $\mathcal{V}$, and $\mathcal{W}$ denote the uniform distributions on the sets $U$, $V$, and $U \cup V$, respectively. Applying Lemma~\ref{lem:two_orthogonal_sets}, and observing that $\alpha(\mathcal{V}) = 1 / (m - k)$, we get, 
\begin{equation}
    \alpha(\mathcal{W}) 
    \leq \frac{k}{m}\ \alpha(\mathcal{U}) + \frac{m - k}{m}\ \frac{1}{m - k}  
    = r\ \alpha(\mathcal{U}) + \frac{1}{m} 
\end{equation}
where $r \equiv k/m$ is the fraction of examples yet to be ``fitted".
Thus as more examples get fitted, that is, as $r$ goes to 0, the upper bound on $\mathcal{W}$ goes towards the orthogonal limit $1/m$.

\qed
\end{example}

\newbold{Lemma for stability.}
In order to analyze stability of gradient descent,
it is useful to  have a bound on the difference of two vectors in terms of $\alpha$. This is used in the proof of \Cref{thm:costab} (which is used to establish \Cref{thm:cogen}).

\begin{lemma}[Bound on Expected Difference]
\label{lem:diffbound}
We have,
\begin{equation}
\E_{v \sim \mathcal{V},\ v' \sim \mathcal{V}} \  [\|v - v'\|] 
\leq    
\sqrt{2 \  (1 - \alpha(\mathcal{V})) \E_{v \sim \mathcal{V}} \  [v \cdot v]}
\end{equation}
\end{lemma}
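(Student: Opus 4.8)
The plan is to reduce the bound on $\E[\|v - v'\|]$ to the bound on $\E[\|v-v'\|^2]$ via Jensen's inequality (concavity of the square root), and then to compute $\E[\|v-v'\|^2]$ exactly in terms of $\alpha(\mathcal{V})$ and $\E[v\cdot v]$. First I would write, by Jensen,
\[
\E_{v \sim \mathcal{V},\ v' \sim \mathcal{V}}\,[\|v - v'\|] \;\le\; \sqrt{\E_{v \sim \mathcal{V},\ v' \sim \mathcal{V}}\,[\|v - v'\|^2]}\,,
\]
so it suffices to show $\E[\|v - v'\|^2] = 2(1 - \alpha(\mathcal{V}))\,\E_{v\sim\mathcal{V}}[v\cdot v]$.

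Next I would expand the squared norm: $\|v - v'\|^2 = v\cdot v - 2\,v\cdot v' + v'\cdot v'$. Taking expectations over independent $v, v' \sim \mathcal{V}$ and using linearity,
\[
\E[\|v - v'\|^2] = \E_{v}[v\cdot v] - 2\,\E_{v,v'}[v\cdot v'] + \E_{v'}[v'\cdot v'] = 2\big(\E_{v}[v\cdot v] - \E_{v,v'}[v\cdot v']\big).
\]
Now I invoke the definition of $\alpha$ from (\ref{eq:definealpha}), which gives $\E_{v,v'}[v\cdot v'] = \alpha(\mathcal{V})\,\E_{v}[v\cdot v]$ (using that the denominator is nonzero by the standing assumption that $\mathcal{V}$ has a nonzero vector in its support; if it is zero then both sides of the lemma are trivially zero). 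Substituting yields $\E[\|v-v'\|^2] = 2(1-\alpha(\mathcal{V}))\,\E_{v}[v\cdot v]$, and combining with the Jensen step finishes the proof.

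There is no real obstacle here; the only thing to be careful about is the degenerate case where $\E_v[v\cdot v] = 0$ (handled trivially) and the independence of $v$ and $v'$ in the cross term, which is exactly what makes $\E_{v,v'}[v\cdot v'] = \E_v[v]\cdot\E_{v'}[v']$ and hence matches the numerator of $\alpha$ — this identity was already recorded as (\ref{eq:norm}) in the proof of \Cref{thm:basicprop}, so I can cite it directly rather than re-deriving it. The mild looseness introduced by Jensen is acceptable since the lemma is only an upper bound, and it becomes an equality precisely when $\|v - v'\|$ is almost surely constant.
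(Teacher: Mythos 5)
Your proposal is correct and follows essentially the same route as the paper's proof: Jensen's inequality to pass to the second moment, expansion of $\|v-v'\|^2$, and substitution of the definition of $\alpha(\mathcal{V})$. The only (harmless) addition is your explicit handling of the degenerate case $\E_{v}[v\cdot v]=0$, which the paper sidesteps via its standing assumption that $\mathcal{V}$ has a nonzero vector in its support.
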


\begin{proof}
Starting with Jensen's inequality, we have,
\begin{align*}
\left( \E_{{v \sim \mathcal{V},\ v' \sim \mathcal{V}}} \  [\|v - v'\|] \right)^2 
& \leq \E_{{v \sim \mathcal{V},\ v' \sim \mathcal{V}}} \  [\|v - v'\|^2] \\
& =  \E_{{v \sim \mathcal{V},\ v' \sim \mathcal{V}}} \  [(v - v')\cdot(v - v')] \\
& = 2 \left(\E_{v \sim \mathcal{V}} \  [v \cdot v] - \E_{{v \sim \mathcal{V},\ v' \sim \mathcal{V}}} \  [v \cdot v']\right) \\
& = 2 \  (1 - \alpha(\mathcal{V})) \E_{v \sim \mathcal{V}} \  [v \cdot v]
\end{align*}
from which the required result follows.
\end{proof}

\noindent Note that in the case of perfect coherence, that is $\alpha(\mathcal{V}) = 1$, the expected norm of the difference between any two vectors is zero.

\newbold{Decomposition.}
Finally, the following theorem shows that the overall $\alpha$ of a network (that is, $\alpha$ as measured over entire per-example gradients) is a convex combination of that of its sub-components ($\alpha$ as measured, say, over different layers or even individual parameters). 
This is useful in understanding the coherence of a network in terms of the coherences of its parts (for example, see the discussion of AlexNet in \Cref{sec:overview:measurement}).

\begin{theorem}[Decomposition]
\label{thm:weightedmean}
Let $V$ be the support of the distribution $\mathcal{V}$. Furthermore, let 
\[
V = V_1 \oplus V_2 \oplus \cdots \oplus V_k
\]
where the subspaces $V_i$ $(1 \le i \le k)$ are orthogonal to each other, that is, $V$ is the orthogonal direct sum of the $V_i$. $\mathcal{V}$ induces a distribution $\mathcal{V}_i$ on each $V_i$. Suppose each $\alpha(\mathcal{V}_i)$ exists. Then, 
\[
\alpha(\mathcal{V}) = f_1 \cdot \alpha(\mathcal{V}_1) + f_2 \cdot \alpha(\mathcal{V}_2) + \cdots + f_k \cdot \alpha(\mathcal{V}_k)
\]
where $f_i \equiv \E_{v_i \sim \mathcal{V}_i} \  \left[ v_i \cdot v_i \right] / (\sum_{i=1}^{k} \displaystyle \E_{v_i \sim \mathcal{V}_i} \  [ v_i \cdot v_i])$
and $0 \le f_i \le 1$ and $\sum_{i=0}^k f_i = 1$.
\end{theorem}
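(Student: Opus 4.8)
The plan is to reduce everything to the definition (\ref{eq:definealpha}) by expanding each vector along the orthogonal decomposition $V = V_1 \oplus \cdots \oplus V_k$. First I would write, for each $v$ in the (finite) support of $\mathcal{V}$, its unique decomposition $v = v_1 + \cdots + v_k$ with $v_i \in V_i$; by construction the push-forward of $\mathcal{V}$ under the orthogonal projection $\pi_i$ onto $V_i$ is exactly $\mathcal{V}_i$. The one structural fact I will use repeatedly is that orthogonality of the subspaces kills all cross terms: for any $a, b \in V$ with components $a_i, b_i$, we have $a \cdot b = \sum_{i,j} a_i \cdot b_j = \sum_i a_i \cdot b_i$, since $a_i \cdot b_j = 0$ whenever $i \ne j$.

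Next I would apply this to the two expectations in (\ref{eq:definealpha}). For the denominator, taking $a = b = v$ gives $v \cdot v = \sum_i v_i \cdot v_i$, hence $\E_{v \sim \mathcal{V}}[v \cdot v] = \sum_i \E_{v_i \sim \mathcal{V}_i}[v_i \cdot v_i]$. For the numerator, I draw $v$ and $v'$ independently from $\mathcal{V}$; then the components $v_i = \pi_i(v)$ and $v_i' = \pi_i(v')$ are independent, each distributed as $\mathcal{V}_i$, and taking $a = v$, $b = v'$ gives $v \cdot v' = \sum_i v_i \cdot v_i'$, so $\E_{v \sim \mathcal{V}, v' \sim \mathcal{V}}[v \cdot v'] = \sum_i \E_{v_i \sim \mathcal{V}_i, v_i' \sim \mathcal{V}_i}[v_i \cdot v_i']$. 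Applying the definition of $\alpha$ to each $\mathcal{V}_i$, each summand equals $\alpha(\mathcal{V}_i)\,\E_{v_i \sim \mathcal{V}_i}[v_i \cdot v_i]$; this is the one place the hypothesis that every $\alpha(\mathcal{V}_i)$ exists (equivalently $\E_{v_i \sim \mathcal{V}_i}[v_i \cdot v_i] > 0$) is needed.

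Combining the two displays yields
\[
\alpha(\mathcal{V}) = \frac{\sum_{i=1}^{k} \alpha(\mathcal{V}_i)\,\E_{v_i \sim \mathcal{V}_i}[v_i \cdot v_i]}{\sum_{j=1}^{k} \E_{v_j \sim \mathcal{V}_j}[v_j \cdot v_j]} = \sum_{i=1}^{k} f_i\,\alpha(\mathcal{V}_i),
\]
which is the claimed identity. Finally, since each $\E_{v_i \sim \mathcal{V}_i}[v_i \cdot v_i] \ge 0$ and their sum is the (positive) common denominator, each $f_i$ lies in $[0,1]$ and the $f_i$ sum to $1$, so the right-hand side is genuinely a convex combination.

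I do not expect a substantive obstacle here: the argument is essentially the cross-term cancellation above. The only point requiring a little care is the bookkeeping that the projection $v \mapsto v_i$ is well defined on the support, that its push-forward is precisely $\mathcal{V}_i$, and that independence of $v$ and $v'$ passes to independence of $v_i$ and $v_i'$ — all of which are immediate because $\mathcal{V}$ is supported on finitely many vectors.
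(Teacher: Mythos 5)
Your proposal is correct and follows essentially the same route as the paper: decompose each vector along the orthogonal direct sum, use orthogonality to cancel the cross terms in both the numerator and denominator of $\alpha(\mathcal{V})$, and recognize each resulting summand as $\alpha(\mathcal{V}_i)\,\E_{v_i \sim \mathcal{V}_i}[v_i \cdot v_i]$. The paper's proof is just the chain of equalities; your write-up supplies the same computation with the bookkeeping (push-forward distributions, independence of components) made explicit.
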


\begin{proof}
We have,
\[
\alpha(\mathcal{V}) 
= 
\frac
    {\displaystyle \E_{v \sim \mathcal{V}, v' \sim \mathcal{V}} \  [ v \cdot v' ]}
    {\displaystyle \E_{v \sim \mathcal{V}} \  [ v \cdot v]}
=
\frac
    {\displaystyle \sum_{i=1}^{k} \displaystyle \E_{v_i \sim \mathcal{V}_i, v_i' \sim \mathcal{V}_i} \ \left[ v_i \cdot v_i' \right]}
    {\displaystyle \sum_{i=1}^{k} \displaystyle \E_{v_i \sim \mathcal{V}_i} \  [ v_i \cdot v_i]}
=
\frac
    {\displaystyle \sum_{i=1}^{k} \left( \alpha(\mathcal{V}_i) \cdot \displaystyle \E_{v_i \sim \mathcal{V}_i} \  \left[ v_i \cdot v_i \right] \right)}
    {\displaystyle \sum_{i=1}^{k} \displaystyle \E_{v_i \sim \mathcal{V}_i} \  [ v_i \cdot v_i]}
=
\sum_{i=1}^{k} f_i \cdot \alpha(\mathcal{V}_i). 
\]
\end{proof}


\section{Comparison of \texorpdfstring{$\alpha$}{Alpha} with Other Metrics}
\label{app:coherence-comparison}

The coherence metric presented in \Cref{sec:overview:metrics} was first presented in \cite{Chatterjee20c}. Here, we review some of the other metrics in the literature for studying gradient alignment, most of which were proposed concurrently. See also \Cref{sec:overview:comp} for broader discussions of the papers in which these metrics were proposed.

\newbold{Stiffness.} \citet{Fort19} study two variants of the average pairwise dot product that they call {\em sign stiffness} and {\em cosine stiffness}. In our notation these are
\[
S_{\rm sign} \equiv 
\E_{\substack{z \sim \mathcal{D}, z' \sim \mathcal{D}\\ z \neq z'}}[\ {\rm sign}(g_z \cdot g_{z'})\ ]
\ \ 
{\rm and}
\ \ 
S_{\rm cos} \equiv 
\E_{\substack{z \sim \mathcal{D}, z' \sim \mathcal{D}\\ z \neq z'}}\left[ \ \frac{g_z}{\|g_z\|} \cdot \frac{g_{z'}}{\|g_{z'}\|}\ \right].
\]
These are meant to capture how a small gradient step based on one input example affects the loss on a {\em different} input example. 
Although \citeauthor{Fort19} do not describe why they choose to transform the gradients in these specific ways, we expect it is to normalize the dot product so that it can be tracked in the course of training. 
In their experience, they found sign stiffness to be more useful to analyze stiffness between classes whereas cosine stiffness was more useful within a class.

\newbold{Gradient Confusion.} \citet{Sankararaman20} introduce the notion of a gradient confusion bound. The {\em gradient confusion bound} is $\zeta \geq 0$ if for all
$z, z' \in Z$ and $z \neq z'$, we have,
$g_z \cdot g_{z'} \geq -\zeta$.
They use this concept to study theoretically the convergence rate of gradient descent, but in their experimental results they measure the minimum cosine similarity between gradients, i.e., 
\[
\min_{\substack{z \in Z, z' \in Z \\ z \neq z'}} 
\left[ \ \frac{g_z}{\|g_z\|} \cdot \frac{g_{z'}}{\|g_{z'}\|}\ \right].
\]

\noindent There are several advantages to using $\alpha$ and, by extension, $\alpha_{m} / \alpha_{m}^{\perp}$ over these metrics:

\begin{itemize}[left=1em]

\item {\bf Computational Efficiency.} For a sample of size $m$, due to (\ref{eq:overview:metric}), $\alpha$ can be computed exactly in $O(m)$ time in contrast to $O(m^2)$ time required for stiffness and cosine dot products. Furthermore, it can be computed in a streaming fashion by keeping a running sum for the numerator and a separate one for the denominator, thereby removing the need to store per-example gradients. Thus, in our experiments we are able to use sample sizes a couple of orders of magnitude higher than those in~\citet{Fort19} and \citet{Sankararaman20}. 
Furthermore, Theorem~\ref{thm:stylized_mini_batching} leads to a huge speedup in estimating per-example coherence by avoiding the need to computer per-example gradients.

\item {\bf Mathematical Simplicity.} We believe our definition is cleaner mathematically. This allows us to reason about the metric more easily. For example,

\begin{enumerate}

    \item We can show that $\alpha$ of minibatch gradients is greater than that of individual examples (Theorem~\ref{thm:stylized_mini_batching}). Therefore, care must be taken
    if minibatch gradients are used in lieu of example gradients in computing coherence (for example, as in \citet{Sankararaman20}).

    \item Explicitly ruling out $z \neq z'$ as in done in stiffness and cosine similarity to eliminate  self-correlation is unnatural and can get tricky in practice due to near-duplicates or multiple examples leading to same or very similar gradients. 
    We obtain meaningful values without imposing those conditions, but if one insists on removing self-correlations, then subtracting $1/m$ from $\alpha_m$ or 1 from $\alpha_{m} / \alpha_{m}^{\perp}$ is a more principled way to do it. 
    
    \item The non-linearities in stiffness and cosine similarity amplify small per-example gradients potentially overstating their importance, and lead to a discontinuity (or undefined behavior) with zero gradients. However, we can cleanly account for the effect of 
    negligible gradients in our observations (e.g., see Lemma~\ref{lem:zero}).
    
\end{enumerate}

\item {\bf Interpretability.} Finally, as discussed in detail above, they are normalized and yet easily interpretable due to the natural connection with loss.
In contrast, the non-linearities (and to a lesser extent the $z \neq z'$ restriction) make it hard to tie stiffness or minimum cosine similarity to what happens during training; specifically, to the change in the loss function as a result of a gradient step which is the expectation over {\em all} per-example gradients.

\end{itemize}

\newbold{GSNR and OSGR.} \citet{Liu20} propose two metrics to study gradient alignment. The first one is {\em gradient signal-to-noise ratio} (GSNR) and is defined for the $i$th gradient coordinate $g^i_z$ as follows
\[
r^i \equiv 
\frac
    {\left[\E_{z \sim \mathcal{D}}\left[\ g^i_z \ \right]\right]^2}
    {\V_{z \sim \mathcal{D}}\left[\ g^i_z \ \right]}.
\]
They measure GSNR on the training sample. 

To study generalization, they define another quantity called {\em one-step-generalization ratio} (OSGR) denoted by ${\bf R}(n)$ which measures the ratio of the expected change in the test loss (on a test sample of size $n$) due to one step of gradient descent relative to the expected change in the training loss (on a training sample of size $n$). 
They show that under the assumption that the expected gradient over the training sample and test sample have the same distribution (the {\em early training assumption}), GSNR is related to OSGR (their equation 22), and confirm this with experiments on small datasets such as {\sc mnist} and {\sc cifar10}.

In comparison to two different quantities (one over the training sample and on a per-coordinate basis, and another over both training and test and on a aggregate basis across all coordinates), we define only a single quantity that can be measured on either the test sample or the training sample, and on a per-coordinate basis,or on an aggregated basis. 
This leads to a simpler theory, and in our formulation the equivalent result to their equation 22 factors into using their early training assumption to argue that training and test coherence (either aggregate or per-coordinate) is equal (we also confirm this with experiments in \Cref{sec:overview:measurement}), and then relating the aggregate quantities to the per-coordinate quantities using \Cref{thm:weightedmean}.

Furthermore, in order to understand the generalization mystery, we are interested in a bound not just for one step (as they have), or even for early training, but for {\em all} of training. We note here that \Cref{thm:cogen} does not depend on an early training assumption such as theirs, and indeed as their experiments and ours show, that assumption does not hold later on in training.

\newbold{Gradient Diversity.} The reciprocal of $\alpha$ appears in the theory literature as {\em gradient diversity} This was used by~\cite{Yin18} in theoretical bounds to understand the effect of mini-batching on convergence of SGD. (A similar result appears for least squares regression in \citet{Jain18}.) 
They show that the greater is the gradient diversity, the more effective are large mini-batches in speeding up SGD (see also \Cref{thm:stylized_mini_batching} and the remark following it). 
Although they support their theoretical analysis with experiments on {\sc cifar}-10 (where they replicate $1/r$ of the dataset $r$ times and show that greater the value of $r$ less the effectiveness of mini-batching to speed up) they never actually measure the gradient diversity in their experiments, or further study its properties. Note that they compute gradient diversity only on the training set since they use it for reasoning about optimization only. 
 
For our purposes, $\alpha$ is a better choice than $1/\alpha$---not just because coherence rather than incoherence is 
what leads to generalization---but also since the latter can diverge: $g$ can be 0 without all $g_z$ being zero (e.g., at the end of training in an under-parameterized setting). 
Furthermore, a better measure of the lack of coherence is $1 - \alpha$ which is the quantity that appears in our bound in Theorem~\ref{thm:cogen}. 

\citet{Yin18} also define a related quantity called {\em differential gradient diversity} $\Delta(w, w')$ which they use to reason about the relative stability of SGD with mini-batch size $b$ compared to SGD with mini-batch size of 1 (the case analyzed in \citet{Hardt16}). 
If $g_i(w)$ denotes the gradient of the $i$th training example at (parameter) $w$, the differential gradient diversity between $w$ and $w'$ (for $w \neq w')$\footnote{Though that does not guarantee that the denominator is non-zero.} is defined as follows:
\[
\Delta(w, w') 
\equiv 
\frac
    {\sum_{i = 1}^{m} \| g_i(w) - g_i(w') \|^2}
    {\| \sum_{i = 1}^{m} \left( g_i(w) - g_i(w') \right) \|^2}
\]
where $m$ is the size of the training set.

Their main result is that in the convex and strongly convex cases, if with high probability $b$ is smaller than $m \Delta(w, w')$ for all $w \neq w'$, then SGD with mini-batch size $b$ and SGD with mini-batch size 1 can be both stable in roughly the same range of step-sizes, and their generalization errors are roughly equal. (They do not consider the non-convex case.)
 
In contrast, our focus is on understanding generalization in the non-convex case, and for all batch sizes, including the full batch case.

\thoughtsep

Finally, when studying a form of adversarial initialization, 
\citet{Mehta21} (independently) introduce a notion very similar to $\alpha$ except (like stiffness and cosine similarity) they explicitly require $z \neq z'$ (but see discussion on this point above under mathematical simplicity). They show experimentally on small datasets such as {\sc cifar}-10, {\sc shvn}, etc. that within a class, their metric correlates well with generalization ability of the net when scale of initialization is varied.


\section{Proof of The Generalization Theorem}
\label{app:generalization_theorem}

We present a proof of \Cref{thm:cogen} which bounds the expected generalization gap of gradient descent (GD) in terms of the coherence as measured by $\alpha$ during training.
The proof goes through the notion of algorithmic stability~\citep{Bousquet02, Kearns99, Devroye79}, using the iterative framework first established by \citet{Hardt16} (using uniform stability) and later extended by \citet{Kuzborskij18} (using a more general notion of stability that we use as well). 

Both \citet{Hardt16} and \citet{Kuzborskij18} prove stability of stochastic gradient descent in the non-convex case (that is, for neural network training) by relying solely on {\em early stopping}. Their arguments require that no example is seen ``too often." 
The guarantee provided by \citet{Hardt16} is independent of the training dataset, and therefore, as noted before, cannot explain the generalization mystery. Indeed, their result says that stochastic gradient descent (SGD) is stable if run for a few steps with a step size that decays as $1/t$ (they summarize it under the slogan ``train faster, generalize better"), but such a short run is insufficient to memorize random labels. And, if SGD is run long enough to memorize random labels, then clearly it is not stable (since if a training example with a random label is missing in the dataset, the loss on it is expected to be much higher than otherwise). The bound in \citet{Kuzborskij18}, while improving on that of \citet{Hardt16}, by taking into account the curvature of the loss landscape at initialization (and thus incorporating some data-dependence), comes at the price of requiring that no example be seen more than once. 

Our approach towards arguing the data-dependent stability of gradient descent is fundamentally different since it is based on the coherence seen during training, and does not rely on solely on early stopping. Through coherence it is able to differentiate between the real data case and the random data case, providing different bounds in the two cases. An interesting consequence of this line of argument is that our proof applies equally well to non-stochastic gradient descent (that is, full-batch gradient descent) thus resolving an open question in \cite{Hardt16}.

\newbold{Preliminaries.}
Let $Z$ be a domain of examples, and let $\mathcal{D}(z)$ be a distribution on $Z$ from which the training and test examples are sampled. Let $\ell(w, z)$ denote the loss of the model with parameters $w$ on an example $z$. 
The goal is to find model with small {\em population risk}, defined as:
\[
R(w) \equiv \E_{z \sim \mathcal{D}} \ell(w,\,z).
\]
Let $S \equiv (z_1, .., z_m)$ be a sample of $m$ examples drawn i.i.d. from $\mathcal{D}$.
The {\em empirical risk} of a model $w$ on the sample $S$, denoted by $\hat{R}(w,\,S)$ is defined as follows:
\[
\hat{R}(w, \, S) \equiv \frac{1}{m} \sum_{i \in [m]} \ell(w,\,z_i).
\]
where $[n]$ denotes the set $\{1, 2, .., n\}$. 

Let $A$ be a (possibly) randomized training algorithm.
We are interested in the {\em expected generalization gap} of $A$ when training with $m$ examples from a distribution $\mathcal{D}$ which is given by, 
\[
\gap(\mathcal{D}, m) \equiv \E_{S \sim \mathcal{D}^m} \E_{\theta} \left[R({\bf A}_\theta(S)) - \hat{R}({\bf A}_\theta(S), \, S) \right]
\]
where ${\bf A}_\theta(S)$ denotes the model obtained by running $A$ on $S$ and $\theta$ denotes the sequence of coin tosses used by $A$ in a given run. 
(We omit the distribution of $\theta$ in the expression for expectation to reduce clutter.)
The expected generalization gap is similar to the notion on on-average generalization in \citet{ShalevShwartz10}.\footnote{Also see Lemma 14 in \citet{ShalevShwartz10} for how on-average generalization relates to a stronger notion of generalization under certain asymptotic assumptions on empirical risk minimization.}

As before, let $S \equiv (z_1, .., z_m)$ be a sample of $m$ examples drawn i.i.d. from $\mathcal{D}$. Furthermore, let $S' \equiv (z_1', .., z_m')$ be a second sample of $m$ examples drawn i.i.d. from $\mathcal{D}$. The {\em expected stability} of $A$ is given by, 
\[
\stab(\mathcal{D}, m) 
\equiv 
    \E_{S \sim \mathcal{D}^m} 
    \E_{S' \sim \mathcal{D}^m}  
    \E_\theta
        \left[
        \frac{1}{m} \sum_{i \in [m]} 
            \left[ \ell({\bf A}_\theta(S^{(i)}), z_i) - \ell({\bf A}_\theta(S), z_i) \right]
        \right]
\]
where $S^{(i)} \equiv (z_1, .., z_{i-1}, z_i', z_{i+1}, .., z_m)$, that is, $S^{(i)}$ is $S$ with $z_i$ replaced by $z_i'$.

Our notion of stability, expected stability,
most closely resembles the notion of {\em average-RO (replace-one) stability} introduced by \citet{ShalevShwartz10} but extended to randomized algorithms.%
\footnote{\citet{Kuzborskij18} also extend on-average stability to randomized algorithms, but they compute the supremum over $i$ instead of the mean.}
Like their Lemma 11 (and also the first equality of Lemma 7 in \cite{Bousquet02}), we have the following connection between expected stability and the expected generalization gap:

\begin{theorem}[Stability equals generalization] When training with $m$ samples from a distribution $\mathcal{D}$, we have, 
\[
\gap(\mathcal{D}, m) = \stab(\mathcal{D}, m).
\]
\label{thm:stabgen}
\end{theorem}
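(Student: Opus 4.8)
The plan is to establish the identity by a standard symmetrization (``ghost sample'') argument, exploiting two facts: the $2m$ examples comprising $S=(z_1,\dots,z_m)$ and $S'=(z_1',\dots,z_m')$ are all i.i.d.\ from $\mathcal{D}$, and the learned model ${\bf A}_\theta(S)$ is a measurable function of the coin tosses $\theta$ and the sample $S$ alone --- in particular it does not depend on $S'$. Throughout I will freely interchange the (all w.r.t.\ probability measures) expectations over $S$, $S'$, and $\theta$ by Fubini.

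First I would rewrite the population-risk term. Fix $i \in [m]$. Since $z_i' \sim \mathcal{D}$ is independent of $S$ and of $\theta$, evaluating the population risk at this fresh example gives $\E_{S}\E_\theta[R({\bf A}_\theta(S))] = \E_{S}\E_{S'}\E_\theta[\ell({\bf A}_\theta(S),\,z_i')]$; averaging over $i\in[m]$ leaves the left-hand side unchanged. Next comes the key step: for each $i$, apply the change of variables that swaps $z_i \leftrightarrow z_i'$ while leaving every other coordinate and $\theta$ fixed. Because $(z_i,z_i')$ are exchangeable, this is a measure-preserving bijection of the sample space, so the value of the expectation is unchanged; and under it $S=(z_1,\dots,z_i,\dots,z_m)$ becomes $S^{(i)}=(z_1,\dots,z_i',\dots,z_m)$ while the evaluation point $z_i'$ becomes $z_i$. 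Hence $\E_{S}\E_{S'}\E_\theta[\ell({\bf A}_\theta(S),\,z_i')] = \E_{S}\E_{S'}\E_\theta[\ell({\bf A}_\theta(S^{(i)}),\,z_i)]$, and therefore $\E_{S}\E_\theta[R({\bf A}_\theta(S))] = \frac{1}{m}\sum_{i\in[m]} \E_{S}\E_{S'}\E_\theta[\ell({\bf A}_\theta(S^{(i)}),\,z_i)]$.

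Then I would dispatch the empirical-risk term, which is immediate: $\E_{S}\E_\theta[\hat{R}({\bf A}_\theta(S),\,S)] = \frac{1}{m}\sum_{i\in[m]}\E_{S}\E_\theta[\ell({\bf A}_\theta(S),\,z_i)] = \frac{1}{m}\sum_{i\in[m]}\E_{S}\E_{S'}\E_\theta[\ell({\bf A}_\theta(S),\,z_i)]$, the last equality merely appending the redundant expectation over $S'$. Subtracting the two displays term by term and recognising the bracketed quantity as exactly the summand in the definition of $\stab(\mathcal{D},m)$ yields $\gap(\mathcal{D},m) = \stab(\mathcal{D},m)$.

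I do not expect a genuine obstacle here; the only point requiring care is the legitimacy of the coordinate swap, which hinges on ${\bf A}_\theta(\cdot)$ depending on $S$ (not $S'$) and on the i.i.d.\ structure making $z_i \leftrightarrow z_i'$ measure-preserving, together with an integrability remark (e.g.\ $\ell$ bounded, or $\E|\ell|<\infty$) so that Fubini and the rearrangements are valid. This mirrors Lemma~11 of \citet{ShalevShwartz10} and the first equality of Lemma~7 of \citet{Bousquet02}, adapted to randomized algorithms by carrying the expectation over $\theta$ through unchanged.
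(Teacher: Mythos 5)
Your proposal is correct and follows essentially the same route as the paper's proof: both hinge on the exchangeability of $z_i$ and $z_i'$ (a measure-preserving swap turning $\ell({\bf A}_\theta(S^{(i)}), z_i)$ into $\ell({\bf A}_\theta(S), z_i')$), identify the resulting average over fresh points with the population risk, and note that the remaining term is the expected empirical risk. The only difference is cosmetic — you start from the gap and the paper starts from the stability term — so there is nothing further to add.
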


\begin{proof}
This proof is similar to that of Lemma 11 in \cite{ShalevShwartz10} and the first equality of Lemma 7 in \cite{Bousquet02}, and could be also derived from those arguments by considering ${\bf A}_\theta$ as a deterministic algorithm, and then taking expectations over $\theta$. 

From linearity of expectation, $\stab(\mathcal{D}, m)$ is equal to
\begin{align*}
    \E_{S \sim \mathcal{D}^m} 
    \E_{S' \sim \mathcal{D}^m}  
    \E_\theta
        \left[
            \frac{1}{m} \sum_{i \in [m]} \ell({\bf A}_\theta(S^{(i)}), z_i)
        \right]
-
    \E_{S \sim \mathcal{D}^m} 
    \E_{S' \sim \mathcal{D}^m}  
    \E_\theta
        \left[
            \frac{1}{m} \sum_{i \in [m]} \ell({\bf A}_\theta(S), z_i) 
        \right]
\end{align*}
Now, from the definition of $\hat{R}$, we can rewrite the second term of the difference as follows:
\[
\E_{S \sim \mathcal{D}^m} 
\E_{S' \sim \mathcal{D}^m}  
\E_\theta
    \left[
        \frac{1}{m} \sum_{i \in [m]} \ell({\bf A}_\theta(S), z_i) 
    \right]
=
\E_{S \sim \mathcal{D}^m} 
\E_\theta
    \left[ \hat{R}({\bf A}_\theta(S), \, S) \right],
\]
From linearity and exchangeability, we can rewrite the first term of the difference as follows:
\begin{align*}
\E_{S \sim \mathcal{D}^m} 
\E_{S' \sim \mathcal{D}^m}  
\E_\theta
    \left[
        \frac{1}{m} \sum_{i \in [m]} \ell({\bf A}_\theta(S^{(i)}),\,z_i) 
    \right]
& = 
\frac{1}{m} \sum_{i \in [m]} 
    \E_{S \sim \mathcal{D}^m} 
    \E_{S' \sim \mathcal{D}^m}  
    \E_\theta
        \left[
        \ell({\bf A}_\theta(S^{(i)}),\,z_i)
        \right] \\
& = 
\frac{1}{m} \sum_{i \in [m]} 
    \E_{S \sim \mathcal{D}^m} 
    \E_{S' \sim \mathcal{D}^m}  
    \E_\theta
        \left[
        \ell({\bf A}_\theta(S),\,z_i')
        \right] {\rm (from\ exchangeability)}\\
& =
\frac{1}{m} \sum_{i \in [m]} 
    \E_{S \sim \mathcal{D}^m} 
    \E_{z_i' \sim \mathcal{D}}  
    \E_\theta
        \left[
        \ell({\bf A}_\theta(S),\,z_i')
        \right] \\
& =
\frac{1}{m} \sum_{i \in [m]} 
    \E_{S \sim \mathcal{D}^m} 
    \E_\theta
    \E_{z_i' \sim \mathcal{D}}  
        \left[
        \ell({\bf A}_\theta(S),\,z_i')
        \right] \\
& = 
\frac{1}{m} \sum_{i \in [m]} 
    \E_{S \sim \mathcal{D}^m} 
    \E_\theta
        \left[
        R({\bf A}_\theta(S))
        \right] \\
& =
\E_{S \sim \mathcal{D}^m} 
\E_\theta
\left[
        R({\bf A}_\theta(S))
        \right].
\end{align*}
Therefore, 
\begin{align*}
\stab(\mathcal{D}, m)
&= 
    \E_{S \sim \mathcal{D}^m} 
    \E_\theta
        \left[ R({\bf A}_\theta(S)) \right]
-
    \E_{S \sim \mathcal{D}^m} 
    \E_\theta
        \left[ \hat{R}({\bf A}_\theta(S), \, S) \right] \\
&= 
    \E_{S \sim \mathcal{D}^m} 
    \E_\theta
        \left[ 
            R({\bf A}_\theta(S)) - \hat{R}({\bf A}_\theta(S), \, S)
        \right] \\
&=
    \gap(\mathcal{D}, m)
\end{align*}
\end{proof}

\newbold{Smoothness Assumptions.}
We make two standard assumptions on the smoothness of the loss function. These assumptions are also made by \cite{Hardt16} and \cite{Kuzborskij18}, and they are (recalling that $g(w, z)$ denotes $\nabla_w \ell(w, z)$):

\begin{enumerate}

\item We assume that $\ell(\cdot\,, z)$ is $L$-Lipschitz and differentiable for every $z \in Z$, that is, 
\begin{align}
    | \ell(w, z) - \ell(w', z) | \le L\,\| w - w' \|
\label{eq:ell_lip}
\end{align}
and 
\begin{align}
\| g(w, z) \| \le L.
\label{eq:ell_grad_bound}
\end{align}

\item We also assume that $\ell(\cdot\,,z)$ is $\beta$-smooth for every 
$z \in Z$. That is, 
we assume,
\begin{align}
    \| g(w, z) - g(w', z) \| \le \beta\,\| w - w' \|.
\label{eq:grad_lip}
\end{align}
\end{enumerate}

\newbold{Stability of (Stochastic) Gradient Descent.}
In order to analyze the stability of gradient descent, we compare what happens when we train with $S$ and when we train with $S^{(i)}$ for $T$ steps.
Let $w_0, w_1, .., w_T$ and $w_0', w_1', .., w_T'$ denote the sequence of
weights when training with $S$ and $S^{(i)}$ respectively.
Since both runs begin with the same random initialization, we have $w_0 = w_0'$.
Note that $w_t$ for $t \in [T]$ corresponds to the weight {\em after} the descent step at time $t$ has taken place, and therefore, the gradients used for step $t$ are evaluated at $w_{t - 1}$. (As usual $[T]$ denotes the set $\{1, 2, \ldots, T\}$.) 

We treat both stochastic and full-batch gradient descent in the same manner, with the latter being a special case of the former, where the batch size $b$ is equal to the number of training samples $m$.
Each mini-batch is constructed (independently of other mini-batches) by selecting $b$ training examples at random {\em without replacement} from the $m$ training examples.%
\footnote{We choose this specific scheme to analyze since it highlights the essential similarity between the stochastic and the non-stochastic case and is simple to analyze, but this choice is not essential. Other schemes can also be analyzed similarly.}
Let $\theta$ denote the sequence of coin tosses made by gradient descent to select the examples in each mini-batch (these tosses are, of course, is independent of the training sample). Let $I_t(\theta)$ be an indicator variable that is 1 if the $i$th training example is selected in
the mini-batch used at time step $t \in [T]$, and 0 otherwise. 
Therefore,
\begin{align}
\E_{\theta}\ [ I_t(\theta) ] = b / m
\label{eq:exp_I_t}
\end{align}

Let $z_{tj}$ and $z_{tj}'$ be the training examples selected in the two runs as the $j$th mini-batch entry at time $t$ where $j \in [b]$ and $t \in [T]$.
Let the step size at step $t$ be $\eta_t$. From the update rule for stochastic gradient descent, we have,
\[
w_{t} = w_{t - 1} - \eta_t \frac{1}{b} \sum_{j \in [b]} g(w_{t - 1}, z_{tj})
\]
and
\[
w_{t}' = w_{t - 1}' - \eta_t \frac{1}{b} \sum_{j \in [b]} g(w_{t - 1}', z_{tj}')
\]
Therefore,
\[
(w_{t} - w_{t}') = (w_{t - 1} - w_{t - 1}') 
  - \eta_t \frac{1}{b} \sum_{j \in [b]} \left[ g(w_{t - 1}, z_{tj}) - g(w_{t - 1}', z_{tj}') \right] 
\]

\noindent Let $\delta_t = \| w_t - w_t' \|$, and let $\Delta_t''(b) = \frac{1}{b} \sum_{j \in [b]} \left[ g(w_{t - 1}, z_{tj}) - g(w_{t - 1}', z_{tj}') \right]$. From the
above equation, taking norms on both sides, and the triangle inequality, we have,
\begin{align}
\delta_{t} \le \delta_{t - 1} + \eta_t \| \Delta_t''(b) \|
\label{eq:delta_double_prime}
\end{align}

\thoughtsep

We now express this expansion of the difference at step $t$ between the two runs {\em purely} in terms of the $i$th training example, that is, the example that is different in $S$ and $S^{(i)}$.
Intuitively, the following lemma shows that the expansion has two components: (1) one that is always present, and due to different starting points that gets magnified with the step, and (2) an occasional extra expansion if the mini-batch happens to include the $i$th example.

\begin{lemma}[1-Step Expansion] For $t \in [T]$, we have,
\label{lem:onestep}
\begin{align}
\delta_{t} 
& \le 
    (1 + \eta_t \, \beta) \cdot \delta_{t - 1} + 
        \frac{\eta_t I_t(\theta)}{b} \cdot \| g(w_{t - 1}, z_i) - g(w_{t - 1}, z_i') \|
\label{eq:delta_recursive}
\end{align}
\end{lemma}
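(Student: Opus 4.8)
## Proof Proposal for the 1-Step Expansion Lemma

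The plan is to start from inequality (\ref{eq:delta_double_prime}), namely $\delta_t \le \delta_{t-1} + \eta_t \|\Delta_t''(b)\|$, and carefully rewrite the mini-batch difference term $\Delta_t''(b) = \frac{1}{b}\sum_{j\in[b]}\left[g(w_{t-1},z_{tj}) - g(w_{t-1}',z_{tj}')\right]$ so that it separates into a ``common'' part and an ``$i$th example'' part. The key observation is that the mini-batches at step $t$ in the two runs are constructed using the \emph{same} coin tosses $\theta$, so the set of indices selected is identical; the only way the two mini-batches differ is if the index $i$ is among the selected indices, in which case one run uses $z_i$ and the other uses $z_i'$. So first I would split the sum over $j$ according to whether $z_{tj}$ is the $i$th training example or not.

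For the terms where the selected example is \emph{not} the $i$th example, we have $z_{tj} = z_{tj}'$ (same example in both runs), so the corresponding summand is $g(w_{t-1}, z_{tj}) - g(w_{t-1}', z_{tj})$, which by the $\beta$-smoothness assumption (\ref{eq:grad_lip}) has norm at most $\beta\|w_{t-1} - w_{t-1}'\| = \beta\,\delta_{t-1}$. For the (at most one) term where the selected example \emph{is} the $i$th example, the summand is $g(w_{t-1}, z_i) - g(w_{t-1}', z_i')$; I would add and subtract $g(w_{t-1}, z_i')$ (or $g(w_{t-1}', z_i)$) and use the triangle inequality to bound this by $\|g(w_{t-1},z_i) - g(w_{t-1}, z_i')\| + \|g(w_{t-1},z_i') - g(w_{t-1}',z_i')\| \le \|g(w_{t-1},z_i) - g(w_{t-1}, z_i')\| + \beta\,\delta_{t-1}$, again via (\ref{eq:grad_lip}) for the second piece. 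Then, using the indicator $I_t(\theta)$ to record whether index $i$ was selected, the triangle inequality over the whole sum gives
\[
\|\Delta_t''(b)\| \le \frac{1}{b}\left[ b\,\beta\,\delta_{t-1} + I_t(\theta)\,\|g(w_{t-1},z_i) - g(w_{t-1},z_i')\| \right] = \beta\,\delta_{t-1} + \frac{I_t(\theta)}{b}\,\|g(w_{t-1},z_i) - g(w_{t-1},z_i')\|.
\]
Substituting into (\ref{eq:delta_double_prime}) and collecting the $\delta_{t-1}$ terms yields exactly (\ref{eq:delta_recursive}).

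The main subtlety — and the step I would be most careful about — is the bookkeeping of which example sits in which mini-batch slot and making sure the ``$z_{tj}$ not equal to the $i$th example'' case is handled for \emph{all} $b$ slots simultaneously (including the borderline case where sampling is without replacement within a batch, so $i$ appears at most once per batch). One has to be slightly careful that when $i$ is \emph{not} in the batch the two runs genuinely use identical examples in every slot, which is what makes the $\beta$-smoothness bound apply slot-by-slot with no extra terms. Everything else is a routine application of the triangle inequality and the two smoothness assumptions (\ref{eq:grad_lip}); no expectation over $\theta$ is needed yet — that comes in the subsequent step that unrolls this recursion and uses (\ref{eq:exp_I_t}).
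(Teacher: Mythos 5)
Your proposal is correct and follows essentially the same route as the paper's proof: split the mini-batch difference according to whether the $i$th example was selected, bound the unchanged slots by $\beta$-smoothness, and handle the changed slot by adding and subtracting $g(w_{t-1},z_i')$ before applying the triangle inequality. The only cosmetic difference is that you bound the common slots one at a time whereas the paper packages them into an auxiliary quantity $\Delta_t'(s)$ and bounds it in one step; your version also consistently (and correctly) writes $\delta_{t-1}$ where the paper's intermediate displays have a slip writing $\delta_t$.
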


\begin{proof}
Define, 
\[
\Delta_t'(s) = \frac{1}{s} \sum_{j \in [s]} \left[ g(w_{t - 1}, z_{tj}) - g(w_{t - 1}', z_{tj}) \right]
\]
Now, if $I_t(\theta) = 0$, then $z_{tj} = z_{tj}'$ for all $j \in [b]$, and we have, 
\[
\Delta_{t}''(b) = \frac{1}{b} \sum_{j \in [b]} \left[ g(w_{t - 1}, z_{tj}) - g(w_{t - 1}', z_{tj}) \right] = \Delta_t'(b)
\]
On the other hand, if $I_t(\theta) = 1$, then there is a $k \in [b]$ s.t. $z_{tk} = z_i$ and $z_{tk}' = z_i'$, and $z_{tj} = z_{tj}'$ for all $j \in [b] \setminus \{k\}$. 
Without loss of generality assume that $k = b$. 
Therefore, in this case, 
\[
\Delta_{t}''(b) = \frac{1}{b} (g(w_{t - 1}, z_i) - g(w_{t - 1}', z_i'))
    + \frac{b - 1}{b} \Delta_t'(b - 1)
\]
Putting the $I_t(\theta) = 0$ and the $I_t(\theta) = 1$ cases together, we get,
\begin{align*}
\Delta_{t}''(b) 
= 
    I_t(\theta) 
        \left( \frac{1}{b} (g(w_{t - 1}, z_i) - g(w_{t - 1}', z_i')) + \frac{b - 1}{b} \Delta_t'(b - 1) \right) 
  + (1 - I_t(\theta)) \Delta_t'(b)
\end{align*}
Taking norms, and applying the triangle inequality, we get,
\begin{align*}
\| \Delta_{t}''(b) \| 
\le
    I_t(\theta) 
        \left( \frac{1}{b} \| g(w_{t - 1}, z_i) - g(w_{t - 1}', z_i') \| + \frac{b - 1}{b}  \| \Delta_t'(b - 1) \| \right) 
  + (1 - I_t(\theta)) \| \Delta_t'(b) \|
\end{align*}
Now, from the smoothness condition on the loss functions (\ref{eq:grad_lip}), we have $\| \Delta_t'(s) \| \le \beta \, \delta_t$
for any $s \in [b]$, and this simplifies to
\begin{align*}
\| \Delta_{t}''(b) \| 
\le
    I_t(\theta) 
        \left( \frac{1}{b} \| g(w_{t - 1}, z_i) - g(w_{t - 1}', z_i') \| + \frac{b - 1}{b}  
        \beta \, \delta_t \right) 
  + (1 - I_t(\theta)) \beta \, \delta_t. 
\end{align*}
Now, since
\begin{align*}
\| g(w_{t - 1}, z_i) - g(w_{t - 1}', z_i') \| 
& =
\| g(w_{t - 1}, z_i) - g(w_{t - 1}, z_i') + g(w_{t - 1}, z_i') - g(w_{t - 1}', z_i') \| \\
& \le
\| g(w_{t - 1}, z_i) - g(w_{t - 1}, z_i') \| + \| g(w_{t - 1}, z_i') - g(w_{t - 1}', z_i') \| \\
& \le 
\| g(w_{t - 1}, z_i) - g(w_{t - 1}, z_i') \| + \beta \delta_t
\end{align*}
we get,
\begin{align*}
\| \Delta_{t}''(b) \| 
& \le
    I_t(\theta) 
        \left( \frac{1}{b} ( \| g(w_{t - 1}, z_i) - g(w_{t - 1}, z_i') \| + \beta \delta_t )  + \frac{b - 1}{b}  
        \beta \, \delta_t \right) 
  + (1 - I_t(\theta)) \beta \, \delta_t \\
& =
    I_t(\theta) 
        \left( \frac{1}{b} ( \| g(w_{t - 1}, z_i) - g(w_{t - 1}, z_i') \| )  +  \beta \delta_t  \right) 
  + (1 - I_t(\theta)) \beta \, \delta_t \\
& = 
    I_t(\theta) 
        \left( \frac{1}{b} ( \| g(w_{t - 1}, z_i) - g(w_{t - 1}, z_i') \| ) \right)  +  \beta \delta_t \\
\end{align*}
Recall from (\ref{eq:delta_double_prime}) that
\[
\delta_{t + 1} \le \delta_t + \eta_t \| \Delta_t''(b) \|
\]
Therefore, we have,
\begin{align*}
\delta_{t} 
& \le 
    \delta_{t - 1} + \eta_t \left( I_t(\theta) 
        \left( \frac{1}{b} ( \| g(w_{t - 1}, z_i) - g(w_{t - 1}, z_i') \| ) \right)  +  \beta \, \delta_{t - 1} \right)
\end{align*}
simplifying which the required result follows.
\end{proof}

\thoughtsep

Next, we unroll the recursion to bound the difference in weights at the end of training $\delta_T$
in terms of the differences in gradients in $S$ and $S^{(i)}$ at each step. (In what follows we use the standard convention for $\prod$ that an empty product is equal to 1.)

\begin{lemma}[Unrolling recursion] We have,
\begin{align}
\delta_{T} 
& \le
    \sum_{t \in [T]}
        \left( 
            \frac{\eta_t I_t(\theta)}{b} 
            \cdot
            \| g(w_{t - 1}, z_i) - g(w_{t - 1}, z_i') \| 
            \cdot
        \prod_{k = t + 1}^{T} (1 + \eta_{k} \, \beta) 
        \right)
\end{align}
\label{lem:unrolled}
\end{lemma}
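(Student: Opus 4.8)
The plan is a straightforward induction on $T$, using Lemma~\ref{lem:onestep} as the one-step recurrence and exploiting the initial condition $\delta_0 = \|w_0 - w_0'\| = 0$ (both runs start from the same random initialization). To keep the bookkeeping clean I would abbreviate $a_t \equiv (1 + \eta_t\,\beta)$ and $c_t \equiv \frac{\eta_t I_t(\theta)}{b}\,\|g(w_{t-1}, z_i) - g(w_{t-1}, z_i')\|$, so that Lemma~\ref{lem:onestep} reads simply $\delta_t \le a_t\,\delta_{t-1} + c_t$ for $t \in [T]$, and the claim to be proved becomes $\delta_T \le \sum_{t \in [T]} c_t \prod_{k = t+1}^{T} a_k$.

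First I would dispatch the base case. For $T = 0$ there is nothing to prove since $\delta_0 = 0$ and the empty sum is $0$; alternatively, for $T = 1$, Lemma~\ref{lem:onestep} gives $\delta_1 \le a_1\,\delta_0 + c_1 = c_1$, which matches the right-hand side because $\prod_{k=2}^{1} a_k = 1$ by the empty-product convention already adopted in the statement. Then for the inductive step, assume the bound holds at $T - 1$, i.e.\ $\delta_{T-1} \le \sum_{t \in [T-1]} c_t \prod_{k=t+1}^{T-1} a_k$. Applying Lemma~\ref{lem:onestep} once more and then the inductive hypothesis (note $a_T \ge 1 > 0$, so the inequality is preserved under multiplication by $a_T$),
\[
\delta_T \;\le\; a_T\,\delta_{T-1} + c_T \;\le\; a_T \sum_{t \in [T-1]} c_t \prod_{k=t+1}^{T-1} a_k \;+\; c_T \;=\; \sum_{t \in [T-1]} c_t \prod_{k=t+1}^{T} a_k \;+\; c_T.
\]
Since the $t = T$ term of the target sum is exactly $c_T \prod_{k=T+1}^{T} a_k = c_T$, the last expression equals $\sum_{t \in [T]} c_t \prod_{k=t+1}^{T} a_k$, which closes the induction. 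Substituting back the definitions of $a_t$ and $c_t$ yields the stated form of Lemma~\ref{lem:unrolled}.

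There is no real obstacle here: the only points requiring a little care are (i) correctly invoking $\delta_0 = 0$ so that the ``always-present'' expansion term contributes nothing, and (ii) handling the empty-product convention consistently at the top index $t = T$. One could equivalently present this without explicit induction by iterating the recurrence $\delta_t \le a_t\,\delta_{t-1} + c_t$ symbolically $T$ times and collecting coefficients, but the induction is the cleanest way to make the telescoping of the products $\prod_k a_k$ rigorous. The downstream use of this lemma (in the proof of \Cref{thm:costab}, hence \Cref{thm:cogen}) will take expectations over $\theta$ and the samples, using $\E_\theta[I_t(\theta)] = b/m$ from (\ref{eq:exp_I_t}) and bounding $\E[\|g(w_{t-1}, z_i) - g(w_{t-1}, z_i')\|]$ via \Cref{lem:diffbound} in terms of $\alpha(w_{t-1})$, but that is outside the scope of this lemma.
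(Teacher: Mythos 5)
Your proof is correct and is exactly the argument the paper intends: the paper's own proof consists of the single line ``by unrolling (\ref{eq:delta_recursive}) and using the fact that $\delta_0 = 0$,'' and your induction on $T$ is simply that unrolling made rigorous, with the empty-product convention handled correctly at $t = T$. No differences of substance.
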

\begin{proof}
By unrolling (\ref{eq:delta_recursive}) and
using the fact that $\delta_0 = 0$.
\end{proof}

\thoughtsep

To reduce clutter, it is convenient to define a couple of abbreviations:
\[
[\eta_{k}\,\beta]_{k = t_0}^{t_1} \equiv \prod_{k = t_0}^{t_1} (1 + \eta_{k} \, \beta)
\quad {\rm and} \quad
\bar{g}(w_t) \equiv \sqrt{\E_{z \sim \mathcal{D}} \  \left[ \| g(w_t,\,z) \|^2 \right]}.
\]
Intuitively, $[\eta_k\,\beta]_{k = t_0}^{t_1}$ denotes the expansion incurred from time step $t_0$ to $t_1$, and $\bar{g}(w_t)$ is the average norm of per example gradients at $w_t$ (roughly speaking).
Note that by the $L$-Lipschitz condition on $\ell(\cdot, z)$, that is, by (\ref{eq:ell_grad_bound}), we have, 
\begin{align}
    \bar{g}(w_t) \le L.
\label{eq:g_bar_bound}
\end{align}

Next, we bound the difference between examples by the coherence as measured by $\alpha$.

\begin{lemma}
\label{lem:alpha_intro}
We have, 
\begin{align*}
    \E_{S \sim \mathcal{D}^m} 
    \E_{z_i' \sim \mathcal{D}}  
    \E_\theta
        \left[ \left| \ell({\bf A}_\theta(S^{(i)}), z_i) - \ell({\bf A}_\theta(S), z_i) \right| \right]
 \le 
    \frac{L^2}{m} \ \sum_{t \in [T]}
    [\eta_{k}\,\beta]_{k = t + 1}^{T}
    \cdot
    \eta_t
    \cdot
    \sqrt{2 \  (1 - \alpha(w_{t - 1}))}
\end{align*}
\label{lem:tech}
\end{lemma}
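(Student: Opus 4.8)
The plan is to chain together the three ingredients already assembled: the unrolled recursion for $\delta_T$ (Lemma~\ref{lem:unrolled}), the $L$-Lipschitz bound on the loss (\ref{eq:ell_lip}), and the bound on the expected gradient difference in terms of $\alpha$ (Lemma~\ref{lem:diffbound}). First I would use (\ref{eq:ell_lip}) to pass from the loss difference to the weight difference: since $\ell(\cdot, z_i)$ is $L$-Lipschitz,
\[
\left| \ell({\bf A}_\theta(S^{(i)}), z_i) - \ell({\bf A}_\theta(S), z_i) \right| \le L\,\| w_T - w_T' \| = L\,\delta_T.
\]
Then I would substitute the bound on $\delta_T$ from Lemma~\ref{lem:unrolled}, which expresses $\delta_T$ as a sum over steps $t$ of the per-step gradient discrepancy $\| g(w_{t-1}, z_i) - g(w_{t-1}, z_i') \|$, weighted by $\eta_t I_t(\theta)/b$ and the expansion factor $[\eta_k\,\beta]_{k=t+1}^{T}$.

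Next I would take expectations and push them inside the sum by linearity. The key point is to handle the three sources of randomness carefully. The indicator $I_t(\theta)$ depends only on the coin tosses $\theta$ and is independent of the sample; moreover $w_{t-1}$ depends on $S$ and $\theta$ but not on $z_i'$ (the replacement example is only used in the primed run, which does not enter the formula in Lemma~\ref{lem:unrolled}). So I would first take $\E_\theta[I_t(\theta)] = b/m$ by (\ref{eq:exp_I_t}) — but here one must be slightly careful, since $w_{t-1}$ also depends on $\theta$ through the earlier minibatch choices; the clean way is to bound $\| g(w_{t-1}, z_i) - g(w_{t-1}, z_i') \| \le 2L$ deterministically using (\ref{eq:ell_grad_bound}) in the factor that multiplies $I_t(\theta)$, OR to argue that conditioned on the trajectory up to $t-1$ the event $I_t(\theta)=1$ still has probability $b/m$. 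Having extracted the $b/m$ from $I_t$ and cancelled it against the $1/b$, I am left with $\frac{1}{m}\sum_t [\eta_k\beta]_{k=t+1}^T\,\eta_t\,\E_{S,z_i'}\big[\| g(w_{t-1},z_i) - g(w_{t-1},z_i') \|\big]$. Then, treating $z_i$ and $z_i'$ as two i.i.d. draws from $\mathcal{D}$ (they are, once we condition on the rest of $S$ — exchangeability lets us view the relevant pair as two fresh samples), I apply Lemma~\ref{lem:diffbound} with $\mathcal{V}$ the distribution of per-example gradients $g(w_{t-1}, z)$ at the fixed point $w_{t-1}$:
\[
\E\big[\| g(w_{t-1},z) - g(w_{t-1},z') \|\big] \le \sqrt{2(1-\alpha(w_{t-1}))\,\E_z[\,g(w_{t-1},z)\cdot g(w_{t-1},z)\,]} = \sqrt{2(1-\alpha(w_{t-1}))}\;\bar{g}(w_{t-1}).
\]
Finally, bound $\bar{g}(w_{t-1}) \le L$ using (\ref{eq:g_bar_bound}), which together with the leading $L$ from Lipschitzness produces the $L^2$, and combine everything to obtain the claimed bound.

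I expect the main obstacle to be the measure-theoretic bookkeeping around the expectation over $\theta$ and the trajectory dependence: $w_{t-1}$ is itself a random variable depending on both $S$ and the earlier coin tosses, so the application of Lemma~\ref{lem:diffbound}, which is a statement about a fixed distribution $\mathcal{V}$ of vectors, must be done conditionally on $w_{t-1}$ and then averaged — and one needs $\alpha(w_{t-1})$ under the expectation, not $\alpha$ at a deterministic point. A clean resolution is to condition on $(S, \theta)$ restricted to the first $t-1$ steps (which determines $w_{t-1}$), apply Lemma~\ref{lem:diffbound} pointwise, then take the outer expectation; Jensen is not needed in the outer step since we only need an upper bound and the inner quantity is already a bound. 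A secondary subtlety is that in Lemma~\ref{lem:unrolled} the gradient difference is $\| g(w_{t-1},z_i) - g(w_{t-1},z_i')\|$ with the \emph{same} parameter $w_{t-1}$ for both examples — which is exactly the form Lemma~\ref{lem:diffbound} needs (two gradients from the same distribution $\mathcal{V}$), so no extra $\beta\delta$ cross-term appears here; that cross-term was already absorbed during the derivation of Lemma~\ref{lem:onestep}. The rest is routine algebra.
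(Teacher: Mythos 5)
Your proposal is correct and follows essentially the same route as the paper's proof: Lipschitz continuity to pass to $L\,\delta_T$, substitution of Lemma~\ref{lem:unrolled}, factoring out $\E_\theta[I_t(\theta)] = b/m$ using the independence of the minibatch draw at step $t$ from everything determining $w_{t-1}$, then Lemma~\ref{lem:diffbound} followed by $\bar{g}(w_{t-1}) \le L$. The subtleties you flag (conditioning on the trajectory up to $t-1$ before applying Lemma~\ref{lem:diffbound}, and the independence of $I_t(\theta)$ from $w_{t-1}$) are real and are handled, somewhat more tersely, in the paper's version as well.
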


\begin{proof}
Since, by definition, $w_T = {\bf A}_\theta(S)$, $w_T' = {\bf A}_\theta(S^{(i)})$, and $\delta_t = \|w_T - w_T'\|$, applying the Lipschitz condition on $\ell$ (\ref{eq:ell_lip}), and then applying Lemma~\ref{lem:unrolled} we get, 
\begin{align*}
\left| \ell({\bf A}_\theta(S^{(i)}), z_i) - \ell({\bf A}_\theta(S), z_i) \right| 
& \le L \ \delta_T \\
& =  
    L \ \sum_{t \in [T]}
        \left( 
            \frac{\eta_t I_t(\theta)}{b} 
            \cdot
            [\eta_{k}\,\beta]_{k = t + 1}^{T}
            \cdot
            \| g(w_{t - 1}, z_i) - g(w_{t - 1}, z_i') \| 
        \right)
\end{align*}
Now, taking expectations on both sides, and expanding into the sum, 
we get the following for each term of the sum:
\begin{align*}
&   \E_{S \sim \mathcal{D}^m} 
    \E_{z_i' \sim \mathcal{D}}  
    \E_\theta
        \left[ 
            [\eta_{k}\,\beta]_{k = t + 1}^{T}
            \cdot
            \frac{\eta_t I_t(\theta)}{b} 
            \cdot
            \| g(w_{t - 1}, z_i) - g(w_{t - 1}, z_i') \| 
        \right] \\
= & \ 
    [\eta_{k}\,\beta]_{k = t + 1}^{T}
    \cdot
    \left(
        \frac{\eta_t}{b} \E_\theta \left[ I_t(\theta) \right]
    \right)
    \cdot
    \left(
        \E_{S \sim \mathcal{D}^m}  
        \E_{z_i' \sim \mathcal{D}} 
            \| g(w_{t - 1}, z_i) - g(w_{t - 1}, z_i') \| 
    \right) \\
\le & \ 
    [\eta_{k}\,\beta]_{k = t + 1}^{T}
    \cdot
    \left( \frac{\eta_t}{m} \right)
    \cdot
    \sqrt{2 \  (1 - \alpha(w_{t - 1}))}
    \cdot
    \bar{g}(w_{t - 1}) \\
\le & \ 
    [\eta_{k}\,\beta]_{k = t + 1}^{T}
    \cdot
    \left( \frac{\eta_t}{m} \right)
    \cdot
    \sqrt{2 \  (1 - \alpha(w_{t - 1}))}
    \cdot
    L
\end{align*}
The second step follows from the first since $\theta$ is independent of $S$ and $z_i'$. The 
third step follows from $\E_\theta \left[ I_t(\theta) \right] = \frac{b}{m}$ (see (\ref{eq:exp_I_t})), and  from Lemma~\ref{lem:diffbound} applied to bound the difference in the gradients for $z_i$ and $z_i'$ by the coherence of the per-example gradients at $w_t$. The last step follows from (\ref{eq:g_bar_bound}).

Finally, by summing up the bounds for each term and rearranging, we get the required result.

\end{proof}

\thoughtsep

A straightforward consequence of the above lemma is a bound on the stability of gradient descent.

\begin{theorem}[Stability Theorem]
\begin{align}
|\stab(\mathcal{D}, m)| 
\le 
    \frac{L^2}{m} \ \sum_{t \in [T]}
    [\eta_{k}\,\beta]_{k = t + 1}^{T}
    \cdot
    \eta_t
    \cdot
    \sqrt{2 \  (1 - \alpha(w_{t - 1}))}
\end{align}
\label{thm:costab}
\end{theorem}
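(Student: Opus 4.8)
\textbf{Proof proposal for Theorem~\ref{thm:costab}.} The plan is to obtain the Stability Theorem as an essentially immediate consequence of Lemma~\ref{lem:tech} (= Lemma~\ref{lem:alpha_intro}) together with the definition of expected stability; the substantive work has already been done in Lemmas~\ref{lem:onestep}, \ref{lem:unrolled}, \ref{lem:diffbound} and \ref{lem:tech}, so this is the packaging step. Recall
\[
\stab(\mathcal{D}, m) = \E_{S \sim \mathcal{D}^m} \E_{S' \sim \mathcal{D}^m} \E_\theta \left[ \frac{1}{m} \sum_{i \in [m]} \left( \ell({\bf A}_\theta(S^{(i)}), z_i) - \ell({\bf A}_\theta(S), z_i) \right) \right].
\]
The first step is to move the absolute value inside: applying the triangle inequality for $\E$ and then for the finite sum over $i$ gives
\[
|\stab(\mathcal{D}, m)| \le \frac{1}{m} \sum_{i \in [m]} \E_{S \sim \mathcal{D}^m} \E_{S' \sim \mathcal{D}^m} \E_\theta \left[ \left| \ell({\bf A}_\theta(S^{(i)}), z_i) - \ell({\bf A}_\theta(S), z_i) \right| \right].
\]
Since ${\bf A}_\theta(S^{(i)})$ and ${\bf A}_\theta(S)$ depend on $S'$ only through the single coordinate $z_i'$, each summand equals $\E_{S \sim \mathcal{D}^m} \E_{z_i' \sim \mathcal{D}} \E_\theta \left[ \left| \ell({\bf A}_\theta(S^{(i)}), z_i) - \ell({\bf A}_\theta(S), z_i) \right| \right]$, which is exactly the quantity bounded in Lemma~\ref{lem:tech}.

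The second step is to invoke Lemma~\ref{lem:tech} to bound each of the $m$ summands by
\[
\frac{L^2}{m} \sum_{t \in [T]} [\eta_{k}\,\beta]_{k = t + 1}^{T} \cdot \eta_t \cdot \sqrt{2 \, (1 - \alpha(w_{t - 1}))}.
\]
Because the construction of the perturbed run is symmetric in the index $i$, this bound does not depend on $i$; hence averaging the identical per-$i$ bound over $i \in [m]$ leaves it unchanged, yielding the claimed inequality. (One small bookkeeping point: $w_{t-1}$ is itself random, depending on $S$ and $\theta$, so the $\alpha(w_{t-1})$ terms on the right-hand side should be read consistently with how they appear in Lemma~\ref{lem:tech}, i.e.\ inside the relevant expectation over $S$ and $\theta$; the theorem statement adopts the same convention.)

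I do not expect a genuine obstacle here. The only things requiring care are (i) justifying the interchange of $|\cdot|$ with $\E$ and with $\sum_i$ via the triangle inequality, and (ii) tracking which sources of randomness the trajectory $w_0, \dots, w_T$ depends on, so that the coherence terms are interpreted identically in Lemma~\ref{lem:tech} and in the theorem; both are routine. Finally, combining Theorem~\ref{thm:costab} with Theorem~\ref{thm:stabgen} ($\gap(\mathcal{D},m) = \stab(\mathcal{D},m)$) gives the generalization bound of Theorem~\ref{thm:cogen}.
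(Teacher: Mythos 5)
Your proposal is correct and follows essentially the same route as the paper: Jensen's/triangle inequality to pull the absolute value inside the expectation and the sum over $i$, reduction of $\E_{S'}$ to $\E_{z_i'}$ since only the $i$th coordinate matters, and then an application of Lemma~\ref{lem:tech} to each of the $m$ identical summands. Your bookkeeping remark about the randomness of $w_{t-1}$ (and hence of $\alpha(w_{t-1})$) is a fair observation about a convention the paper also leaves implicit, but it does not change the argument.
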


\begin{proof}
This is a straightforward consequence of Jensen's inequality and Lemma~\ref{lem:tech}.
\begin{align*}
|\stab(\mathcal{D}, m)|
&= 
    \left|
    \E_{S \sim \mathcal{D}^m} 
    \E_{S' \sim \mathcal{D}^m}  
    \E_\theta
        \left[
        \frac{1}{m} \sum_{i \in [m]} 
            \left[ \ell({\bf A}_\theta(S^{(i)}), z_i) - \ell({\bf A}_\theta(S), z_i) \right]
        \right]
    \right| \\
&\le
    \frac{1}{m} \sum_{i \in [m]} 
    \E_{S \sim \mathcal{D}^m} 
    \E_{S' \sim \mathcal{D}^m}  
    \E_\theta
        \left[ \left| \ell({\bf A}_\theta(S^{(i)}), z_i) - \ell({\bf A}_\theta(S), z_i) \right| \right] \\
&=
    \frac{1}{m} \sum_{i \in [m]} 
    \E_{S \sim \mathcal{D}^m} 
    \E_{z_i' \sim \mathcal{D}}  
    \E_\theta
        \left[ \left| \ell({\bf A}_\theta(S^{(i)}), z_i) - \ell({\bf A}_\theta(S), z_i) \right| \right] \\
&\le 
    \frac{L^2}{m} \ \sum_{t \in [T]}
    [\eta_{k}\,\beta]_{k = t + 1}^{T}
    \cdot
    \eta_t
    \cdot
    \sqrt{2 \  (1 - \alpha(w_{t - 1}))}
\end{align*}
\end{proof}

\newbold{Remark.} Note that in the penultimate step we bound the following quantity (using \Cref{lem:alpha_intro}):
\[
    \frac{1}{m} \sum_{i \in [m]} 
    \E_{S \sim \mathcal{D}^m} 
    \E_{z_i' \sim \mathcal{D}}  
    \E_\theta
        \left[ \left| \ell({\bf A}_\theta(S^{(i)}), z_i) - \ell({\bf A}_\theta(S), z_i) \right| \right]
\]
which corresponds to a stronger notion of stability than the notion of expected stability that we use to link to generalization. Indeed, it is closer to the notion of {\em point-wise hypothesis stability} \citep{Bousquet02, Kearns99} with the main differences being that it is formulated in terms of replace-one (instead of leave-one-out), for a randomized setting (instead of a deterministic one), and does not assume symmetry in the training examples. 
Therefore, it may be possible to extend our results to provide a concentration guarantee along the lines of Theorem 11 in \citet{Bousquet02}.

We are now ready to prove our main theorem.

\begin{i:thm:cogen}[Generalization Theorem]
\begin{align}
|\gap(\mathcal{D}, m)| 
\le 
    \frac{L^2}{m} \ \sum_{t \in [T]}
    [\eta_{k}\,\beta]_{k = t + 1}^{T}
    \cdot
    \eta_t
    \cdot
    \sqrt{2 \  (1 - \alpha(w_{t - 1}))}
\end{align}
\end{i:thm:cogen}

\begin{proof}
This is a straightforward consequence of Theorem~\ref{thm:stabgen} and Theorem~\ref{thm:costab}.
\end{proof}

\thoughtsep

We now consider special cases for two common learning rate schedules. These allow us to replace the expansion term in the bound with an exponential.

\begin{corollary} If the step sizes are fixed, that is, $\eta_t = \eta$ then 
\begin{align}
|\gap(\mathcal{D}, m)| 
\le 
    \frac{L^2 \eta}{m} \ 
    \sum_{t \in [T]}
        {\rm exp}((T - t)\,\eta\,\beta)
        \cdot
        \sqrt{2 \  (1 - \alpha(w_{t - 1}))}
\end{align}
\label{cor:fixedstepsize}
\end{corollary}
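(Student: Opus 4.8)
The plan is to obtain Corollary~\ref{cor:fixedstepsize} directly as a specialization of the Generalization Theorem (Theorem~\ref{thm:cogen}). First I would invoke Theorem~\ref{thm:cogen} in its general form,
\[
|\gap(\mathcal{D}, m)| \le \frac{L^2}{m} \sum_{t \in [T]} [\eta_k\,\beta]_{k = t+1}^{T} \cdot \eta_t \cdot \sqrt{2\,(1 - \alpha(w_{t-1}))},
\]
and then substitute the constant learning-rate schedule $\eta_t = \eta$ for every $t \in [T]$.

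Next I would simplify the expansion factor. With $\eta_t = \eta$ the product defining $[\eta_k\,\beta]_{k=t+1}^{T}$ consists of $T - t$ identical terms, so
\[
[\eta_k\,\beta]_{k=t+1}^{T} = \prod_{k=t+1}^{T}(1 + \eta\,\beta) = (1 + \eta\,\beta)^{T - t}.
\]
Applying the elementary inequality $1 + x \le e^x$ to each factor (here $x = \eta\,\beta \ge 0$) yields $(1 + \eta\,\beta)^{T - t} \le \exp((T - t)\,\eta\,\beta)$, and for $t = T$ the empty product gives $1 = \exp(0)$, consistent with the convention already used in Theorem~\ref{thm:cogen}.

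Finally, I would pull the now-constant $\eta_t = \eta$ out of the summation and fold it into the $L^2/m$ prefactor, which gives
\[
|\gap(\mathcal{D}, m)| \le \frac{L^2 \eta}{m} \sum_{t \in [T]} \exp((T - t)\,\eta\,\beta) \cdot \sqrt{2\,(1 - \alpha(w_{t-1}))},
\]
as claimed. I do not expect any real obstacle here: the corollary is a routine rewriting of Theorem~\ref{thm:cogen}, and the only substantive observations are that a geometric product of equal factors collapses to a power and that $1+x\le e^x$ converts that power into an exponential. The single point worth being careful about is bookkeeping with the empty-product/$t=T$ boundary case, but this is already handled by the statement of Theorem~\ref{thm:cogen}.
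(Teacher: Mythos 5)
Your proposal is correct and is essentially the paper's own argument: both reduce the expansion factor $[\eta_k\,\beta]_{k=t+1}^{T}$ to $\exp((T-t)\,\eta\,\beta)$ via the inequality $1+x\le e^x$ and then substitute into Theorem~\ref{thm:cogen}. The only cosmetic difference is that you first collapse the product to $(1+\eta\beta)^{T-t}$ and then exponentiate, whereas the paper exponentiates the general product first and then specializes $\eta_k=\eta$; the content is identical.
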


\begin{proof}

For any $t \ge 0$, 
\begin{align*}
[\eta_{k}\,\beta]_{k = t + 1}^{T} 
\le 
{\rm exp} \left( \sum_{k = t + 1}^{T} \eta_k\,\beta \right) 
\le
{\rm exp} \left( \eta\beta \sum_{k = t + 1}^{T} 1 \right)
\le
{\rm exp} \left( \eta\beta (T - t) \right)
\end{align*}
where the first inequality follows from $(1 + x) \le {\rm exp}(x)$ for any $x$.
The required result follows substituting this bound on the expansion term in Theorem~\ref{thm:cogen}.
\end{proof}

\newbold{Remark.} Note that the bound in \Cref{cor:fixedstepsize} may be simplified further to
\begin{align}
|\gap(\mathcal{D}, m)| 
\le 
    \frac{L^2\eta\,{\rm exp}(T\,\eta\,\beta)}{m} \ 
    \sum_{t \in [T]}
        \sqrt{2 \  (1 - \alpha(w_{t - 1}))}.
\end{align}

\thoughtsep

\begin{corollary} If we assume as in ~\citet{Hardt16} that step sizes decay linearly, that is, for some $\eta > 0$ we have $\eta_t \le \eta/t$ then 
\begin{align}
|\gap(\mathcal{D}, m)| 
\le 
    \frac{L^2 \eta T^{\eta \beta}}{m} \ 
    \sum_{t \in [T]}
        \sqrt{2 \  (1 - \alpha(w_{t - 1}))}
\end{align}
\end{corollary}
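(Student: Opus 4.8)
The plan is to derive this directly from Theorem~\ref{thm:cogen} by controlling the expansion factor $[\eta_k\beta]_{k=t+1}^T = \prod_{k=t+1}^T(1+\eta_k\beta)$ under the assumption $\eta_t \le \eta/t$, exactly as in the proof of Corollary~\ref{cor:fixedstepsize} but with a harmonic sum in place of a geometric one. So the starting point is the inequality
\[
|\gap(\mathcal{D}, m)| \le \frac{L^2}{m}\sum_{t\in[T]} [\eta_k\beta]_{k=t+1}^T\cdot \eta_t \cdot \sqrt{2\,(1-\alpha(w_{t-1}))},
\]
and the whole task reduces to showing that $[\eta_k\beta]_{k=t+1}^T\cdot\eta_t \le \eta\,T^{\eta\beta}$ for every $t\in[T]$.

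First I would apply $1+x \le \mathrm{exp}(x)$ termwise, using $\eta_k\beta \le \eta\beta/k$, to get $[\eta_k\beta]_{k=t+1}^T \le \mathrm{exp}\!\big(\eta\beta\sum_{k=t+1}^T 1/k\big)$. Then I would bound the harmonic tail by the integral $\sum_{k=t+1}^T 1/k \le \int_t^T dx/x = \ln(T/t)$, which yields $[\eta_k\beta]_{k=t+1}^T \le (T/t)^{\eta\beta}$. Since $t \ge 1$ and $\eta\beta \ge 0$, this is at most $T^{\eta\beta}$. Combining with $\eta_t \le \eta/t \le \eta$ gives $[\eta_k\beta]_{k=t+1}^T\cdot\eta_t \le \eta\,T^{\eta\beta}$ as claimed. (One could keep the sharper factor $\eta\,T^{\eta\beta}t^{-(1+\eta\beta)}$ inside the sum, but the stated bound just pulls out the crude $\eta\,T^{\eta\beta}$.)

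Substituting this uniform bound into the sum and pulling the constant $\eta\,T^{\eta\beta}$ out front leaves $\frac{L^2\eta\,T^{\eta\beta}}{m}\sum_{t\in[T]}\sqrt{2\,(1-\alpha(w_{t-1}))}$, which is the desired inequality. There is essentially no hard step here: the only points requiring a little care are (i) getting the direction of the harmonic-sum/integral comparison right, and (ii) noting that replacing $(T/t)^{\eta\beta}$ by $T^{\eta\beta}$ and $\eta_t$ by $\eta$ is valid precisely because $t\ge 1$, so that the per-step factors are genuinely dominated by the claimed constant. This also makes transparent the parallel with~\citet{Hardt16}, where the same $\eta/t$ schedule produces a polynomial-in-$T$ prefactor rather than an exponential one.
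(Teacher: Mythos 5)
Your proposal is correct and matches the paper's proof essentially step for step: both apply $(1+x)\le \mathrm{exp}(x)$ to the expansion product, bound the harmonic tail $\sum_{k=t+1}^{T}1/k$ by $\log T$, and use $\eta_t\le\eta$ before substituting into Theorem~\ref{thm:cogen}. Your intermediate bound $(T/t)^{\eta\beta}$ is a slightly sharper waypoint, but the argument is the same.
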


\begin{proof}

For any $t \ge 0$, 
\begin{align*}
[\eta_{k}\,\beta]_{k = t + 1}^{T} 
\le 
{\rm exp} \left( \sum_{k = t + 1}^{T} \eta_k\,\beta \right) 
\le
{\rm exp} \left( \eta\beta \sum_{k = t + 1}^{T} 1/t \right)
\le
{\rm exp} \left( \eta\beta {\rm log} T \right)
= T^{\eta\beta}
\end{align*}
where, as in the previous proof, the first inequality follows from $(1 + x) \le {\rm exp}(x)$ for any $x$.
The required result follows substituting this bound in Theorem~\ref{thm:cogen} and from noticing that $\eta_t \le \eta$ for all $t$.
\end{proof}

\clearpage
\section{Methods to Measure \texorpdfstring{$\alpha$}{Alpha}}
\label{app:measuring-coherence}

\newbold{The Direct and Imputed Methods.}
In theory, $\alpha$ at any point in training can be directly measured on a test or training sample by computing per-example gradients, and computing it from the definition (\ref{eq:overview:metric}): 
\begin{equation*}
\alpha \equiv 
\frac
    {\displaystyle \E_{z \sim \mathcal{D}}\ [\ g_z\ ] \cdot \E_{z \sim \mathcal{D}}\ [\ g_z\ ]}
    {\displaystyle \E_{z \sim \mathcal{D}}\ [ g_z \cdot g_z ]}
\end{equation*}
where $\mathcal{D}$ is the empirical distribution of the sample.
$\alpha$ can be computed without storing per-example gradients by keeping separate running sums for $\E_{z \sim \mathcal{D}}\ [\ g_z\ ]$ and for $\E_{z \sim \mathcal{D}}\ [ g_z \cdot g_z ]$. We call this method the {\em direct} method.

In practice, however, for most modern networks, the presence of batch normalization layers~\citep{Ioffe15} makes it challenging to compute (or even define!) per-example gradients, since the gradient of an example depends on all the other examples in the mini-batch, and one cannot calculate a gradient of a single example in isolation from the others.
One way to get around this is to estimate per-example gradients in evaluation (or inference) mode by using the moving averages of the population for normalization which disentangles the gradient of an example from the others in the batch. Although we used that technique in our preliminary work~\citep{Chatterjee20c}, we found two problems with that approach:

\begin{enumerate}
    \item {\bf Lag in the moving averages.} During training, particularly, when the loss changes rapidly, the moving averages fall behind resulting in highly inflated coherence measurements.
    
    \item {\bf Incorrect gradients.} With batch normalization, an example contributes not just to its own loss, but also to the losses of the other examples through its contribution to the batch statistics. By using the population statistics instead of batch statistics, we ignore this component of the gradient of the example which leads to a disconnect between the gradients used to compute coherence and the ones actually used to update weights which is unsatisfactory. 
    
\end{enumerate}
Even though the first problem can be addressed (at an added compute cost) by updating the moving averages with the population statistics before computing the per-example gradients, there does not appear to be an easy solution for the second.\footnote{Note: this is indeed the main research challenge in replacing batch normalization with a technique that would work even for batches of size 1.}

Therefore, in our experiments, we handle batch normalization in a different manner by appealing to Theorem~\ref{thm:stylized_mini_batching} which relates the coherence of mini-batch gradients $\alpha(\mathcal{W})$ to the coherence of per-example gradients $\alpha(\mathcal{V})$. We invert equation (\ref{eq:batchformula}) to get:
\begin{equation}
\label{eq:inversebatchformula}
    \alpha(\mathcal{V}) = 
    \frac{\alpha(\mathcal{W})}{k - (k - 1) \cdot \alpha(\mathcal{W})}
\end{equation}
where $k$ is the batch size.

In this method, which we call the {\em imputed} method, to compute per-example coherence for $m$ examples, we compute the gradient of each batch in the usual manner for training, and then compute the coherence between the $m / k$ {\em batch gradients} from the definition. Finally, we {\em impute} a per-example coherence using (\ref{eq:inversebatchformula}). 
When computing the corresponding $\ralpha$ for the imputed per-example gradients we use $m$ for the sample size (rather than $m / k$).

Note that Theorem~\ref{thm:stylized_mini_batching} is proven under the assumption that each mini-batch is constructed by picking examples at random {\em with} replacement, although in practice, for efficiency, mini-batches are constructed at random {\em without} replacement.  
However, in the case of networks that do not use batch normalization, when we compare the per-example coherences computed using the direct and the imputed methods, we find them to be in close agreement (for example, see \Cref{fig:alexnet_imagenet_direct_vs_imputed_000}).
In our experiments we use the direct method wherever possible (that is, for networks without batch normalization) and resort to the imputed method where needed (for batch normalization). Note though that the imputed method is much more efficient than the direct method since it does not require the computation of per-example gradients.

\begin{figure*}
\centering
\includegraphics[width=0.65\textwidth]{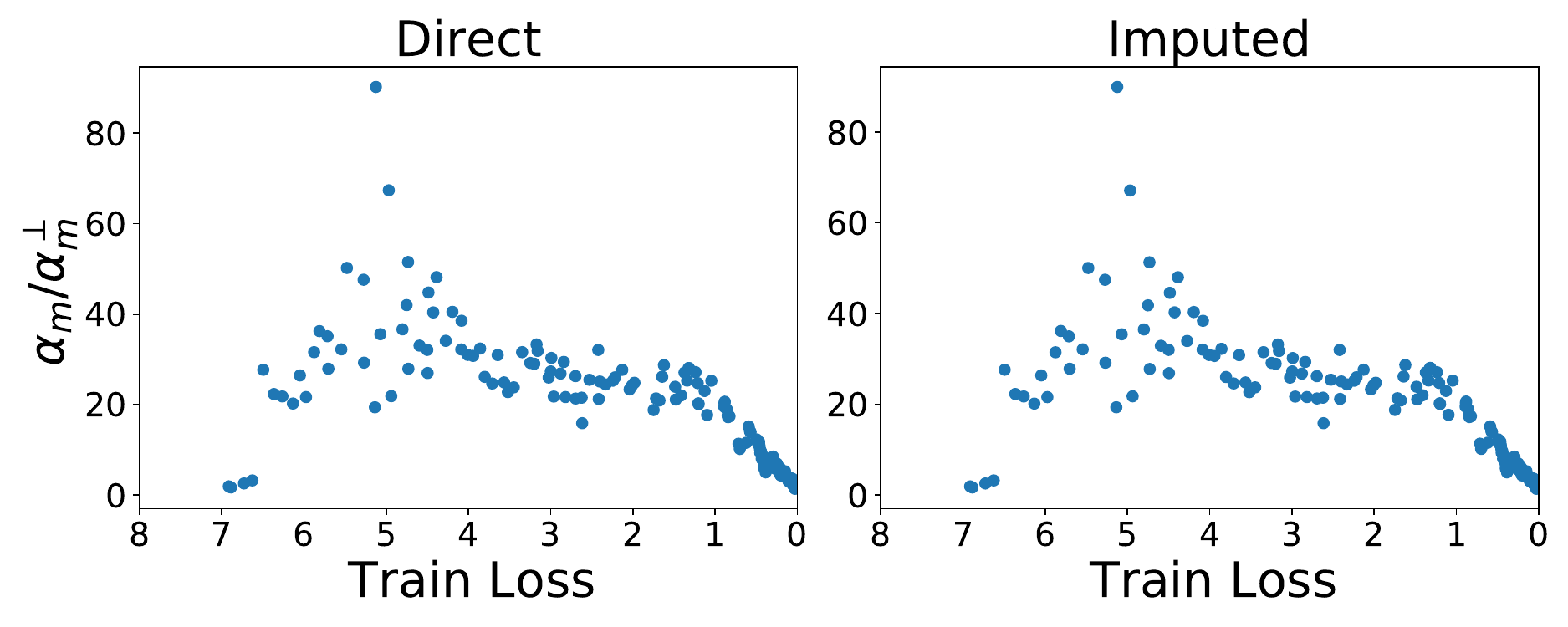}
\caption{$\alpha$ computed using direct and imputed methods on AlexNet (without batch normalization) for ImageNet. The methods produce virtually identical results.}
\label{fig:alexnet_imagenet_direct_vs_imputed_000}
\end{figure*}

\newbold{Effect of sample size used for estimation.} 
Please see \Cref{fig:m_variation}.

\begin{figure*}
\centering
\includegraphics[width=0.65\textwidth]{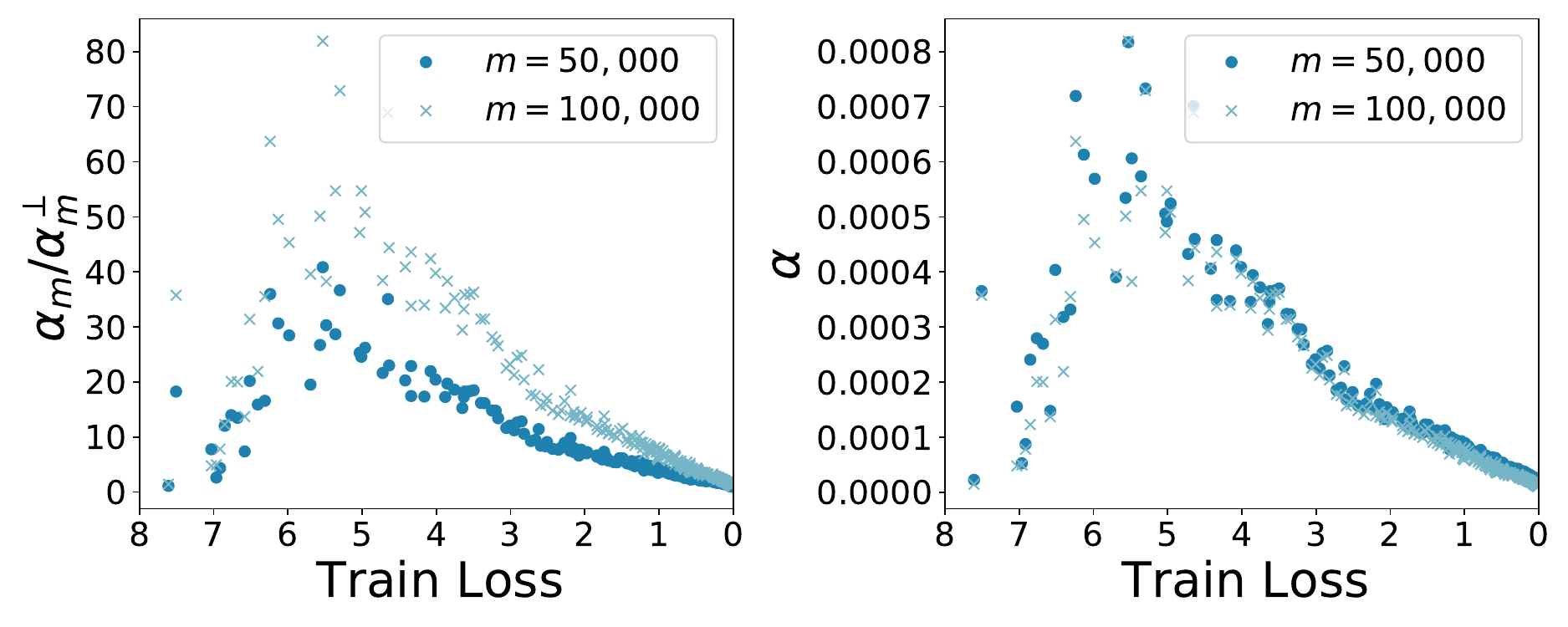}
\caption{$\alpha_m$ and $\ralpha$ on training samples of sizes $m =$ 50,000 and $m =$ 100,000 for a ResNet-50 being trained on ImageNet. The exact size of the sample used to measure $\alpha$ does not matter much if the sample is large enough. However, as discussed in \Cref{sec:overview:metrics}, the smaller sample produces a (slightly) greater estimate of $\alpha$ than the larger sample. To a first order, $\ralpha$ simply scales with the sample size (since it is $m\,\alpha$), but note that at the end of training each sample reaches its (respective) orthogonal limit ($\ralpha = 1$).}
\label{fig:m_variation}
\end{figure*}

\section{Measuring \texorpdfstring{$\alpha$}{Alpha} on Additional Datasets and Architectures}
\label{app:coherence-additional}

We measured coherence during training on the test set and a training sample (of the same size as test) for the following networks and datasets:
\begin{enumerate}
    \item Simple feed forward network with a single hidden layer of 2048 nodes on {\sc mnist}~\citep{lecun10} with random labels as a source of noise (\Cref{fig:figure1_mnist_fc_direct}).
    \item A simple convolutional network (AlexNet \citep{alexnet}) on {\sc cifar}-10 \citep{cifar10} with random labels as a source of noise (\Cref{fig:figure1_cifar_alexnet_direct}).
    \item AlexNet on ImageNet \citep{deng2009imagenet} with random pixels (\Cref{fig:chap1:alexnetimagenet_overall}) and with random labels (\Cref{fig:figure1_alexnet_imagenet_random_labels_a}).
    \item A deep residual convolutional network (ResNet-50 \citep{He16}) on ImageNet with random pixels (\Cref{fig:chap1:ResNet-50imagenet}) and with random labels (\Cref{fig:figure1_resnet50_imagenet_random_labels_a}).
\end{enumerate}
Each network was trained on the original dataset (0\% noise), as well as the corresponding dataset with either the labels or pixels randomized (100\% noise).
In all cases, we used vanilla stochastic gradient descent with a fixed learning rate. The batch size, learning rate, and the total number of steps was chosen so as to reach almost perfect training accuracy for random data in a reasonable amount of training time.
Since our goal is to study the implicit regularization of gradient descent, no explicit regularizers (such as dropout, weight-decay, or input augmentation) were used.

\begin{figure}
\centering
\includegraphics[width=\textwidth]{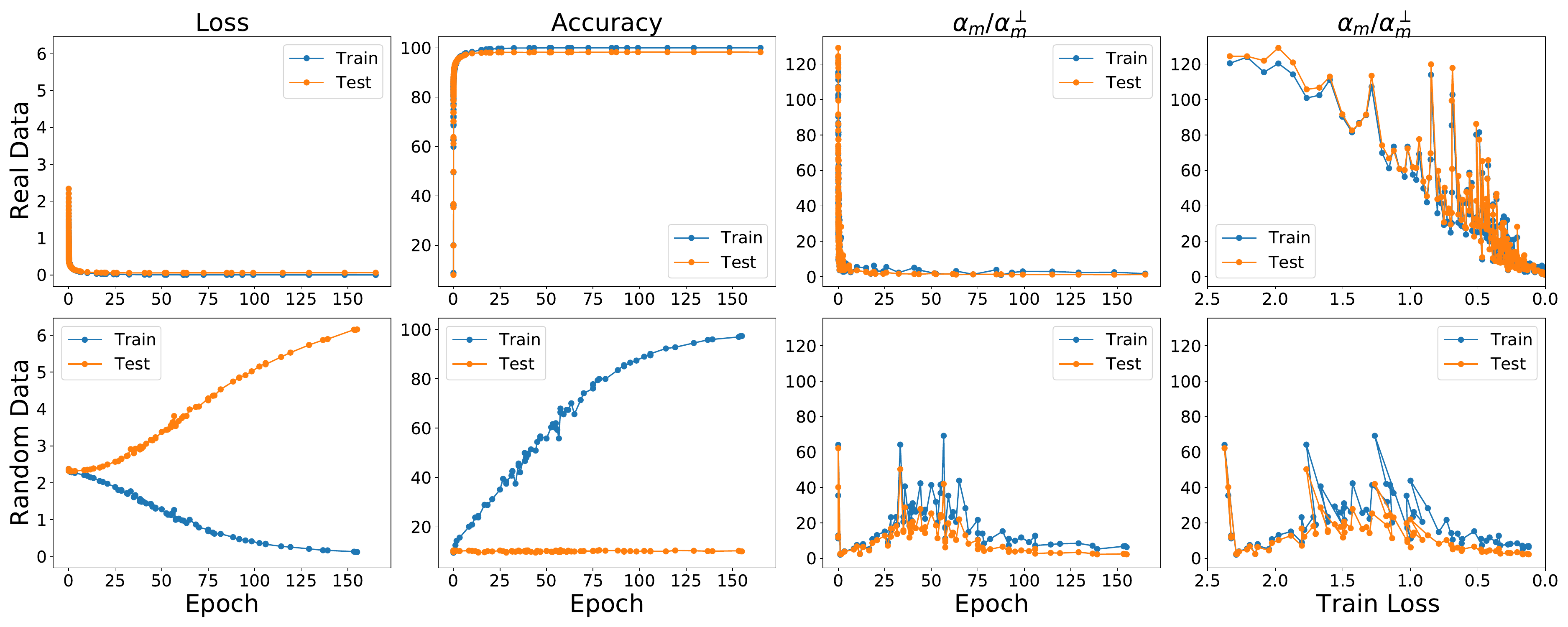}
\caption{The evolution of alignment of per-example gradients during training as measured with $\ralpha$ on samples of size $m=10{,}000$ on {\sc mnist} dataset. The model is a simple feed-forward network
with a single hidden layer containing 2048 neurons. We use SGD with a constant learning rate of 0.1. No regularizers were used.}
\label{fig:figure1_mnist_fc_direct}
\end{figure}

\begin{figure}
\centering
\includegraphics[width=\textwidth]{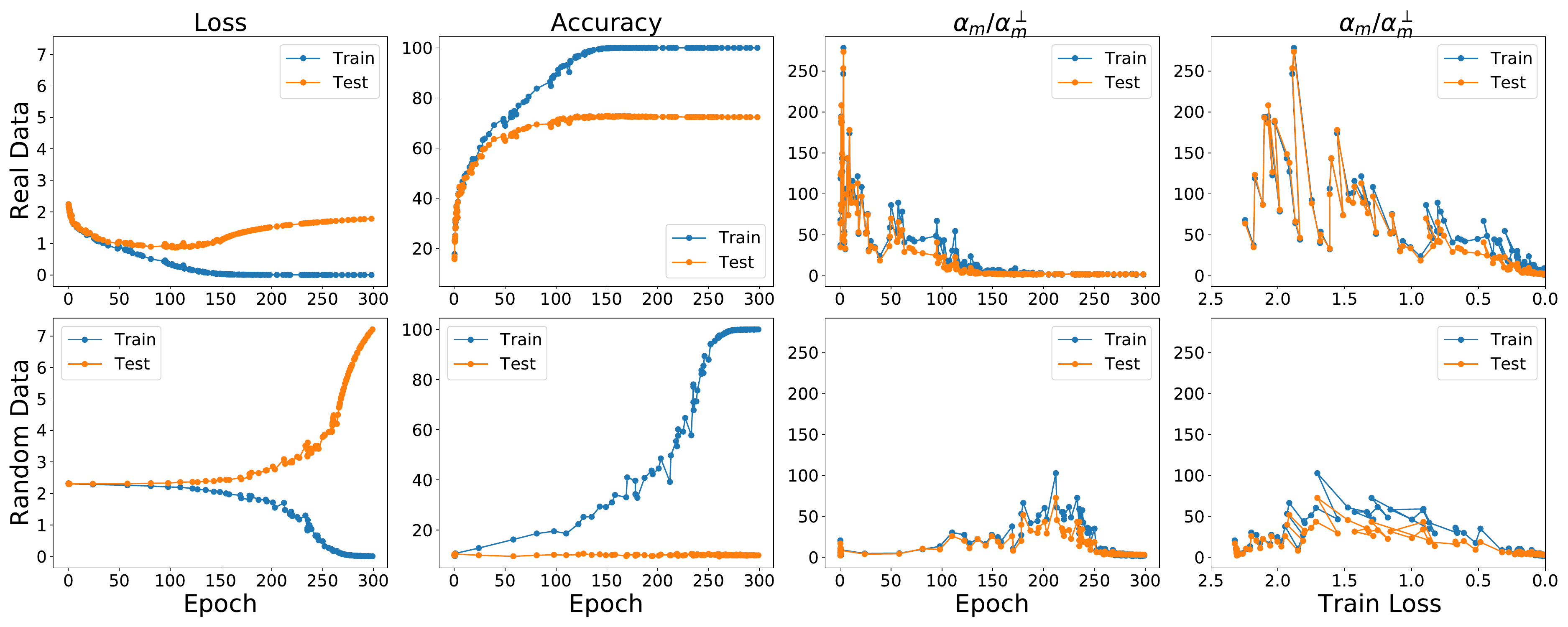}
\caption{The evolution of alignment of per-example gradients during training as measured with $\ralpha$ on samples of size $m=10{,}000$ on {\sc cifar10} dataset. The model is a AlexNet architecture without dropout layer. We train for 300 epochs with SGD using constant learning rate of 0.005. No regularizers were were used. Input images were rescaled to 224x224. Additional runs can be found in \Cref{fig:figure1_cifar_alexnet_direct_c}.}
\label{fig:figure1_cifar_alexnet_direct}
\end{figure}

\begin{figure}
\centering
\includegraphics[width=\textwidth]{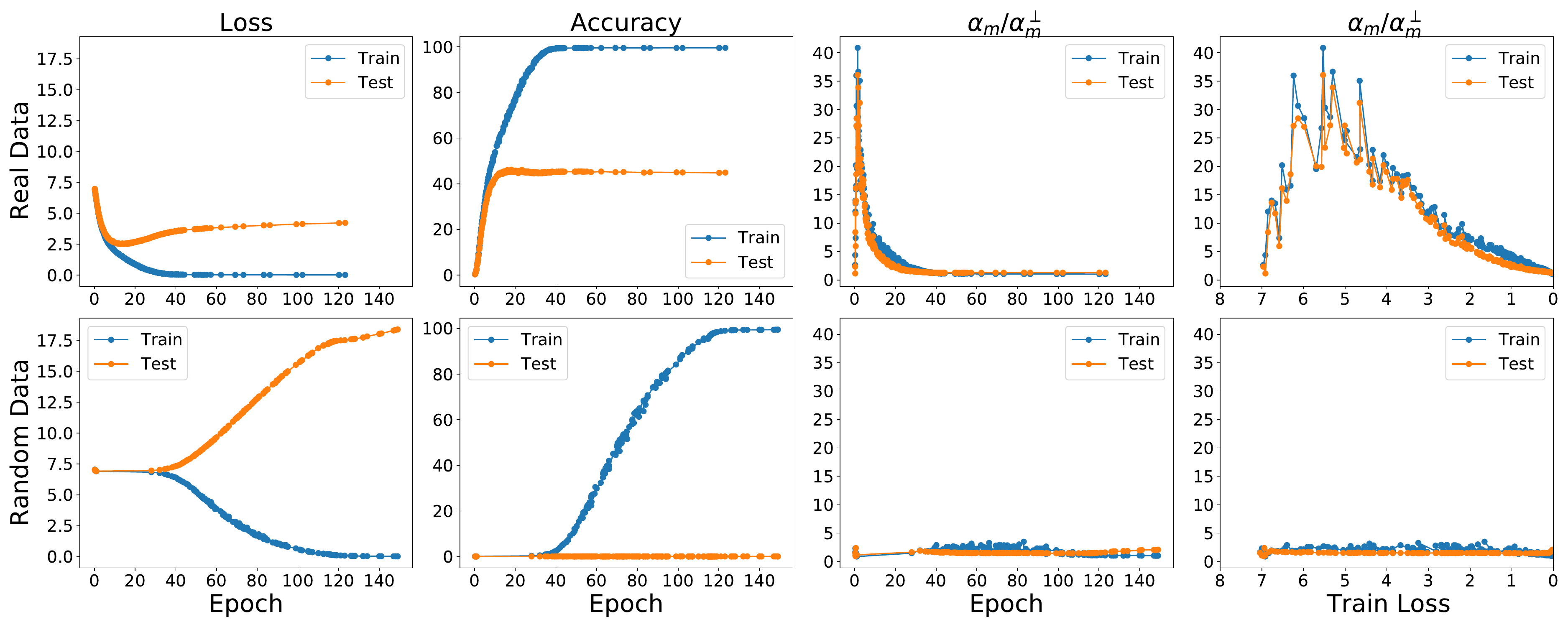}
\caption{The evolution of alignment of per-example gradients during training as measured with $\ralpha$ on samples of size $m=50{,}000$ on ImageNet dataset. Noise was added through labels randomization. The model is a Resnet-50. Additional runs can be found in \Cref{fig:figure1_resnet50_imagenet_random_labels_c}.}
\label{fig:figure1_resnet50_imagenet_random_labels_a}
\end{figure}

\begin{figure*}
\centering
\includegraphics[width=\textwidth]{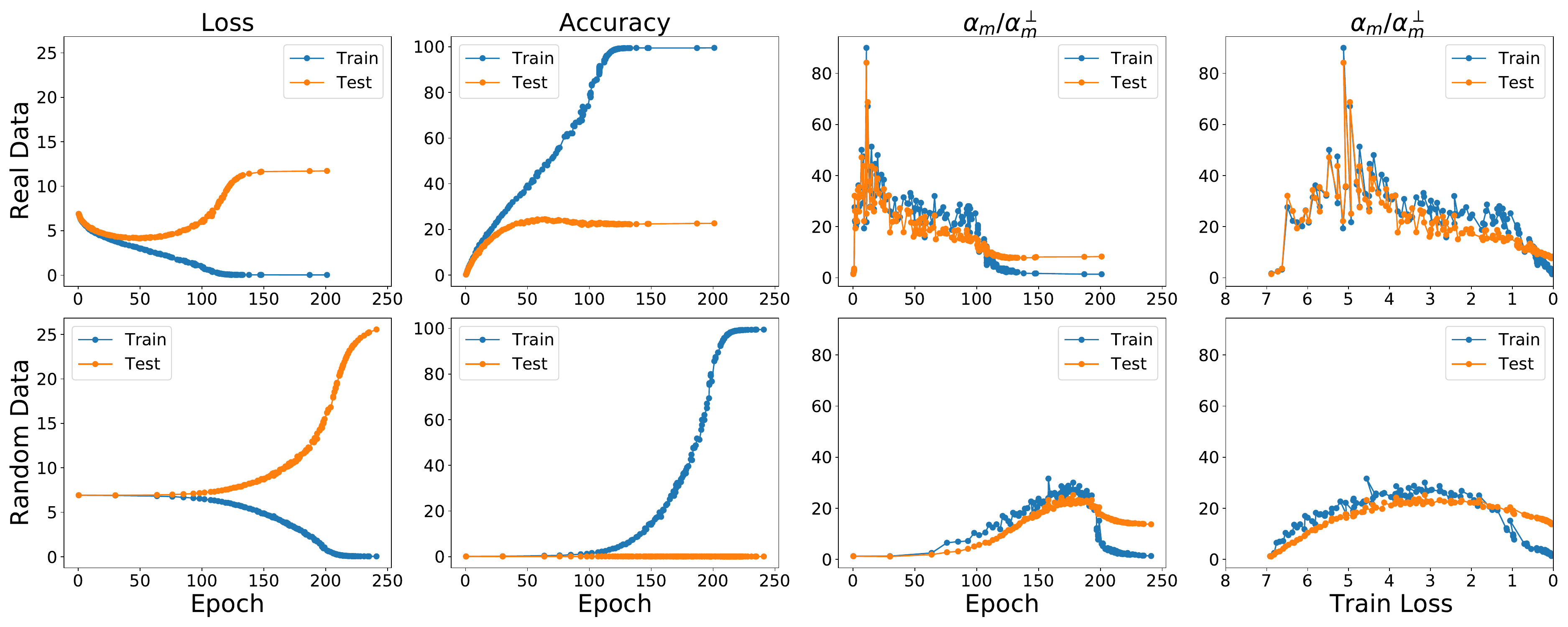}
\caption{The evolution of alignment of per-example gradients during training as measured with $\ralpha$ on samples of size $m=50{,}000$ on ImageNet dataset. Noise was added through labels randomization. The model is an AlexNet architecture without dropout layer. Additional runs can be found in \Cref{fig:figure1_alexnet_imagenet_random_labels_c}.}
\label{fig:figure1_alexnet_imagenet_random_labels_a}
\end{figure*}

\clearpage
\section{The Evolution of Coherence}
\label{app:evolution}

\begin{figure*}
\centering
\includegraphics[width=\textwidth]{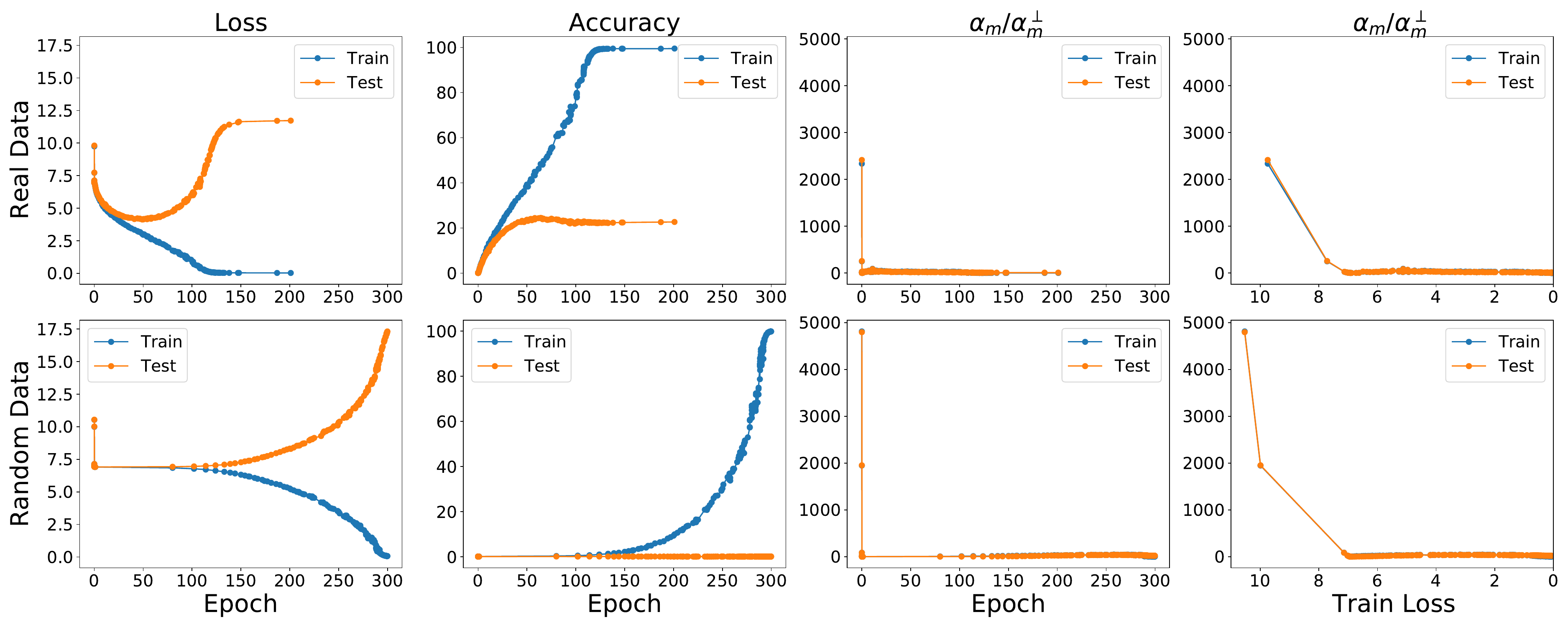}
\caption{In the first few steps of training $\ralpha$ can be very high due to imperfect initialization and in a sample of $50,000$ training examples, each example helps thousands of others, even in case of 100\% noise. In our usual plots of $\ralpha$, we omit this initial transient. This plot shows the transient when training AlexNet on ImageNet (\Cref{fig:chap1:alexnetimagenet_overall}).}
\label{fig:coherence_first_steps}
\end{figure*}

As mentioned in \Cref{sec:overview:measurement}, experiments across several architectures and datasets show a common pattern in how coherence as measured by $\ralpha$ (or equivalently, $\alpha$) changes during training.
Ignoring the initial transient in the first few steps of training (which we discuss a little later), coherence follows a roughly parabolic pattern:
It starts off at a low value, rises to a peak, and then comes back down to the orthogonal limit.%
\footnote{In rare cases, such as a fully connected network on {\sc mnist} where the signal is strong and easy to find, coherence starts off high.}
This happens regardless of whether the dataset is random or real, indicating that this is an optimization (as opposed to a generalization) effect.

This parabolic evolution of coherence is noisy with many minor fluctuations and deviations from the broad pattern. This suggests that the dynamics of coherence can be understood as being governed by two competing forces during training: once that causes coherence to increase (``creation") and one that causes coherence to decrease (``consumption").

Of the two, coherence consumption is easier to explain. If we assume that 
as examples get fitted during training, their gradients either become negligible, or orthogonal to the remaining (unfitted) examples, then
it can be shown that as the fraction of ``fitted examples" increases, $\alpha$ must decrease
For more details see \Cref{lem:zero} and \Cref{lem:two_orthogonal_sets} and associated examples in \Cref{app:alphafacts}.

Coherence creation, on the other hand, is harder to explain.
One way to reason about it is through the decomposition of the overall coherence in terms of the coherence of the constituent parts. 
As shown in \Cref{thm:weightedmean}, $\alpha$ as measured over the entire network (that is, over the entire gradient vector) is a convex combination of the $\alpha_i$ of its constituent parts (projections of the gradient vector). 
In particular, we have that
\[
\alpha = \sum_i f_i\,\alpha_i
\]
where $\alpha_i$ is the coherence of the $i$th parameter of the network, and 
$0 \leq f_i \leq 1$, and $\sum_i f_i = 1$.
Given this decomposition, there are two (non mutually exclusive) ways in which the overall coherence $\alpha$ can increase during training: 
(1) there is a shift in weight from a parameter $i$ with low $\alpha_i$ to one with higher $\alpha_i$, or 
(2) $\alpha_i$ for some parameter $i$ increases.

\Cref{ex:two-deep} provides a good example for first. Observe that the $\alpha$ for each $u^{(i)}$ and $w^{(i)}$ is constant during training, but the overall $\alpha$ increases. This happens due to shift in weight from the other idiosyncratic components to $u^{(6)}$ and $w^{(6)}$ due to amplification.
The following example demonstrates the second, and in fact demonstrates how $\alpha$ can follow a parabolic trajectory.

\begin{example}[Single parameter trajectory]
Consider the task of fitting a linear model $y = w \cdot x$ with a single parameter $w \in {\bb R}$ to the $m = 2$ points $(x_1, y_1)$ and $(x_2, y_2)$ under the usual mean squared loss.
It is easy to check that the point of minimum $\alpha_m$ is when the per-example gradients are equal in magnitude and opposite in direction, that is, at $w^{*} = (y_1 x_1 + y_2 x_2) / (x_1^2 + x_2^2)$. (Note that this is also the $w$ that optimizes the average loss.)
The point of maximum $\alpha_m$ is when the per-example gradients are equal in both magnitude and direction, that is, at $w^{\dagger} = (y_1 x_1 - y_2 x_2) / (x_1^2 - x_2^2)$.

Now let $(x_1, y_1) = (2, 3)$ and $(x_2, y_2) = (1, 9)$. Therefore, $w^{*} = 3$ and $w^{\dagger} = -1$. If we plot $\alpha_m$ as a function of $w$ for this training set, we see the following:
\begin{center}
\phantom{M}\\
\includegraphics[width=0.45\textwidth]{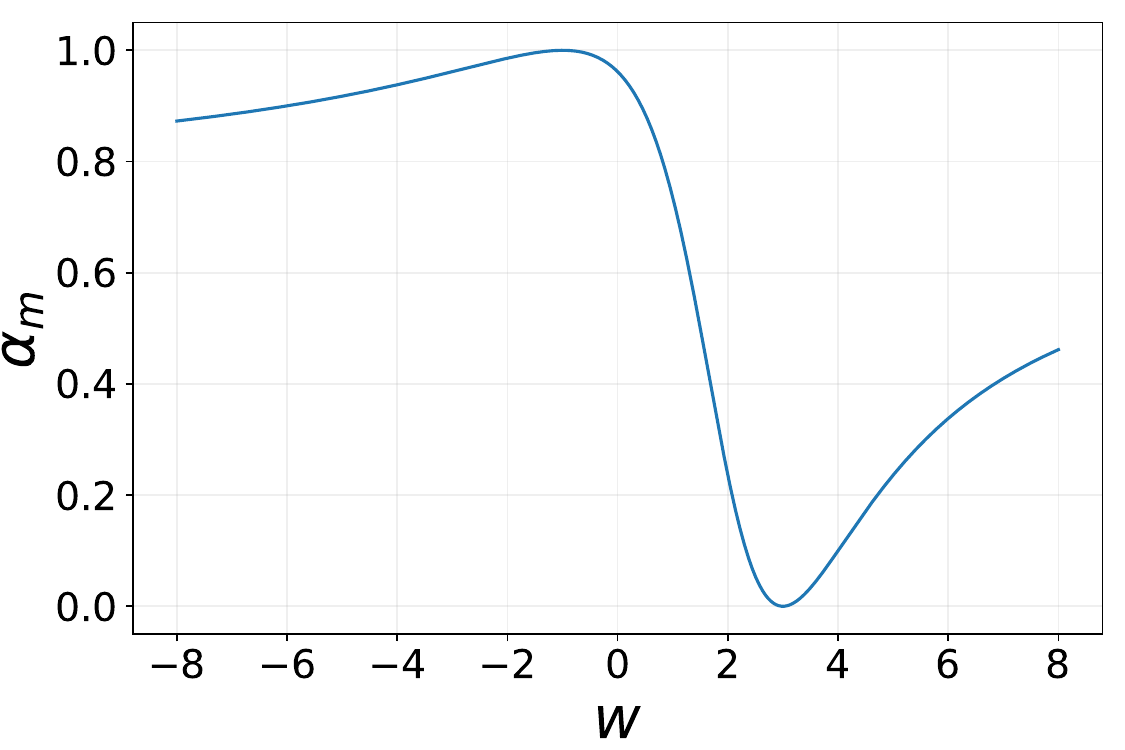}
\phantom{M}\\
\end{center}

If gradient descent started is initialized at a point to the left of $w^{\dagger}$, along its trajectory, we would observe an initial increase in $\alpha_m$ as the gradients of the two examples equalized in magnitude.
Once they are equal and $\alpha_m$ is 1, coherence begins to decrease until the two gradients are equal and opposite at the final solution $w^{*}$ and $\alpha_m$ is 0.
Note that the overall trajectory of $\alpha_m$ is roughly parabolic.

\qed
\end{example}

\newbold{Coherence in first few steps of training.} 
As mentioned in \Cref{sec:overview:measurement}, very early in training,
$\ralpha$ can be very high even for random data due to imperfect initialization. All the training examples are coordinated in moving the network to a more reasonable point in parameter space. As may be expected from our theory, this movement generalizes well: the test loss decreases in concert with training loss in this period. Rapid changes to the network early in training is well documented (see for example, the need for learning rate warmup in \citet{He16} and \citet{Goyal17}).

\Cref{fig:coherence_first_steps} shows an example of high coherence in the first few steps of training for AlexNet (in the same experiment as \Cref{fig:chap1:alexnetimagenet_overall}). We observe that transient in the first few steps when the loss is above what might be expected from an uniform distribution, that is, when the loss is above ${\rm log}(1000) \approx 6.9$ (recall that ImageNet has 1000 labels).

\clearpage
\section{Experimental Details of Easy and Hard Examples}
\label{app:easyhard}

\newbold{Hardness (difficulty) of examples.}
We estimate the difficulty of each of the 1,281,167 examples from the ImageNet training dataset by training a ResNet-50 to 50\% training accuracy, and recording for each example, whether it was correctly classified or not. We repeat this 8 times from different random initializations,
and assign a {\em hardness score} to each example  
by counting how many times it failed to be correctly classified.
A score of 0 indicates that the example was correctly classified on all 8 runs (a ``super easy" example) whereas a score of 8 indicates it was never correctly classified (``super hard"). 
The distribution of the hardness score is as follows:

\thoughtsep

\begin{center}
\begin{tabular}{l|r}
hardness score & \multicolumn{1}{c}{count} \\
\hline
8 (super hard) & 248,551 \\
7 & 121,290 \\
6 & 101,469 \\
5 & 95,780 \\
4 & 97,069 \\
3 & 103,182 \\
2 & 118,701 \\
1 & 142,939 \\
0 (super easy) & 252,186 \\
\end{tabular}
\end{center}

\thoughtsep

We construct the Easy ImageNet dataset described in Section \ref{sec:overview:easyhard} by randomly picking 550K examples that have a score of 3 or lower, and randomly allocating them into  train (500K examples) and test (50K examples) subsets. Hard ImageNet is constructed similarly by randomly picking 550K examples that have a score of 5 or greater. (Examples with a score of 4 are discarded.)

\newbold{In situ measurements.}
We can track coherence on examples of different levels of difficulty during regular ImageNet training.%
\footnote{One issue with tracking coherence on two different subsets of the training set is that the batch normalization statistics in each subset set may be different than those in the overall training set since the distribution of examples is different, which may impact the coherence measurement.}
\Cref{fig:chap1:supereasyfigure1} shows the coherence as measured by $\ralpha$ for 50K training samples of super easy and super hard examples during regular ImageNet training. As expected from our theory, the peak coherence of super easy examples is much higher than those of super hard examples.

\begin{figure*}[t]
\centering
\includegraphics[width=\textwidth]{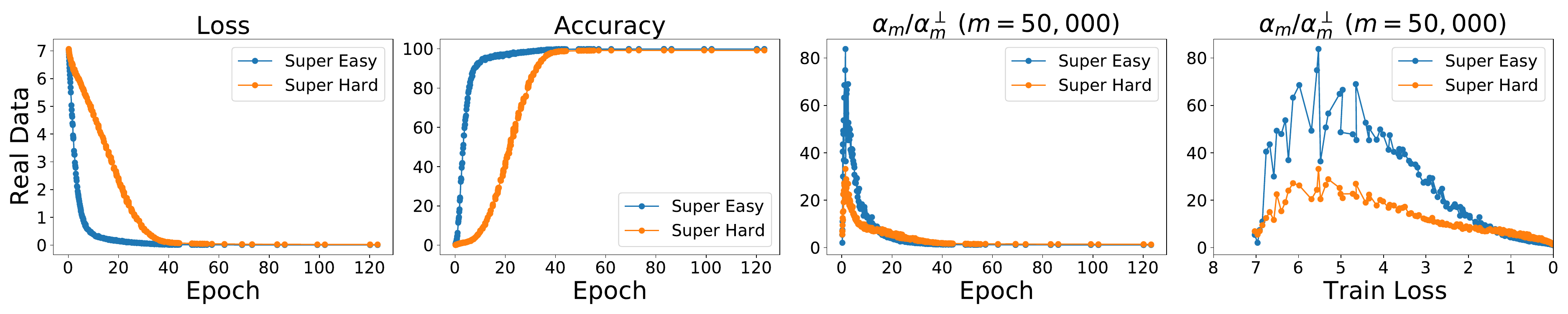}
\vspace{0.1cm}
\caption{Loss, accuracy, and $\ralpha$ on 50K samples of super easy and super hard examples measured during (normal) ImageNet training. As expected, the super easy examples have significantly higher coherence than super hard examples. The network used is a ResNet-50.}
\vspace{1cm}
\label{fig:chap1:supereasyfigure1}
\end{figure*}

\begin{figure*}[ht]
\centering
\includegraphics[width=\textwidth]{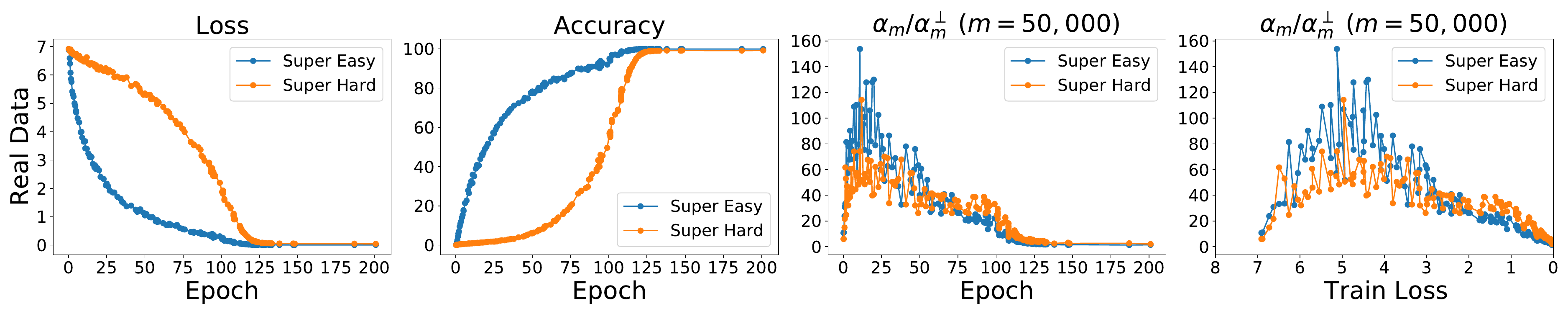}
\vspace{0.1cm}
\caption{Loss, accuracy, and $\ralpha$ on 50K samples of super easy and super hard examples measured during (normal) ImageNet training. This time the network being measured is an AlexNet although the hardness of the examples was measured on ResNet-50. The examples that are easy for ResNet-50 are also easy for AlexNet, suggesting that hardness is not very closely tied to the specific architecture used.}
\vspace{1cm}
\label{fig:chap1:supereasyfigure1alexnet}
\end{figure*}

Whether an ImageNet example is learned early in training or not does not appear to be architecture specific.
\Cref{fig:chap1:supereasyfigure1alexnet} tracks the loss, accuracy, and $\ralpha$ for (samples of) the super easy and super hard examples (from ResNet-50) during {\em AlexNet} training on ImageNet. 
We see that the super hard examples are learned later than the super easy examples by AlexNet as well, and the coherence of super easy examples is higher than that of the super hard examples. But the difference in $\ralpha$ is less stark for AlexNet than for ResNet-50. This mirrors the situation with real and random data analyzed in \Cref{sec:overview:measurement}, and a more granular per-layer plot of $\ralpha$ may reveal more significant per-layer differences.

\newbold{Do intrinsic easy patterns exist?}
Since AlexNet seems to find the same examples to be easy (or hard) as ResNet-50, one may wonder if there is something intrinsic to an example that controls its difficulty. 

To test this we train AlexNet on a training set containing either a single super easy example or a single super hard example, and count how many steps it takes for the training loss to fall below 0.05.%
\footnote{Since AlexNet does not use batch normalization, we can actually train with a single example.}
We repeat this for 500 super easy and 500 super hard examples picked at random.
Learning a single super hard example took on average 40.76 training steps%
\footnote{We use a very low learning rate of 0.00001 for this experiment.}
which was less than the average for a super easy example which was 43.12 steps! However, the difference between the two classes was not statistically significant.

This appears to suggest that:
\boldcenter{Hardness is not an intrinsic property of examples, but \underline{only} emerges via interaction with other examples in the training set.
}
This is a stronger statement than the one starting \Cref{sec:overview:easyhard} that  
``easy  examples  are  those  examples  that  have  a  lot  in  common  with  other examples where commonality is measured by the alignment of the gradients of the examples" since the latter does not exclude the additional possibility of something intrinsic to examples that controls their hardness.
The above experiment seems to rule out this possibility.

\section{The Under-Parameterized Case: A Preliminary Look}
\label{app:linear-regression}

\newbold{Coherence.} 
Although our theory has been motivated by the over-parameterized case, one may wonder what coherence looks like in an under-parameterized setting.%
\footnote{Note that under-parameterized systems may have good generalization, that is small gap between test and training loss, even if coherence is low, and this is qualitatively reflected in the $1/m$ dependence of the bound in \Cref{thm:cogen}.}
To that end, in this section, we take a look at coherence in the case of a 2D linear regression example.
Consider a sample of $m = 100$ points $(x, y)$ where $x$ is chosen uniformly at random from $[-2.0, 2.0]$, 
$\epsilon$ from $\mathcal{N}(0, \sigma^2)$,
and $y$ is given by
\[
y = 2 x + 3 + \epsilon.
\]
We fit the linear model $y = \beta_1 x + \beta_0$ to this sample using full-batch gradient descent and measure the alignment of per-example gradients $\ralpha$ on the sample:
\begin{center}
\phantom{M}\\
\includegraphics[width=0.9\textwidth]{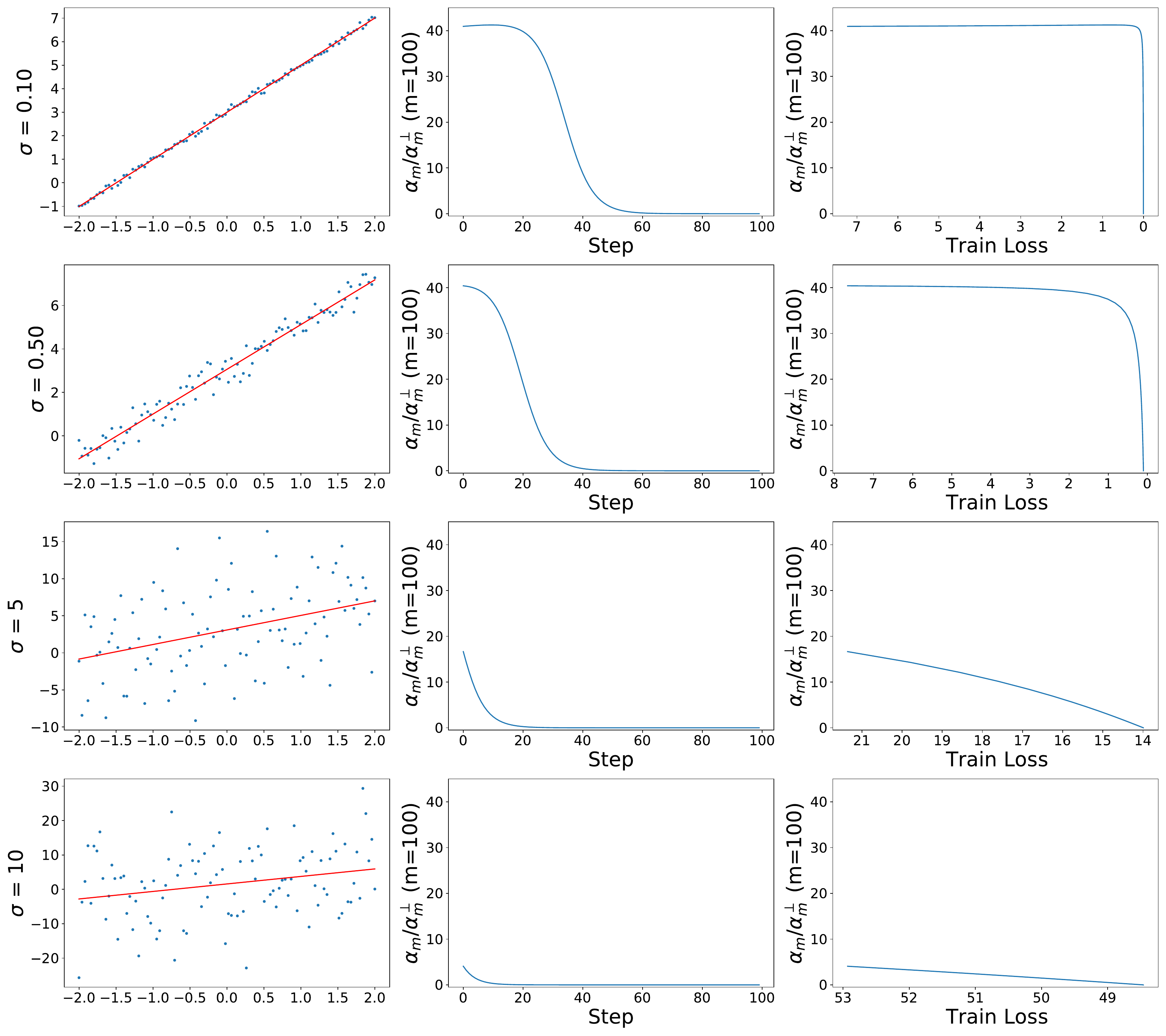}
\phantom{M}\\
\end{center}

Each row above corresponds to a different value of $\sigma$. The first panel shows a plot of the training sample and the line learned by gradient descent at the end of training. The second and third panel show $\ralpha$ as a function of the number of training steps taken and the training loss respectively.
Our main observation is that coherence depends on the amount of noise, and for low noise levels can be surprisingly high. 
For example, for $\sigma = 0.10$, each training example in our sample of 100 helps 40 other examples whereas for $\sigma = 10$, each example helps at most 5 examples.

\newbold{Suppressing Weak Directions.}
The techniques described in \Cref{sec:overview:suppress} can be useful in  under-parameterized situations as well. 
To illustrate this, we introduce outliers to our dataset by choosing 10 points at random from the points that have $x \ge 1.0$ and setting their corresponding $y$ coordinates to -1.0.
We train two models on this new dataset: one with gradient descent (GD) and another one with winsorized gradient descent (WGD) with $c = 10$. 
The following plots show the dataset and the regression line at the end of GD and WGD (the outliers are shown in orange):
\begin{center}
\phantom{M}\\    
\includegraphics[width=0.9\textwidth]{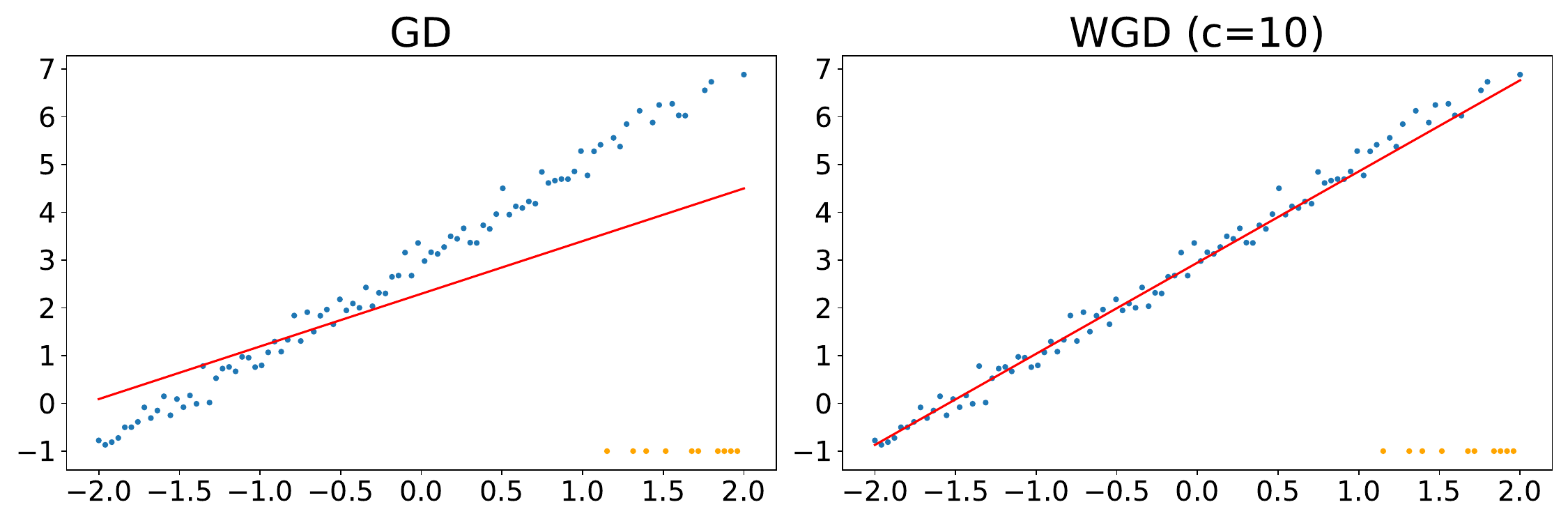}
\phantom{M}\\    
\end{center}
We see that the model trained with WGD effectively ignores the outliers, producing a solution better fitting the rest of the data. 

\section{Additional Data}
\label{app:additional-data}

This section collects additional plots for variations or multiple runs of experiments previously discussed.

\begin{figure}
\centering
\includegraphics[width=\textwidth]{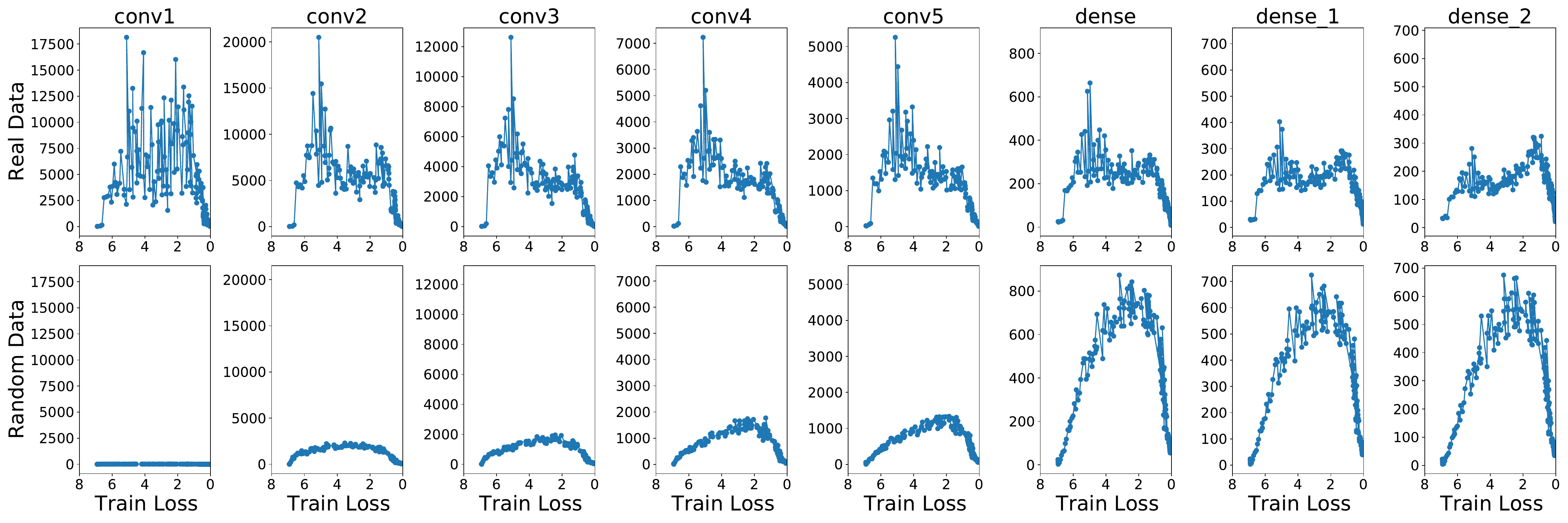}
\caption{\Cref{fig:chap1:alexnetimagenet_layer} but with $\ralpha$ measured on the {\em full} ImageNet training set ($m = $ 1,281,167) using the imputed method.}
\label{fig:alexnet_full_data_layers}
\end{figure}

\begin{figure}
\centering
\includegraphics[width=\textwidth]{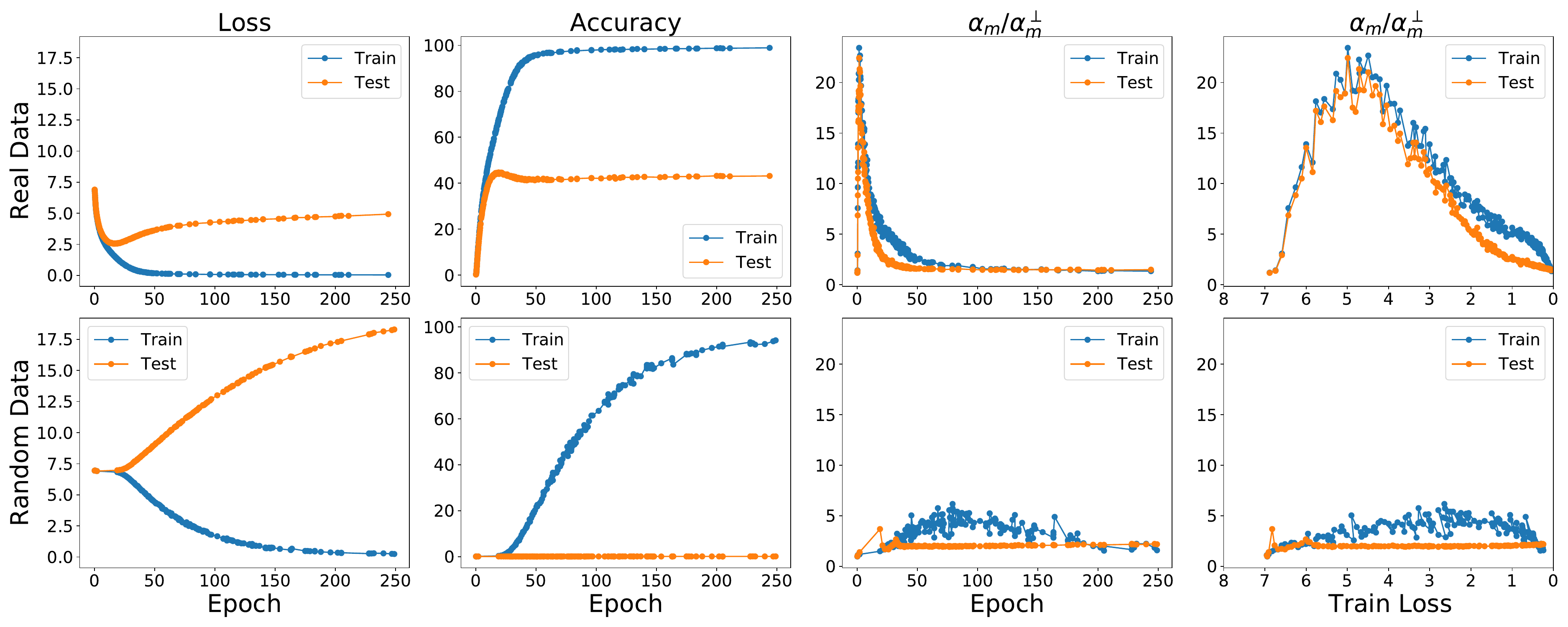}
\caption{Training setup identical with \Cref{fig:figure1_resnet50_imagenet_random_labels_a}, but minibatch gradients were normalized during training and (imputed) $\ralpha$ calculation to have the length of 1.0. Observe that $\ralpha$ on real data is still much higher than on random.}
\label{fig:ResNet-50_unit_gradient}
\end{figure}

\begin{figure}
\centering
\includegraphics[width=\textwidth]{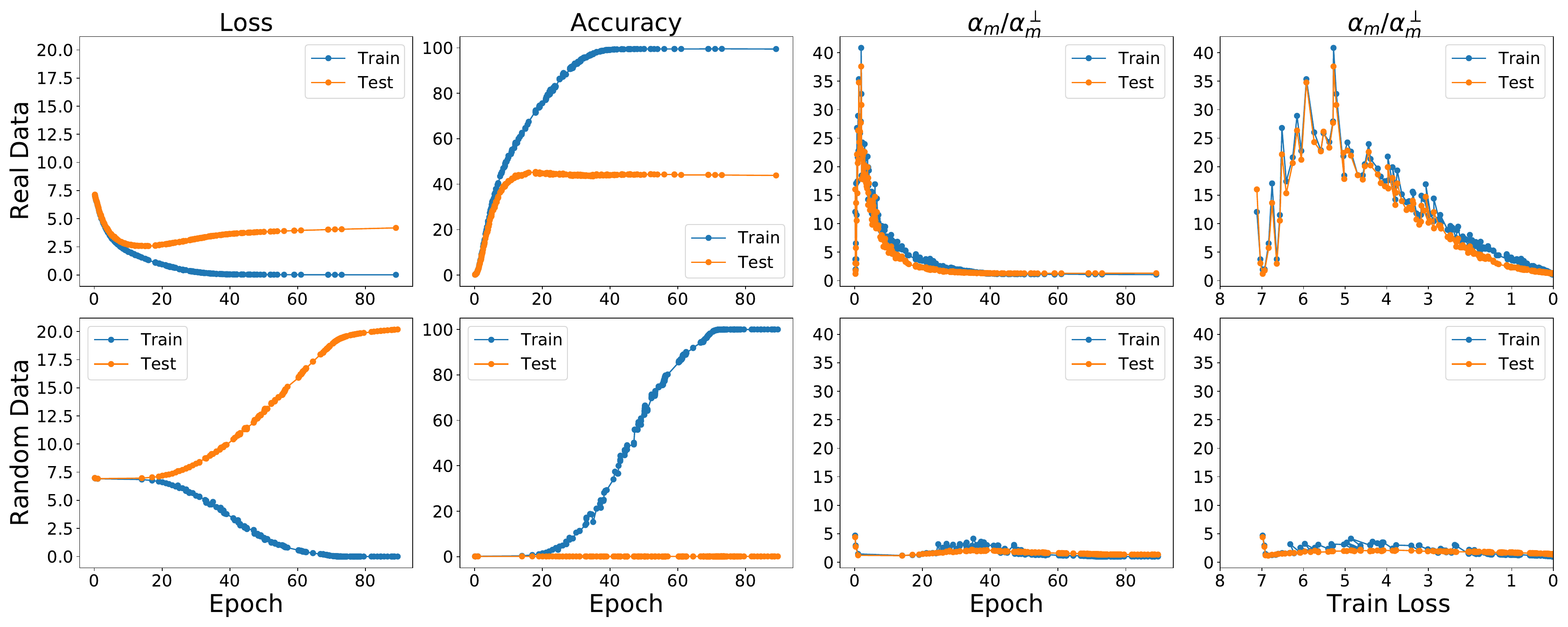}
(a)
\includegraphics[width=\textwidth]{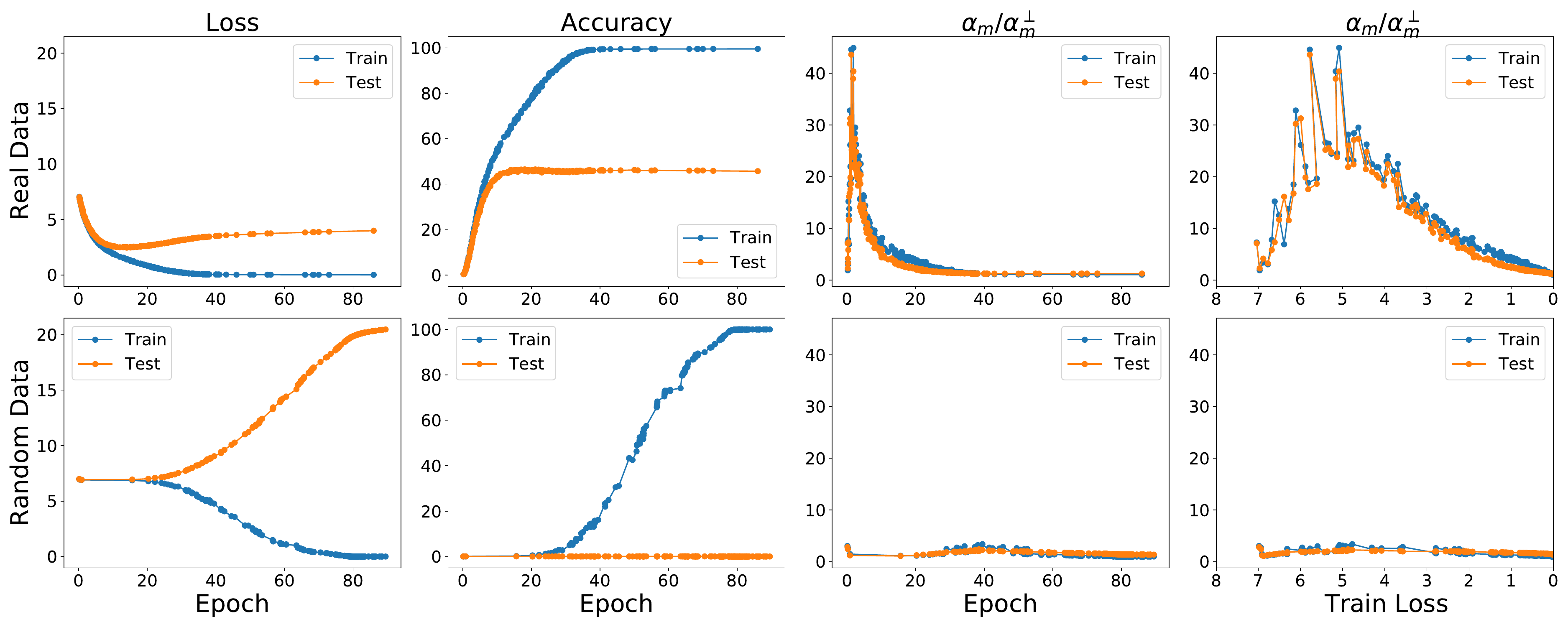}
(b)
\caption{Two additional runs of the experiment in \Cref{fig:chap1:ResNet-50imagenet}.}
\label{fig:figure1_resnet50_run_c}
\end{figure}

\begin{figure}
\centering
\includegraphics[width=\textwidth]{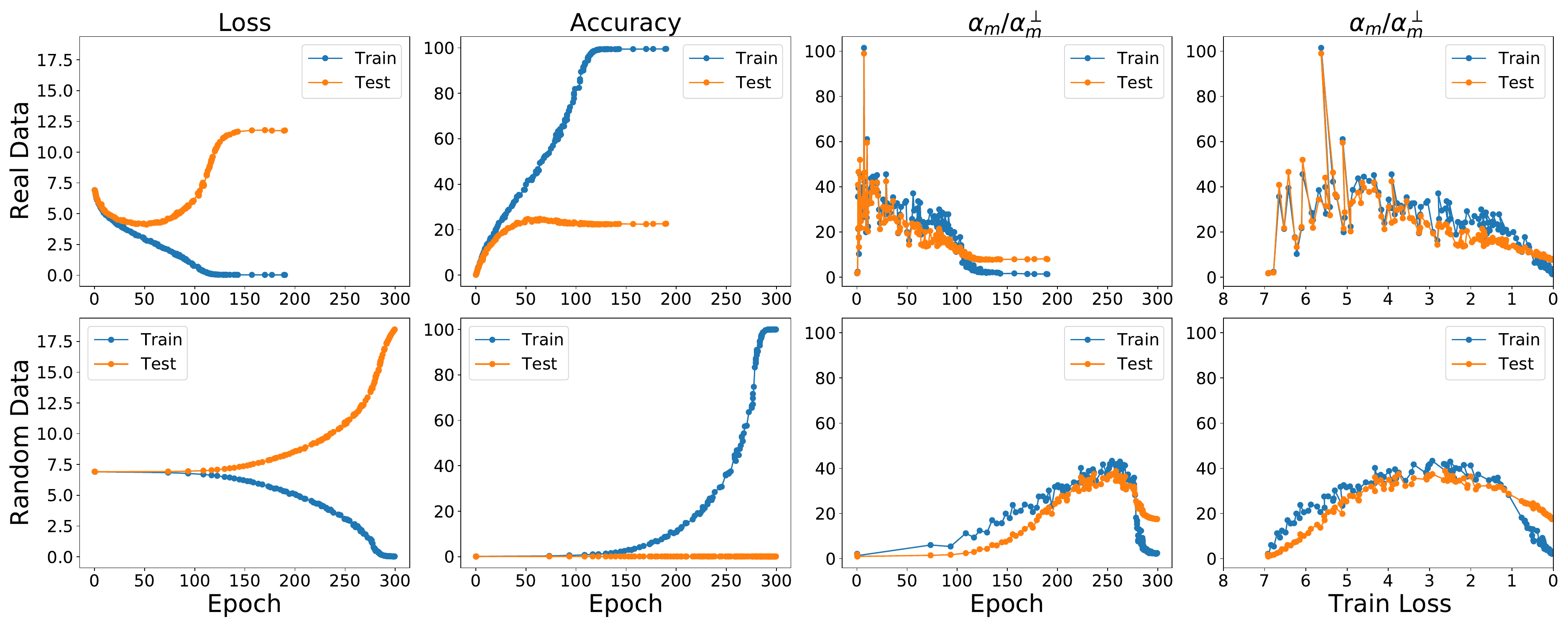}
(a)

\includegraphics[width=\textwidth]{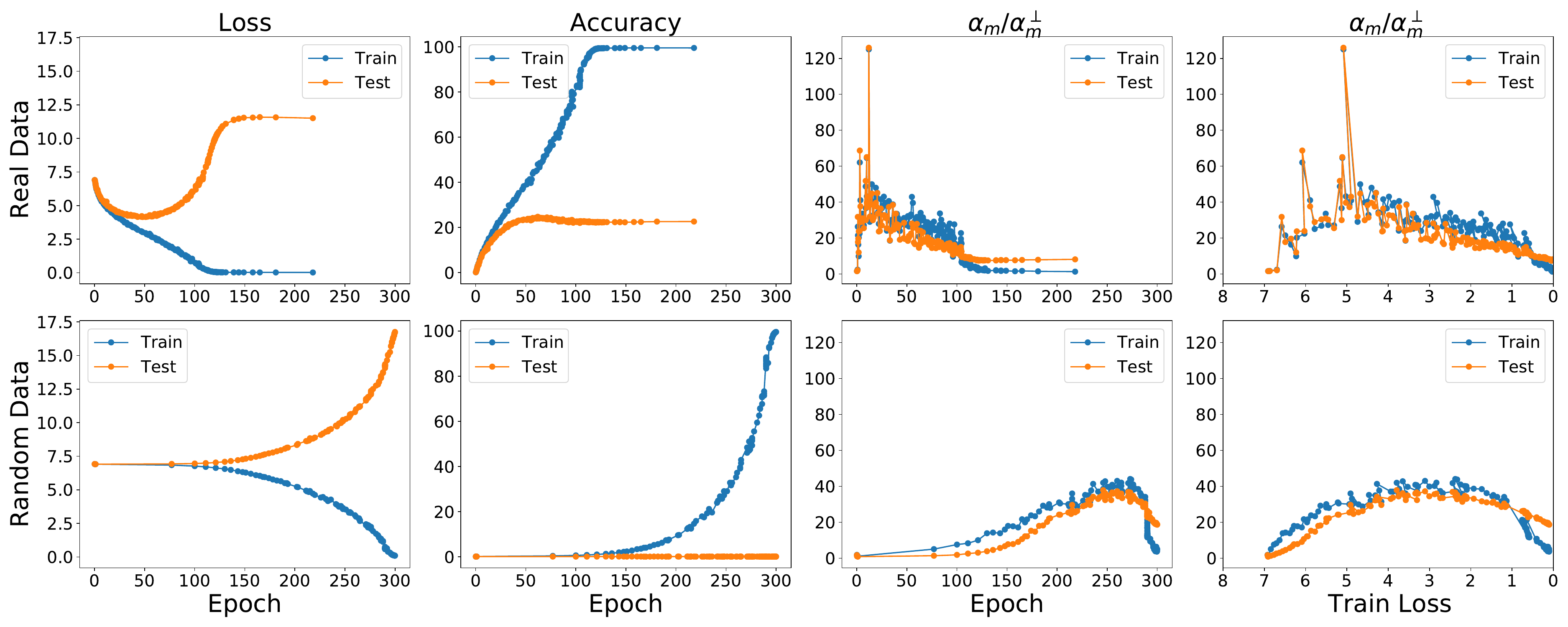}
(b)
\caption{Two additional runs of the experiment in \Cref{fig:chap1:alexnetimagenet_overall}.}
\label{fig:figure1_alexnet_run_c}
\end{figure}

\begin{figure*}
\centering
\includegraphics[width=\textwidth]{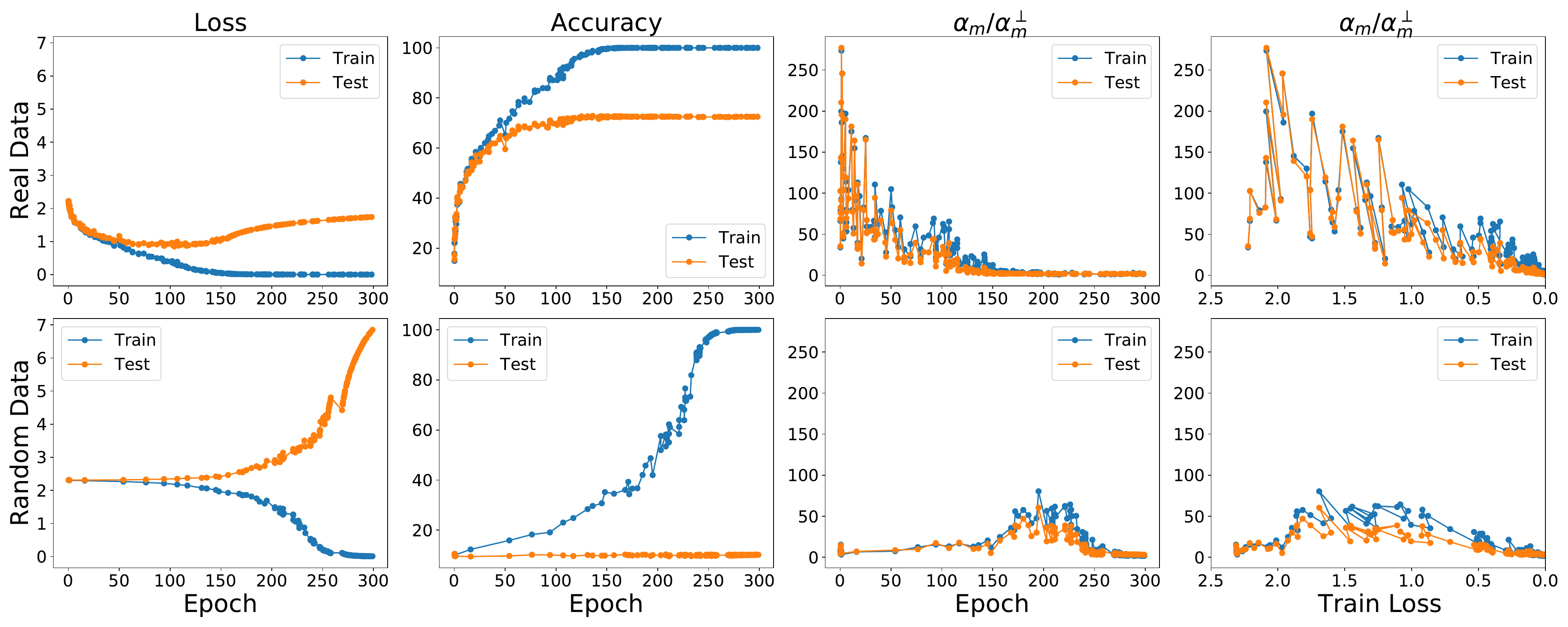}
(a)
\includegraphics[width=\textwidth]{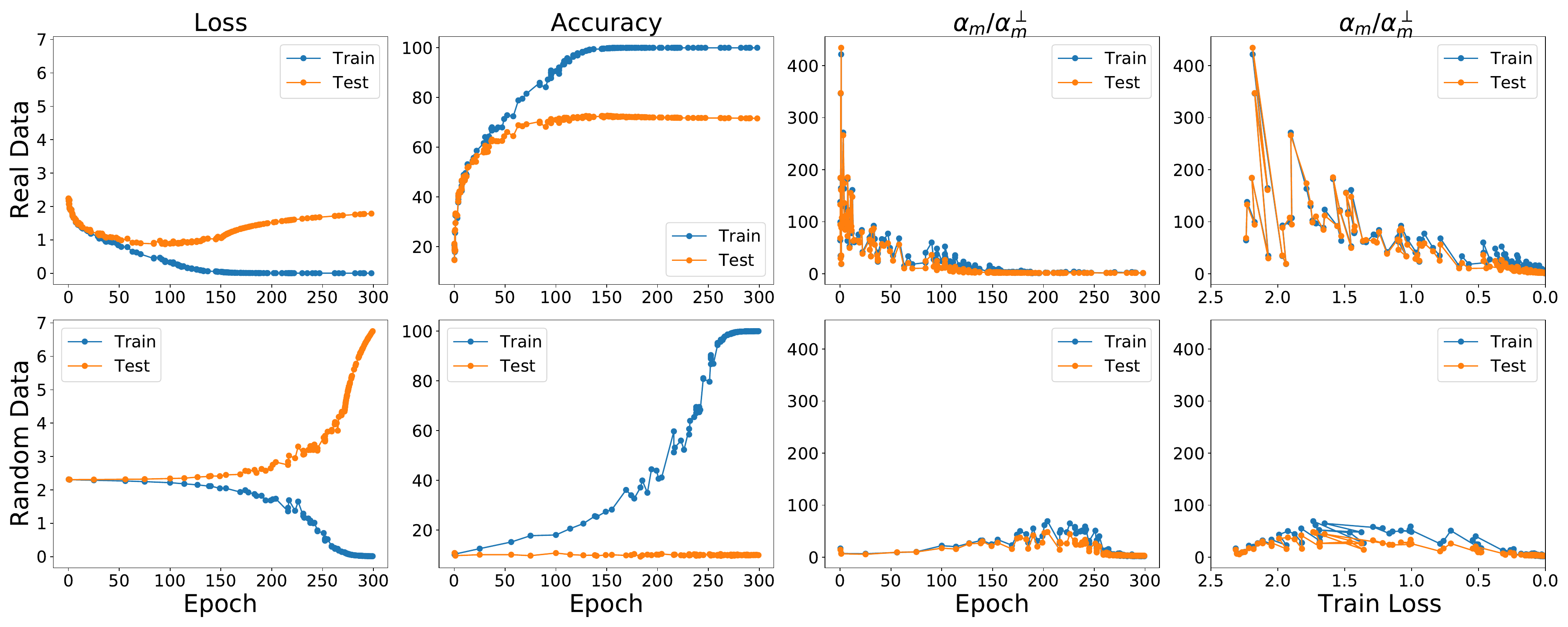}
(b)
\caption{Two additional runs of the experiment in \Cref{fig:figure1_cifar_alexnet_direct}.}
\label{fig:figure1_cifar_alexnet_direct_c}
\end{figure*}

\begin{figure*}
\centering
\includegraphics[width=\textwidth]{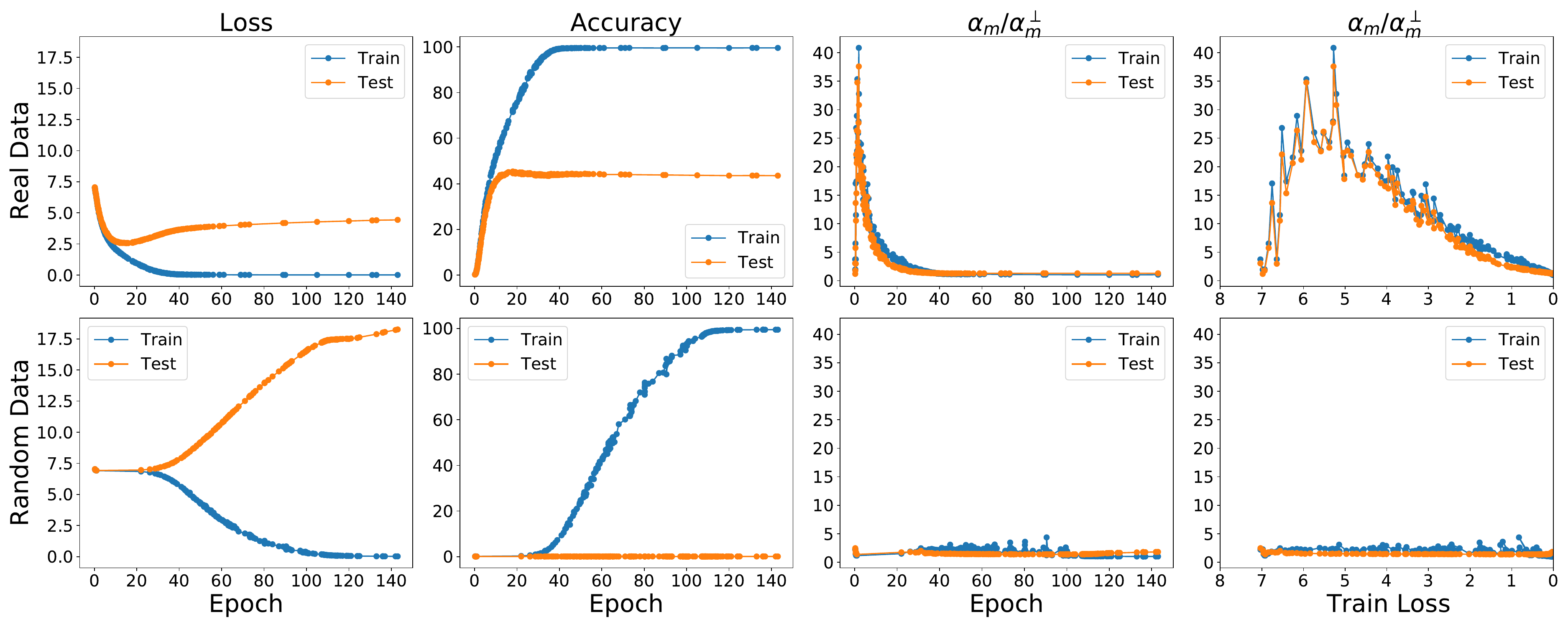}
(a)
\includegraphics[width=\textwidth]{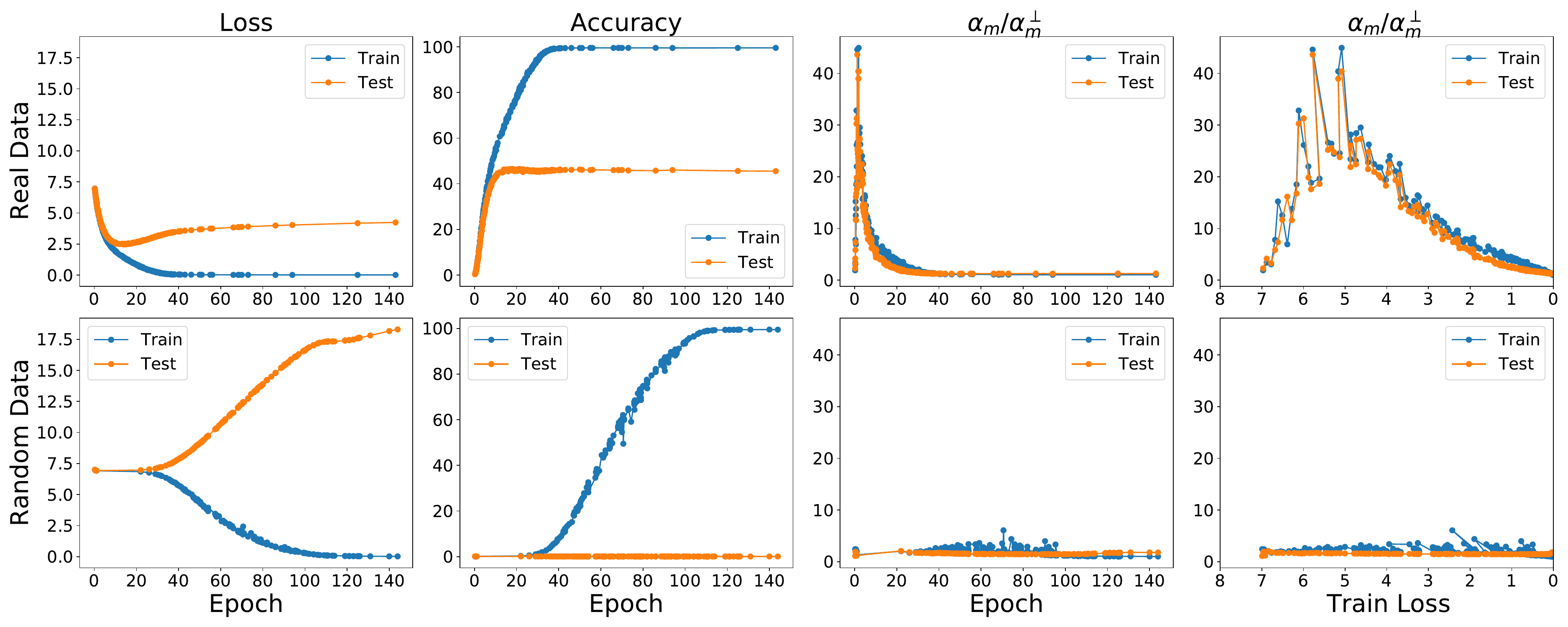}
(b)
\caption{Two additional runs of the experiment in \Cref{fig:figure1_resnet50_imagenet_random_labels_a}.}
\label{fig:figure1_resnet50_imagenet_random_labels_c}
\end{figure*}

\begin{figure*}
\centering
\includegraphics[width=\textwidth]{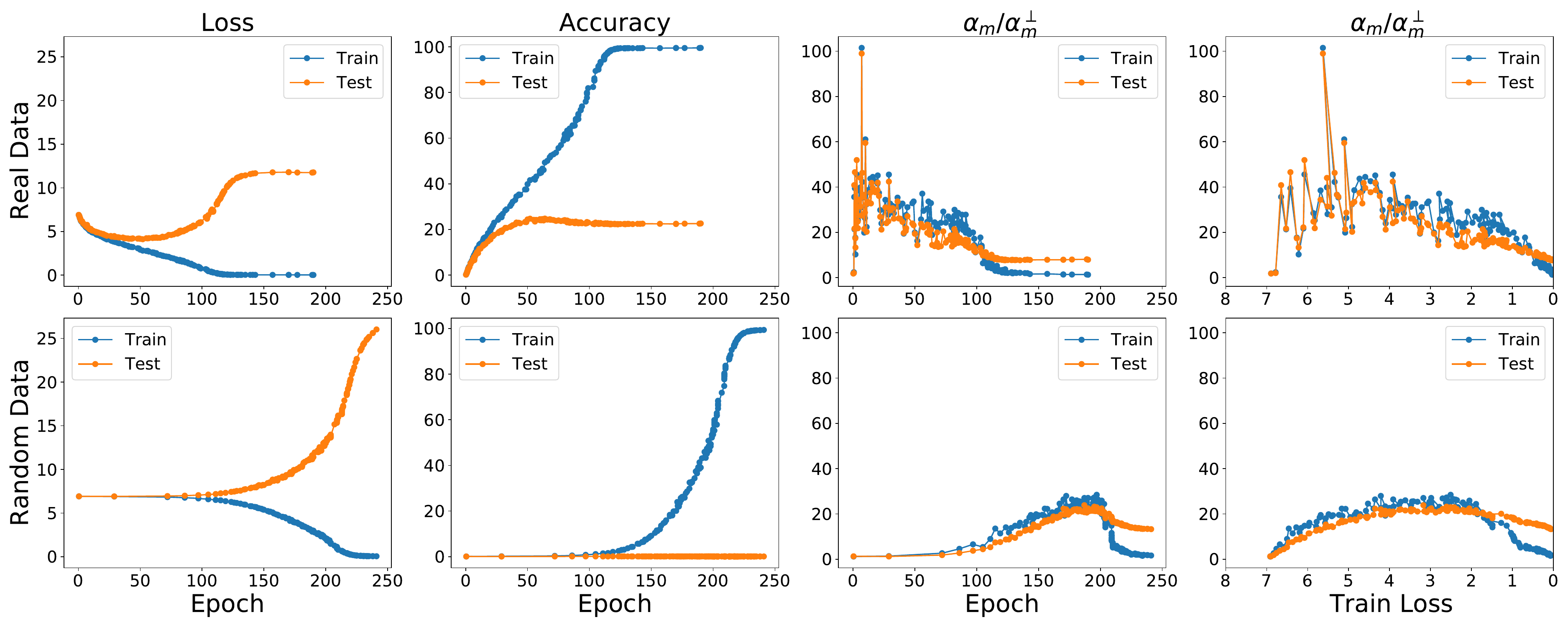}
(a)
\includegraphics[width=\textwidth]{plots/additional_runs/figure1_resnet50_imagenet_random_labels_c.pdf}
(b)
\caption{Two additional runs of the experiment in \Cref{fig:figure1_alexnet_imagenet_random_labels_a}.}
\label{fig:figure1_alexnet_imagenet_random_labels_c}
\end{figure*}

\begin{figure*}
\centering
\includegraphics[width=\textwidth]{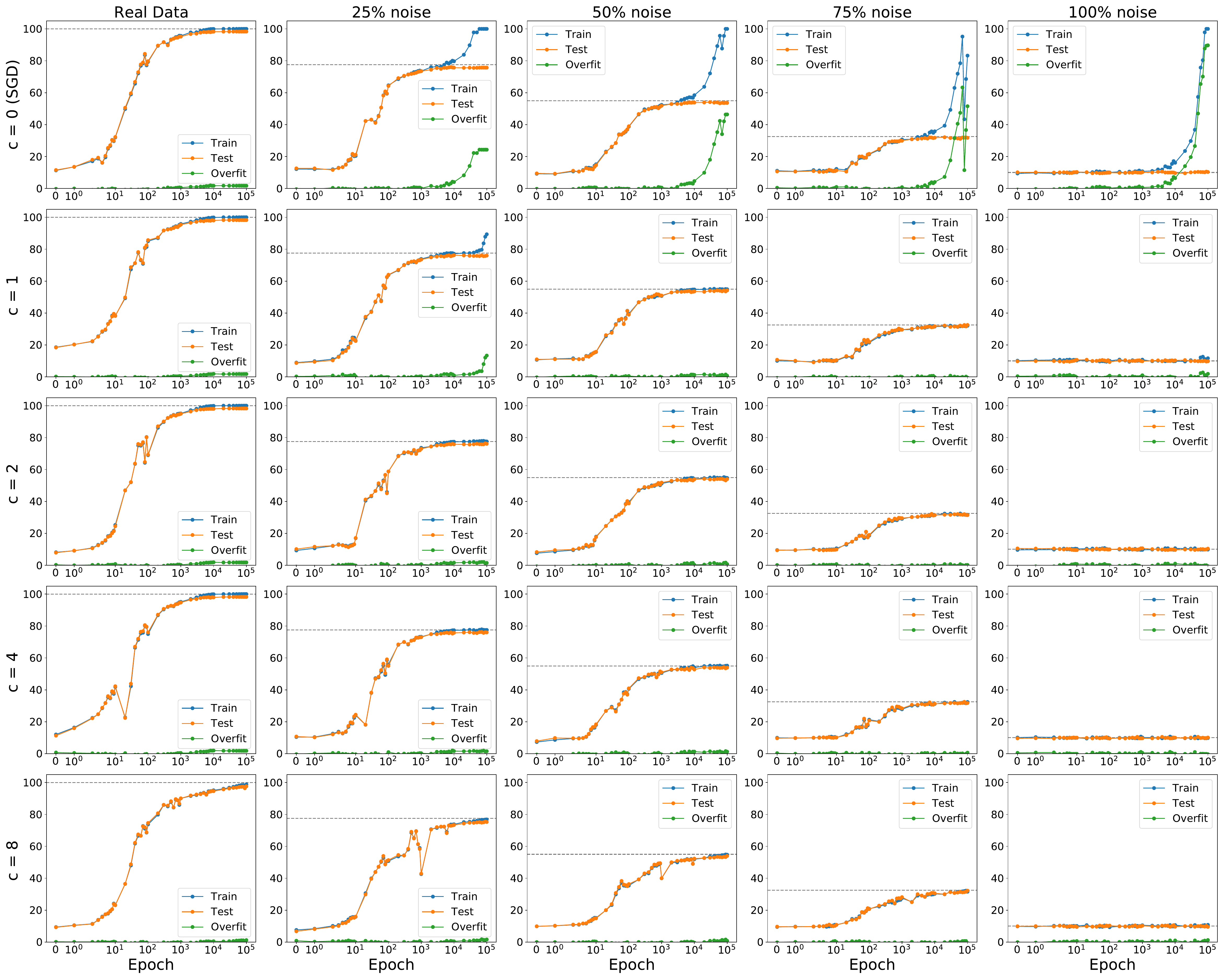}
\caption{Winsorization on {\sc mnist} with random pixels. Each column represents a dataset with different noise level, e.g. the third column shows dataset with half of the examples replaced with Gaussian noise. See \Cref{fig:chap1:iclrwinsorizationmnist} for experiments with random labels.}
\vspace{1cm}
\label{fig:overview:winsorization_gaussian}
\end{figure*}

\begin{figure*}
\centering
\includegraphics[width=\textwidth]{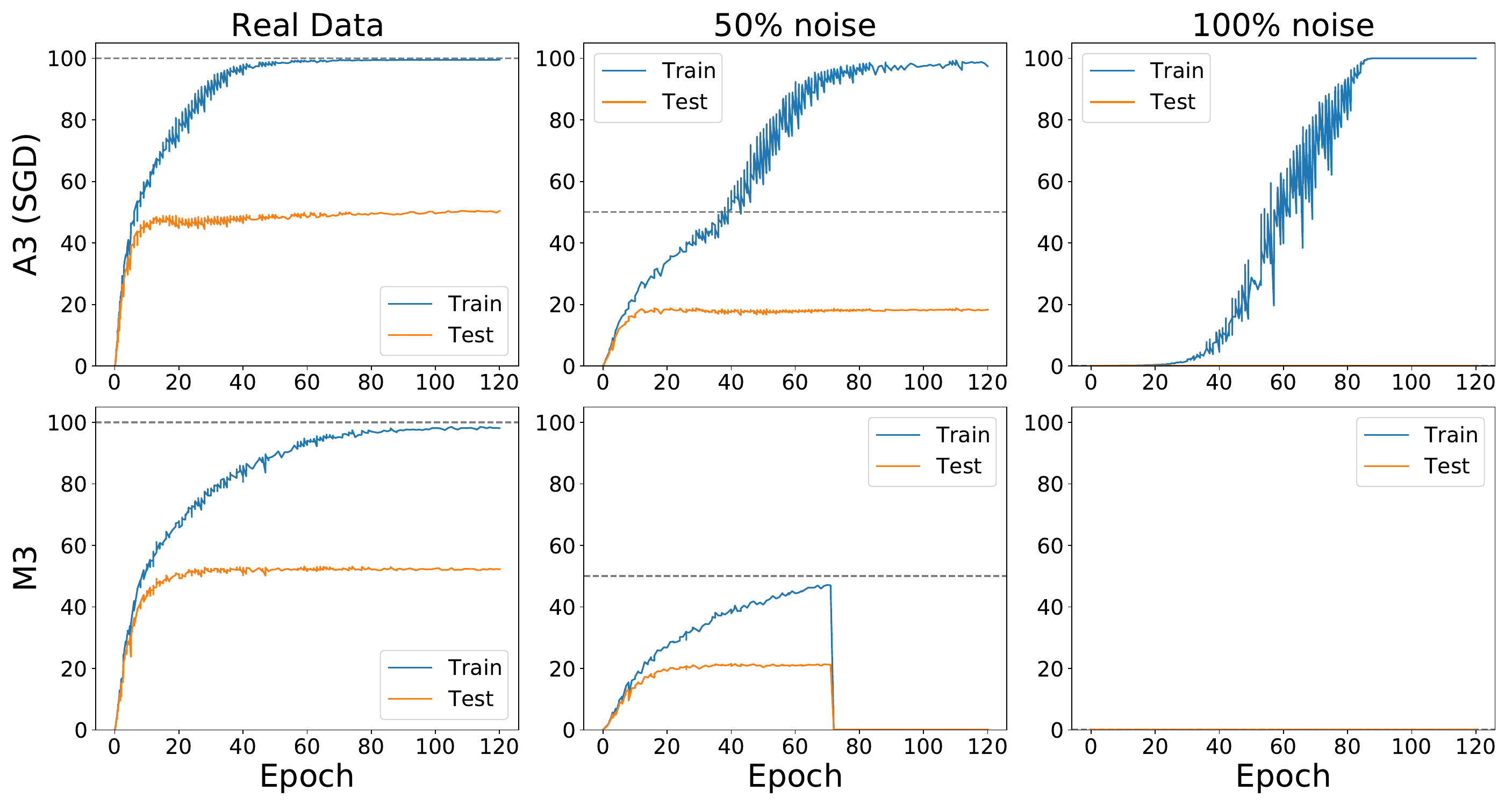}
\caption{Unstable M3 in 50\% noise case, learning rate of 0.1. See \Cref{fig:overview:resnet_imagenet_a3_vs_m3} for a stable run with a different learning rate.}
\vspace{1cm}
\label{fig:overview:resnet_imagenet_a3_vs_m3_unstable}
\end{figure*}

\begin{figure*}
\centering
\includegraphics[width=\textwidth]{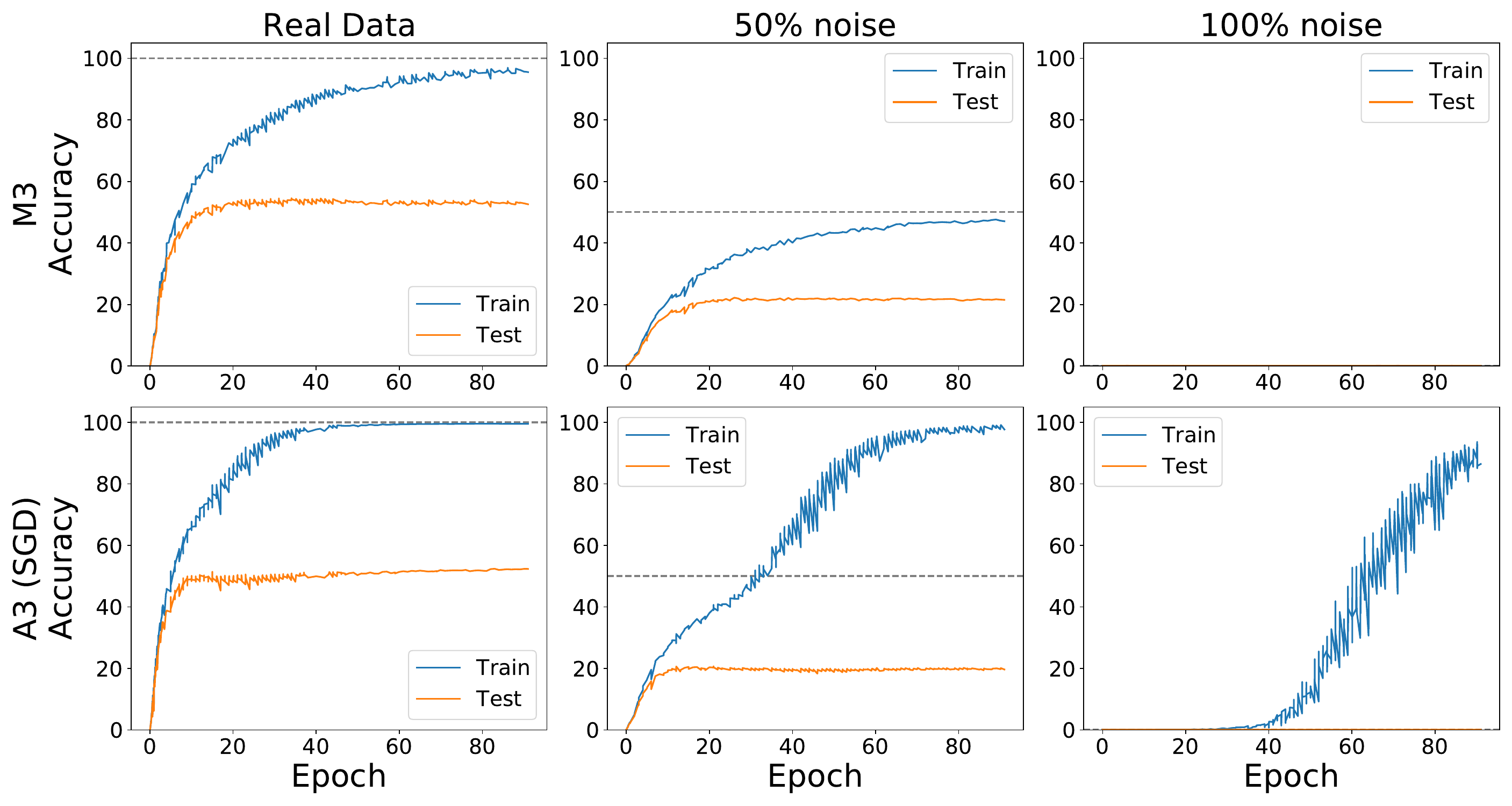}
(a)
\includegraphics[width=\textwidth]{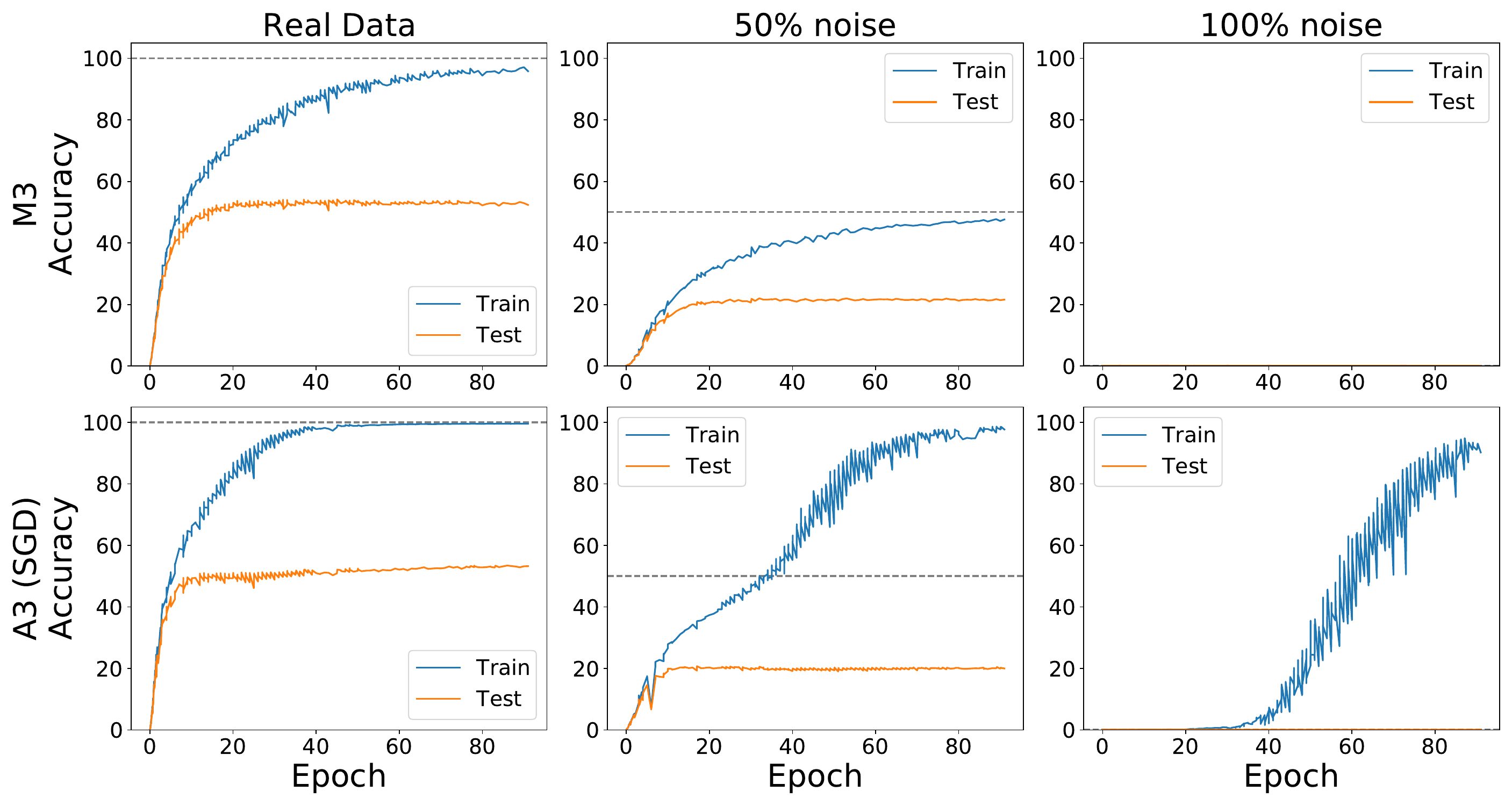}
(b)
\caption{Two additional runs of the experiment in \Cref{fig:overview:resnet_imagenet_a3_vs_m3}.}
\label{fig:a3_vs_m3_run_c}
\end{figure*}

\begin{figure*}
\centering
\includegraphics[width=\textwidth]{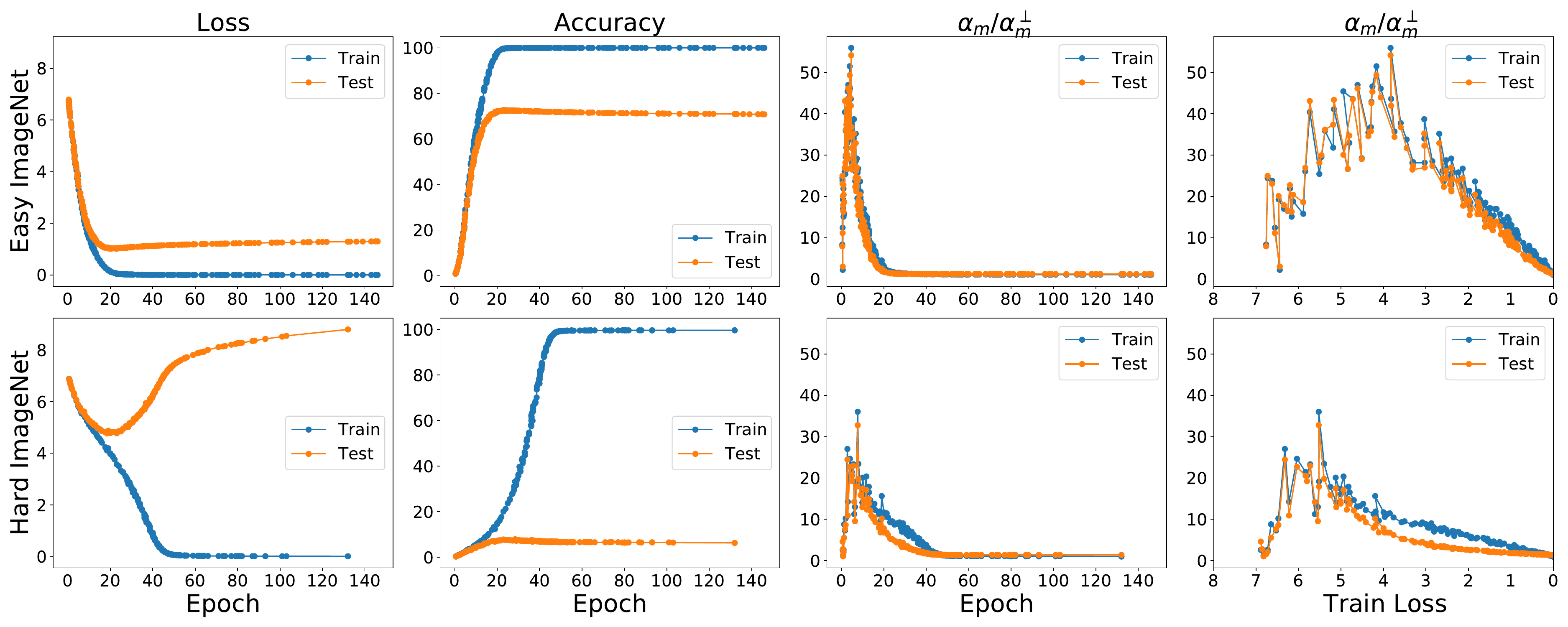}
(a)
\includegraphics[width=\textwidth]{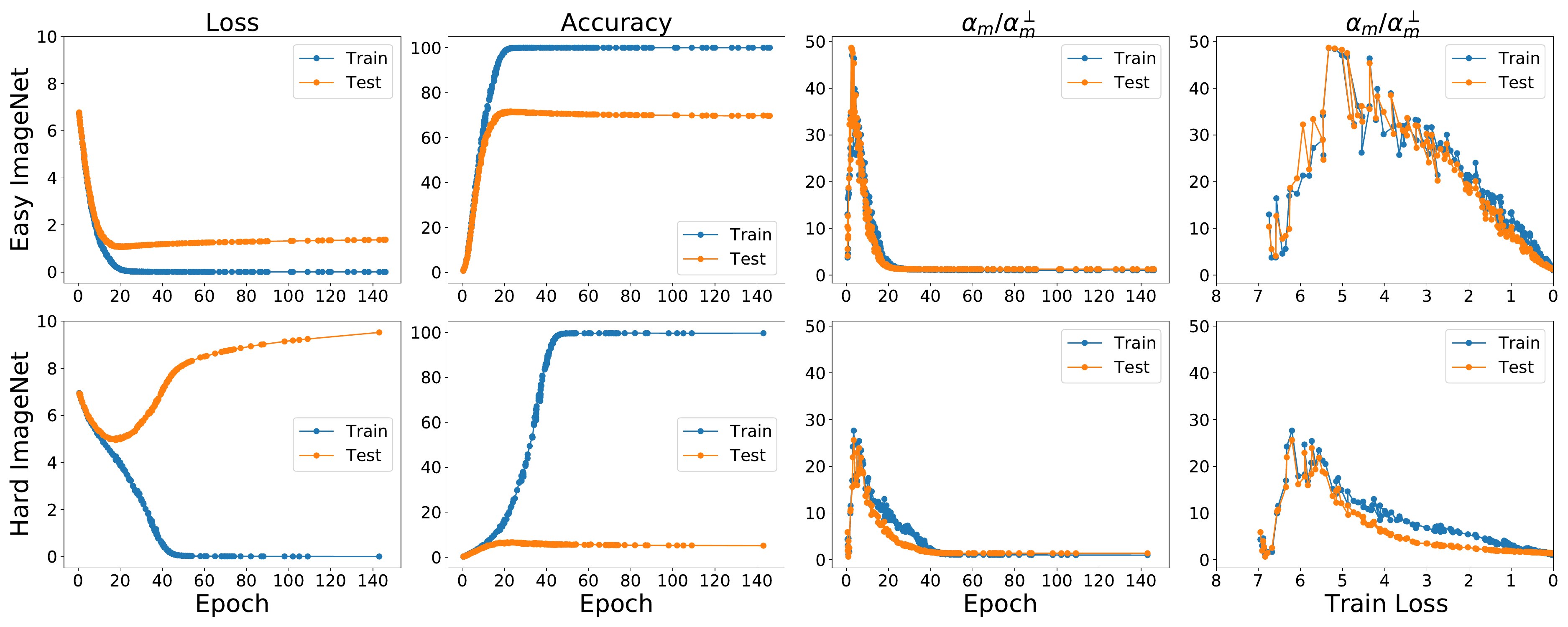}
(b)
\caption{Two additional runs of the experiment in \Cref{fig:chap1:easyhardfigure1}.}
\label{fig:easy_imagenet_hard_imagenet_c}
\end{figure*}










\end{appendices}

\addtocontents{toc}{\phantom{M}}

\bibliographystyle{plainnat} 
\bibliography{paper}
 
\end{document}